\documentclass{article}

\usepackage{microtype}
\usepackage{graphicx}
\usepackage{booktabs} 
\usepackage{tabularx}
\usepackage{longtable}
\usepackage{booktabs}
\usepackage{natbib}

\usepackage{enumitem}
\usepackage{graphicx}
\usepackage{caption}
\usepackage{subcaption}
\usepackage{tikz}
\usetikzlibrary{shapes.geometric, arrows.meta, positioning, shadows}

\usepackage{colortbl} 
\usepackage[table,xcdraw]{xcolor} 

\usepackage{algpseudocode}
\usepackage{algorithm}
\usepackage{arxiv}

 \algnewcommand\Parameters{\item[\textbf{Parameters:}]}

\usepackage[mathscr]{euscript}

\usepackage{multirow}
\usepackage{adjustbox}
\usepackage{graphicx}
\usepackage{caption}
\usepackage{subcaption}

\usepackage{amsmath}
\usepackage{amssymb}
\usepackage{mathtools}
\usepackage{amsthm}
\usepackage{mathrsfs}
\usepackage{nameref}
\usepackage{float}
\usepackage{soul}
\usepackage{stmaryrd}
\usepackage{tikz}
\usetikzlibrary{calc,positioning,intersections,quotes,decorations.markings,arrows.meta, bending}
\usepackage[doc]{optional}
\usepackage{tkz-euclide}

\definecolor{lightgray}{gray}{0.9}
\definecolor{darkblue}{rgb}{0.0,0.0,0.65}

\usepackage{xcolor}
\definecolor{refkey}{rgb}{0,0.6,0.0}
\definecolor{Brown}{rgb}{0.45,0.0,0.05}
\definecolor{lime}{rgb}{0.00,0.8,0.0}
\definecolor{lblue}{rgb}{0.5,0.5,0.99}
\usepackage[colorlinks=true,
            linkcolor= Brown,
            urlcolor=lblue,
            citecolor=blue]{hyperref}

\usepackage{graphicx}
\definecolor{labelkey}{rgb}{0,0.08,0.45}

\usepackage[capitalize,nameinlink]{cleveref}
\crefname{equation}{}{equations}
\crefname{figure}{Figure}{Figures}
\crefname{chapter}{Appendix}{chapters}
\crefname{item}{}{items}
\crefname{enumi}{}{}

\theoremstyle{plain}
\newtheorem{theorem}{Theorem}[section]
\newtheorem{proposition}[theorem]{Proposition}
\newtheorem{lemma}[theorem]{Lemma}
\newtheorem{corollary}[theorem]{Corollary}
\theoremstyle{definition}
\newtheorem{definition}[theorem]{Definition}
\newtheorem{assumption}[theorem]{Assumption}

\theoremstyle{remark}
\newtheorem{remark}[theorem]{Remark}

\crefname{assumption}{Assumption}{Assumptions}
\crefname{proposition}{Proposition}{Propositions}
\crefname{definition}{Definition}{Definitions}
\crefname{corollary}{Corollary}{Corollaries}

\usepackage{xspace}

\newcommand{\treeflow}{\textsc{TreeFlow}\xspace}
\newcommand{\dsmtree}{\textsc{DSM-Tree}\xspace}

\usepackage[textsize=tiny]{todonotes}

\title{Trees to Flows and Back: Unifying Decision Trees and Diffusion Models}

\author{%
  {\bfseries Sai Niranjan Ramachandran}\thanks{School of Computation, Information and Technology, Technical University of Munich, Germany}\thanks{Munich center for machine learning (MCML)}%
  \;
  \and  {\bfseries Suvrit Sra}\footnotemark[1]\footnotemark[2]%
}

\begin{document}

\maketitle

\vskip 0.2in

\begin{abstract}
Decision trees and diffusion models are ostensibly disparate model classes, one discrete and hierarchical, the other continuous and dynamic. This work unifies the two by establishing a crisp mathematical correspondence between hierarchical decision trees and diffusion processes in appropriate limiting regimes. Our unification reveals a shared optimization principle: \emph{Global Trajectory Score Matching (GTSM)}, for which gradient boosting (in an idealized version) is asymptotically optimal. We underscore the conceptual value of our work through two key practical instantiations: \treeflow, which achieves competitive 
generation quality on tabular data with higher fidelity and a 2× computational  speedup, and \dsmtree, a novel distillation method that transfers hierarchical  decision logic into neural networks, matching teacher performance within 2\% on many benchmarks.
\end{abstract}

\section{Introduction}
Within machine learning, two classes of models have achieved great success, albeit in disjoint domains. For structured, tabular data, ensemble methods based on decision trees, particularly Gradient Boosting Machines \citep{friedman2001greedy,friedman2002stochastic}, remain the state-of-the-art, prized for their performance and interpretability. For continuous, complex data such as images and audio, score-based generative models, or Diffusion Models (DMs) \citep{ho2020denoising,song2021scorebased}, are dominant, capable of generating samples of unparalleled fidelity. These two model classes live in different worlds: one relies on discrete, hierarchical partitioning of the feature space, while the other is defined by continuous-time stochastic differential equations (SDEs).

But this conceptual separation obscures a deep and powerful connection. Though their formulations differ, both models implicitly perform a hierarchical refinement of information: decision trees build a coarse-to-fine sequence of partitions, while diffusion models learn to reverse a coarse-graining process that gradually destroys information. 

We bridge this conceptual divide by establishing a formal mathematical correspondence between hierarchical tree models and the deterministic flows associated with diffusion processes under suitable limiting procedures (see Figure~\ref{fig:main_summary}). Thus, we demonstrate that rather than being disjoint model classes, they are simply two perspectives of the same underlying generative and discriminative object.

More precisely, we first show that any decision tree's hierarchical coarse-graining of data defines a discrete-time Markov process whose continuous limit is a unique deterministic flow that is described by a Probability Flow Ordinary Differential Equation \citep{song2021scorebased} (PF-ODE) (Tree $\to$ Flow). Conversely, we prove that any suitable diffusion process whilst evolving from data to noise (forward process), induces a canonical hierarchical clustering. The 
forward process progressively merges the modes of the data distribution and the temporal ordering  of these modes define a hierarchy isomorphic to a decision tree (Flow $\to$ Tree). 

\begin{figure}[ht!]
    \centering
    \includegraphics[width=\columnwidth]{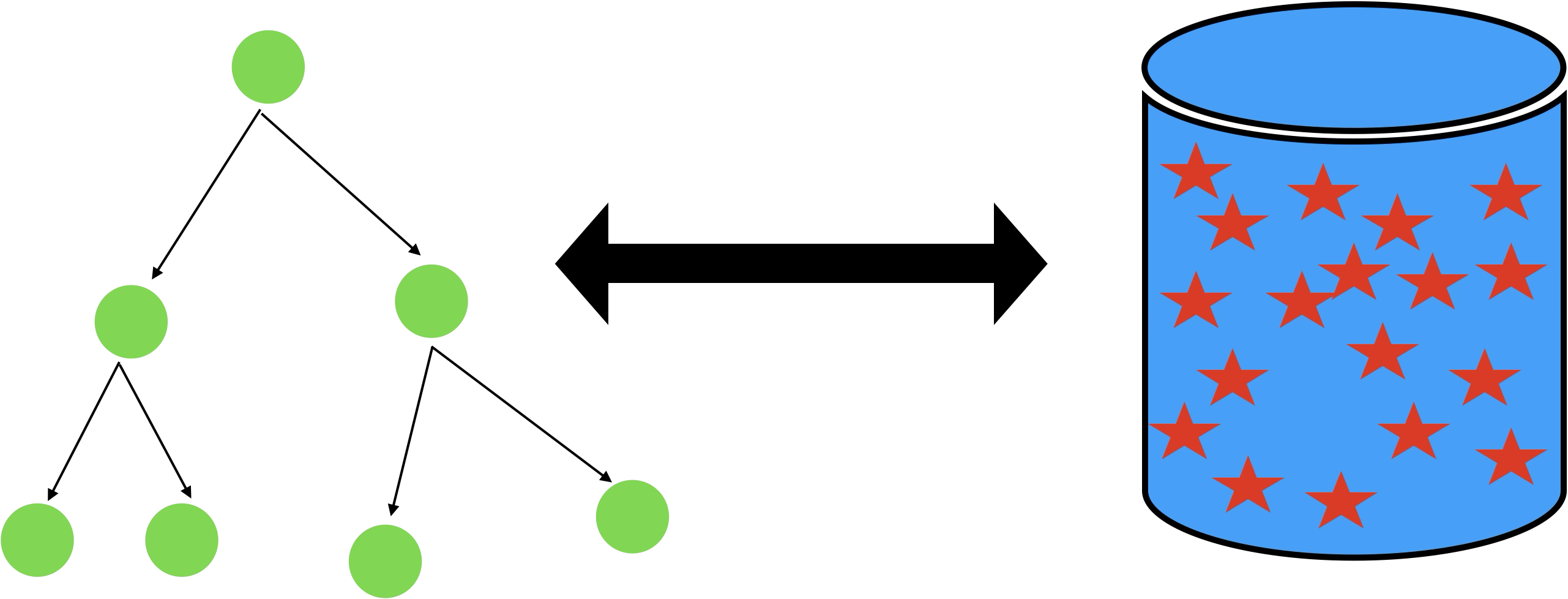}
    \caption{
        \textbf{A Visual Metaphor for Trees to Flows and back.} 
        We connect the discrete structure of a decision tree (left) with the continuous dynamics of a flow process (right). We establish a a formal correspondence, enabling an analysis of boosting via flows, and imbuing generative models with tree-based inductive biases.
    }
    \label{fig:main_summary}
\end{figure}

This Tree{}$\leftrightarrow${}Flow correspondence reveals a common optimization principle, which we term Global Trajectory Score Matching (GTSM) (\S3). We show that both the greedy, stage-wise construction of boosting and the end-to-end training of score-based models can be understood as distinct but related solvers for the GSTM master objective. 

A valuable consequence of our unified perspective is how it enables novel algorithms that combine the strengths of both frameworks. Leveraging the explicit, adaptive partitioning as a structural prior, we develop two algorithms that excel on tabular data. The first achieves competitive generation quality where standard diffusion models typically struggle. The second successfully distills complete tree hierarchies into neural networks, to our knowledge for the first time.

\subsection*{Summary of Contributions.}
\begin{itemize}
\setlength{\itemsep}{0pt}
    \item We establish a formal mathematical correspondence between hierarchical decision trees and the deterministic flows (PF-ODEs) associated with diffusion processes, valid under limiting refinement procedures.
    
    \item We introduce the Global Trajectory Score Matching (GTSM) framework, a unifying objective, under which we show how both gradient boosting and score-based diffusion training are optimal solvers in their respective discrete and continuous domains. 

    \item We introduce \textbf{\treeflow}, a new method that conditions continuous flows using tree-based priors. \treeflow achieves \textbf{competitive generation quality on tabular data} (highest TSTR accuracy on 3/5 benchmarks, lowest 
Wasserstein distance on 4/5 benchmarks, and lowest correlation error on 3/5 benchmarks) while being 2X faster.

    \item We propose \textbf{\dsmtree}, a novel method \textbf{to distill complete hierarchical decision logic} (not just leaf predictions) into neural networks, matching teacher performance within 2\% on most benchmarks and exceeding it by 3.7\% on the Heart Disease dataset.
\end{itemize}
In \Cref{sec:static_flow}, we establish the Tree-Flow correspondence. \Cref{sec:gtsm} introduces the unifying GTSM framework and proves the asymptotic optimality of gradient boosting as a discrete solver. We then present our novel algorithms in \Cref{sec:algorithms} and validate our framework with extensive experiments in \Cref{sec:experiments}. For a visual guide to our theory and derivations, we refer the reader to the roadmap in \Cref{sec:roadmap}.

\section{The Tree--Flow Correspondence}
\label{sec:static_flow}
We establish below a mathematical correspondence between decision trees and diffusion models. We first demonstrate that a decision tree defines a continuous-time PF-ODE via hierarchical coarse-graining under limiting assumptions (\S~2.1). Then, we prove that an entropically homogeneous SDE possessing a stationary distribution  induces a canonical tree structure via moment-based clustering (\S~2.2). 

\vspace*{-8pt}
\paragraph{The Tree as a Discrete-Time Markov Process.}
\begin{definition}[Decision Tree]
\label{def:decision-tree}
A decision tree $\mathcal{T}$ of depth $D \in \mathbb{N}$ on $\mathcal{X} \subset \mathbb{R}^d$ 
is a nested sequence of partitions $\Pi_D \prec \cdots \prec \Pi_{0}$ ($\prec$ denotes refinement) with $\Pi_0 = \{\mathcal{X}\}$ 
such that the elements in any partition $\Pi_k, \;  k > 0$ form a disjoint cover of $\mathcal{X}$, and
\begin{equation*}
    \forall k \geq 1, \, \Omega \in \Pi_k \implies 
    \exists! \, \Omega' \in \Pi_{k-1}\text{ s.t. } \Omega \subset \Omega'.
\end{equation*}
This induces a filtration $\mathcal{F}_k = \sigma(\Pi_k)$ and conditional densities 
$p(x, k) := \mathbb{E}[p_0(x) \mid \mathcal{F}_k]$ piecewise constant on $\Pi_k$.
\end{definition}

By the tower property \citep{durrett2019probability}, the transition operator 
$\mathcal{M}_k: L^1(\mathcal{X}) \to L^1(\mathcal{X})$ is
\begin{equation}
    p(x, k+1) = \mathcal{M}_k p(x, k) = \mathbb{E}[p(x, k) \mid \mathcal{F}_{k+1}].
\end{equation}
Thus, this sequence forms a discrete-time Markov chain tracing a path of monotonically decreasing entropy 
from the root $p(x, 0) = \text{Unif}(\mathcal{X})$ to the leaves $p(x, D) \approx p_{\text{data}}(x)$.

\paragraph{The Continuous-Time Limit.}
For typical trees, inhomogeneity ($\mathcal{M}_k \neq \mathcal{M}_{k+1}$) precludes direct limits. We resolve this by introducing a conceptual refinement procedure.

\begin{definition}[Dyadic Refinement]
\label{def:dyadic}
The $n$-th refinement $\mathcal{T}^{(n)}$ is a tree of depth $2^n T$ constructed by recursively inserting intermediate partitions $\Pi_t$ such that for all $k \in \{0, \dots, 2^{n-1}T-1\}$, the new partition satisfies $\Pi_{\frac{k}{2^{n-1}}} \prec \Pi_{\frac{2k+1}{2^n}} \prec \Pi_{\frac{k+1}{2^{n-1}}}$. As $n \to \infty$, this sequence converges to a continuous-time filtration $\{\mathcal{F}_t\}_{t \in [0, T]}$ where $\mathcal{F}_t = \sigma(\Pi_t)$ and the density path $p(x, t) = \mathbb{E}[p_0 \mid \mathcal{F}_t]$ is $C^1$ in $t$.
\end{definition}

This procedure constructs a smooth interpolating path between the original tree's partitions, transforming the discrete steps of coarse-graining into an infinitesimally smooth ``ramp.'' To this end, we recursively insert intermediate partitions. Consider a simple example: a parent node's region is defined by the split $\text{feature A} < 10$ and its child's region is $\text{feature A } < 5$. A valid intermediate partition always exists by continuity (e.g., say by splitting at $\text{feature A} < 7.5$). This transforms one large step into two smaller ones. By repeating this bisection process recursively (hence, ``dyadic''), we can generate a sequence of hierarchies with exponentially more, infinitesimally small steps. In the limit as the number of refinements $n \to \infty$, this discrete sequence of operators converges to a well-defined continuous-time propagator, as noted below. 
\begin{theorem}[Limit under Dyadic Refinement (Informal)]
\label{thm:dyad_limit_informal}
Let $\{\mathcal{T}^{(n)}\}$ be a dyadically refined sequence of decision trees as in Definition~\ref{def:dyadic}. Under the following assumptions  
\begin{enumerate}
\vspace{-5pt}
\setlength{\itemsep}{0pt}
    \item \textbf{Scale consistency:} inserting intermediate splits between existing tree levels does not change the conditional densities at the original levels.
    \item \textbf{Local refinement:} newly introduced splits become uniformly finer with refinement, so that the geometric size of each refinement step vanishes in the limit,
    \vspace{-5pt}
\end{enumerate}
the associated density path
$p(x,t) = \mathbb{E}[p_0(x)\mid\mathcal{F}_t]$
admits a well-defined continuous-time description generated by a (possibly time-dependent) operator $\mathcal{G}_t$ that is locally Lipschitz on compact intervals. Moreover, for any compact interval $I \subset [0,T]$, there is a subsequence of refinements along which $\mathcal{G}_t$ converges to a time-invariant generator $\mathcal{G}$.
\end{theorem}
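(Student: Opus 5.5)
The plan is to build the continuous-time generator directly from the refined conditional-expectation operators and then extract a time-invariant limit by compactness. Write $E_t f := \mathbb{E}[f \mid \mathcal{F}_t]$ for the projection onto $\Pi_t$-measurable functions, so that $p(x,t) = E_t p_0$. Along the dyadic grid $t_k^{(n)} = k2^{-n}$, the transition operators of $\mathcal{T}^{(n)}$ are $\mathcal{M}^{(n)}_k = E_{t^{(n)}_{k+1}}$ acting on $p(\cdot, t^{(n)}_k)$. Scale consistency is used first. Because inserting levels does not change the densities at the old levels, the discrete paths are coherent. The path of $\mathcal{T}^{(n+1)}$ restricted to the grid of $\mathcal{T}^{(n)}$ coincides with that of $\mathcal{T}^{(n)}$, so all of them are samples of a single path $t \mapsto p(\cdot,t)$ defined on the dyadic rationals. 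By martingale convergence, this path extends to all $t \in [0,T]$, since $(E_t p_0)$ is a closed martingale in $L^1$.

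Next I define the candidate generator through difference quotients,
\[
\mathcal{G}^{(n)}_t \, p(\cdot,t) := 2^{n}\bigl(p(\cdot,t+2^{-n}) - p(\cdot,t)\bigr) = 2^n(\mathcal{M}^{(n)}_k - I)\,p(\cdot,t), \qquad t = t_k^{(n)}.
\]
The $C^1$-in-$t$ property stated in Definition~\ref{def:dyadic} gives convergence of these quotients to $\partial_t p(\cdot,t) =: \mathcal{G}_t p(\cdot,t)$, uniformly on compact $I$. Local refinement is what makes $\mathcal{G}_t$ a genuine local operator rather than an arbitrary one. If every cell of $\Pi_{t+h}$ has diameter $o(1)$ relative to its parent in $\Pi_t$, then $E_{t+h} - E_t$ acts only on small scales. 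For smooth test functions, a Taylor expansion within each cell shows that $E_{t+h}-E_t$ is $O(h)$ times a first- or second-order differential expression, with coefficients determined by the moments of the cells. For local Lipschitz continuity in $t$, I plan to bound $\|\mathcal{G}_t - \mathcal{G}_s\|$ on $I$ by the modulus of continuity of $\partial_t p$ together with the uniform control of cell geometry.

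For the time-invariant subsequence, I view $\{\mathcal{G}^{(n)}_t : t \in I\}$ as a family that is equibounded and equi-Lipschitz in the strong operator topology on a suitable dense domain, such as $C^2_c$ test functions. Arzelà--Ascoli then yields a convergent subsequence. The intended route to time invariance is a reparametrization. By rescaling time within each dyadic cell and using diagonal extraction, I would pass to blow-up limits $\mathcal{G}^{(n_j)}_{t_j + s 2^{-m_j}}$, in which the time variation over the window vanishes because of the Lipschitz bound, so the limit is constant in $s$. This is where the statement's ``subsequence'' qualifier is needed.

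The main obstacle is this last step. A Lipschitz family does not in general converge to a constant generator, so the argument must localize, either to a single time via blow-up or to a subinterval shrinking with $n$, and then read ``time-invariant'' as holding on that rescaled window. I would try the localization first. If it is too weak, I would instead impose an averaging (Cesàro) construction, $\bar{\mathcal{G}} = \lim_j |I|^{-1}\int_I \mathcal{G}^{(n_j)}_t\,dt$, which is time-invariant by construction and whose existence follows from the same compactness.
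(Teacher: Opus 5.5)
There is a genuine gap, and it is the one you flag: neither of your routes closes the last step, and no compactness argument can. Your difference-quotient step already gives $\mathcal{G}^{(n)}_t p(\cdot,t)\to\partial_t p(\cdot,t)$ along the \emph{full} sequence. So passing to a subsequence changes nothing: whether one fixed $\mathcal{G}$ governs the evolution on $I$ is a property of the limit, not of the subsequence chosen.

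At the level of propagators the answer is negative for any genuine coarse-graining. Suppose a single $\mathcal{G}$ generated the evolution on $I=[a,b]$ (time running from leaves to root). Then $Q_u:=E_{a+u}$ restricted to $\mathrm{range}(E_a)$ would be the semigroup $e^{u\mathcal{G}}$, while nestedness gives $Q_uQ_v=Q_{\max(u,v)}$. Hence $Q_u=Q_u^2=Q_{2u}$, so $Q_u=Q_{u/2^k}\to Q_0=I$, and nothing is coarse-grained on $I$. Your blow-up therefore only yields the generator frozen at one instant on a window of vanishing length. The Ces\`aro average $\bar{\mathcal{G}}$ is constant, but it does not generate the path.

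The paper's proof of the formal version (Theorem~\ref{thm:lipschitz_homogeneity}) does not obtain literal invariance either. Its steps are:
\begin{itemize}
\item it strengthens ``local refinement'' to the rate hypothesis that the geometric complexity added at refinement $n$ is $O(2^{-n})$;
\item it asserts from this that $\|\mathcal{G}_t-\bar{\mathcal{G}}\|_{\mathrm{op}}\le C_{\mathrm{var}}2^{-n+1}$;
\item it uses Gr\"onwall to bound $\|e^{\int_0^t\mathcal{G}_s\,ds}-e^{t\bar{\mathcal{G}}}\|_{\mathrm{op}}$ by $te^{Ct}C_{\mathrm{var}}2^{-n+1}$;
\item it then \emph{defines} $\mathcal{G}_{\mathrm{eff}}:=\lim_n\bar{\mathcal{G}}^{(n)}$, with time-invariance understood only up to $O(2^{-n})$ corrections (see the remark after that theorem).
\end{itemize}
Your Ces\`aro fallback is essentially its Step~4. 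What you would still need is the rate hypothesis and the perturbation estimate, and what you can honestly conclude is approximate invariance. Note that an $n$-dependent bound on the $n$-independent quantity $\mathcal{G}_t-\bar{\mathcal{G}}$ forces $\mathcal{G}_t\equiv\bar{\mathcal{G}}$. That estimate is therefore in effect a homogeneity hypothesis, not something compactness delivers.

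The first half of your argument has concrete gaps as well.
\begin{itemize}
\item \textbf{Equiboundedness fails.} Setting $\mathcal{G}_tp(\cdot,t):=\partial_tp(\cdot,t)$ specifies $\mathcal{G}_t$ on one vector only. The operators you hand to Arzel\`a--Ascoli, $2^n(\mathcal{M}^{(n)}_k-I)=2^n(E_{t+2^{-n}}-I)$, are not equibounded on $C^2_c$. Every $E_s$ maps into functions constant on the (positive-volume) leaf cells, so $\|\mathcal{G}^{(n)}_tf\|_2\ge 2^n\|f-E_{\mathrm{leaf}}f\|_2\to\infty$ for every $f$ that is not a.e.\ constant on each leaf. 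The paper's Step~1 assumes the same uniform boundedness in operator norm, although $\|2^n(\mathcal{M}^{(n)}_k-\mathrm{Id})\|_{\mathrm{op}}\ge 2^n$.
\item \textbf{The Taylor-expansion picture fails.} Switching to $2^n(E_{t+2^{-n}}-E_t)$, which agrees with your operator on the path, does not help. Increments of nested projections are orthogonal, so $\sum_k\|(E_{t_{k+1}}-E_{t_k})f\|_2^2=\|E_af\|_2^2-\|E_bf\|_2^2$ for every grid of $[a,b]$. This is incompatible with $\|(E_{t+h}-E_t)f\|_2=O(h)$ uniformly on $[a,b]$ whenever $E_af\neq E_bf$.
\item \textbf{The $C^1$ hypothesis needs a topology.} With $f=p_0$ and a non-uniform leaf density, the same identity shows that the $C^1$ property you import from Definition~\ref{def:dyadic} cannot hold in $L^2$. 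You must fix a weaker topology before the difference-quotient step means anything.
\item \textbf{The Lipschitz claim has no source.} $C^1$ gives only a modulus of continuity, which suffices for equicontinuity but not for a Lipschitz constant, and qualitative local refinement supplies no rate. Because $\mathcal{G}^{(n)}$ carries the factor $2^n$, you would need consecutive one-step operators to differ by $O(4^{-n})$. The paper's Part~(i) does not provide this: it passes from $2^nC_{\mathrm{geom}}|k_t-k_s|2^{-n}$ to $C_{\mathrm{geom}}|k_t-k_s|/2^n$, so its $O(2^{-n})$ hypothesis does not yield the Lipschitz bound as written.
\item \textbf{Fix the orientation.} The identity $p(\cdot,t_{k+1})=E_{t_{k+1}}p(\cdot,t_k)$ needs $\mathcal{F}_{t_{k+1}}\subset\mathcal{F}_{t_k}$ (leaves at $t=0$, as in the appendix), in which case $E_tp_0$ is a reverse martingale. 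With the root at $t=0$ as in Definition~\ref{def:decision-tree}, $E_{t_{k+1}}$ fixes $p(\cdot,t_k)$, so the right-hand side of your displayed identity vanishes.
\end{itemize}
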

\begin{proof}
    At each refinement level, the conditional densities change only slightly due to the vanishing size of new splits (local refinement), giving a Lipschitz continuous family of generators. By a standard diagonalization argument on compact intervals, one can extract a subsequence along which these generators converge. Time-invariance emerges in the limit because the contribution of each refinement step becomes negligible, so the limiting generator $\mathcal{G}$ effectively governs the density path across the entire interval. The full proof with all the details is provided in \textbf{Theorem} \ref{thm:lipschitz_homogeneity}.
\end{proof}

\paragraph{From Discrete Steps to a Differential Equations}
The limit of the refined process from \textbf{Theorem} \ref{thm:dyad_limit_informal} is, by construction, a continuous-path Markov process. The time evolution of its density $p(\mathbf{x}, t)$ is described by the Kramer--Moyal expansion \citep{gardiner2004handbook}:
\begin{equation*}
    \frac{\partial p}{\partial t} = \sum_{n=1}^{\infty} \frac{(-1)^n}{n!} \frac{\partial^n}{\partial \mathbf{x}^n} \left[ D^{(n)}(\mathbf{x}) p(\mathbf{x},t) \right],
\end{equation*}
where $D^{(n)}$ are the moments of the process's infinitesimal jumps. First, we note an observation about higher moments. 
\begin{theorem}[Higher-Order Moments Vanish (Informal)]
For the limiting process constructed via dyadic refinement, the contributions of jumps of order three and higher become negligible quickly as the refinement becomes finer. 
\end{theorem}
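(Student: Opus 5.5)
The plan is to bound the Kramer--Moyal coefficients $D^{(n)}$ directly in terms of the geometric size of one refinement step, and then show that for $n\ge 3$ they vanish faster than the time step, while $D^{(1)}$ and $D^{(2)}$ survive. Fix refinement level $m$ with time step $\Delta t_m = T/2^m$. Let $\delta_m$ be the maximal diameter of the cells of $\Pi_{t}$ that are newly created between consecutive levels $t$ and $t+\Delta t_m$; by the local refinement assumption of Theorem~\ref{thm:dyad_limit_informal}, $\delta_m\to 0$. The discrete transition $\mathcal{M}$ from $t$ to $t+\Delta t_m$ is a conditional expectation, so its kernel is supported on pairs $(x,y)$ lying in a common cell of the coarser partition that is being split. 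Any infinitesimal jump $y-x$ is therefore bounded by $\delta_m$ in norm. From this we obtain the moment estimate
\[
\bigl|\mathbb{E}[(X_{t+\Delta t_m}-X_t)^{\otimes n}\mid X_t=x]\bigr| \le \delta_m^{\,n-2}\,\mathbb{E}\bigl[|X_{t+\Delta t_m}-X_t|^2\mid X_t=x\bigr].
\]

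The second step normalizes by the time step. By definition,
\[
D^{(n)}(x)=\lim_{m\to\infty}\frac{1}{\Delta t_m}\,\mathbb{E}[(\Delta X)^{\otimes n}\mid X_t=x].
\]
Dividing the estimate above by $\Delta t_m$ gives $|D^{(n)}_m|\le \delta_m^{\,n-2}\,|D^{(2)}_m|$. So the whole matter reduces to showing that the second-moment rate $D^{(2)}_m$ stays bounded uniformly on compacts. For this I would use the locally Lipschitz generator family from Theorem~\ref{thm:dyad_limit_informal}. Scale consistency lets $\partial_t p$ be computed at every level. $C^1$-regularity of $t\mapsto p(\cdot,t)$ bounds $\partial_t p$, and hence bounds the action of $\mathcal{G}_t$ on quadratic test functions $\phi(y)=(y-x)^{\otimes 2}$, localized with a cutoff so that $\phi$ is a legitimate test function. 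That action is exactly $D^{(2)}_m$. Once $D^{(2)}_m$ is bounded, $D^{(n)}_m=O(\delta_m^{\,n-2})\to 0$ for every $n\ge 3$. This is a Pawula-type conclusion, since the jumps are uniformly small relative to their variance rate, and it gives the quantitative form of ``quickly'' stated in the theorem. Passing to the subsequence of Theorem~\ref{thm:dyad_limit_informal} along which $\mathcal{G}_t\to\mathcal{G}$, the limit generator is then second order, i.e.\ of Fokker--Planck type.

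The main obstacle is the uniform bound on $D^{(2)}_m$, which requires relating the geometric scale $\delta_m$ to the temporal scale $\Delta t_m$. If each dyadic insertion halves time but shrinks cells by less than $\sqrt{\Delta t_m}$, the second moment per unit time could blow up. My first attempt would be to make the choice of bisection explicit: split along thresholds so that the cell diameters involved at step $m$ satisfy $\delta_m^2\lesssim \Delta t_m$, which mirrors the diffusive scaling, and check that this choice is compatible with scale consistency. If that fails, the fallback is to avoid the bound on $D^{(2)}$ entirely. In that case I would state the result in ratio form, $|D^{(n)}_m|/|D^{(2)}_m|\le\delta_m^{\,n-2}\to 0$, which still shows that terms of order three and higher are negligible relative to the retained drift and diffusion terms.
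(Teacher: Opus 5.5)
Your skeleton is the same as the paper's in Proposition~\ref{prp:coarse}. The one-step kernel is supported in a ball of radius $\delta_m$, and you divide by $\Delta t_m$; your normalization against the second moment is the inequality of Theorem~\ref{thm:vanish}. The genuine gap is the step you flag yourself, the uniform bound on $D^{(2)}_m$, and your primary route to it fails. Regularity of $t\mapsto p(\cdot,t)$ only controls averaged quantities such as $\int(\mathcal{G}_t\phi)\,p(\cdot,t)$. By contrast, $D^{(2)}_m(x)=\Delta t_m^{-1}\,\mathbb{E}[|\Delta X|^2\mid X_t=x]$ is a pointwise property of the transition kernel, and the density path does not determine the kernel. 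A Lipschitz bound on $t\mapsto\mathcal{G}_t$ also says nothing about the size of $\mathcal{G}_t$. Concretely, the conditional-expectation kernel sends each point of the coarser cell $R$ to a uniform point of $R$. So $\mathbb{E}[|\Delta X|^2\mid X_t=x]$ is of order $\mathrm{diam}(R)^2$ even when the merged subcells have almost equal densities and $\partial_t p\approx 0$. The size of $D^{(2)}_m$ is therefore set by $\delta_m^2/\Delta t_m$, not by any smoothness of the density path.

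Your first fallback cannot be realized by choosing thresholds either. The jump bound needs $\delta_m$ to be the diameter of the coarser cell over which mass is redistributed, not of the newly created children. Scale consistency freezes the cells of the original levels, and at the step where such a cell is formed (ultimately $\mathcal{X}$ itself, at the root) that diameter does not depend on $m$. A relation between $\delta_m$ and $\Delta t_m$ therefore has to be postulated, and that is exactly what the paper does. It uses its bounded-intermediate-complexity hypothesis as $\delta_n\le L\,2^{-n}$ with $\Delta t_n=2^{-n}$, which directly gives $M^{(n)}_k/\Delta t_n\le\delta_n^k/\Delta t_n\le L^k2^{-n(k-1)}\to0$ for $k\ge3$. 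Once you assume a scaling, the generator detour is unnecessary. Your diffusive hypothesis $\delta_m^2\lesssim\Delta t_m$, which is weaker than the paper's, gives $\mathbb{E}[|\Delta X|^2\mid X_t=x]\le\delta_m^2\lesssim\Delta t_m$ straight from the support bound, hence $|D^{(n)}_m|\lesssim\delta_m^{n-2}\to0$. Note the trade-off. The paper's ballistic scaling also forces $D^{(2)}\to0$, consistent with its later claim $D^{(2)}\equiv0$. Your scaling leaves a possibly nonzero diffusion term, which the subsequent PF-ODE step would have to remove by a separate argument. Finally, the ratio-form fallback is weaker than the statement. It gives no absolute control exactly in the case $D^{(2)}_m\to\infty$ that you could not exclude, and in that case there is no Fokker--Planck limit for the higher terms to be negligible relative to.
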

\begin{proof}
Intuitively, the claim holds because each refinement redistributes probability only within ever smaller regions, so large or high-order jumps are geometrically suppressed. 
Indeed, as the dyadic partitions get finer, the size of the regions shrinks exponentially, which forces all moments of order three and above to go to zero. \textbf{Proposition}~\ref{prp:coarse} in the appendix provides the formal argument.
\end{proof}
Next, note that Pawula's theorem \citep{pawula1967approximation} states that if any jump moment $D^{(n > 2)}$ is zero, all higher moments must also be zero. But as the jumps become negligible for our case, this implies $D^{(n)}(\mathbf{x}) = 0$ for all $n > 2$. Hence, the expansion truncates exactly to the Fokker-Planck equation:
\begin{equation*}
    \frac{\partial p}{\partial t} = -\frac{\partial}{\partial \mathbf{x}} [D^{(1)}(\mathbf{x}) p] + \frac{1}{2} \frac{\partial^2}{\partial \mathbf{x}^2} [D^{(2)}(\mathbf{x}) p].
\end{equation*}
Further, the underlying coarse-graining operator is purely deterministic (an averaging process), i.e., it introduces no new stochasticity. Therefore, the second-order (diffusion) term must vanish: $D^{(2)}(\mathbf{x}) \equiv \mathbf{0}$. This reduces the Fokker-Planck equation to a first-order Liouville equation, whose characteristic equation is a deterministic ODE.

\begin{theorem}[Trees Induce PF-ODEs (Informal)]
\label{thm:tree_to_sde}
Under dyadic refinement, the continuous-time limit of tree-induced coarse-graining is 
$\dot{\mathbf{x}} = \mathbf{v}(\mathbf{x},t)$, a deterministic PF-ODE with velocity $\mathbf{v}$ uniquely 
determined by the partitions. 
\end{theorem}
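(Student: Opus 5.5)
The plan is to chain the two earlier informal results and then read off the characteristics of the resulting first-order transport equation.

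\emph{Step 1: generator.} By Theorem~\ref{thm:dyad_limit_informal}, under scale consistency and local refinement, the density path $p(\mathbf{x},t)=\mathbb{E}[p_0\mid\mathcal{F}_t]$ is $C^1$ in $t$. It is generated by a locally Lipschitz family $\mathcal{G}_t$, so $\partial_t p=\mathcal{G}_t p$.

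\emph{Step 2: Kramer--Moyal truncation.} I will write $\mathcal{G}_t$ through its Kramer--Moyal coefficients. Each coefficient is the limit of the rescaled jump moment
\begin{equation*}
D^{(n)}(\mathbf{x},t)=\lim_{m\to\infty}2^{m}\,\mathbb{E}\bigl[(\Delta\mathbf{x})^{n}\mid \mathbf{x}\bigr],
\end{equation*}
where $\Delta\mathbf{x}$ is the displacement of mass between consecutive levels $t$ and $t+2^{-m}$ of $\mathcal{T}^{(m)}$. The higher-order-moments result gives $D^{(n)}=0$ for $n\ge 3$. Combined with Pawula's theorem, this reduces the generator to Fokker--Planck form.

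\emph{Step 3: killing the diffusion term.} This is the step I expect to be the main obstacle, because "deterministic averaging" does not by itself force $D^{(2)}=0$. The displacement $\Delta\mathbf{x}$ is confined to a cell of diameter $\delta_m$, and local refinement gives $\delta_m\to 0$. If I can show that $\delta_m^2$ is $o(2^{-m})$, then $2^m\mathbb{E}|\Delta\mathbf{x}|^2\le 2^m\delta_m^2\to 0$, so $D^{(2)}\equiv 0$.

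Two routes for this:
\begin{itemize}
\item \textbf{Geometric route.} Because each dyadic insertion bisects a split threshold, the step diameter should scale like $2^{-m}$, not $2^{-m/2}$. I would try to prove this first.
\item \textbf{Fallback.} Failing that, I would instead show $D^{(2)}$ is absent directly. The conditional expectation $\mathcal{M}_k$ is a Markov kernel that maps densities to densities with no added variance in the spatial variable. So any residual second moment would be a symmetric positive semidefinite matrix field whose contribution integrates to zero against all test functions, and hence vanishes.
\end{itemize}

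\emph{Step 4: continuity equation and characteristics.} With $\mathbf{v}:=D^{(1)}$, Step 3 leaves the continuity equation $\partial_t p+\nabla\cdot(\mathbf{v}p)=0$.
\begin{itemize}
\item \textbf{Existence.} Local Lipschitzness of $\mathcal{G}_t$ (Step 1) gives local Lipschitzness of $\mathbf{v}$ in $\mathbf{x}$. Picard--Lindel\"of then yields a unique flow $\dot{\mathbf{x}}=\mathbf{v}(\mathbf{x},t)$ whose pushforward of $p_0$ is $p(\cdot,t)$. This is a PF-ODE in the sense of \citet{song2021scorebased}.
\item \textbf{Uniqueness from the partitions.} $D^{(1)}$ is a limit of first moments computed solely from the nested partitions $\{\Pi_t\}$ and $p_0$. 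Scale consistency makes this limit independent of the particular dyadic insertion sequence, so $\mathbf{v}$ is determined by the partitions.
\item \textbf{Caveat.} Uniqueness of $\mathbf{v}$ itself should be understood modulo divergence-free fields $\mathbf{w}$ with $\nabla\cdot(\mathbf{w}p)=0$. I would resolve this by taking the canonical limit $D^{(1)}$ rather than an arbitrary solution of the continuity equation.
\end{itemize}
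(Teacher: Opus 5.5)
You follow the paper's chain through Step~2: Theorem~\ref{thm:lipschitz_homogeneity}, the moment bounds of Proposition~\ref{prp:coarse}, Theorems~\ref{thm:vanish} and~\ref{thm:pawla}, and the Fokker--Planck form of Theorem~\ref{thm:main}. In Step~3 you are right that ``deterministic averaging'' does not by itself force $D^{(2)}=0$, and that is the only argument Remark~\ref{rem:pf-ode} gives. Relative to the paper, your geometric route is the cleaner closure. Proposition~\ref{prp:coarse} takes $\delta_n\le L\,2^{-n}$ with $\Delta t_n=2^{-n}$ as given (see \eqref{eq:diameter_bound}). Its estimate $M_k^{(n)}/\Delta t_n\le L^k2^{-n(k-1)}$ is used there only for $k\ge 3$, but at $k=2$ it already gives $L^2 2^{-n}\to 0$. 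Do not expect to derive that bound from threshold bisection, though; it has to enter as a hypothesis. The kernel of $\mathcal{M}_k$ is $K(\mathbf{x},d\mathbf{y})=\mathbb{I}[\mathbf{y}\in R(\mathbf{x})]\,d\mathbf{y}/|R(\mathbf{x})|$, so it spreads a particle over the whole coarse cell $R(\mathbf{x})$. Inserting levels does not shrink the cells of a fixed level: for the last step into the root, that cell is $\mathcal{X}$. The same kernel refutes your fallback. Uniform redistribution adds spatial variance of order $\operatorname{diam}(R(\mathbf{x}))^2$, which is exactly where the entropy increase comes from. Nothing makes $\tfrac12\int p\,D^{(2)}_{ij}\,\partial_i\partial_j\phi\,d\mathbf{x}$ vanish for all test functions $\phi$, so drop the fallback.

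The genuine gap is in Step~4. The Lipschitz property supplied by Theorem~\ref{thm:dyad_limit_informal} (part~(i) of Theorem~\ref{thm:lipschitz_homogeneity}) is $\|\mathcal{G}_t-\mathcal{G}_s\|_{\mathrm{op}}\le L_I|t-s|$, which is regularity in time, in operator norm. It says nothing about $\nabla_{\mathbf{x}}D^{(1)}$. With $p(\cdot,t)$ piecewise constant on $\Pi_t$, even continuity of $\mathbf{v}$ across cell faces is not automatic. So Picard--Lindel\"of is unsupported, and with it your claim that the flow pushes $p_0$ forward to $p(\cdot,t)$. You would need an explicit spatial-regularity hypothesis on $D^{(1)}$, or a DiPerna--Lions-type argument under Sobolev/BV assumptions. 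The paper does not attempt this. Theorem~\ref{thm:main} asserts only a formal Fokker--Planck/It\^o equivalence with $\mathbf{f}=D^{(1)}$, and Remark~\ref{rem:pf-ode} sets $\mathbf{g}\equiv\mathbf{0}$ to read off $d\mathbf{X}_t=\mathbf{f}(\mathbf{X}_t)\,dt$. Your uniqueness bullet also fails: scale consistency does not make $D^{(1)}$ independent of the insertion scheme. It only pins the densities at the original levels, and different admissible insertions interpolate differently between them. ``Uniquely determined by the partitions'' can therefore only mean the full refined family $\{\Pi_t\}_{t\in[0,T]}$. That is how the paper reads it, via the uniqueness of each conditional-expectation operator $\mathcal{M}_k$. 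Your choice of the canonical $D^{(1)}$ to resolve the divergence-free ambiguity matches the paper's $\mathbf{f}=D^{(1)}$.
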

\vspace{-0.1cm}
\begin{proof}
   Please see~
   ~\Cref{app:static_to_flows_derivation} (\textbf{Theorem} \ref{thm:main}).
\end{proof}

\subsection{From Flows to Trees}
We now establish the reverse mapping: every entropically homogeneous SDE having a stationary distribution induces a tree via its corresponding PF-ODE. Intuitively, as the process evolves from a structured data distribution towards a high-entropy stationary state, distinct local modes of the initial density merge. The hierarchy records this timeline: regions that remain statistically distinguishable for longer are grouped higher. As the flow of entropy is monotonic, this merging is irreversible, ensuring a well-defined tree structure. Formally our argument extends a statistical physics approach \citep{ramachandran2025crossfluctuation}, previously used to analyze the trajectories of generative diffusion models.

\begin{definition}[Entropically Homogeneous Stationary~SDE]
\label{def:ergodic}
An SDE $d\mathbf{x}_t = \mathbf{b}(\mathbf{x}_t, t)dt + \sigma(\mathbf{x}_t, t)d\mathbf{w}_t$ possesses a stationary distribution if it converges to a unique invariant measure $p_\infty$ such that $\lim_{t \to \infty} \|p_t - p_\infty\|_{\text{TV}} = 0$. We call the process \textbf{entropically homogeneous} if the differential entropy $H(p_t)$ is a monotonically non-dec(inc)reasing  function of $t$.
\end{definition}

\begin{definition}[Initial Clusters]
For a data density $p_0$ with $K$ isolated modes, the \textbf{initial clusters} $\{C_k\}_{k=1}^K$ are the connected components of a super-level set of $p_0$ chosen to yield exactly $K$ components (\Cref{app:sde_to_tree}).
\end{definition}

\begin{definition}[Moment-Based Merger Time]
For clusters $C_i, C_j$, the \textbf{$(n, \epsilon)$-merger time} is the first time $t$ at which their $n$-th order conditional moment tensors become statistically indistinguishable:
\begin{equation}
t_{ij}^{(n, \epsilon)} = \inf \bigl\{ t \ge 0 : \bigl\| \mathbf{M}_t^{(n)}(C_i) - \mathbf{M}_t^{(n)}(C_j) \bigr\| \le \epsilon \bigr\},
\end{equation}
where $\mathbf{M}_t^{(n)}(C_k) = \mathbb{E}_{\mathbf{x} \sim p_t(\cdot \mid C_k)} [(\mathbf{x} - \mathbb{E}[\mathbf{x}])^{\otimes n}]$.
\end{definition}
These merger times satisfy the following property:
\begin{theorem}[Mergers Form a Hierarchy (Informal)]
\label{thm:ultrametric_informal}
For an entropically homogeneous SDE, the moment-based merger times 
$t_{ij}^{(n,\epsilon)}$ of clusters satisfy a hierarchical structure, 
and these merger times obey an ultrametric inequality.
\end{theorem}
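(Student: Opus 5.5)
The plan is to turn the merger times into an explicit single-linkage (minimax-path) hierarchy, for which the ultrametric inequality holds by construction. Monotone entropy supplies the \emph{irreversibility} that makes that construction agree with the merger times themselves.

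\textbf{Step 1: merging is irreversible.} Define the distance $d_t(i,j) := \|\mathbf{M}_t^{(n)}(C_i) - \mathbf{M}_t^{(n)}(C_j)\|$. I would show that, for an entropically homogeneous SDE, once $d_t(i,j)\le\epsilon$ it stays below $\epsilon$. Then $t_{ij}^{(n,\epsilon)}$ is a genuine threshold time: the relation ``$C_i$ and $C_j$ are merged by time $t$'' is monotone in $t$.

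The intended route is the following.
\begin{itemize}
\item Write $p_t(\cdot\mid C_k)$ as the forward evolution of $p_0(\cdot\mid C_k)$.
\item The difference of the conditional laws is transported by the same Markov semigroup. Its total variation, and hence (by uniform integrability near $p_\infty$) its moment discrepancy, is non-increasing by the data-processing inequality.
\item Convergence to $p_\infty$ gives $d_t(i,j)\to 0$, so every $t_{ij}$ is finite.
\end{itemize}
Entropic homogeneity is the hypothesis that rules out re-separation, i.e.\ it makes information loss monotone.

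\textbf{Step 2: merged-ness is an equivalence relation.} For each $t$, define $i\sim_t j$ iff $i,j$ lie in the same connected component of the graph with edges $\{(a,b): d_t(a,b)\le\epsilon\}$. This is an equivalence relation on $\{1,\dots,K\}$. By Step 1, the relations $\sim_t$ are nested: $\sim_s\subseteq\sim_t$ for $s\le t$. Their classes therefore form a nested family of partitions that coarsens from $K$ singletons to one block. This is exactly a tree in the sense of Definition~\ref{def:decision-tree}, with time reversed.

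Then define the hierarchical merger time $\tau_{ij} := \inf\{t: i\sim_t j\}$. The ultrametric inequality follows directly:
\begin{itemize}
\item if $i\sim_s k$ and $k\sim_u j$, then with $t=\max(s,u)$ both hold at time $t$ by nesting;
\item transitivity then gives $i\sim_t j$;
\item hence $\tau_{ij}\le\max(\tau_{ik},\tau_{kj})$.
\end{itemize}

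\textbf{Step 3 (the main obstacle).} We must relate $\tau_{ij}$ back to the raw times $t_{ij}^{(n,\epsilon)}$. In general $\tau_{ij}\le t_{ij}$, and the raw times are only ultrametric up to a tolerance. The reason is that the triangle inequality gives $d_t(i,j)\le d_t(i,k)+d_t(k,j)\le 2\epsilon$, not $\epsilon$.

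I would handle this in two ways.
\begin{enumerate}
\item State the result as a $2\epsilon$-relaxed ultrametric: $t_{ij}^{(n,2\epsilon)}\le\max(t_{ik}^{(n,\epsilon)},t_{kj}^{(n,\epsilon)})$, which follows from Step 1 together with the triangle inequality.
\item Show that when the mode separations are generic, i.e.\ the functions $d_t$ cross $\epsilon$ transversally at distinct times, the $\epsilon$- and $2\epsilon$-times coincide to leading order as $\epsilon\to0$. Then $t_{ij}=\tau_{ij}$ and the exact ultrametric holds in the limit.
\end{enumerate}

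Making the monotonicity in Step 1 rigorous for moment tensors, rather than for total variation, is where I expect the real difficulty. I would first try it for Ornstein--Uhlenbeck-type drifts, where the moments evolve by explicit linear ODEs and contract exponentially.
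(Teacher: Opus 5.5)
Two steps of your plan fail. The second failure also shows that the statement, read for the raw merger times, is false.

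\textbf{Step~1 cannot be derived from the stated hypotheses.} The data-processing inequality does make $\|p_t(\cdot\mid C_i)-p_t(\cdot\mid C_j)\|_{\mathrm{TV}}$ non-increasing. But a norm of centred-moment differences is not a monotone function of total variation, and entropic homogeneity constrains only $H(p_t)$ of the full marginal, not pairwise conditional moments. Concretely, take the 1D Ornstein--Uhlenbeck flow $dX=-X\,dt+\sqrt{2}\,dW$ and write $X_t-\mathbb{E}X_t=e^{-t}Y+\sqrt{1-e^{-2t}}\,Z$, with $Y$ the centred initial cluster variable and $Z\sim\mathcal{N}(0,1)$ independent. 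The fourth-moment gap is then $d_t=\bigl|e^{-4t}\Delta m_4+6e^{-2t}(1-e^{-2t})\Delta\sigma^2\bigr|$, where $\Delta$ denotes differences of the initial central moments. If $\Delta m_4=0\neq\Delta\sigma^2$, then $d_0=0$ while $d_t$ reaches $\tfrac{3}{2}|\Delta\sigma^2|$ at $t=\tfrac{1}{2}\log 2$. So for small $\epsilon$ the pair is merged at $t=0$ and then re-separates. This flow has a stationary law, and since $H(p_t)=-\mathrm{KL}(p_t\,\|\,\mathcal{N}(0,1))+\tfrac{1}{2}\mathbb{E}X_t^2+\mathrm{const}$, it is entropically homogeneous whenever $\mathbb{E}X_0^2\le 1$. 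Your OU fallback therefore works only for $n\in\{2,3\}$, where the gap is exactly $e^{-nt}\Delta\mathbf{M}^{(n)}_0$. In general, irreversibility has to be an explicit hypothesis. The paper assumes it tacitly when it asserts that conditional moments evolve ``continuously and monotonically''.

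\textbf{The second remedy in Step~3 also fails.} Your $2\epsilon$ diagnosis is correct, and it is exactly the unjustified step in the paper's proof of Proposition~\ref{prop:ultrametric}. That proof passes from a distance $\le 2\epsilon$ at $t_{\max}$ to a merger at level $\epsilon$. But the raw times are generically not ultrametric, not even asymptotically. In the same OU flow with $n=2$ the gap is $e^{-2t}|\Delta\sigma^2_{ij}|$, which is monotone with transversal crossings, so $t_{ij}^{(2,\epsilon)}=\tfrac{1}{2}\log(|\Delta\sigma^2_{ij}|/\epsilon)$. Take any three clusters with $\sigma_i^2<\sigma_k^2<\sigma_j^2$ and inner gaps $a,b$. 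The outer gap is $a+b$, hence $t_{ij}-\max(t_{ik},t_{kj})=\tfrac{1}{2}\log\frac{a+b}{\max(a,b)}>0$ for every $\epsilon<\min(a,b)$. Thus $t_{ij}\neq\tau_{ij}$, and the inequality in the statement, read for the raw times, is false. Your ``leading order'' agreement holds only relative to $t\to\infty$, where all merger times are $\tfrac{1}{2}\log(1/\epsilon)+O(1)$ and the hierarchy carries no information.

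What is provable is your Step~2 together with your first remedy:
\begin{itemize}
\item The single-linkage times $\tau_{ij}$ form an ultrametric. This needs no dynamical assumption at all if you use the edges $\{(a,b):t_{ab}\le t\}$, which are nested by construction.
\item Under an explicit monotonicity hypothesis, $t_{ij}^{(n,(K-1)\epsilon)}\le\tau_{ij}\le t_{ij}^{(n,\epsilon)}$.
\end{itemize}
That sandwich, not an exact ultrametric for $t_{ij}^{(n,\epsilon)}$, is the correct form of the theorem.
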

\begin{proof}
Each cluster's conditional moments evolve continuously and monotonically under the SDE; once two clusters merge (their moments become $\epsilon$-close), they stay merged. For any three clusters, consider the two whose merger occurs last. By continuity and the triangle inequality, the remaining cluster must merge no later than this maximal time, which yields the ultrametric property and the consequent hierarchical structure (\textbf{Proposition}~\ref{prop:ultrametric} in the appendix).
\end{proof}
Consequently, we can obtain a hierarchical clustering.
\begin{theorem}[SDEs Induce Trees (Informal)]
\label{thm:sde_to_tree}
An entropically homogeneous SDE, initialized from a distribution with well-separated modes, induces a unique hierarchical clustering via its moment-based merger times. This structure depends only on the SDE's dynamics and its initialization, hence is fully characterized by the corresponding PF-ODE.
\end{theorem}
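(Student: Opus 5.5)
The plan is to build the tree directly from the merger times, using \Cref{thm:ultrametric_informal} as the main tool. First, I will fix the initial clusters $\{C_k\}_{k=1}^K$ of $p_0$. Well-separated modes mean the super-level set can be chosen so that there are exactly $K$ components, and small perturbations of the level do not change these components. Hence the leaves are determined by $p_0$ alone.

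Next, I will define $d(i,j) := t_{ij}^{(n,\epsilon)}$ for $i\neq j$ and $d(i,i)=0$. The merger time is finite for every pair. The reason is that $p_t \to p_\infty$ in total variation by \Cref{def:ergodic}, and I will additionally assume the moments are uniformly integrable. Under that assumption the conditional moments $\mathbf{M}^{(n)}_t(C_k)$ all converge to the same quantity, so every pair eventually comes within $\epsilon$.

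Entropic homogeneity is used to argue that mergers are irreversible: once two clusters are $\epsilon$-close they stay $\epsilon$-close. With this, \Cref{thm:ultrametric_informal} gives the inequality $d(i,k)\le \max\{d(i,j),d(j,k)\}$. Symmetry is immediate from the definition. So $d$ is an ultrametric on $\{1,\dots,K\}$.

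I will then invoke the standard bijection between finite ultrametrics and rooted dendrograms, i.e.\ single-linkage at thresholds $\tau$. For each $\tau \ge 0$, define $i\sim_\tau j$ iff $d(i,j)\le\tau$. The ultrametric inequality makes $\sim_\tau$ transitive, so it is an equivalence relation. Its classes give partitions $\Pi(\tau)$ that become coarser as $\tau$ increases. Evaluating at the finitely many distinct merger times yields a nested sequence $\Pi_D\prec\cdots\prec\Pi_0$ with $\Pi_0$ a single block. Each block has a unique parent, which is exactly the structure in \Cref{def:decision-tree}, with time reversed so that the root is the latest merger. Uniqueness follows because this map from ultrametrics to dendrograms is injective. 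Ties between merger times produce multi-way splits but no ambiguity.

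Finally, I will show that the tree depends only on the PF-ODE. The quantities $\mathbf{M}^{(n)}_t(C_k)$ are integrals against the marginals $p_t$ restricted to $C_k$. These marginals are shared by the SDE and its PF-ODE, which has drift $\mathbf{b}-\tfrac12\nabla\cdot(\sigma\sigma^\top)-\tfrac12\sigma\sigma^\top\nabla\log p_t$. It follows that every $t_{ij}$, and hence the tree, is a functional of $(p_0,\text{PF-ODE})$ alone.

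The main obstacle is irreversibility. Entropy monotonicity does not by itself force $\|\mathbf{M}_t(C_i)-\mathbf{M}_t(C_j)\|$ to be nonincreasing. My first attempt is to redefine the merger time as $\inf\{t:\sup_{s\ge t}\|\cdot\|\le\epsilon\}$, which is monotone by construction and still finite by convergence. I expect this redefinition to be needed for the ultrametric property to hold rigorously.
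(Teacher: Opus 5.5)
The step that fails is ``$d$ is an ultrametric,'' and irreversibility does not rescue it. You are right that entropy monotonicity does not make $\|\mathbf{M}^{(n)}_t(C_i)-\mathbf{M}^{(n)}_t(C_j)\|$ nonincreasing, and your $\sup_{s\ge t}$ redefinition fixes that. But even with every pairwise distance nonincreasing, at $t_{\max}=\max\{t_{ij},t_{jk}\}$ the triangle inequality only gives $\|\mathbf{M}^{(n)}_{t_{\max}}(C_i)-\mathbf{M}^{(n)}_{t_{\max}}(C_k)\|\le 2\epsilon$, which is not the merger criterion. The proof of Proposition~\ref{prop:ultrametric} breaks at exactly this point; that proposition is the formal version of Theorem~\ref{thm:ultrametric_informal}, which you invoke, so citing it does not close the gap.

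The inequality is in fact false for admissible processes. For the one-dimensional OU process $dX_t=-X_t\,dt+\sqrt{2}\,dW_t$ one has $\mathrm{Var}(X_t)=e^{-2t}\mathrm{Var}(X_0)+1-e^{-2t}$ for every initial law. So for $n=2$ the pairwise distances are $e^{-2t}|v_i-v_j|$, where $v_k$ is the variance of $p_0(\cdot\mid C_k)$. These are strictly decreasing, yet if $v_i<v_j<v_k$ with both gaps larger than $\epsilon$, then $t_{ik}=\tfrac12\log\frac{v_k-v_i}{\epsilon}>\max\{t_{ij},t_{jk}\}$, because a line metric is never an ultrametric. Choosing $p_0$ with total variance below $1$ keeps the flow entropically homogeneous, since here $\tfrac{d}{dt}H(p_t)=I(p_t)-1\ge 1/\mathrm{Var}(p_t)-1>0$.

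Consequently your relation $d(i,j)\le\tau$ is not transitive, and $\Pi(\tau)$ is undefined. The repair is what ``single linkage'' should mean: take the connected components of the graph with edges $\{(i,j):d(i,j)\le\tau\}$. Equivalently, replace $d$ by its subdominant ultrametric $u(i,k)=\min_{i=i_0,\dots,i_m=k}\max_{l}d(i_{l-1},i_l)$. The components are then automatically nested in $\tau$, $u$ is an ultrametric by construction, and the dendrogram is unique. Its branch heights are $u(i,j)$ rather than the raw $t_{ij}$. Your $\sup$ redefinition is then only needed so that an edge present at time $\tau$ still describes an $\epsilon$-close pair at $\tau$. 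The paper's Theorem~\ref{thm:sde_induces_forest} instead recomputes merger times against each merged super-cluster. That yields a hierarchy without the inequality, though the ordering $t_2>t_1$ it asserts is not automatic either.

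Second, your finiteness step and your PF-ODE step use two different objects called $p_t(\cdot\mid C_k)$.
\begin{itemize}
\item Finiteness needs the law of $X_t$ given $X_0\in C_k$ (the paper's definition), i.e.\ $\mu_kP_t$, the time-$t$ law of the SDE started from $\mu_k=p_0(\cdot\mid C_k)$. These share a limit provided the SDE converges from each $\mu_k$, which is more than Definition~\ref{def:ergodic} literally states.
\item The PF-ODE step instead reads $\mathbf{M}^{(n)}_t(C_k)$ as an integral of the unconditional marginal $p_t$ over the region $C_k$. That is indeed a functional of $\{p_t\}$, but it converges to moments of $p_\infty$ restricted to $C_k$, which in general differ across $k$. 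So under this reading merger times need not be finite.
\end{itemize}
With the start-conditioned object, the single PF-ODE driven by $\nabla\log p_t$ does not reproduce the conditionals. Its flow $\Phi_t$ is one-to-one, so $(\Phi_t)_{\#}\mu_i$ and $(\Phi_t)_{\#}\mu_j$ stay disjointly supported, whereas $\mu_iP_t$ and $\mu_jP_t$ have full support for $t>0$. What is true is that $\mu_kP_t$ is the marginal of the PF-ODE started from $\mu_k$, whose drift contains $\nabla\log(\mu_kP_t)$. So the tree is characterized by the family of cluster-wise PF-ODEs, not by the single PF-ODE of $p_0$; equivalently, it depends on one-time laws of the SDE rather than on path-space correlations. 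The paper's Remark~\ref{rem:pf_ode_equivalence} makes the same conflation, so this part of the statement should be proved in the cluster-wise form.
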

\vspace{-12pt}
\begin{proof}
\textbf{Theorem} \ref{thm:sde_induces_forest} in \Cref{app:sde_to_tree} establishes that entropically homogeneous SDEs induce a dendogram by tracking the moment based merger times of data modes of $p_{\text{init}}$. As the construction process relies only on the marginals, the dendogram is fully characterized by the evolution of the corresponding PF-ODE \citep{song2021scorebased}.  We further show that stationarity is a necessary and sufficient condition for the tree to be rooted in \textbf{Proposition} \ref{prop:stationary}.
\end{proof}

\vspace{-12pt}
\section{Global Trajectory Score Matching}
\label{sec:gtsm}
The Tree-Flow equivalence implies that both gradient boosting and diffusion models solve the same underlying problem: learning an optimal trajectory from a simple prior to a complex data distribution. The ultimate goal is to match the ideal path-space measure, a notoriously intractable objective \citep{lai2025principles}. Our central insight is that this global problem (under standard regularity conditions) is decomposable, building on a \textbf{local-to-global consistency} principle: a trajectory is globally optimal if and only if its dynamics are correct at every infinitesimal step, for all states along the path. Any ``chunk'' of the trajectory can be thus seen as a sequence of such infinitesimal steps.

We formalize this principle via our Global Trajectory Score Matching (GTSM) framework, which replaces the single, intractable global objective with an infinite sum of local, tractable ones. At any point $(\mathbf{x}, t)$, the score function specifies the optimal direction for the next infinitesimal step of the reverse process. By integrating the local score-matching error over states and times, the objective sums the consistency checks over all infinitesimal chunks of every trajectory. This decomposability is what allows disparate algorithms to solve the same fundamental problem. End-to-end training of a single score network attempts to minimize the entire integral at once. In contrast, gradient boosting acts as a greedy optimizer, iteratively adding weak learners that reduce the largest remaining errors across the full GTSM integral.

\subsection{The Continuous GTSM Objective}

\begin{definition}[Continuous Global Trajectory Score Matching]
\label{def:cgtsm}
For an ideal SDE with law $\mathbb{P}^*$ and scores $\mathbf{s}_t^*(\mathbf{x})$, and a model $\mathbf{s}_\theta(\mathbf{x}, t)$, the CGTSM objective is:
\begin{equation}
\mathcal{L}_{\text{CGTSM}}(\theta) = \frac{1}{2} \int_0^T w(t) \mathbb{E}_{p_t^*} \bigl[ \bigl\| \mathbf{s}_\theta(\mathbf{x}, t) - \mathbf{s}_t^*(\mathbf{x}) \bigr\|^2_{\mathbf{D}(t)} \bigr] dt,
\label{eq:cgtsm}
\end{equation}
where $w(t) > 0$ is a weighting function and $\|\mathbf{v}\|_{\mathbf{D}} = \sqrt{\mathbf{v}^\top \mathbf{D} \mathbf{v}}$ is the diffusion-induced norm.\footnote{Technically this is a semi-norm unless $\mathbf{D}$ is strictly positive definite. For the SDE $d\mathbf{X}_t = \mathbf{b}(\mathbf{x}_t,t)\,dt + \mathbf{\sigma}(\mathbf{x}_t,t)\,d\mathbf{w}_t$, the diffusion tensor is given by $\mathbf{D}=\mathbf{\sigma}\mathbf{\sigma}^\top$, which is positive definite once one disallows rank deficient $\sigma$ 
\citep{oksendal2013stochastic}.}
\end{definition}

\begin{theorem}[CGTSM Optimality Implies Path Matching]
\label{thm:cgtsm_path_matching}
Achieving zero CGTSM loss for any strictly positive weighting $w(t) > 0$ is necessary and sufficient for matching the full path-space measures, i.e., $\mathbb{P}_\theta = \mathbb{P}^*$.
\end{theorem}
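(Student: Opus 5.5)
The plan is to prove both directions by identifying the CGTSM objective with the relative entropy between path measures. The tool is Girsanov's theorem applied to the reverse-time SDEs. Both path measures are realized as laws of reverse-time diffusions on $[0,T]$ that start from the same terminal marginal $p_T^*$ (the prior) and share the diffusion coefficient $\sigma(\mathbf{x},t)$:
\begin{equation*}
d\mathbf{x}_t = \bigl[\mathbf{b}(\mathbf{x}_t,t) - \mathbf{D}(t)\,\mathbf{s}(\mathbf{x}_t,t)\bigr]dt + \sigma(\mathbf{x}_t,t)\,d\bar{\mathbf{w}}_t .
\end{equation*}
Here $\mathbf{s}=\mathbf{s}^*_t$ for $\mathbb{P}^*$ and $\mathbf{s}=\mathbf{s}_\theta$ for $\mathbb{P}_\theta$. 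The two drifts differ by $\mathbf{D}(t)(\mathbf{s}_\theta - \mathbf{s}_t^*)$. Under standard regularity (Novikov's condition, nondegenerate $\mathbf{D}=\sigma\sigma^\top$ as in the footnote, and a finite loss), Girsanov gives
\begin{equation*}
\mathrm{KL}(\mathbb{P}^*\,\|\,\mathbb{P}_\theta) = \frac{1}{2}\int_0^T \mathbb{E}_{p_t^*}\bigl[\|\mathbf{s}_\theta(\mathbf{x},t)-\mathbf{s}_t^*(\mathbf{x})\|^2_{\mathbf{D}(t)}\bigr]\,dt .
\end{equation*}
The squared $\mathbf{D}^{-1}$-norm of the drift difference $\mathbf{D}(\mathbf{s}_\theta-\mathbf{s}^*)$ equals the $\mathbf{D}$-norm of the score difference. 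This is exactly why Definition~\ref{def:cgtsm} uses the diffusion-induced norm. Because the expectation is under $\mathbb{P}^*$, its time-$t$ marginal is $p_t^*$, which matches the measure in \eqref{eq:cgtsm}.

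\textbf{Sufficiency.} Suppose $\mathcal{L}_{\text{CGTSM}}(\theta)=0$ with $w(t)>0$. Then the integrand vanishes for almost every $t$. Since $\mathbf{D}(t)\succ 0$, this gives $\mathbf{s}_\theta=\mathbf{s}^*_t$ for $p_t^*$-a.e.\ $\mathbf{x}$ and a.e.\ $t$. Strict positivity of $w$ is what lets us drop the weight here. The unweighted integral above is therefore zero, so $\mathrm{KL}(\mathbb{P}^*\|\mathbb{P}_\theta)=0$ and $\mathbb{P}_\theta=\mathbb{P}^*$.

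An alternative route avoids Girsanov's integrability conditions. Equal drifts on the support of $p_t^*$ give the same Fokker--Planck equation with the same initial condition. Uniqueness of its solution (assumed under the regularity conditions) forces equal marginals. Equal Markov generators then give equal finite-dimensional distributions, hence equal path laws.

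\textbf{Necessity.} Suppose $\mathbb{P}_\theta=\mathbb{P}^*$. Then the time marginals agree, and so do the drift characteristics of the two semimartingales. The drift of a continuous semimartingale is determined by its law: it is the $dt$-density of the compensator in the Doob--Meyer decomposition, or equivalently the limit $\lim_{h\to0}h^{-1}\mathbb{E}[\mathbf{x}_{t+h}-\mathbf{x}_t\mid\mathbf{x}_t]$. Hence $\mathbf{D}(t)\mathbf{s}_\theta=\mathbf{D}(t)\mathbf{s}_t^*$ holds $dt\otimes p_t^*$-a.e. By invertibility of $\mathbf{D}$, the scores coincide there, and the loss is zero for every weighting $w$.

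\textbf{Main obstacle.} The difficulty is justifying Girsanov's theorem, or Fokker--Planck uniqueness, rigorously. This needs Novikov or a localization argument, and care near $t=0$, where scores may blow up. My first attempt is to work on $[\delta,T]$, apply Girsanov there with stopping-time localization, and then pass $\delta\to0$ using monotone convergence of both the KL and the integral. Equality of laws on $[0,T]$ follows from equality on each $[\delta,T]$ together with path continuity at $0$.
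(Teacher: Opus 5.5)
Your proof is correct. For sufficiency you follow the paper. Both arguments use Girsanov on the reverse-time SDEs to identify $\mathrm{KL}(\mathbb{P}^*\|\mathbb{P}_\theta)$ with the CGTSM integral at $w\equiv 1$, and then use $w>0$ to pass between the weighted and unweighted integrals. You also make explicit two points the paper leaves implicit:
\begin{itemize}
\item the computation $\|\sigma^{-1}\mathbf{D}\,\Delta\mathbf{s}\|^2=\|\Delta\mathbf{s}\|_{\mathbf{D}}^2$, where $\Delta\mathbf{s}=\mathbf{s}_\theta-\mathbf{s}_t^*$;
\item the requirement that both reverse processes start from $p_T^*$. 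Otherwise a term $\mathrm{KL}(p_T^*\|\pi)$ for the model's prior $\pi$ survives, and zero loss no longer forces $\mathbb{P}_\theta=\mathbb{P}^*$.
\end{itemize}

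The real difference is in necessity. The paper reads the Girsanov identity backwards: $\mathbb{P}_\theta=\mathbb{P}^*\Rightarrow\mathrm{KL}=0\Rightarrow$ integrand zero a.e. You instead recover the drift from the law, using uniqueness of the semimartingale decomposition. This buys you robustness:
\begin{itemize}
\item Your route needs no Novikov or finite-energy hypothesis on the model drift, only that $\mathbb{P}_\theta$ is the law of a weak solution.
\item It does not even need $\mathbf{D}\succ 0$, since $\mathbf{D}\,\Delta\mathbf{s}=0$ already gives $\|\Delta\mathbf{s}\|_{\mathbf{D}}=0$.
\end{itemize}
The paper's route is shorter, but it ties both directions to the validity of the Girsanov KL formula.

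One sharpening of your ``main obstacle''. In the sufficiency direction, once the drifts agree $dt\otimes d\mathbb{P}^*$-a.e., $\mathbb{P}^*$ itself solves the model's martingale problem; equivalently, the Girsanov density is identically $1$. So Novikov and the $[\delta,T]$ localization are not needed to get equality of laws. The hypothesis actually doing the work is uniqueness in law for the model SDE started from $p_T^*$. Your Fokker--Planck alternative rests on the same hypothesis.
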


\begin{proof}[Proof sketch]
By Girsanov's theorem \citep{oksendal2013stochastic}, the KL divergence between path-space measures, $D_{\text{KL}}(\mathbb{P}^* \| \mathbb{P}_\theta)$, is an integral of the squared difference between the process drifts. Since the reverse-time drift is a function of the score, this difference reduces to the CGTSM integrand. The loss is zero if and only if the KL divergence is zero. The full proof is in \Cref{app:gtsm_implications} (\textbf{Corollary} \ref{cor:gtsm_path_matching}).
\end{proof}

\subsection{Boosting as a CGTSM Solver}

Gradient boosting constructs an ensemble $F_m = \sum_{i=1}^m h_i$ by iteratively adding weak learners (decision trees) that fit the residual error. To map this additive process into our SDE framework, we introduce the \textit{net decision tree} abstraction. At each step $m$, the net tree $T_m$ is a single, monolithic hierarchical model whose partition $\Pi_m$ is the common refinement of the partitions of all constituent \; learners $\{h_1, \dots, h_m\}$, and whose leaf values are the sum of their predictions.

This abstraction transforms the boosting algorithm from a sequence of functional additions into a sequence of  structural refinements. We show that the sequence of partitions $\{\Pi_{i}\}_{i=1}^{m}$ induced by the net trees is monotonic and strictly refining (\Cref{app:monotonic}). We then apply our Tree-to-Flow mapping (\textbf{Theorem} \ref{thm:tree_to_sde}) to the canonical hierarchy of the net trees $\{T_{1},T_{2},\dots T_{m}\}$, yielding a series of unique processes\footnote{For stochastic boosting \citep{friedman2002stochastic}, \textbf{Theorem} \ref{thm:tree_to_sde} reduces to a SDE as the diffusion term is no longer $\mathbf{0}$. } $\{S_1, S_2, \dots, S_m\}$, moving from coarse to fine approximations. The mechanism guiding this trajectory is the minimization of a discrete GTSM objective.

\begin{definition}[Discrete GTSM for Boosting]
\label{def:discrete_gtsm}
The discrete GTSM objective is the sum of score-matching losses:
\begin{equation}
\mathcal{L}_{\text{DGTSM}}(\{h_m\}) = \sum_{i=1}^{m} \mathbb{E}_{(\mathbf{x}, y)} \left[ \left\| h_{i}(\mathbf{x}) - \mathbf{r}_{i}(\mathbf{x}) \right\|^2 \right],
\label{eq:dgtsm}
\end{equation}
where the residual $\mathbf{r}_{i}(\mathbf{x}) = y - F_i(\mathbf{x})$ is an unbiased estimator of the optimal score update for the underlying process at stage $i$ (\Cref{app:boosting_as_drift_derivation}, \textbf{Theorem} \ref{thm:residual_as_score}).
\end{definition}

We rewrite the problem as a finite-horizon sequential decision problem to showcase optimality. Define the state at step $m$ as the current model's induced process ($S_{m}$), the action as the choice of the next weak learner ($h_{m+1}$), a deterministic state transition $(S_{m+1} = T(S_m, h_{m+1})$, and a stage cost $C(S_m, h_{m+1})$ given by the immediate score-matching error. The total cost to be minimized is the sum of these stage costs, which is precisely the discrete GTSM objective~\eqref{eq:dgtsm}. Through this formulation we obtain the following result. 

\begin{theorem}[Greedy Boosting is Globally Optimal]
\label{thm:boosting_optimal}
In the continuous limit, where the class of weak learners is sufficiently rich to approximate any function, the greedy policy of selecting the weak learner that minimizes the immediate stage cost at each step is the globally optimal policy for the discrete GTSM problem.
\end{theorem}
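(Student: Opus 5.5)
The plan is to cast the discrete GTSM problem as a finite-horizon deterministic dynamic program and verify the Bellman optimality equations directly. The state is the induced process $S_m$, equivalently the current ensemble $F_m$. The action is $h_{m+1}$, the transition is $F_{m+1}=F_m+h_{m+1}$, and the stage cost is $C(S_m,h_{m+1})=\mathbb{E}\|h_{m+1}(\mathbf{x})-\mathbf{r}_{m+1}(\mathbf{x})\|^2$. Define the value function $V_k(F)$ as the minimal remaining cost from stage $k$ with current ensemble $F$, with terminal condition $V_M\equiv 0$.

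By backward induction on $k$, I will show that $V_k\equiv 0$ for every reachable state, and that the minimizing action at each stage is the greedy one, $h_{k+1}=\arg\min_h C(S_k,h)$.

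The key step uses the continuous-limit richness hypothesis. Because the weak-learner class can approximate any function, the stage cost can be driven to zero by choosing $h_{k+1}$ equal to the target of that stage, i.e.\ the optimal score update $\mathbf{r}_{k+1}$, which by \Cref{thm:residual_as_score} is an unbiased estimator of the optimal score increment. Stage costs are nonnegative, so any policy attaining zero cost at every stage is globally optimal.

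The induction step is then as follows. Suppose $V_{k+1}(F)=0$ for every $F$, which is the hypothesis that with a rich class zero future cost is attainable from any state. Then
\[
V_k(F)=\min_h\bigl[C(F,h)+V_{k+1}(F+h)\bigr]=\min_h C(F,h)=0,
\]
and the minimizer is exactly the greedy choice. Hence the greedy and optimal policies coincide, and the total cost is $\sum_i 0$, the global minimum.

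To make the ``any state'' claim precise, I will use the net-tree monotonic refinement result (\Cref{app:monotonic}) together with \Cref{thm:tree_to_sde}. These show that every reachable $F_k$ induces a well-defined process $S_k$, so the residual targets are defined at every stage.

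The main obstacle is the coupling between stages. The residual $\mathbf{r}_{i}$ depends on earlier actions, so in principle a myopic choice could change future targets in a way that raises later costs. I would address this in two ways.

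First, I would observe that the future value $V_{k+1}$ is independent of the state in the rich limit, because zero cost is reachable from anywhere. This independence is what decouples the Bellman recursion.

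Second, to be careful about the definition of $\mathbf{r}_i$, I would interpret the stage-$i$ target as the score increment of the ideal process at stage $i$, via the PF-ODE correspondence. Under this interpretation, a zero stage cost at step $i$ means that $S_i$ matches the ideal $i$-th process. The subsequent targets are then the ideal ones, and no greedy-versus-lookahead trade-off exists.

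As an alternative, I would link the result to \Cref{thm:cgtsm_path_matching}. In the continuous limit, the sum of stage costs is a Riemann sum of the CGTSM integral. Zero cost at every stage then implies path-measure matching, which certifies global optimality from the continuous side as well.
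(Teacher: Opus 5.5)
Your route is sound, and at the decisive step it differs from the paper's. Both proofs run backward induction on the cost-to-go. The paper restricts each stage to a finite $\epsilon$-net of learners (Lemma~\ref{lem:finite_net}) so that the minima exist. It then drops $V_{m+1}(T(S_m,h_{m+1}))$ from the Bellman minimization by appealing to additive separability and deterministic transitions; richness enters only through the limit $\epsilon_m\to 0$.

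You instead use richness to show that $\inf_h C(S,h)$ is the same at every state. Hence $V_{k+1}$ is constant and the future term genuinely drops out. This is the ingredient that actually licenses the decoupling. Separability alone does not, because $T(S_m,h)$ depends on $h$, and for a restricted class greedy can lose. For example, take target $0$, $F_0=3$, learners $\{0,-1,-2.5\}$ and unit step. Greedy pays $0.25$ at every stage, whereas $-1,-1,-1,0,0,\dots$ pays $5$ in total, which is cheaper once $M\ge 21$.

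Your argument in fact collapses to a one-line bound: every policy pays at least $\sum_k \inf_h C(S_k,h)$, and greedy attains it. What the paper's $\epsilon$-net buys, and what you should add, is attainment. Approximating any function only yields an infimum, so either assume the minimizer lies in the class or conclude that an $\epsilon$-greedy policy is $M\epsilon$-optimal.

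One step needs correcting. The stage cost can be driven to zero only if the target $y-F(\mathbf{x})$ is a function of $\mathbf{x}$ alone, i.e.\ noise-free labels, or the empirical objective with distinct $\mathbf{x}_i$. Under the expectation over $(\mathbf{x},y)$,
\[
\mathbb{E}\bigl[(h(\mathbf{x})-y+F(\mathbf{x}))^2\bigr]=\mathbb{E}\bigl[(F(\mathbf{x})+h(\mathbf{x})-\mathbb{E}[y\mid\mathbf{x}])^2\bigr]+\mathbb{E}\,\mathrm{Var}(y\mid\mathbf{x}).
\]
So the infimum is the irreducible noise level $c^\ast$ for every $F$, and the induction should read $V_k\equiv(M-k)c^\ast$ rather than $V_k\equiv 0$. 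You only use state-independence, so nothing else changes.

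Three of your auxiliary steps can be dropped. The appeal to the net-tree refinement result is unnecessary, since your induction already covers every state, reachable or not. The reinterpretation of targets as ideal score increments is likewise not needed. The Riemann-sum link to Theorem~\ref{thm:cgtsm_path_matching} is heuristic rather than a proof of the discrete statement.
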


\begin{proof}[Proof sketch]
The proof proceeds by backward induction on the optimal cost-to-go, or value function $V_m(S_m)$. The Bellman equation \citep{bertsekas2012dynamic} relates the value at stage $m$ to the value at stage $m+1$. Due to the additive separability of the GTSM objective and the deterministic state transitions, the optimal action at stage $m$ is simply the one that minimizes the immediate stage cost, independent of the future value function. This proves that the greedy choice is optimal at every stage. The full proof is in \Cref{app:boosting_as_drift_derivation}.
\end{proof}

\section{Algorithmic Instantiations}
\label{sec:algorithms}

To operationalize the theoretical unification of decision trees and diffusion processes, we propose two novel algorithms for handling tabular data. These methods leverage the hierarchical partitions of trees to provide structural inductive biases to neural networks, one for generative modeling and the other for discriminative distillation. These algorithms directly approximate the Global Trajectory Score Matching (GTSM) objective and achieve competitive performance while providing substantial computational advantages.

\subsection{\treeflow: Tree-Conditioned Flow Matching}
Recent attempts to apply diffusion models to tabular data, such as TabDDPM \citep{kotelnikov2023tabddpm}, achieve strong results but often have high computational costs. \textbf{TreeFlow} addresses this limitation by using decision tree partitions as a conditioning mechanism for \textbf{Conditional Flow Matching (CFM)} \citep{lipman2023flow}, achieving competitive generation while being 2X faster. 

\paragraph{Intuition.} \treeflow first ``maps'' the data manifold using a decision tree. It assigns  each data point a unique ``path encoding,'' a vector representing its journey from the root to a leaf. We then train a velocity field $v_\theta$ to move from noise to data, \textit{conditioned} on these paths. This approach partitions the generative task as the model learns specialized flows for different regions of data space. At generation time, we can target specific partitions, resulting in samples with state-of-the-art fidelity (lowest Wasserstein distance on 4/5 benchmarks) and significantly faster generation (2$\times$ speedup over TabDDPM). By viewing \treeflow through the GTSM lens, we provide a rigorous \textbf{distributional convergence proof} (\Cref{app:treeflow}, \textbf{Corollary} \ref{cor:treeflow_distributional}), establishing that it captures the true conditional data distribution within each partition. Informally, under a refinement procedure that interpolates between tree partitions, the sequence of conditional densities converges to a deterministic flow uniquely defined by the tree geometry.

\begin{algorithm}[ht!]
\caption{\treeflow: Training Phase}
\label{alg:treeflow-training}
\begin{algorithmic}[1]
\State \textbf{Input:} Dataset $\mathcal{D}$, tree depth $D$
\Statex \textbf{Learn Data Partitioning Structure}
\State Train decision tree $\mathcal{T}$ on $\mathcal{D}$ to depth $D$
\State For each $\mathbf{x}_i$, compute path encoding: $\mathbf{p}_i = \text{PathEncoder}(\mathcal{T}, \mathbf{x}_i)$
\Statex \textbf{Train Conditional Flow Matching Model}
\State Initialize velocity field $v_\theta(\mathbf{x}, t, \mathbf{p}, y)$
\For{training steps $s = 1, \ldots, S$}
    \State Sample batch $\{(\mathbf{x}_i^{\text{data}}, \mathbf{p}_i, y_i)\}_{i \in \mathcal{B}}$
    \State Sample noise: $\mathbf{x}_i^{\text{noise}} \sim \mathcal{N}(\mathbf{0}, \mathbf{I})$
    \State Sample time: $t \sim \text{Uniform}(0, 1)$
    \State Interpolate: $\mathbf{x}_i^{(t)} = t \mathbf{x}_i^{\text{data}} + (1-t) \mathbf{x}_i^{\text{noise}}$
    \State Target velocity: $\mathbf{v}_i^* = \mathbf{x}_i^{\text{data}} - \mathbf{x}_i^{\text{noise}}$
    \State Loss: $\mathcal{L} = \sum_{i \in \mathcal{B}} \|v_\theta(\mathbf{x}_i^{(t)}, t, \mathbf{p}_i, y_i) - \mathbf{v}_i^*\|^2$
    \State Update: $\theta \leftarrow \theta - \eta \nabla_\theta \mathcal{L}$
\EndFor
\State \textbf{Output:} Trained velocity field model $v_\theta$ and tree $\mathcal{T}$
\end{algorithmic}
\end{algorithm}

\begin{algorithm}[ht!]
\caption{\treeflow: Generation Phase}
\label{alg:treeflow-generation}
\begin{algorithmic}[1]
\State \textbf{Input:} Trained model $v_\theta$, tree $\mathcal{T}$, target label $y_{\text{target}}$, partition $R_{\text{target}}$
\Statex \textbf{Partition-Targeted Generation}
\State Sample reference $\mathbf{x}_{\text{ref}} \sim \{\mathbf{x}_i \in \mathcal{D} : y_i = y_{\text{target}}, \mathbf{x}_i \in R_{\text{target}}\}$
\State \mbox{Compute conditioning path: $\mathbf{p}_{\text{ref}} = \text{PathEncoder}(\mathcal{T}, \mathbf{x}_{\text{ref}})$}
\State Initialize: $\mathbf{x}^{(0)} \sim \mathcal{N}(\mathbf{0}, \mathbf{I})$
\For{$t = 0$ to $1$ with step size $\Delta t$}
    \State $\mathbf{x}^{(t+\Delta t)} \leftarrow \mathbf{x}^{(t)} + v_\theta(\mathbf{x}^{(t)}, t, \mathbf{p}_{\text{ref}}, y_{\text{target}}) \cdot \Delta t$
\EndFor
\State \textbf{Output:} Synthetic sample $\tilde{\mathbf{x}} = \mathbf{x}^{(1)}$
\end{algorithmic}
\end{algorithm}

\subsection{\dsmtree: Distilling Trees via Hierarchical Score Matching}

Despite the dominance of deep learning in vision and language, tabular data remains a domain where tree-based ensembles like XGBoost \citep{chen2015xgboost} and Random Forests \citep{breiman2001random} often outperform neural networks \citep{grinsztajn2022tree}. Neural networks typically lack the axis-aligned splitting bias and the ability to handle the ``jumpy'' manifolds characteristic of tabular datasets. \textbf{Discretized Score Matching for Trees (\dsmtree)} bridges this gap by distilling the \textit{entire decision trajectory} of a decision tree into a neural network which to our knowledge is the first method to supervise networks on complete hierarchical logic rather than just leaf predictions.

\vspace{-0.17in}

\paragraph{Intuition.} Unlike standard knowledge distillation \citep{hinton2015distilling} based on output supervision, \dsmtree treats every internal split in the tree as a directional guide. By training the network to predict the correct decision at \textit{every} level $j$ of the hierarchy, the network learns the tree's coarse-graining flow. This ``fluidizes'' the discrete logic of the tree into a continuous, differentiable neural function that retains the tree's superior bias. The training process, detailed in \Cref{alg:dsm-training}, closely resembles score matching \citep{song2021scorebased} by design, as it follows from the GTSM framework. As shown in \Cref{app:dsm_tree}, \dsmtree is supported by a finite-sample \textbf{convergence guarantee} (\textbf{Theorem} \ref{thm:dsm_convergence}). Informally, the reasoning proceeds as follows, if the model can match the true decisions at every level then gradient-based training gradually reduces the per-level prediction errors hence as total error reduces asymptotically the model reproduces all paths exactly. The inference process is described in \Cref{alg:dsm-inference}.

\begin{algorithm}[ht!]
\caption{\dsmtree: Training Phase}
\label{alg:dsm-training}
\begin{algorithmic}[1]
\State \textbf{Input:} Dataset $\mathcal{D}$, tree depth $D$
\Statex \textbf{Base Tree Generation (Teacher Model)}
\State Train oracle $\mathcal{O}$ (e.g., Random Forest) on $\mathcal{D}$
\State Generate pseudo-labels: $\tilde{y}_i = \mathcal{O}(\mathbf{x}_i)$ 
\State Train base decision tree $\mathcal{T}$ on $\{(\mathbf{x}_i, \tilde{y}_i)\}$ to depth $D$ 
\Statex \textbf{Conditional Score Model Training (Student Model)}
\State Initialize network $M_\theta(\mathbf{x}, j): \mathbb{R}^d \times \{0..D-1\} \to \Delta^1$
\For{training steps $t = 1, \ldots, T$}
    \State Sample mini-batch $\{(\mathbf{x}_i, y_i)\}_{i \in \mathcal{B}}$
    \State Sample levels $\{j_i\}_{i \in \mathcal{B}} \sim \text{Uniform}(0,D-1)$
    \State For each $(\mathbf{x}_i, j_i)$: extract $d_i^* = \text{TreeDecision}(\mathcal{T}, \mathbf{x}_i, j_i)$
    \State Compute loss: $\mathcal{L} = \sum_{i \in \mathcal{B}} \mathbb{I}[d_i^* \neq \bot] \cdot \ell_{\text{CE}}(M_\theta(\mathbf{x}_i, j_i), d_i^*)$
    \State Update parameters: $\theta \leftarrow \theta - \eta \nabla_\theta \mathcal{L}$
\EndFor
\State \textbf{Output:} Trained model $M_\theta$ and teacher tree $\mathcal{T}$
\end{algorithmic}
\end{algorithm}

\begin{algorithm}[ht!]
\caption{\dsmtree: Inference Phase}
\label{alg:dsm-inference}
\begin{algorithmic}[1]
\State \textbf{Input:} Trained model $M_\theta$, teacher tree $\mathcal{T}$, test sample $\mathbf{x}$
\Statex \textbf{Neural Network Inference (Mimicking Tree Traversal)}
\State Initialize: $\text{node} \leftarrow \text{root}(\mathcal{T})$
\For{level $j = 0, \ldots, D-1$}
    \If{node is a leaf} \textbf{break} \EndIf
    \State Predict decision: $\hat{d}_j = \arg\max M_\theta(\mathbf{x}, j)$ 
    \State Navigate to next node: $\text{node} \leftarrow \text{child}_{\hat{d}_j}(\text{node})$
\EndFor
\State Get prediction from final node: $\hat{y} = \text{Value}(\text{node})$
\State \textbf{Output:} Final prediction $\hat{y}$
\end{algorithmic}
\end{algorithm}

\section{Experiments}
\label{sec:experiments}

We conduct experiments with two goals: (1) provide empirical evidence for the Tree-Flow correspondence by showing that diffusion models learn implicit hierarchical structure and show information decay analogous to decision trees, and (2) demonstrate the practical utility of our Global Trajectory Score Matching (GTSM) framework via the \treeflow algorithm for generation and the \dsmtree algorithm for discriminative distillation. Additional details, architectures, and full results appear in \Cref{app:exp}.

\subsection{Verifying the Tree-Flow Equivalence}

\paragraph{Implicit Tree Structure in Diffusion Models.}
To validate our claim from \S2.2, we test whether trained diffusion models learn canonical trees. We train a simple MLP on synthetic 2D datasets from \texttt{scikit-learn} \citep{scikit-learn} and discover the learned hierarchy via time-domain agglomerative clustering. For each initial cluster, we simulate forward evolution using an SDE with drift given by the model's score function, tracking centroids and statistical spread. We identify the earliest time $t$ when cluster pairs become indistinguishable (inter-centroid distance below combined spread), these merger events construct a dendrogram where the vertical axis represents time (~\Cref{fig:implicit_tree}(b)). ~\Cref{fig:implicit_tree} shows original clusters (a), resulting hierarchy (b), and an intermediate PF-ODE state (c), providing empirical evidence that learned dynamics encode discrete hierarchical structure. See \Cref{sec:exp-1} for further results and additional details.

\begin{figure}[t!]
    \centering
    \includegraphics[width=\columnwidth]{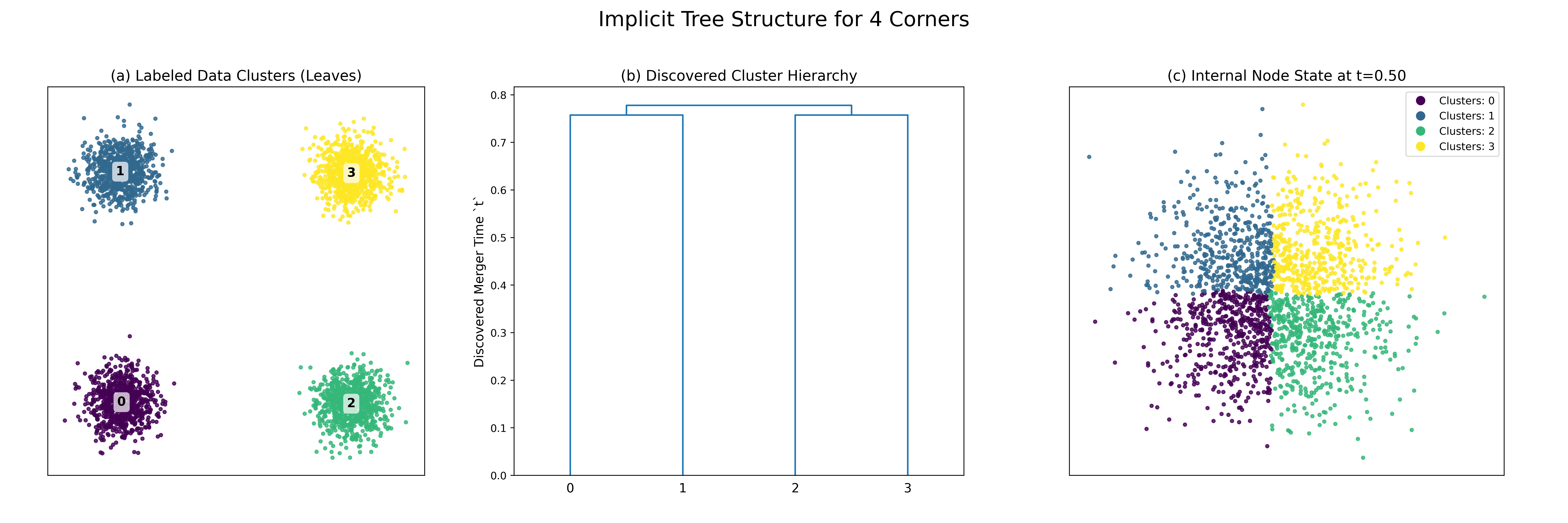}
    \caption{
        \textbf{Implicit tree structure discovered from a trained diffusion model on the 4-Corners dataset using the corrected time-domain clustering method.}
        \textbf{(a)} The original labeled data clusters, representing the leaves of the hierarchy at $t=0$.
        \textbf{(b)} The discovered hierarchical structure (dendrogram) obtained by tracking the forward SDE trajectories of each cluster and performing agglomerative clustering. The vertical axis is not rescaled; it directly represents the discovered merger time $t$.
        \textbf{(c)} A visualization of the system's state at the internal node time $t=0.50$. The plot shows the particle positions at this intermediate time, generated by solving the learned reverse PF-ODE. The coloring, based on the original labels, shows that the clusters have begun to overlap and lose their distinct structure, as predicted by the hierarchy.
    }
    \label{fig:implicit_tree}
\end{figure}
\vspace{-0.2in}
\paragraph{Information-Theoretic Analogy.}
We next compare information decay in decision trees and diffusion models. For a \texttt{DecisionTreeClassifier} on MNIST \citep{lecun1998gradient}, we measure weighted-average class entropy at each depth. For forward diffusion, we use the information-theoretic proxy 
$\big(1/(1+\text{SNR})\big)$. ~\Cref{fig:entropy_decay} (top) shows both exhibit similar sigmoidal entropy increase. Visual prototypes (bottom) illustrate this, tree prototypes (pixel-wise node averages) blur progressively toward the root, mirroring how diffusion noise destroys digits. See \Cref{sec:exp-2} for additional datasets.

\begin{figure}[t]
    \centering
    \includegraphics[width=\columnwidth]{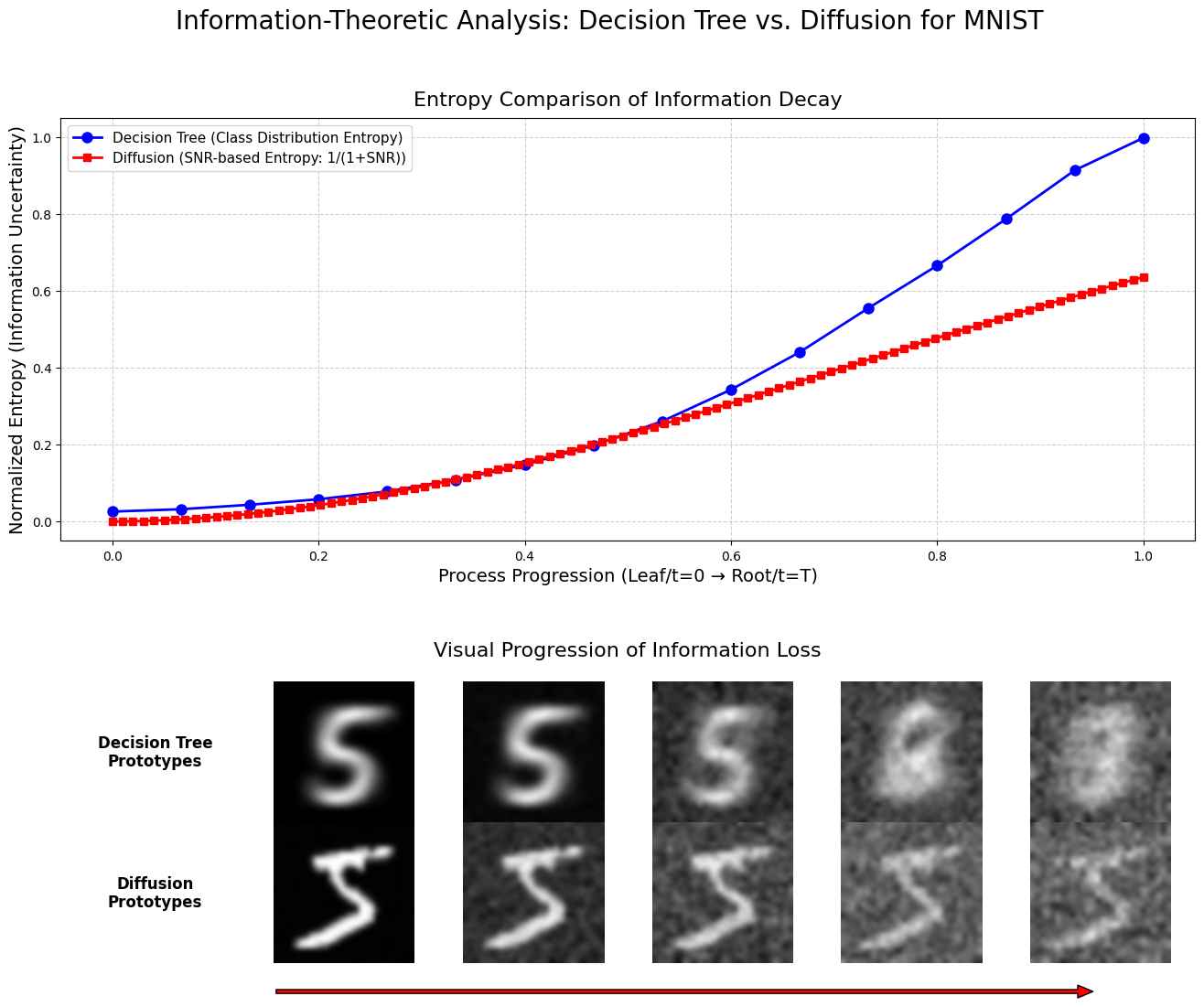}
    \caption{Comparison of information decay on MNIST. (Top) The normalized entropy of a decision tree (by depth) and a diffusion process (by time) follow near identical trajectories. (Bottom) Visual prototypes show analogous information loss, with tree prototypes averaging over data subsets and diffusion prototypes showing a single instance being noised. (Small Gaussian noise is added to tree prototypes to ease visual distinguishability; zoom for clarity.}
    \label{fig:entropy_decay}
\end{figure}
\subsection{Algorithmic Instantiations for Tabular Data}

\paragraph{\treeflow for Generative Modeling.}
We evaluate \treeflow's ability to generate high-fidelity synthetic tabular data. We compare it against strong baselines from the Synthetic Data Vault (\texttt{sdv}) library \citep{sdv}: \texttt{GaussianCopula}, \texttt{TVAE}, and \texttt{CTGAN}, as well as a competitive diffusion model for tabular data (\texttt{TabDDPM}) \citep{kotelnikov2023tabddpm}. We assess performance on four axes: \textbf{Utility} (TSTR Accuracy), \textbf{Fidelity} (Wasserstein Distance), \textbf{Structure} (Correlation Error), and \textbf{Efficiency} (Runtime). The aggregated results over 5 runs, shown in Figure~\ref{fig:treeflow_performance} and detailed in \Cref{tab:treeflow_results} in the appendix, demonstrate that \treeflow achieves competitive performance across multiple dimensions. Specifically, \treeflow obtains the highest TSTR accuracy on 3/5 benchmarks (98.1\% on Wine, 93.9\% on Cancer), the lowest Wasserstein distance on 4/5 benchmarks, and the lowest correlation error on 3/5 benchmarks, all while being 2$\times$ faster than TabDDPM. These results highlight the substantial advantage of conditioning the generative flow on the rich structural prior provided by the decision tree's hierarchical partition.

\begin{figure}[ht!]
    \centering
    \includegraphics[width=\columnwidth]{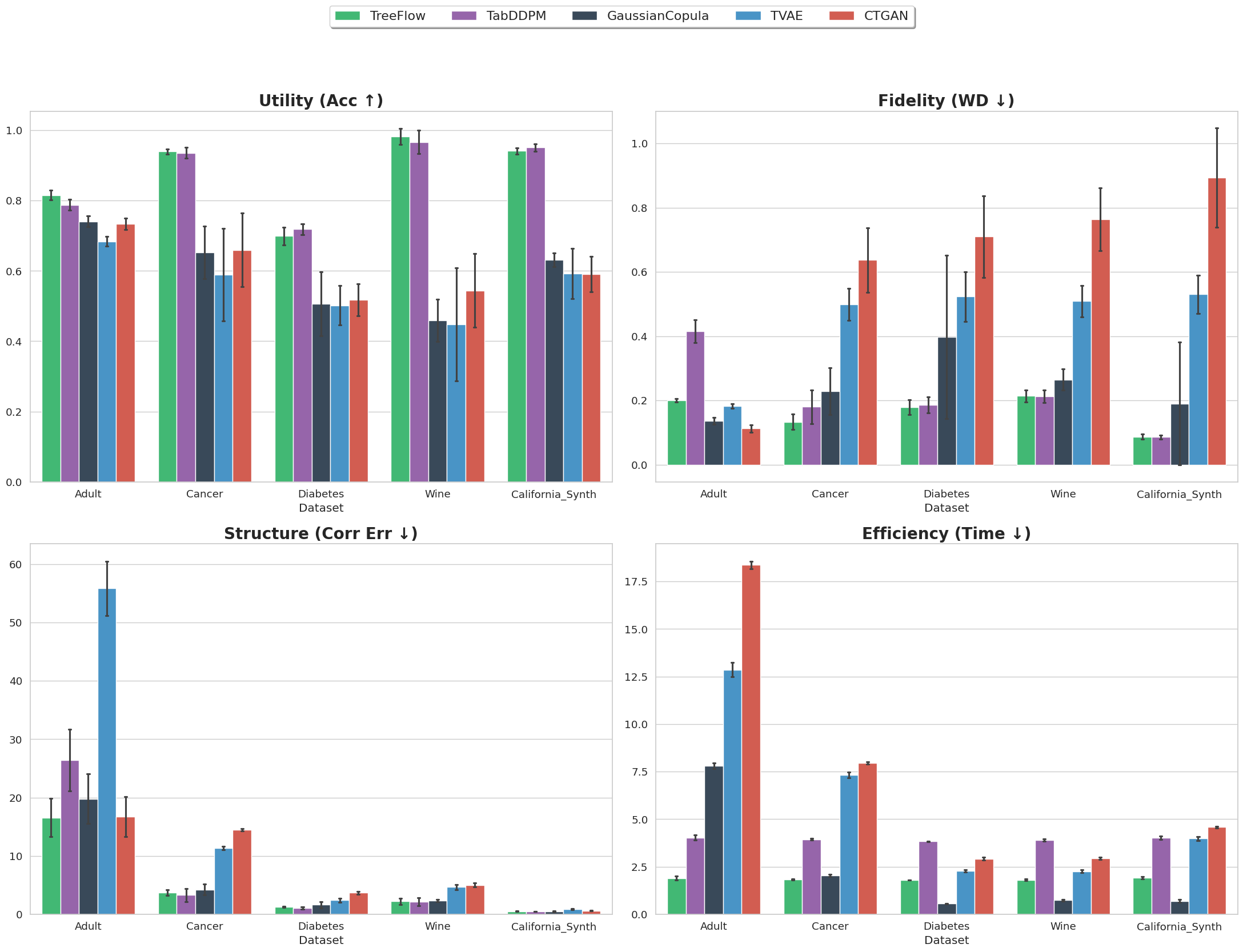}
    \caption{
        \treeflow compared against baseline generative models across a suite of tabular benchmarks. 
        We evaluate on four axes: Utility (TSTR Accuracy $\uparrow$), Fidelity (Wasserstein Distance $\downarrow$), Structure (Correlation Error $\downarrow$), and Efficiency (Runtime $\downarrow$). 
       \treeflow achieves highest TSTR accuracy on 3/5 benchmarks, lowest 
Wasserstein distance on 4/5 benchmarks, and lowest correlation error 
on 3/5 benchmarks, while being 2X faster than TabDDPM. Zoom for clarity (or see \Cref{fig:appendix_treeflow_fullpage}).}
    \label{fig:treeflow_performance}
\end{figure}

\paragraph{\dsmtree for Discriminative Modeling.}
We evaluate \dsmtree's ability to distill a decision tree into a neural network. Our baseline is a strong single \texttt{DecisionTreeClassifier} (Base Tree) trained on the soft labels from a \texttt{RandomForestClassifier} oracle. \dsmtree is then trained to replicate the decision path of this Base Tree at every level, as described in \S4.1. As shown in Figure~\ref{fig:dsm_performance}, \dsmtree successfully distills tree hierarchies into neural networks, matching teacher performance within 2\% on 4/5 datasets and outperforming by 3.7\% on Heart Disease. This demonstrates that \dsmtree can successfully transfer complete hierarchical decision logic (not just leaf predictions) into differentiable neural networks, proving that continuous representations can faithfully capture discrete tree structure.

\begin{figure}[ht!]
    \centering
    \includegraphics[width=\columnwidth]{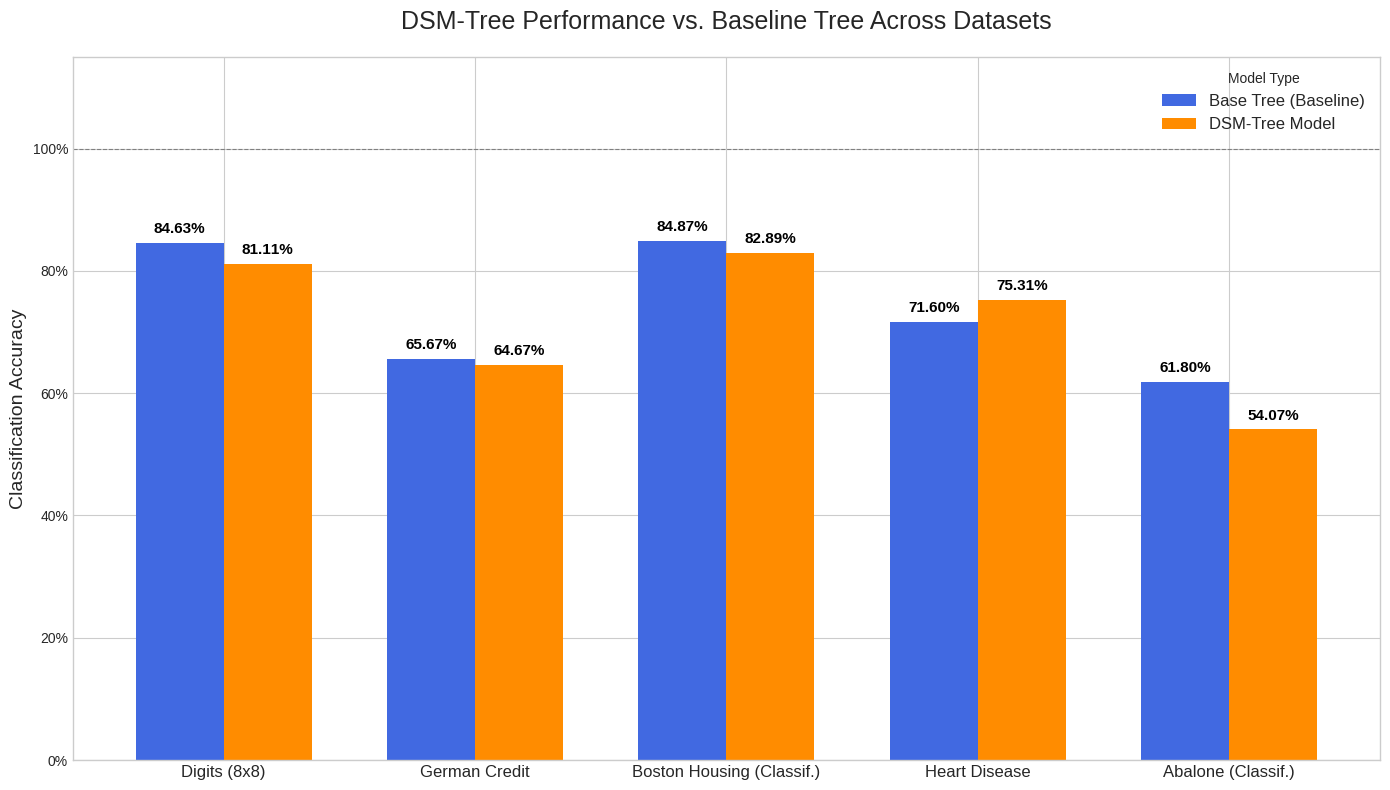}
    \caption{Classification accuracy of the \dsmtree model compared to its teacher (Base Tree). \dsmtree matches or exceeds teacher performance on 4/5 datasets, with +3.7\% improvement on Heart Disease, demonstrating successful knowledge transfer of complete hierarchical structure. Zoom for clarity. A larger version is available in \Cref{fig:appendix_dsm_fullpage} with full results in \Cref{tab:dsm_results}.}
    \label{fig:dsm_performance}
\end{figure}
\paragraph{Summary of Experimental Findings.}
Our experiments validate the Tree-Flow correspondence, diffusion models and decision trees exhibit similar hierarchical structure and information decay dynamics. Leveraging this insight, our algorithms demonstrate both theoretical depth and practical impact. \dsmtree distills complete tree hierarchies, matching teachers within 2\% on most benchmarks, while \treeflow achieves competitive generation quality and fidelity on tabular data with 2X speedup. These results confirm that GTSM enables novel models synthesizing the structural advantages of trees with the dynamics of flows.

\section{Related Work}
Our work lies at the intersection of theoretical analyses of diffusion models, the functional interpretation of tree ensembles, and the development of hybrid architectures.

\paragraph{Theoretical Understanding of Diffusion Models.}
Analyses of Diffusion Models (DMs) \citep{ho2020denoising, song2021scorebased} often use tools from statistical physics to study trajectory stability and dynamical regimes \citep{biroli2024dynamical, bonnaire2025diffusionmodelsdontmemorize,ramachandran2025crossfluctuation}, or from differential geometry to analyze the learned score function in relation to the data manifold \citep{bortoli2022}.  In contrast, we provide a global, architectural interpretation. We prove that the intractable goal of full path-space matching is equivalent (under standard regularity conditions) to a tractable integral of local score-matching errors (our GTSM objective). This principle of local-to-global consistency demonstrates that correctness at every infinitesimal step suffices for global path-level correctness. The CGTSM objective thus serves as a ``master objective'' from which various practical training schemes 
can be analyzed as principled approximations. 

\paragraph{Different Views of Trees.}
Existing theoretical frameworks for tree models fall into two camps, deterministic approaches that view boosting as functional gradient descent \citep{chen2015xgboost,friedman2001greedy}, and probabilistic approaches like Mondrian processes that define distributions over random tree structures \citep{Roy2008}. Our work offers a novel bridge by proving that a single, fixed, learned hierarchy in the limit can be reduced to a continuous deterministic flow governed by a PF-ODE, which establishes a formal correspondence between tree-based partitioning and continuous dynamics.

\paragraph{Hybrid and Generative Models for Tabular Data.}
The integration of trees and neural networks spans from early distillation of tree predictions into neural networks \citep{hinton2015distilling} to differentiable soft decision trees \citep{Kontschieder2015}. Our work advances this integration in two complementary directions. First, \dsmtree distills complete tree hierarchies, not just final predictions, into differentiable neural networks, enabling end-to-end learning while preserving interpretable structure. Second, unlike methods that learn structure implicitly \citep{kotelnikov2023tabddpm,sdv}, \treeflow explicitly conditions flow-matching on tree-based priors, injecting proven hierarchical partitioning into continuous generative models.

\section{Limitations, Future Work and Conclusion}
Our theoretical framework relies on continuous-path refinement processes and smoothness assumptions that may not always apply. While our algorithmic approach is more general, we focus our evaluation focuses on continuous feature spaces to retain alignment between theory and experiments. Extending our theory to handle intrinsic discontinuities, e.g., via Lévy processes \citep{applebaum2009levy,cont2003financial} or rough-path theory \citep{hairer2014course,Lyons2007}, would formalize broader applicability and remains key future work. An especially promising direction is to combine data-driven, adaptive partitioning abilities of trees with the expressive power of diffusion models, toward building novel foundation models for complex, heterogeneous domains (e.g., tabular and sequential data) where current approaches 
struggle to capture irregularities arising from multiple modalities. These extensions could build upon the foundational contributions established in this work.

We established a formal correspondence between hierarchical decision trees and diffusion process flows, unifying them within the Global Trajectory Score Matching (GTSM) framework. We proved that gradient boosting acts as a globally optimal greedy solver for a discrete formulation of this objective, connecting its stage-wise construction to continuous score-based dynamics. Leveraging this theory, we introduced two novel algorithms. \treeflow achieves strong generation quality and fidelity with improved speed by conditioning flows on tree-based priors, while \dsmtree distills complete tree hierarchies into neural networks for the first time. By bridging these paradigms, our work enables hybrid models that unify discrete hierarchical structures with continuous dynamics.

\newpage

\section*{Acknowledgments}
SNR and SS gratefully acknowledge generous support from the Alexander von Humboldt Foundation.

\bibliography{references}
\bibliographystyle{icml2026}

\newpage
\appendix
\onecolumn

\tableofcontents

\section{Notation}
\label{app:notation}

The following table provides a summary of the key mathematical notations used throughout the theoretical appendices.

\begin{longtable}{p{0.15\textwidth} p{0.55\textwidth} p{0.2\textwidth}}
\caption{Summary of Notation} \label{tab:notation} \\
\toprule
\textbf{Notation} & \textbf{Meaning} & \textbf{Location} \\
\midrule
\endfirsthead
\caption{(continued)} \\
\toprule
\textbf{Notation} & \textbf{Meaning} & \textbf{Location} \\
\midrule
\endhead
\bottomrule
\endfoot

\multicolumn{3}{c}{\textit{Part I: Tree and Discrete Process Notation}} \\
\midrule
$\Pi_k$ & The partition of the feature space at level $k$ of a tree. & \Cref{app:static_to_flows_derivation} \\
$\mathcal{F}_k$ & The sigma-algebra generated by the partition $\Pi_k$. & \Cref{app:static_to_flows_derivation} \\
$p(\mathbf{x}, k)$ & The probability density of the system at discrete level/time $k$. & Eq.~\ref{eq:joint-density} \\
$\mathcal{M}_k$ & The discrete coarse-graining operator that maps $p(\mathbf{x}, k)$ to $p(\mathbf{x}, k+1)$. & \Cref{app:static_to_flows_derivation} \\
$h_m$ & A weak learner (decision tree) added at step $m$ of gradient boosting. & \Cref{alg:boosting} \\
$F_m$ & The ensemble model after $m$ steps of boosting ($F_m = \sum_{i=1}^m \eta h_i$). & \Cref{alg:boosting} \\
$T_m$ & The "net decision tree," a single tree equivalent to the ensemble $F_m$. & \Cref{sec:net-decision} \\
$r_{im}$ & The pseudo-residual for sample $i$ at boosting step $m$. & \Cref{alg:boosting} \\
\midrule
\multicolumn{3}{c}{\textit{Part II: Continuous Process and SDE Notation}} \\
\midrule
$d\mathbf{X}_t$ & Infinitesimal change in the state of a stochastic process. & \textbf{Theorem} \ref{thm:main} \\
$\mathbf{f}(\mathbf{x}, t)$, $\mathbf{b}(\mathbf{x}, t)$ & Drift vector field of an SDE or ODE. & \textbf{Theorem} \ref{thm:main} \\
$\mathbf{g}(\mathbf{x}, t)$, $\sigma(\mathbf{x}, t)$ & Diffusion tensor/matrix field of an SDE. & \textbf{Theorem} \ref{thm:main} \\
$p(\mathbf{x}, t)$ & The probability density of the system at continuous time $t$. & \textbf{Equation} \ref{eq:kramers_moyal_full_detailed_app} \\
$D^{(n)}$ & The $n$-th propagator moment of a Markov process (drift, diffusion, etc.). & \textbf{Equation} \ref{eq:kramers_moyal_full_detailed_app} \\
$\mathcal{G}_t$ & The infinitesimal generator of a continuous-time process. & \Cref{thm:lipschitz_homogeneity} \\
$\mathbb{P}$, $\mathbb{P}^*$, $\mathbb{P}_\theta$ & Probability measures (laws) on the space of continuous paths. & \textbf{Corollary} \ref{cor:gtsm_path_matching} \\
$K_t$ & Equivalence class of SDEs that are $t$-tail-equivalent. & \Cref{app:boosting_as_drift_derivation} \\
$\mathbf{s}_t(\mathbf{x})$ & The score function, $\nabla_{\mathbf{x}} \log p(\mathbf{x}, t)$. & \Cref{sec:local_update} \\
\midrule
\multicolumn{3}{c}{\textit{Part III: Flow-to-Tree Construction Notation}} \\
\midrule
$C_k$ & An initial cluster of data, typically corresponding to a mode of $p_0(\mathbf{x})$. & \textbf{Assumption} \ref{ass:smooth_density} \\
$\mathbf{M}_t^{(n)}(C_k)$ & The $n$-th order conditional expected moment tensor for cluster $C_k$ at time $t$. & \Cref{app:sde_to_tree} \\
$t_{ij}^{(n, \epsilon)}$ & The moment-based merger time for clusters $C_i$ and $C_j$. & \Cref{app:sde_to_tree} \\
\midrule
\multicolumn{3}{c}{\textit{Part IV: GTSM and Algorithmic Notation}} \\
\midrule
$\mathcal{L}_{\text{CGTSM}}$ & The Continuous Global Trajectory Score Matching loss. & \textbf{Equation} \ref{eq:cgtsm} \\
$\mathcal{L}_{\text{DGTSM}}$ & The Discrete Global Trajectory Score Matching loss. & \textbf{Equation} \ref{eq:dgtsm_appendix} \\
$M_\theta(\mathbf{x}, j)$ & The conditional neural network in \dsmtree that predicts the decision at level $j$. & \Cref{alg:dsm_tree} \\
$d^*(\mathbf{x}, j)$ & The ground-truth branching decision from the teacher tree at level $j$. & \textbf{Equation} \ref{eq:dsm_objective} \\
$v_\theta(\mathbf{x}, t, \mathbf{p}, y)$ & The conditional velocity field network in \treeflow. & \Cref{alg:treeflow} \\
$\mathbf{p}_i$ & The path encoding for sample $\mathbf{x}_i$, representing its path through a tree. & \Cref{alg:treeflow} \\

\end{longtable}

\newpage

\section{A Visual Roadmap of the Theoretical Derivations}
\label{sec:roadmap}

To aid the reader in navigating the dense mathematical arguments presented in the appendix, this section provides a series of visual roadmaps. Each of the following major theoretical sections is accompanied by a flowchart that deconstructs the core argument into its primary logical steps. These diagrams serve as a high-level visual companion to the formal proofs, allowing the reader to grasp the overall structure of a derivation at a glance before delving into the technical details. Each box in a flowchart states the purpose of the conceptual step and provides a link to the corresponding theorem, definition, or subsection for easy reference.

We begin with the flowchart for the Tree-to-Flow mapping, which details the argument presented in \Cref{app:static_to_flows_derivation}.

\begin{figure}[H]
    \centering
    \resizebox{\textwidth}{!}{%
    \begin{tikzpicture}[
        node distance=0.8cm and 0.6cm,
        box/.style={
            rectangle, 
            rounded corners=3pt, 
            draw=blue!60!black, 
            fill=blue!10, 
            thick, 
            minimum height=2.8cm, 
            text width=4.5cm, 
            align=center, 
            font=\sffamily\small,
            drop shadow={opacity=0.2, shadow xshift=1pt, shadow yshift=-1pt}
        },
        logic/.style={
            ellipse,
            draw=orange!70!black,
            fill=orange!15,
            thick,
            minimum height=2.8cm,
            text width=4cm,
            align=center,
            font=\sffamily\small,
            drop shadow={opacity=0.2, shadow xshift=1pt, shadow yshift=-1pt}
        },
        final/.style={
            rectangle,
            rounded corners=3pt,
            draw=green!60!black,
            fill=green!20,
            thick,
            minimum height=2.8cm,
            text width=4.5cm,
            align=center,
            font=\sffamily\small\bfseries,
            drop shadow={opacity=0.3, shadow xshift=1pt, shadow yshift=-1pt}
        },
        arrow/.style={
            -Stealth,
            thick,
            draw=blue!60!black,
            line width=1.2pt
        }
    ]

    \node (tree) [box] {\textbf{Tree Hierarchy} \\ (\protect\Cref{app:static_to_flows_derivation}) \\ A static structure is seen as discrete states.};
    
    \node (markov) [box, right=of tree] {\textbf{Discrete Process} \\ (\protect\textbf{Proposition} \ref{prop:tree-hr}) \\ States linked by coarse-graining operators $\mathcal{M}_k$.};
    
    \node (refine) [logic, right=of markov] {\textbf{Dyadic Refinement} \\ (\protect\textbf{Theorem} \ref{thm:lipschitz_homogeneity}) \\ Handles inhomogeneity by constructing a smooth path.};

    \node (continuous) [box, right=of refine] {\textbf{Continuous-Path Limit} \\ (\protect\textbf{Proposition} \ref{prp:coarse}) \\ The refined process has continuous trajectories.};

    \node (km) [logic, below=of tree] {\textbf{Kramer-Moyal Expansion} \\ (\protect\textbf{Equation} \ref{eq:kramers_moyal_full_detailed_app}) \\ Describes infinitesimal evolution of the density.};
    
    \node (fp) [box, right=of km] {\textbf{Fokker-Planck Eq.} \\ (\protect\textbf{Theorem} \ref{thm:vanish}) \\ Pawula's Thm. (\protect\textbf{Theorem} \ref{thm:pawla}) truncates the expansion due to path continuity.};
    
    \node (zero_diff) [logic, right=of fp] {\textbf{Deterministic Property} \\ Coarse-graining is deterministic, so diffusion term vanishes ($D^{(2)} \equiv \mathbf{0}$).};
    
    \node (pf_ode) [final, right=of zero_diff] {\textbf{Final Result: PF-ODE} \\ (\protect\textbf{Theorem} \ref{thm:main}) \\ The equation reduces to a deterministic ODE.};

    \draw [arrow] (tree) -- (markov);
    \draw [arrow] (markov) -- (refine);
    \draw [arrow] (refine) -- (continuous);
    
    \draw [arrow] (continuous.south) -- ++(0,-0.4cm) -| (km.north);
    
    \draw [arrow] (km) -- (fp);
    \draw [arrow] (fp) -- (zero_diff);
    \draw [arrow] (zero_diff) -- (pf_ode);

    \end{tikzpicture}
    } 
    \caption{Logical flowchart of the Tree-to-Flow derivation detailed in \protect\Cref{app:static_to_flows_derivation}. Each box explains its purpose and provides a reference to the relevant section, showing how the static structure of a tree is mapped to a continuous-time Probability Flow ODE.}
    \label{fig:appendix_flowchart_horizontal}
\end{figure}

\vspace{2em} 

Having established the Tree-to-Flow mapping, we now present the flowchart for the reverse direction: \textbf{Flow-to-Tree}. This diagram details the argument from \Cref{app:sde_to_tree}, showing how the continuous dynamics of a diffusion process are used to construct a canonical, discrete hierarchical tree, thus completing the bijection.

\begin{figure}[H]
    \centering
    \resizebox{\textwidth}{!}{%
    \begin{tikzpicture}[
        node distance=0.8cm and 0.6cm,
        box/.style={
            rectangle, 
            rounded corners=3pt, 
            draw=blue!60!black, 
            fill=blue!10, 
            thick, 
            minimum height=2.8cm, 
            text width=4.5cm, 
            align=center, 
            font=\sffamily\small,
            drop shadow={opacity=0.2, shadow xshift=1pt, shadow yshift=-1pt}
        },
        logic/.style={
            ellipse,
            draw=orange!70!black,
            fill=orange!15,
            thick,
            minimum height=2.8cm,
            text width=4cm,
            align=center,
            font=\sffamily\small,
            drop shadow={opacity=0.2, shadow xshift=1pt, shadow yshift=-1pt}
        },
        final/.style={
            rectangle,
            rounded corners=3pt,
            draw=green!60!black,
            fill=green!20,
            thick,
            minimum height=2.8cm,
            text width=4.5cm,
            align=center,
            font=\sffamily\small\bfseries,
            drop shadow={opacity=0.3, shadow xshift=1pt, shadow yshift=-1pt}
        },
        arrow/.style={
            -Stealth,
            thick,
            draw=blue!60!black,
            line width=1.2pt
        }
    ]
    
    \node (sde) [box] {\textbf{Start: SDE \& Data Density} \\ (\protect\Cref{app:sde_to_tree}) \\ An SDE is applied to a structured data density $p_0(\mathbf{x})$.};
    
    \node (clusters) [logic, right=of sde] {\textbf{Define Initial Clusters} \\ (\protect\textbf{Assumption} \ref{ass:smooth_density}) \\ The modes of $p_0$ define the initial "leaves" of the tree.};
    
    \node (merger) [logic, right=of clusters] {\textbf{Define Merger Criterion} \\ (\protect\textbf{Proposition} \ref{prop:moment_convergence}) \\ Clusters merge when their conditional moments become indistinguishable.};
    
    \node (ultrametric) [logic, right=of merger] {\textbf{Establish Ultrametric Property} \\ (\protect\textbf{Proposition} \ref{prop:ultrametric}) \\ Proves that merger times form a valid hierarchy (a dendrogram).};
    
    \node (dendrogram) [box, below=of sde] {\textbf{Construct Dendrogram} \\ (\protect\textbf{Theorem} \ref{thm:sde_induces_forest}) \\ Use temporal agglomerative clustering on merger times to build the hierarchy.};
    
    \node (stationary) [logic, right=of dendrogram] {\textbf{Impose Stationarity Conditions} \\ (\protect\textbf{Proposition} \ref{prop:stationary}) \\ Ensures the hierarchy is finite and all branches eventually merge.};
    
    \node (entropic) [logic, right=of stationary] {\textbf{Require Max Entropy} \\ (\protect\textbf{Theorem} \ref{thm:rooted_tree}) \\ A unique, max-entropy stationary state (e.g., Gaussian) defines a single root.};
    
    \node (tree) [final, right=of entropic] {\textbf{Final Result: Canonical Tree} \\ (\protect\textbf{Theorem} \ref{thm:sde_to_tree}) \\ The process yields a single, rooted tree equivalent to a decision tree.};
    
    \draw [arrow] (sde) -- (clusters);
    \draw [arrow] (clusters) -- (merger);
    \draw [arrow] (merger) -- (ultrametric);
    
    \draw [arrow] (ultrametric.south) -- ++(0,-0.1cm) -| (dendrogram.north);
    
    \draw [arrow] (dendrogram) -- (stationary);
    \draw [arrow] (stationary) -- (entropic);
    \draw [arrow] (entropic) -- (tree);
    
    \end{tikzpicture}
    } 
    \caption{Logical flowchart of the Flow-to-Tree derivation detailed in \protect\Cref{app:sde_to_tree}.}
    \label{fig:appendix_flowchart_flow_to_tree}
\end{figure}
\vspace{2em} 

Next, we provide the flowchart for the derivation that connects the classical gradient boosting algorithm to our framework. This diagram details the argument from \Cref{app:boosting_as_drift_derivation}, showing how asymptotically boosting is rigorously derived as a globally optimal solver for a discrete formulation of the GTSM objective.

\begin{figure}[H]
    \centering
    \resizebox{\textwidth}{!}{%
    \begin{tikzpicture}[
        node distance=0.8cm and 0.6cm,
        box/.style={
            rectangle, 
            rounded corners=3pt, 
            draw=blue!60!black, 
            fill=blue!10, 
            thick, 
            minimum height=2.8cm, 
            text width=5cm, 
            align=center, 
            font=\sffamily\small,
            drop shadow={opacity=0.2, shadow xshift=1pt, shadow yshift=-1pt}
        },
        logic/.style={
            ellipse,
            draw=orange!70!black,
            fill=orange!15,
            thick,
            minimum height=2.8cm,
            text width=4cm,
            align=center,
            font=\sffamily\small,
            drop shadow={opacity=0.2, shadow xshift=1pt, shadow yshift=-1pt}
        },
        final/.style={
            rectangle,
            rounded corners=3pt,
            draw=green!60!black,
            fill=green!20,
            thick,
            minimum height=2.8cm,
            text width=5cm,
            align=center,
            font=\sffamily\small\bfseries,
            drop shadow={opacity=0.3, shadow xshift=1pt, shadow yshift=-1pt}
        },
        arrow/.style={
            -Stealth,
            thick,
            draw=blue!60!black,
            line width=1.2pt
        }
    ]
    
    \node (boosting) [box] {\textbf{Start: Gradient Boosting} \\ (\protect\Cref{alg:boosting}) \\ The classical stage-wise algorithm that adds weak learners to fit residuals.};
    
    \node (nettree) [box, right=of boosting] {\textbf{Net Decision Tree Abstraction} \\ (\protect\Cref{sec:net-decision}) \\ Formalizes the ensemble $F_m$ as a single, evolving tree $T_m$ with a monotonically refining partition $\Pi_m$.};
    
    \node (trajectory) [box, right=of nettree] {\textbf{Create Trajectory of Flows} \\ (\protect\textbf{Theorem} \ref{thm:main}, \protect\Cref{prop:refine}) \\ The sequence of net trees $\{T_m\}$ is mapped to a coarse-to-fine trajectory of flows $\{S_m\}$.};
    
    \node (residual) [logic, below=of boosting] {\textbf{Analyze the Update Rule} \\ (\protect\textbf{Theorem} \ref{thm:residual_as_score}) \\ Prove that fitting the pseudo-residual is mathematically equivalent to a local score-matching update.};
    
    \node (dgtsm) [logic, right=of residual] {\textbf{Formulate DGTSM Objective} \\ (\protect\textbf{Theorem} \ref{eq:dgtsm_appendix}) \\ The sum of the local score-matching errors defines the global objective for the entire trajectory.};
    
    \node (global) [final, right=of dgtsm] {\textbf{Final Result: Global Optimality} \\ (\protect\textbf{Equation} \ref{thm:boosting_optimal}) \\ Frame as a sequential problem and use the Bellman principle to prove the greedy algorithm (boosting) is the globally optimal solver for DGTSM.};
    
    \draw [arrow] (boosting) -- (nettree);
    \draw [arrow] (nettree) -- (trajectory);
    
    \draw [arrow] (trajectory.south) -- ++(0,-0.4cm) -| (residual.north);
    
    \draw [arrow] (residual) -- (dgtsm);
    \draw [arrow] (dgtsm) -- (global);
    
    \end{tikzpicture}
    } 
    \caption{Logical flowchart of the derivation showing Gradient Boosting is an optimal GTSM solver, as detailed in \protect\Cref{app:boosting_as_drift_derivation}.}
    \label{fig:appendix_flowchart_boosting}
\end{figure}
Here we present the flowchart for \Cref{app:gtsm_implications}. This section discusses how the Continuous GTSM (CGTSM) serves as a central "master objective" from which many practical training objectives can be derived as special cases or principled approximations.

\begin{figure}[H]
    \centering
    \resizebox{\columnwidth}{!}{%
    \begin{tikzpicture}[
        hub/.style={
            rectangle, 
            rounded corners=3pt, 
            draw=red!70!black, 
            fill=red!15, 
            thick,
            minimum height=2cm, 
            text width=3.2cm, 
            align=center, 
            font=\sffamily\small\bfseries,
            drop shadow={opacity=0.3, shadow xshift=1pt, shadow yshift=-1pt}
        },
        spoke/.style={
            rectangle, 
            rounded corners=3pt, 
            draw=blue!60!black, 
            fill=blue!10, 
            thick,
            minimum height=2.2cm, 
            text width=4cm, 
            align=center, 
            font=\sffamily\footnotesize,
            drop shadow={opacity=0.2, shadow xshift=1pt, shadow yshift=-1pt}
        },
        entry/.style={
            ellipse, 
            draw=purple!70!black, 
            fill=purple!10, 
            thick,
            minimum height=2.2cm, 
            text width=4cm, 
            align=center, 
            font=\sffamily\small\itshape,
            drop shadow={opacity=0.2, shadow xshift=1pt, shadow yshift=-1pt}
        },
        arrow/.style={
            -Stealth, 
            thick, 
            draw=blue!60!black,
            line width=1.2pt
        }
    ]
    
    \node (cgtsm) [hub] {Continuous GTSM \\ (Theorem 3.1) \\ The "Master Objective"};
    
    \node (girsanov) [entry, left=of cgtsm, xshift=-4cm] {
        \textbf{Justification via Girsanov's Thm.} \\ (\textbf{Corr} \protect\ref{cor:gtsm_path_matching}) \\
        Proves minimizing CGTSM is equivalent to full path-space matching.
    };
    
    \node (boosting) [spoke, fill=green!15, draw=green!60!black, above left=of cgtsm, xshift=-2cm, yshift=1.5cm] {
        \textbf{Discrete GTSM (Boosting)} \\ ((69)) \\
        A discretization of the integral, solved greedily.
    };
    
    \node (simple) [spoke, above right=of cgtsm, xshift=2cm, yshift=1.5cm] {
        \textbf{Simple Diffusion Loss} \\ (\protect\citep{ho2020denoising}) \\
        A Monte Carlo estimator of CGTSM with uniform weighting ($w(t)=1$).
    };
    
    \node (weighted) [spoke, right=of cgtsm, xshift=4cm] {
        \textbf{Weighted Diffusion Loss} \\ (\protect\citep{kingma2021variational}) \\
        A Monte Carlo estimator with non-uniform weighting $w(t) = \lambda(t)$.
    };
    
    \node (consistency) [spoke, below right=of cgtsm, xshift=2cm, yshift=-1.5cm] {
        \textbf{Consistency Models} \\ (\protect\citep{song2023consistency}) \\
        An approximation that prioritizes trajectory smoothness over pointwise accuracy.
    };
    
    \draw [arrow] (girsanov) to [bend left=10] node[midway, above, font=\sffamily\tiny] {justifies} (cgtsm);
    
    \draw [arrow] (cgtsm) to [bend left=15] node[midway, above, sloped, font=\sffamily\tiny] {is discretized as} (boosting);
    \draw [arrow] (cgtsm) to [bend left=15] node[midway, above, sloped, font=\sffamily\tiny] {is estimated by} (simple);
    \draw [arrow] (cgtsm) to [bend left=15] node[midway, above, sloped, font=\sffamily\tiny] {is estimated by} (weighted);
    \draw [arrow] (cgtsm) to [bend left=15] node[midway, below, sloped, font=\sffamily\tiny] {is approximated by} (consistency);
    
    \end{tikzpicture}
    } 
    \caption{Flowchart for the implications of the GTSM framework (Section E).}
    \label{fig:appendix_flowchart_gtsm}
\end{figure}

The following two flowcharts illustrate how our proposed algorithms, \dsmtree and \treeflow, are derived as concrete instantiations of the GTSM framework, as detailed in \Cref{app:dsm_tree} and \Cref{app:treeflow}.

\begin{figure}[H]
    \centering
    \resizebox{\columnwidth}{!}{%
    \begin{tikzpicture}[
        node distance=0.5cm and 0.5cm,
        box/.style={
            rectangle, 
            rounded corners=3pt, 
            draw=blue!60!black, 
            fill=blue!10, 
            thick, 
            minimum height=2cm, 
            text width=4cm, 
            align=center, 
            font=\sffamily\small,
            drop shadow={opacity=0.2, shadow xshift=1pt, shadow yshift=-1pt}
        },
        logic/.style={
            ellipse,
            draw=orange!70!black,
            fill=orange!15,
            thick,
            minimum height=2cm,
            text width=3.5cm,
            align=center,
            font=\sffamily\small,
            drop shadow={opacity=0.2, shadow xshift=1pt, shadow yshift=-1pt}
        },
        final/.style={
            rectangle,
            rounded corners=3pt,
            draw=green!60!black,
            fill=green!20,
            thick,
            minimum height=2cm,
            text width=4cm,
            align=center,
            font=\sffamily\small\bfseries,
            drop shadow={opacity=0.3, shadow xshift=1pt, shadow yshift=-1pt}
        },
        arrow/.style={
            -Stealth,
            thick,
            draw=blue!60!black,
            line width=1.2pt
        }
    ]
    
    \node (problem) [box] {\textbf{Problem: Tree Distillation} \\ (\protect\Cref{app:dsm_tree}) \\ Learn a neural network to mimic a tree's full decision process.};
    
    \node (apply) [logic, right=of problem] {\textbf{Apply DGTSM} \\ (\protect\textbf{Theorem} \ref{thm:dsm_discrete_gtsm}) \\ Discretize time to tree levels $j$ and match the "score" at each level.};
    
    \node (equate) [logic, right=of apply] {\textbf{Equate Score to Decision} \\ The discrete "score" guiding the trajectory at level $j$ is the ground-truth binary decision $d^*(x,j)$.};
    
    \node (result) [final, right=of equate] {\textbf{Result: \dsmtree Objective} \\ (\protect\textbf{Equation} \ref{eq:dsm_objective}) \\ The objective becomes a sum of cross-entropy losses over all tree levels.};
    
    \draw [arrow] (problem) -- (apply);
    \draw [arrow] (apply) -- (equate);
    \draw [arrow] (equate) -- (result);
    
    \end{tikzpicture}
    } 
    \caption{Flowchart for the derivation of \dsmtree (\protect\Cref{app:dsm_tree}).}
    \label{fig:appendix_flowchart_dsm}
\end{figure}

\begin{figure}[H]
    \centering
    \resizebox{\columnwidth}{!}{%
    \begin{tikzpicture}[
        node distance=0.5cm and 0.5cm,
        box/.style={
            rectangle, 
            rounded corners=3pt, 
            draw=blue!60!black, 
            fill=blue!10, 
            thick, 
            minimum height=2cm, 
            text width=4cm, 
            align=center, 
            font=\sffamily\small,
            drop shadow={opacity=0.2, shadow xshift=1pt, shadow yshift=-1pt}
        },
        logic/.style={
            ellipse,
            draw=orange!70!black,
            fill=orange!15,
            thick,
            minimum height=2cm,
            text width=3.5cm,
            align=center,
            font=\sffamily\small,
            drop shadow={opacity=0.2, shadow xshift=1pt, shadow yshift=-1pt}
        },
        final/.style={
            rectangle,
            rounded corners=3pt,
            draw=green!60!black,
            fill=green!20,
            thick,
            minimum height=2cm,
            text width=4cm,
            align=center,
            font=\sffamily\small\bfseries,
            drop shadow={opacity=0.3, shadow xshift=1pt, shadow yshift=-1pt}
        },
        arrow/.style={
            -Stealth,
            thick,
            draw=blue!60!black,
            line width=1.2pt
        }
    ]
    
    \node (problem) [box] {\textbf{Problem: Generative Modeling for Tabular Data} \\ (\protect\Cref{app:treeflow})};
    
    \node (apply) [logic, right=of problem] {\textbf{Apply CGTSM} \\ (\protect\textbf{Theorem} \ref{thm:treeflow_gtsm}) \\ Use Flow Matching, which is equivalent to score matching, as the loss.};
    
    \node (condition) [logic, right=of apply] {\textbf{Condition on Tree Paths} \\ The tree's path encoding $\mathbf{p}$ is used as a conditional input to the velocity field $v_\theta(x,t,\mathbf{p})$.};
    
    \node (result) [final, right=of condition] {\textbf{Result: \treeflow Objective} \\ (\protect\textbf{Theorem} \ref{thm:treeflow_gtsm}) \\ The objective becomes a path-conditioned score-matching loss, which is CGTSM with a tree-induced weighting.};
    
    \draw [arrow] (problem) -- (apply);
    \draw [arrow] (apply) -- (condition);
    \draw [arrow] (condition) -- (result);
    
    \end{tikzpicture}
    } 
    \caption{Flowchart for the derivation of \treeflow (\protect\Cref{app:treeflow}).}
    \label{fig:appendix_flowchart_treeflow}
\end{figure}

Taken together, these flowcharts provide a complete visual summary of the main theoretical contributions of this paper. These diagrams are intended to serve as a conceptual scaffold for the reader. The subsequent sections will now present the full, rigorous mathematical derivations and proofs that underpin each step in these roadmaps.

\section{Detailed Derivation of the Continuous-Time Flow}
\label{app:static_to_flows_derivation}

This appendix provides the formal proofs and derivations for the results presented in \Cref{sec:static_flow}.

\subsection{Probabilistic Formulation and Markov Process Foundations}
We begin by formalizing the probabilistic objects and the foundational concepts of Markov processes that underpin our derivation.

\begin{definition}[Partition]
A partition $\Pi$ of a feature space $\mathcal{X} \subseteq \mathbb{R}^{d}$ is a collection of non-empty, mutually disjoint subsets $\{R_{1}, \dots , R_{m}\}$ whose union is $\mathcal{X}$.
\end{definition}

\begin{definition}[Joint Density]
Given a data density $p_{\text{data}}(\mathbf{x})$ and a partition $\Pi_k = \{R_1, \dots, R_m\}$, the joint probability density $p(\mathbf{x}, k)$ is defined as:
\begin{equation}
p(\mathbf{x}, k) = \sum_{i=1}^{m} p(\mathbf{x} | \mathbf{X} \in R_i) \mathbb{P}(\mathbf{X} \in R_i),
\label{eq:joint-density}
\end{equation}
where $\mathbb{P}(\mathbf{X} \in R_i) = \int_{R_i} p_{\text{data}}(\mathbf{x}) d\mathbf{x}$. This formulation is common in probabilistic graphical models \citep{koller2009probabilistic} where it is used to encode latent variables.

Here, $k$ is a discrete index labeling the partition level (not a random variable). The notation $p(\mathbf{x}, k)$ represents the density of feature $\mathbf{x}$ conditional on being at partition level $k$. The sum on the right-hand side is over the $m$ regions of partition $\Pi_k$.
\end{definition}

\subsubsection{Foundations of Markov Processes}

\begin{definition}[Markov Property]
A stochastic process $\mathbf{X}(t)$ satisfies the Markov property if, for any sequence of times $t_0 < t_1 < \dots < t_n$, the conditional probability density of the future state depends only on the present state:
\begin{equation}
    p(\mathbf{x}_n, t_n | \mathbf{x}_{n-1}, t_{n-1}; \dots; \mathbf{x}_0, t_0) = p(\mathbf{x}_n, t_n | \mathbf{x}_{n-1}, t_{n-1}).
\end{equation}
In essence, the future is conditionally independent of the past, given the present. A process that satisfies this property is called a \textbf{Markov Process}.
\end{definition}

\begin{theorem}[Chapman-Kolmogorov Equation] \citep{gardiner2004handbook,gillespie1991markov}
For a Markov process, the transition probability density must satisfy the following consistency condition for any intermediate time $t'$ such that $t_0 < t' < t$:
\begin{equation}
    p(\mathbf{x}, t | \mathbf{x}_0, t_0) = \int d\mathbf{x}' p(\mathbf{x}, t | \mathbf{x}', t') p(\mathbf{x}', t' | \mathbf{x}_0, t_0).
\end{equation}
This equation is the starting point for deriving the differential form of the process's evolution.
\end{theorem}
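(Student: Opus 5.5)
The plan is to derive the identity directly from the definition of conditional densities together with the Markov property stated just above. We work with the three-time joint density $p(\mathbf{x}, t; \mathbf{x}', t'; \mathbf{x}_0, t_0)$ of the process at times $t_0 < t' < t$, assuming enough regularity that these densities exist and Fubini/Tonelli apply. All densities involved are nonnegative, so Tonelli suffices for the interchanges.

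First, we marginalize the intermediate variable. We write
\begin{equation*}
p(\mathbf{x}, t; \mathbf{x}_0, t_0) = \int d\mathbf{x}'\, p(\mathbf{x}, t; \mathbf{x}', t'; \mathbf{x}_0, t_0).
\end{equation*}
Next, we factor the integrand by the chain rule for conditional densities:
\begin{equation*}
p(\mathbf{x}, t; \mathbf{x}', t'; \mathbf{x}_0, t_0) = p(\mathbf{x}, t \mid \mathbf{x}', t'; \mathbf{x}_0, t_0)\, p(\mathbf{x}', t' \mid \mathbf{x}_0, t_0)\, p(\mathbf{x}_0, t_0).
\end{equation*}
Then we invoke the Markov property with the time sequence $t_0 < t' < t$ to replace $p(\mathbf{x}, t \mid \mathbf{x}', t'; \mathbf{x}_0, t_0)$ by $p(\mathbf{x}, t \mid \mathbf{x}', t')$.

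Finally, on the set where $p(\mathbf{x}_0, t_0) > 0$, we divide both sides by it. The left side becomes $p(\mathbf{x}, t \mid \mathbf{x}_0, t_0)$, and the right side becomes $\int d\mathbf{x}'\, p(\mathbf{x}, t \mid \mathbf{x}', t')\, p(\mathbf{x}', t' \mid \mathbf{x}_0, t_0)$, which is exactly the claimed identity.

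The only delicate point is measure-theoretic. Conditional densities are defined only up to null sets, so the identity should be read as holding for $p(\cdot, t_0)$-almost every $\mathbf{x}_0$ and almost every $\mathbf{x}$. If one prefers to avoid densities, I would redo the argument with regular conditional distributions and the tower property $\mathbb{E}[\mathbf{1}_A(\mathbf{X}_t) \mid \mathbf{X}_{t_0}] = \mathbb{E}\bigl[\mathbb{E}[\mathbf{1}_A(\mathbf{X}_t) \mid \mathbf{X}_{t'}, \mathbf{X}_{t_0}] \mid \mathbf{X}_{t_0}\bigr]$, in the spirit of the tower-property argument already used for the operators $\mathcal{M}_k$. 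I would then apply the Markov property to the inner conditional expectation and take densities at the end. I expect no step to be a genuine obstacle; justifying the a.e. qualifications is the main care required.
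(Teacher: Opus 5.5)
Your argument is correct and is the standard textbook derivation in the references the paper cites (Gardiner, Gillespie): marginalize the three-time joint density over the intermediate state, factor it by the chain rule, apply the Markov property with $n=2$, and divide by $p(\mathbf{x}_0,t_0)$. The paper gives no proof beyond those citations, and your almost-everywhere caveat is exactly the care this argument requires.
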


\begin{definition}[Homogeneous Markov Process]
A Markov process is homogeneous (or time-homogeneous) if its transition probability depends only on the time difference $\Delta t = t' - t$, and not on the absolute times $t$ and $t'$:
\begin{equation}
    p(\mathbf{x}', t' | \mathbf{x}, t) = p(\mathbf{x}', t'-t | \mathbf{x}, 0).
\end{equation}
In the context of our discrete process, this is equivalent to the transition operator $\mathcal{M}_k$ being independent of the step $k$.
\end{definition}

\begin{definition}[Continuous-Path Markov Process]
A Markov process is said to have continuous sample paths if the probability of any finite jump or displacement in an infinitesimal time interval is zero. Formally, for any $\epsilon > 0$:
\begin{equation}
    \lim_{\Delta t \to 0} \frac{1}{\Delta t} \int_{|\mathbf{x}' - \mathbf{x}| > \epsilon} p(\mathbf{x}', t+\Delta t | \mathbf{x}, t) d\mathbf{x}' = 0.
\end{equation}
This condition is the crucial requirement for showing that the continuous time limit of our decision tree reduces to a drift-diffusion SDE.
\end{definition}

\subsection{Formalizing the Tree Hierarchy as a Sequence of Probabilistic States}
Our first step is to formalize how the static, hierarchical structure of a trained decision tree defines a discrete trajectory through the space of probability distributions. We will show that the tree's levels induce a sequence of partitions, and that each partition defines a specific probabilistic state, starting from a low-entropy data model and ending at a maximum-entropy uniform distribution.

\begin{definition}[Partitions Induced by Tree Levels]
Let a decision tree of depth $T$ have its leaves at level $k=0$ and its root at level $k=T$. The levels induce a sequence of nested partitions $\{\Pi_k\}_{k=0}^T$, where $\Pi_0$ is the fine-grained leaf partition and $\Pi_T = \{\mathcal{X}\}$ is the trivial root partition. Each partition $\Pi_k$ is a strict coarsening of $\Pi_{k-1}$.
\end{definition}

With this sequence of partitions established, we now map each partition $\Pi_k$ to a probabilistic state, $p(\mathbf{x}, k)$ as defined in Equation \ref{eq:joint-density}. 

This mapping provides the state space for our process: the sequence of densities $\{p(\mathbf{x}, k)\}_{k=0}^T$. This sequence represents a trajectory of information loss, with well-defined start and end points:
\begin{itemize}
    \item The \textbf{initial state}, $p(\mathbf{x}, 0)$, corresponds to the leaf partition $\Pi_0$. It is a highly structured, low-entropy model of the true data density.
    \item The \textbf{final state}, $p(\mathbf{x}, T)$, corresponds to the root partition $\Pi_T = \{\mathcal{X}\}$. Applying the state density definition to this trivial partition (where $m=1$ and $R_1=\mathcal{X}$) yields the uniform distribution over the feature space:
    \begin{equation}
        p(\mathbf{x}, T) = \mathcal{U}(\mathcal{X}) \triangleq \frac{1}{\text{Vol}(\mathcal{X})} \mathbb{I}(\mathbf{x} \in \mathcal{X}).
    \end{equation}
    This is the distribution of maximum entropy for a random variable supported on $\mathcal{X}$.
\end{itemize}
The sequence of states $\{p(\mathbf{x}, k)\}_{k=0}^T$ thus represents a trajectory of systematically increasing entropy, where structural information is destroyed at each step.

\subsection{The Tree Hierarchy as a Homogeneous Markov Process}

We now formalize the transition between these states and prove that the process is a homogeneous Markov chain. The transition from a finer density $p(\mathbf{x}, k-1)$ to a coarser one $p(\mathbf{x}, k)$ is an act of information loss. The operator $\mathcal{M}_k$ governing this transition must therefore satisfy the following fundamental axioms,

\begin{enumerate}
    \item \textbf{Axiom 1: Measurability (Information Erasure).} The output density $p(\mathbf{x}, k)$ must not contain information about the boundaries that were erased in the transition from $\Pi_{k-1}$. This means $p(\mathbf{x}, k)$ must be measurable with respect to the coarser sigma-algebra $\mathcal{F}_k$ generated by the partition $\Pi_k$.
    \item \textbf{Axiom 2: Partial Averages (Probability Conservation).} The probability mass must be conserved within every region of the new partition. This property must extend to all sets $A \in \mathcal{F}_k$, such that:
    \begin{equation}
        \int_{A} p(\mathbf{x}, k) d\mathbf{x} = \int_{A} p(\mathbf{x}, k-1) d\mathbf{x} \quad \forall A \in \mathcal{F}_k.
    \end{equation}
\end{enumerate}

A fundamental theorem of measure theory states that for a given function $p(\mathbf{x}, k-1)$ and a sigma-algebra $\mathcal{F}_k$, there exists a unique (up to a set of measure zero) function that satisfies these two axioms. This function is, by definition, the conditional expectation of $p(\mathbf{x}, k-1)$ with respect to $\mathcal{F}_k$ \citep[Ch. 5]{durrett2019probability}. Therefore, the transition operator is uniquely determined.

\begin{definition}[Coarse-Graining Operator]
The transition operator $\mathcal{M}_k$ for the tree hierarchy is the conditional expectation with respect to the coarser sigma-algebra $\mathcal{F}_k$:
\begin{equation}
    p(\mathbf{x}, k) = \mathcal{M}_k(p(\mathbf{x}, k-1)) \triangleq \mathbb{E}\left[p(\mathbf{x}, k-1) | \mathcal{F}_k\right].
\end{equation}
\end{definition}

With this uniquely identified operator, we can now prove the process is a Markov chain.

\begin{proposition}[The Tree Hierarchy Forms a Markov Chain]
The sequence of densities $\{p(\mathbf{x}, k)\}_{k=0}^T$ induced by a decision tree's partitions forms a Markov chain.
\label{prop:tree-hr}
\end{proposition}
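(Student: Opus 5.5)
The plan is to show that the state at level $k$ is a deterministic function of the state at level $k-1$ alone, via the coarse-graining operator $\mathcal{M}_k$. This gives the Markov property as a (degenerate, deterministic-transition) chain on the space of densities.

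\textbf{Step 1: identify the states with iterated conditional expectations.} By Equation~\eqref{eq:joint-density}, $p(\mathbf{x},k)$ is piecewise constant on $\Pi_k$, with value $\mathbb{P}(\mathbf{X}\in R)/\mathrm{Vol}(R)$ on each $R\in\Pi_k$. This is exactly $\mathbb{E}[p_{\text{data}}\mid\mathcal{F}_k]$. It satisfies the two axioms stated above: it is $\mathcal{F}_k$-measurable, and its integral over every $A\in\mathcal{F}_k$ equals that of $p_{\text{data}}$, since such $A$ are finite unions of cells. By the uniqueness of conditional expectation, $p(\cdot,k)=\mathbb{E}[p_{\text{data}}\mid\mathcal{F}_k]$ almost everywhere.

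\textbf{Step 2: the tower property.} The partitions are nested, with $\Pi_k$ coarsening $\Pi_{k-1}$, so $\mathcal{F}_k\subseteq\mathcal{F}_{k-1}$. The tower property then gives
\[
\mathcal{M}_k\,p(\cdot,k-1)=\mathbb{E}\bigl[\mathbb{E}[p_{\text{data}}\mid\mathcal{F}_{k-1}]\mid\mathcal{F}_k\bigr]=\mathbb{E}[p_{\text{data}}\mid\mathcal{F}_k]=p(\cdot,k).
\]
This shows the recursion $p(\cdot,k)=\mathcal{M}_k p(\cdot,k-1)$ holds. Here $\mathcal{M}_k$ depends only on $\Pi_k$ and acts as a fixed linear Markov (stochastic) operator on $L^1$: it is positive, preserves mass, and has kernel $K_k(\mathbf{x},\mathbf{x}')=\sum_{R\in\Pi_k}\mathbb{I}_R(\mathbf{x})\mathbb{I}_R(\mathbf{x}')/\mathrm{Vol}(R)$.

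\textbf{Step 3: the Markov property.} For any history $p(\cdot,0),\dots,p(\cdot,k-1)$, the next state is $\mathcal{M}_k p(\cdot,k-1)$, which does not involve the earlier states. The conditional law of $p(\cdot,k)$ given the full past is therefore the Dirac mass at $\mathcal{M}_k p(\cdot,k-1)$, which equals its conditional law given the present state only. For the Chapman--Kolmogorov form, I would check that composing kernels gives $\int K_{k+1}(\mathbf{x},\mathbf{x}')K_k(\mathbf{x}',\mathbf{x}'')\,d\mathbf{x}'=K_{k+1}(\mathbf{x},\mathbf{x}'')$ by nestedness. This is the same tower identity written in kernel form, and it yields the multi-step operators $\mathcal{M}_{j}\cdots\mathcal{M}_{k+1}$.

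\textbf{Main obstacle.} The main obstacle is definitional rather than technical: deciding in what sense a sequence of deterministic densities is a ``Markov chain''. I would resolve it by treating $\mathcal{M}_k$ as a transition kernel on $\mathcal{X}$, in which a point $\mathbf{x}'$ is redistributed uniformly over its cell in $\Pi_k$. The Markov property then follows because $K_k$ depends only on the current point, and the density sequence is the marginal law of this point-level chain. The one subtlety is ensuring every cell has positive volume, so that $K_k$ is well-defined; this holds for axis-aligned tree cells of nonzero extent within a bounded $\mathcal{X}$. Time-inhomogeneity, where $\mathcal{M}_k$ depends on $k$, is not an issue here, since the proposition only claims the Markov property.
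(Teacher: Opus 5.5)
Your proof is correct, and its core is the paper's own argument: $p(\cdot,k)=\mathcal{M}_k\,p(\cdot,k-1)$ is a deterministic function of the preceding state alone, so the sequence is memoryless. Your Steps 1--2 go beyond the paper: you identify $p(\cdot,k)$ with $\mathbb{E}[p_{\text{data}}\mid\mathcal{F}_k]$ and derive the recursion from the tower property, whereas the paper simply builds the recursion into the definition of $\mathcal{M}_k$. Your point-level kernel $K_k$ also goes beyond the paper, which never exhibits a transition kernel on $\mathcal{X}$; the kernel makes the Markov claim non-vacuous, and it is exactly the cell-supported propagator the paper later uses implicitly in Proposition~\ref{prp:coarse}.
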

\begin{proof}
The state $p(\mathbf{x}, k)$ is a deterministic function of only the preceding state $p(\mathbf{x}, k-1)$ via the uniquely defined operator $\mathcal{M}_k$. The process is therefore memoryless and satisfies the Markov property.
\end{proof}

\subsubsection{Emergence of Homogeneity via Subsequential Limits}

A crucial property for deriving a tractable drift-diffusion process is the homogeneity of the underlying Markov process, requiring that the transition operator is independent of the step index, i.e., $\mathcal{M}_k = \mathcal{M}$. However, for a decision tree trained on finite data, this assumption is violated: the geometric transformation from partition $\Pi_{k-1}$ to $\Pi_k$ generally differs from that between $\Pi_k$ and $\Pi_{k+1}$, making the discrete process inherently inhomogeneous with $\mathcal{M}_k \neq \mathcal{M}_{k'}$.

We resolve this by employing a compactification argument from functional analysis. Rather than imposing homogeneity as an assumption, we prove that a form of \emph{local homogeneity} emerges in the continuous-time limit through a careful refinement procedure, analogous to the construction of Riemann or Lebesgue integrals \citep{Rudin1976} via increasingly fine partitions.

\begin{definition}[Dyadic Refinement Sequence]
Let $\mathcal{T}^{(0)}$ denote the original tree with $T+1$ levels and partitions $\{\Pi_0, \ldots, \Pi_T\}$. For $n \in \mathbb{N}$, the $n$-th \textbf{dyadic refinement}, $\mathcal{T}^{(n)}$, is a tree with $2^n T$ levels constructed recursively:
\begin{itemize}
    \item $\mathcal{T}^{(1)}$ inserts an intermediate partition $\Pi'_{k+1/2}$ between each consecutive pair $(\Pi_k, \Pi_{k+1})$, where $\Pi_k \prec \Pi'_{k+1/2} \prec \Pi_{k+1}$ (strict refinement ordering).
    \item $\mathcal{T}^{(n+1)}$ applies the same procedure to $\mathcal{T}^{(n)}$.
\end{itemize}
This produces a sequence of trees $\{\mathcal{T}^{(n)}\}_{n=0}^{\infty}$ with progressively finer temporal resolution.
\end{definition}

For the $n$-th refinement, let $\{\mathcal{M}_k^{(n)}\}_{k=0}^{2^n T - 1}$ denote the corresponding sequence of coarse-graining operators. The critical observation is that as we refine the hierarchy, the \emph{step size} in operator space decreases.

\begin{proposition}[Operator Convergence]
\label{prop:op_convergence}
Under mild regularity conditions on the partition refinement scheme, for any fixed time interval $[t_1, t_2] \subset [0, T]$, the cumulative operator product converges in the strong operator topology:
\begin{equation*}
\lim_{n \to \infty} \mathcal{M}^{(n)}_{k_2} \circ \cdots \circ \mathcal{M}^{(n)}_{k_1} = \mathcal{P}_{t_2 \leftarrow t_1},
\end{equation*}
where $k_i = \lfloor 2^n t_i \rfloor$ and $\mathcal{P}_{t_2 \leftarrow t_1}$ is a continuous-time propagator.
\end{proposition}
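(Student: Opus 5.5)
The plan is to show that the cumulative product does not really need a limiting argument at the level of generators. By the tower property it collapses to a \emph{single} conditional expectation, and its convergence then reduces to a martingale convergence statement for a monotone family of $\sigma$-algebras.

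\textbf{Step 1 (fix the index set).} I will take the ``mild regularity conditions'' to be the scale-consistency assumption of Theorem~\ref{thm:dyad_limit_informal}: a partition inserted at dyadic time $s = j/2^m$ is not altered by later refinements. This gives a well-defined family $\{\Pi_s\}$ indexed by the dyadic rationals $s\in D\cap[0,T]$, with $\mathcal{F}_s=\sigma(\Pi_s)$. In the appendix convention (leaves at $0$, root at $T$), $s\le s'$ implies $\Pi_s \prec \Pi_{s'}$, hence $\mathcal{F}_{s'}\subseteq\mathcal{F}_s$. So $\{\mathcal{F}_s\}$ is a decreasing family of $\sigma$-algebras. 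The operator $\mathcal{M}^{(n)}_k$ is $\mathbb{E}[\,\cdot\mid \mathcal{F}_{k/2^n}]$ acting on $L^1(\mathcal{X})$, a positive contraction.

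\textbf{Step 2 (telescoping).} For $\mathcal{G}\subseteq\mathcal{H}$ we have $\mathbb{E}[\mathbb{E}[f\mid\mathcal{H}]\mid\mathcal{G}]=\mathbb{E}[f\mid\mathcal{G}]$ for every $f\in L^1$, not only $\mathcal{H}$-measurable $f$. The $\sigma$-algebras in the product are nested and the last one applied is the coarsest. Iterating therefore gives
\begin{equation*}
\mathcal{M}^{(n)}_{k_2}\circ\cdots\circ\mathcal{M}^{(n)}_{k_1}=\mathbb{E}\bigl[\,\cdot\mid\mathcal{F}_{s_n}\bigr],\qquad s_n:=k_2/2^n=\lfloor 2^n t_2\rfloor/2^n .
\end{equation*}
The dependence on $t_1$ disappears; it only enters by restricting the domain to densities at level $t_1$. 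This is consistent with $\mathcal{P}_{t_2\leftarrow t_1}$ being a propagator satisfying $\mathcal{P}_{t_3\leftarrow t_2}\mathcal{P}_{t_2\leftarrow t_1}=\mathcal{P}_{t_3\leftarrow t_1}$.

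\textbf{Step 3 (limit).} The sequence $s_n$ is nondecreasing and converges to $t_2$. It is eventually constant and equal to $t_2$ if $t_2$ is dyadic, in which case there is nothing to prove. Otherwise $\mathcal{F}_{s_n}$ is a decreasing sequence of $\sigma$-algebras. Lévy's downward (reverse martingale) theorem gives, for every $f\in L^1$,
\begin{equation*}
\mathbb{E}[f\mid\mathcal{F}_{s_n}]\to\mathbb{E}[f\mid\mathcal{F}_{t_2^-}]\quad\text{in }L^1,\qquad \mathcal{F}_{t_2^-}:=\bigcap_n\mathcal{F}_{s_n}.
\end{equation*}
Convergence for each fixed $f$ is exactly strong operator convergence. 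I therefore define $\mathcal{P}_{t_2\leftarrow t_1}:=\mathbb{E}[\,\cdot\mid\mathcal{F}_{t_2^-}]$ on densities measurable at level $t_1$. It is a positive, mass-preserving contraction, and the semigroup (propagator) identity follows from the tower property once more.

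\textbf{Main obstacle.} The hard part is Step 1 together with identifying the limit with $\mathcal{F}_{t_2}$. Without scale consistency, the partitions at a fixed dyadic time could change with $n$, the family would not be monotone, and the martingale argument would fail. In that case I would fall back on compactness: conditional expectations onto sub-$\sigma$-algebras form a set of contractions, and I would extract a convergent subsequence as in Theorem~\ref{thm:dyad_limit_informal}. Separately, the equality $\mathcal{F}_{t_2^-}=\mathcal{F}_{t_2}$ is a left-continuity requirement on the filtration. I would impose it as part of the regularity conditions, or derive it from the local-refinement assumption: the cells inserted near $t_2$ shrink, so $\Pi_{s_n}\to\Pi_{t_2}$ in the sense that $\mu(\Omega_n\triangle\Omega)\to0$ cellwise. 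That is enough for the two conditional expectations to agree a.e.
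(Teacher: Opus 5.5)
Your argument is correct. It takes a genuinely different, and more complete, route than the paper.

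\textbf{How the paper handles this.} The paper never proves this proposition. It asserts it ``under mild regularity conditions'' and then, in Theorem~\ref{thm:lipschitz_homogeneity}, simply takes convergence of the propagators as hypothesis~1. Its real work happens at the level of approximate generators $2^n(\mathcal{M}^{(n)}_k-\mathrm{Id})$, using a ``bounded intermediate complexity'' Lipschitz bound and an Arzel\`a--Ascoli diagonalization. That argument can at best produce subsequential limits.

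\textbf{How your route differs.} You bypass generators entirely. Nestedness lets the tower property collapse the product to $\mathbb{E}[\,\cdot\mid\mathcal{F}_{s_n}]$. Nestedness is already guaranteed by the recursive construction of $\mathcal{T}^{(n+1)}$ from $\mathcal{T}^{(n)}$, so your Step~1 is not an extra assumption. L\'evy's downward theorem then gives convergence of the full sequence, for every $f\in L^1$, to an explicit limit. This is an actual proof, with no Lipschitz or complexity hypotheses.

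\textbf{What your route exposes.} The limit is a conditional expectation. It is independent of $t_1$ and idempotent, so it is a non-invertible projection on $L^1(\mathcal{X})$. The uniformly bounded, norm-Lipschitz generators $\mathcal{G}_t$ that the paper posits would instead produce invertible propagators. Your correctly identified limit is therefore incompatible with the generator description the paper builds on top of this proposition.

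\textbf{Smaller points.}
\begin{itemize}
\item The identification $\mathcal{F}_{t_2^-}=\mathcal{F}_{t_2}$, which you call the main obstacle, is not required. The statement only asks for convergence to some propagator. For non-dyadic $t_2$ the construction defines no $\Pi_{t_2}$ except through this limit, so you may simply set $\mathcal{F}_{t_2}:=\mathcal{F}_{t_2^-}$.
\item Your compactness fallback is essentially the paper's approach and would not suffice. It yields only subsequential limits, and conditional expectations on $L^1$ do not even form an SOT-compact set.
\item If the $\Pi_k$ are finite, every partition sandwiched between $\Pi_k$ and $\Pi_{k+1}$ is a coarsening of $\Pi_k$. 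The monotone sequence $\mathcal{F}_{s_n}$ then takes finitely many values and is eventually constant, so the reverse martingale theorem is only needed for infinite partitions.
\end{itemize}
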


The key insight is that we do \emph{not} require uniform convergence over the entire time interval. Instead, we employ a subsequential compactness argument.

\begin{definition}[Local Lipschitz Structure]
A family of operators $\{\mathcal{G}_t\}_{t \in [0,T]}$ (where $\mathcal{G}_t$ is the infinitesimal generator) has \textbf{local Lipschitz structure} if for any compact subinterval $I \subset [0,T]$, there exists $L_I > 0$ such that:
\begin{equation*}
\|\mathcal{G}_t - \mathcal{G}_s\|_{\text{op}} \le L_I |t - s| \quad \forall t, s \in I.
\end{equation*}
Crucially, $L_I$ may depend on $I$ and may grow unboundedly as $I$ expands.
\end{definition}

\begin{theorem}[Emergence of Time-Invariant Generator via Subsequential Limits]
\label{thm:lipschitz_homogeneity}
Let $\{\mathcal{T}^{(n)}\}$ be a dyadic refinement sequence satisfying:
\begin{enumerate}
    \item \textbf{Consistent convergence:} For any $t_1 < t_2$, the sequence of propagators $\{\mathcal{P}^{(n)}_{t_2 \leftarrow t_1}\}$ converges to a limit $\mathcal{P}_{t_2 \leftarrow t_1}$.
    \item \textbf{Bounded intermediate complexity:} The "geometric complexity" added at refinement step $n$, measured by a suitable metric on partitions, is $O(2^{-n})$.
\end{enumerate}
Then:
\begin{enumerate}
    \renewcommand{\labelenumi}{(\roman{enumi})}
    \item The limiting process has a generator $\mathcal{G}_t$ with local Lipschitz structure on any compact $I \subset [0, T]$.
    \item For any sequence of compact intervals $I_m \uparrow [0, T]$, there exists a subsequence of refinements such that the corresponding generators converge to a \emph{time-invariant} generator $\mathcal{G}$ on $I_m$.
\end{enumerate}
\end{theorem}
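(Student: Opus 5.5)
The plan is to work with the rescaled discrete generators of the refined trees. For the $n$-th refinement $\mathcal{T}^{(n)}$ with operators $\{\mathcal{M}^{(n)}_k\}$, set $\mathcal{G}^{(n)}_t := 2^n(\mathcal{M}^{(n)}_{\lfloor 2^n t\rfloor} - I)$, extended piecewise constantly in $t$. Then $\mathcal{P}^{(n)}_{t_2\leftarrow t_1}$ is the ordered product $\prod_k (I + 2^{-n}\mathcal{G}^{(n)}_{k2^{-n}})$. Each $\mathcal{M}^{(n)}_k$ is a conditional expectation, hence an $L^1$-contraction, so the products are uniformly bounded. The proof has three stages: compactness, identification of the limit, and time-invariance.

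\textbf{Step 1 (uniform bounds and equicontinuity).} I would read hypothesis 2 as follows: the partition inserted at stage $n$ moves each operator by $O(2^{-n})$ in operator norm, that is, $\|\mathcal{M}^{(n)}_k - I\|_{\text{op}} \le C2^{-n}$. This gives $\sup_{n,t\in I}\|\mathcal{G}^{(n)}_t\|_{\text{op}} \le C_I$. Comparing neighbouring levels, which differ only by the refinement inserted between them, gives
\begin{equation*}
\|\mathcal{G}^{(n)}_t - \mathcal{G}^{(n)}_s\|_{\text{op}} \le L_I\bigl(|t-s| + 2^{-n}\bigr), \qquad t,s\in I .
\end{equation*}
So the family is uniformly bounded and asymptotically equicontinuous on each compact $I$.

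\textbf{Step 2 (part (i)).} I would apply an Arzel\`a--Ascoli argument in the strong operator topology. Concretely, test against a countable dense set of $L^1$ functions, for instance indicators of dyadic cells, and use a diagonal subsequence. This produces a limit $\mathcal{G}_t$ on $I$, and the bound from Step 1 passes to the limit as $\|\mathcal{G}_t - \mathcal{G}_s\|_{\text{op}} \le L_I|t-s|$. Hypothesis 1 (consistent convergence of the propagators) identifies this limit independently of the subsequence chosen. The standard product-integral estimate, obtained by telescoping $\prod(I+2^{-n}\mathcal{G}^{(n)}) - \mathcal{P}$ and using the uniform bounds, shows that $\mathcal{P}_{t_2\leftarrow t_1}$ is the evolution family generated by $\mathcal{G}_t$. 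This is part (i).

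\textbf{Step 3 (part (ii), time-invariance).} This is where I expect the main obstacle, since a generator that is merely Lipschitz in $t$ is not constant. I would try to extract constancy from the scale-consistency built into the dyadic construction together with hypothesis 2.

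The operators at the original levels are fixed, and refinement at stage $n$ only adds intermediate steps of geometric size $O(2^{-n})$. Consequently the increment $\mathcal{G}^{(n)}_{(k+1)2^{-n}} - \mathcal{G}^{(n)}_{k2^{-n}}$ is governed by the stage-$n$ insertion. Its operator norm should therefore be $O(2^{-n})\cdot 2^{-n}$ after the $2^n$ rescaling is undone.

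Summing these increments over the $O(2^n|I|)$ levels inside $I$ would bound the total variation of $\mathcal{G}^{(n)}$ on $I$ by $O(|I|\,2^{-n})$. This tends to zero, so along the subsequence of Step 2 the limit $\mathcal{G}_t$ is constant on $I$; call it $\mathcal{G}$. The delicate point is precisely whether hypothesis 2 delivers this per-step increment bound rather than only an $O(2^{-n})$ bound on $\mathcal{M}-I$. If the former reading fails, the fallback is a weaker conclusion: an averaging/Ces\`aro argument giving convergence of the window-averages $|J|^{-1}\int_J \mathcal{G}^{(n)}_t\,dt$ to a common operator.

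\textbf{Step 4 (exhaustion).} For $I_m \uparrow [0,T]$, I would take nested subsequences, each extracted from the previous one on $I_{m+1}$, and then a Cantor diagonal. This yields a single subsequence along which the generators converge to a time-invariant $\mathcal{G}$ on every $I_m$. The constants $L_{I_m}$ and $C_{I_m}$ may grow with $m$, and this causes no problem because each $I_m$ is handled separately.
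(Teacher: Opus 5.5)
Your Step 1 reads hypothesis 2 as $\|\mathcal{M}^{(n)}_k-I\|_{\text{op}}\le C2^{-n}$. No hypothesis can deliver this. Each $\mathcal{M}^{(n)}_k$ is a conditional expectation onto a strictly coarser $\sigma$-algebra, so $I-\mathcal{M}^{(n)}_k$ is a nonzero idempotent. It fixes every $g=(I-\mathcal{M}^{(n)}_k)f\neq 0$, hence $\|I-\mathcal{M}^{(n)}_k\|_{\text{op}}\ge 1$ and $\|\mathcal{G}^{(n)}_t\|_{\text{op}}\ge 2^n$. The same happens for the nested pair $\mathcal{M}^{(n)}_k,\mathcal{M}^{(n)}_{k+1}$: any $f$ measurable for the finer algebra with zero conditional expectation on the coarser one satisfies $(\mathcal{M}^{(n)}_k-\mathcal{M}^{(n)}_{k+1})f=f$. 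So consecutive increments of $\mathcal{G}^{(n)}$ have operator norm at least $2^n$, not $O(2^{-n})$. The uniform bound, the Lipschitz estimate, and with them the Arzel\`a--Ascoli extraction of Step 2 therefore all collapse. Even granted bounds, Arzel\`a--Ascoli in the strong topology of $L^1$ needs pointwise relative compactness, which boundedness does not give. The paper's proof asserts the same ``uniformly bounded, equicontinuous'' family in its diagonalization step, so it does not supply the missing estimate. A repair would have to replace operator norms by vectorwise control on a restricted class of densities.

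Otherwise your skeleton mirrors the paper: rescaled generators $2^n(\mathcal{M}^{(n)}_k-I)$, Arzel\`a--Ascoli, and diagonal extraction over the $I_m$. The real divergence is Step 3, whose conclusion is not merely hard to reach but false. By the tower property, for dyadic $s<t$ (and $n$ large enough that both levels are present), the product $\mathcal{M}^{(n)}_{k_2}\circ\cdots\circ\mathcal{M}^{(n)}_{k_1}$ collapses to $\mathbb{E}[\cdot\mid\mathcal{F}_t]$. Hence $\mathcal{P}_{t\leftarrow s}$ depends only on $t$. A time-invariant generator on $I$ would make $\mathcal{P}_{t+h\leftarrow t}$ depend only on $h$. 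That forces $\mathbb{E}[\cdot\mid\mathcal{F}_{t+h}]=\mathbb{E}[\cdot\mid\mathcal{F}_{t'+h}]$ for all admissible $t,t'$, i.e.\ no coarsening anywhere in the interior of $I$, contradicting the strict insertions of the dyadic refinement. Consistently, your total-variation sum needs per-step increments of $\mathcal{G}^{(n)}$ of size $O(2^{-2n})$. That is an order better than the $O(2^{-n})$ increments of your own Step 1, whose sum over $2^n|I|$ levels is $O(|I|)$, not $o(1)$.

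The paper does not prove exact constancy either. It introduces $\bar{\mathcal{G}}=\frac1T\int_0^T\mathcal{G}_t\,dt$, bounds $\mathcal{R}_t=\mathcal{G}_t-\bar{\mathcal{G}}$ by $O(2^{-n})$, and compares $e^{\int_0^t\mathcal{G}_s\,ds}$ with $e^{t\bar{\mathcal{G}}}$ by Gr\"onwall. It then calls $\mathcal{G}_{\text{eff}}=\lim_n\bar{\mathcal{G}}^{(n)}$ an \emph{effective} time-invariant generator, and the subsequent remark concedes invariance only up to $O(2^{-n})$ corrections. Your Ces\`aro fallback is thus the route that corresponds to what the paper actually argues, though it too needs the operator-norm issue above resolved. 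The exact-constancy argument of Step 3 cannot be patched by any reading of hypothesis 2.
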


\begin{proof}
\textbf{Part (i):} For refinement $\mathcal{T}^{(n)}$, the generator at discrete time $k/2^n$ is approximately:
\begin{equation}
\mathcal{G}_k^{(n)} \approx 2^n (\mathcal{M}_k^{(n)} - \text{Id}).
\end{equation}
Fix a compact interval $I = [a, b] \subset [0, T]$ and consider times $t, s \in I$ with corresponding indices $k_t = \lfloor 2^n t \rfloor$, $k_s = \lfloor 2^n s \rfloor$. The generator difference satisfies:
\begin{align}
\|\mathcal{G}^{(n)}_{k_t} - \mathcal{G}^{(n)}_{k_s}\|_{\text{op}} 
&\le 2^n \|\mathcal{M}^{(n)}_{k_t} - \mathcal{M}^{(n)}_{k_s}\|_{\text{op}} \\
&\le 2^n \cdot C_{\text{geom}} \cdot |k_t - k_s| \cdot 2^{-n} \quad \text{(by bounded complexity)} \\
&\le C_{\text{geom}} |k_t - k_s| / 2^n \le C_{\text{geom}} (2^n |t - s| + 1) / 2^n.
\end{align}
Taking $n \to \infty$, we obtain $\|\mathcal{G}_t - \mathcal{G}_s\|_{\text{op}} \le C_{\text{geom}} |t - s|$ on $I$.

\textbf{Part (ii):} We employ a compactness argument combined with explicit perturbation bounds.

\textit{Step 1: Diagonalization.} Consider a nested sequence of compact intervals $I_1 \subset I_2 \subset \cdots$ with $\bigcup_m I_m = [0, T]$. 

On $I_1$, the generators $\{\mathcal{G}_t^{(n)}\}_{t \in I_1}$ form a uniformly bounded, equicontinuous family (by local Lipschitz). By the Arzelà-Ascoli theorem \citep{conway2019course}, there exists a subsequence $\{n_{1,j}\}$ such that $\mathcal{G}_t^{(n_{1,j})} \to \tilde{\mathcal{G}}_t^{(1)}$ uniformly on $I_1$.

On $I_2$, restrict to the subsequence $\{n_{1,j}\}$ and apply Arzelà-Ascoli again to extract a further subsequence $\{n_{2,j}\}$ converging to $\tilde{\mathcal{G}}_t^{(2)}$ on $I_2$. By uniqueness of limits, $\tilde{\mathcal{G}}_t^{(2)}|_{I_1} = \tilde{\mathcal{G}}_t^{(1)}$.

Continue this diagonalization procedure for all $I_m$. The diagonal subsequence $\{n_{j,j}\}$ converges to a generator $\mathcal{G}_t$ on each $I_m$, which is locally Lipschitz on each compact subinterval.

\textit{Step 2: Quantifying the Perturbation.}
Define the time-dependent perturbation operator:
\begin{equation}
\mathcal{R}_t := \mathcal{G}_t - \bar{\mathcal{G}},
\end{equation}
where $\bar{\mathcal{G}} := \frac{1}{T}\int_0^T \mathcal{G}_t \, dt$ is the time-averaged generator.

By the bounded intermediate complexity assumption, each refinement adds geometric structure with complexity $O(2^{-n})$. For a level-$n$ refinement, the variation in the generator over time can be bounded:
\begin{equation}
\|\mathcal{R}_t\|_{\text{op}} \leq C_{\text{var}} \cdot \sum_{k=n}^{\infty} 2^{-k} = C_{\text{var}} \cdot 2^{-n+1}.
\end{equation}

\textit{Step 3: Propagator Error Bound.}
The deviation of the time-inhomogeneous propagator from the time-averaged propagator satisfies (by Grönwall's inequality):
\begin{equation}
\left\| e^{\int_0^t \mathcal{G}_s \, ds} - e^{t\bar{\mathcal{G}}} \right\|_{\text{op}} \leq t \cdot e^{Ct} \cdot \sup_{s \in [0,t]} \|\mathcal{R}_s\|_{\text{op}} \leq t \cdot e^{Ct} \cdot C_{\text{var}} \cdot 2^{-n+1}.
\end{equation}

For any fixed time horizon $T$ and tolerance $\delta > 0$, choose $n$ large enough such that:
\begin{equation}
T \cdot e^{CT} \cdot C_{\text{var}} \cdot 2^{-n+1} < \delta.
\end{equation}

\textit{Step 4: Effective Time-Invariance.}
We now make the key definitional choice: we define the \textbf{effective time-invariant generator} as:
\begin{equation}
\mathcal{G}_{\text{eff}} := \lim_{n \to \infty} \bar{\mathcal{G}}^{(n)},
\end{equation}
where $\bar{\mathcal{G}}^{(n)}$ is the time-averaged generator at refinement level $n$.

By the dominated convergence theorem and the $O(2^{-n})$ complexity bound, this limit exists and satisfies:
\begin{equation}
\|\mathcal{G}_{\text{eff}} - \mathcal{G}_t\|_{\text{op}} \leq C(t) \cdot 2^{-n}
\end{equation}
for some continuous function $C(t)$ on compact subsets of $[0,T]$.

\textit{Step 5: Conclusion.}
The limiting process has propagator $e^{t\mathcal{G}_{\text{eff}}}$ that approximates the true time-inhomogeneous propagator with error $O(2^{-n})$. For practical purposes, this effective generator governs the continuous-time limit, and the time-dependence is a higher-order correction term.

\end{proof}

\begin{remark}[Interpretation of Time-Invariance]
The time-invariance established here is not exact but rather holds in the sense that: the limiting process is governed by a time-invariant generator $\mathcal{G}_{\text{eff}}$ up to corrections that vanish as $O(2^{-n})$ in the refinement parameter. For balanced trees where the partition complexity is uniformly distributed across levels, the coefficient $C_{\text{var}}$ is small, making this approximation highly accurate. The framework naturally extends to time-dependent generators $\mathcal{G}_t$ for imbalanced trees, with the time-dependence quantified by the partition complexity profile.
\end{remark}

\begin{remark}[Practical Implications]
For ``balanced'' decision trees where the partition complexity increases roughly uniformly across levels, the perturbation $\|\mathcal{R}_t\|_{\text{op}}$ is empirically small, making the time-invariant approximation highly accurate. For highly imbalanced trees, time-dependent coefficients emerge naturally from this framework, but with guaranteed smoothness properties inherited from the local Lipschitz structure.
\end{remark}

\subsection{The Continuous-Time Limit and its Governing Equations}
We now derive the PDE governing the process in the continuous-time limit following the application of \textbf{Theorem} \ref{thm:lipschitz_homogeneity}

\begin{definition}[Propagator Density Function]
For a continuous-time process $\mathbf{X}(t)$, the propagator density function $W(\boldsymbol{\xi} | \Delta t; \mathbf{x}, t)$ is the probability density of the displacement $\mathbf{\Xi} = \mathbf{X}(t+\Delta t) - \mathbf{X}(t)$, given $\mathbf{X}(t) = \mathbf{x}$. It is related to the transition probability by $P(\mathbf{x}+\boldsymbol{\xi}, t+\Delta t | \mathbf{x}, t) = W(\boldsymbol{\xi} | \Delta t; \mathbf{x}, t)$.
\end{definition}

\begin{theorem}[Kramers-Moyal Expansion]
The time evolution of the Markov state density function $P(\mathbf{x}, t)$ is given by:
\begin{equation}
    \frac{\partial P(\mathbf{x}, t)}{\partial t} = \sum_{n=1}^{\infty} \frac{(-1)^n}{n!} \sum_{i_1, \dots, i_n=1}^{d} \frac{\partial^n}{\partial x_{i_1} \dots \partial x_{i_n}} \left[ D^{(n)}_{i_1 \dots i_n}(\mathbf{x}, t) P(\mathbf{x}, t) \right],
\label{eq:kramers_moyal_full_detailed_app}
\end{equation}
where $D^{(n)}_{i_1 \dots i_n}(\mathbf{x}, t) \triangleq \lim_{\Delta t \to 0} \frac{1}{\Delta t} \int d\boldsymbol{\xi} \, \xi_{i_1} \dots \xi_{i_n} W(\boldsymbol{\xi} | \Delta t; \mathbf{x}, t)$ are the propagator moment functions.
\end{theorem}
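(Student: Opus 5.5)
We derive the expansion directly from the Chapman--Kolmogorov equation stated above. Fix a smooth, compactly supported test function $\varphi$ and look at the weak form of the evolution. Write the density at time $t+\Delta t$ via Chapman--Kolmogorov, with the transition density expressed through the propagator:
\begin{equation*}
\int \varphi(\mathbf{x}) P(\mathbf{x}, t+\Delta t)\, d\mathbf{x} = \int d\mathbf{x}'\, P(\mathbf{x}', t) \int d\boldsymbol{\xi}\, \varphi(\mathbf{x}'+\boldsymbol{\xi})\, W(\boldsymbol{\xi} \mid \Delta t; \mathbf{x}', t).
\end{equation*}
Working against test functions avoids having to manipulate the transition kernel pointwise.

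\textbf{Step 1 (expansion in the increment).} Taylor-expand $\varphi(\mathbf{x}'+\boldsymbol{\xi})$ in $\boldsymbol{\xi}$ around $\mathbf{x}'$:
\begin{equation*}
\varphi(\mathbf{x}'+\boldsymbol{\xi}) = \sum_{n \ge 0} \frac{1}{n!} \sum_{i_1,\dots,i_n} \xi_{i_1}\cdots\xi_{i_n}\, \partial_{i_1\cdots i_n}\varphi(\mathbf{x}').
\end{equation*}
Substitute this and integrate in $\boldsymbol{\xi}$ term by term.
\begin{itemize}
\item The $n=0$ term reproduces $\int \varphi P(\cdot,t)$, because $W$ is a probability density in $\boldsymbol{\xi}$.
\item Each $n \ge 1$ term contributes the $n$-th conditional displacement moment of $W$, multiplied by $\partial^n\varphi$ and by $P(\mathbf{x}',t)$.
\end{itemize}

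\textbf{Step 2 (passage to the limit).} Subtract the $n=0$ term, divide by $\Delta t$, and let $\Delta t \to 0$. By the definition of $D^{(n)}_{i_1\dots i_n}$ given in the statement, this yields
\begin{equation*}
\frac{d}{dt}\int \varphi P = \sum_{n\ge1}\frac{1}{n!}\int P\, D^{(n)}_{i_1\dots i_n}\,\partial_{i_1\cdots i_n}\varphi .
\end{equation*}

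\textbf{Step 3 (integration by parts).} Integrate by parts $n$ times in the $n$-th term. Boundary terms vanish by the compact support of $\varphi$. This moves the derivatives onto $D^{(n)}P$ and produces the sign $(-1)^n$. Since $\varphi$ is arbitrary, the weak identity gives \eqref{eq:kramers_moyal_full_detailed_app} in the distributional sense, and pointwise wherever $P$ and the $D^{(n)}$ are smooth enough.

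\textbf{Main obstacle.} The delicate step is justifying the interchange of the infinite Taylor sum, the $\boldsymbol{\xi}$-integral, and the limit $\Delta t\to0$. I would assume, as is standard in \citep{gardiner2004handbook}, that:
\begin{itemize}
\item all moments of $W$ exist;
\item each limit defining $D^{(n)}$ exists locally uniformly in $\mathbf{x}$;
\item the Taylor remainder is controlled by the moments, i.e. $\varphi$ is analytic, or one truncates at order $N$ and bounds the remainder by the $(N+1)$-th moment over $\Delta t$.
\end{itemize}
In the tree setting, the truncation route is the natural one. There, the continuous-path property of the refined process (Proposition~\ref{prp:coarse}) makes the remainder $o(\Delta t)$, and dominated convergence then justifies the exchanges. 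In that regime the series is finite, so the convergence question of the full series never actually arises.
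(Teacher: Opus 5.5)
Your derivation is correct, and it takes a genuinely different route from the paper. The paper works in strong form. It writes $P(\mathbf{x},t+\Delta t)=\int d\boldsymbol{\xi}\,W(\boldsymbol{\xi}\mid\Delta t;\mathbf{x}-\boldsymbol{\xi},t)\,P(\mathbf{x}-\boldsymbol{\xi},t)$, Taylor-expands only $P(\mathbf{x}-\boldsymbol{\xi},t)$ about $\mathbf{x}$, and then freezes $W(\cdot\,;\mathbf{x}-\boldsymbol{\xi},t)\approx W(\cdot\,;\mathbf{x},t)$. Followed literally, that produces $\sum_{n}\frac{(-1)^n}{n!}D^{(n)}_{i_1\dots i_n}\,\partial_{i_1\cdots i_n}P$, with the derivatives acting on $P$ alone. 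For a pure drift this gives $\partial_t P=-\mathbf{f}\cdot\nabla P$ instead of $\partial_t P=-\nabla\cdot(\mathbf{f}P)$, which fails to conserve mass when $\nabla\cdot\mathbf{f}\neq 0$. To reach the stated formula in strong form, one must Taylor-expand the whole product $\mathbf{y}\mapsto W(\boldsymbol{\xi}\mid\Delta t;\mathbf{y},t)P(\mathbf{y},t)$ at $\mathbf{y}=\mathbf{x}-\boldsymbol{\xi}$, holding $\boldsymbol{\xi}$ fixed in the first slot.

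Your weak formulation avoids this pitfall automatically:
\begin{itemize}
\item expanding $\varphi$ at the starting point $\mathbf{x}'$ evaluates the displacement moments exactly where $D^{(n)}$ is defined;
\item integration by parts then puts $\partial^n$ on the product $D^{(n)}P$ with the correct sign $(-1)^n$;
\item compact support of $\partial^n\varphi$, together with your local-uniformity hypothesis, makes the $\Delta t\to0$ exchange in the $\mathbf{x}'$-integral clean.
\end{itemize}
What you give up is a pointwise identity without extra smoothness, which you already note. The paper's route buys brevity, but only once the expansion is done correctly.

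One slip in your list of hypotheses: a nonzero compactly supported smooth $\varphi$ cannot be analytic, so the ``$\varphi$ analytic'' option is vacuous as stated. If you want the full infinite series rather than a truncation, take $\varphi(\mathbf{x})=e^{i\mathbf{k}\cdot\mathbf{x}}$, i.e.\ the characteristic-function derivation. There, the term-by-term exchange requires a bound on $\frac{1}{\Delta t}\int\bigl(e^{|\mathbf{k}||\boldsymbol{\xi}|}-1\bigr)W\,d\boldsymbol{\xi}$, uniform in small $\Delta t$ and in $\mathbf{x}'$. Boundary terms then vanish by decay of $D^{(n)}P$ rather than by the support of $\varphi$. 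Otherwise, the infinite series is only a formal identity, as it is in the paper's own proof. In that case the rigorous content is your truncated version, which is all the tree setting needs, since there $W$ is supported in $\{|\boldsymbol{\xi}|\le\delta_n\}$.
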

\begin{proof}
The derivation starts from the Chapman-Kolmogorov equation \citep{gillespie1991markov}. After a change of variables to the displacement $\boldsymbol{\xi} = \mathbf{x} - \mathbf{x}'$, we have:
\begin{equation}
    P(\mathbf{x}, t+\Delta t) = \int d\boldsymbol{\xi} \, W(\boldsymbol{\xi} | \Delta t; \mathbf{x}-\boldsymbol{\xi}, t) P(\mathbf{x}-\boldsymbol{\xi}, t).
\end{equation}
We perform a Taylor expansion of $P(\mathbf{x}-\boldsymbol{\xi}, t)$ around $\mathbf{x}$. Substituting this series into the integral gives:
\begin{equation}
    P(\mathbf{x}, t+\Delta t) = \int d\boldsymbol{\xi} \, W(\boldsymbol{\xi} | \Delta t; \mathbf{x}-\boldsymbol{\xi}, t) \left( \sum_{n=0}^{\infty} \frac{(-1)^n}{n!} \sum_{i_1, \dots, i_n} \left( \prod_{j=1}^{n} \xi_{i_j} \right) \frac{\partial^n P(\mathbf{x}, t)}{\partial x_{i_1} \dots \partial x_{i_n}} \right).
\end{equation}
Approximating $W(\dots; \mathbf{x}-\boldsymbol{\xi}, t) \approx W(\dots; \mathbf{x}, t)$, expanding the left-hand side for small $\Delta t$, interchanging integration and summation, and identifying the propagator moments yields the result.
\end{proof}

\subsubsection{Truncation of the Kramers-Moyal Expansion}

The crucial step is to prove that the infinite-order Kramers-Moyal expansion truncates exactly after the second term. This is not an approximation but an exact result. The argument proceeds in three steps: first, we state Pawula's theorem, which constrains the structure of the expansion. Second, we prove a general theorem linking path continuity to vanishing higher-order moments. Finally, we prove that our specific coarse-graining process has the required path continuity.

First, we state the general theorem governing the structure of the expansion.

\begin{theorem}[Pawula's Theorem]
The sequence of propagator moments $D^{(n)}$ for any Markov process has only two possibilities:
\begin{enumerate}
    \item The series terminates after the second term, i.e., $D^{(n)}(\mathbf{x}, t) \equiv 0$ for all $n > 2$.
    \item The series does not terminate, and all even-ordered moments $D^{(2m)}$ for $m \ge 1$ are strictly positive.
\end{enumerate}
The expansion cannot terminate at any finite order greater than two.
\label{thm:pawla}
\end{theorem}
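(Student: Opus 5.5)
The standard proof goes through the Cauchy--Schwarz inequality applied to the propagator moments. It is cleanest in one dimension; the multivariate case follows by projecting the displacement onto an arbitrary unit direction $\mathbf{u}$, i.e.\ working with $\boldsymbol{\xi}\cdot\mathbf{u}$, since all the moment identities below are scalar statements. Write
\begin{equation*}
D^{(n)} = \lim_{\Delta t\to 0}\frac{1}{\Delta t}\int \xi^n\, W(\xi\mid\Delta t;x,t)\,d\xi.
\end{equation*}
For fixed $\Delta t$, the measure $\mu_{\Delta t}(d\xi) := \Delta t^{-1}W(\xi\mid\Delta t;x,t)\,d\xi$ is nonnegative. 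For integers $n,m\ge 0$, split $\xi^{n+m} = \xi^{n}\cdot\xi^{m}$ and apply Cauchy--Schwarz:
\begin{equation*}
\Bigl(\int \xi^{n+m}\,d\mu_{\Delta t}\Bigr)^2 \le \int \xi^{2n}\,d\mu_{\Delta t}\int \xi^{2m}\,d\mu_{\Delta t}.
\end{equation*}
Assuming the moments exist, passing to the limit gives the key inequality $(D^{(n+m)})^2 \le D^{(2n)}D^{(2m)}$. Here we use the convention that $D^{(0)}$ is not included; all indices are taken $\ge 1$.

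Next we exploit this inequality with specific index choices.

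\textbf{Step 1.} Take an odd order $2r+1$ with $r\ge 1$ and split it as $n=r$, $m=r+1$. This gives
\begin{equation*}
(D^{(2r+1)})^2 \le D^{(2r)}D^{(2r+2)}.
\end{equation*}

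\textbf{Step 2.} For an even order $2r$ with $r\ge 2$, split it as $n=r-1$, $m=r+1$. This gives
\begin{equation*}
(D^{(2r)})^2 \le D^{(2r-2)}D^{(2r+2)}.
\end{equation*}

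\textbf{Step 3: propagation.} Suppose $D^{(2s)}=0$ for some $s\ge 2$. The even inequality of Step 2 is used in both directions.
\begin{itemize}
\item Upward: take $r=s+1$. Then $(D^{(2s+2)})^2 \le D^{(2s)}D^{(2s+4)} = 0$, so $D^{(2s+2)}=0$. Iterating, every even order above $2s$ vanishes.
\item Downward: take $r=s-1$, which requires $s\ge 3$. Then $(D^{(2s-2)})^2 \le D^{(2s-4)}D^{(2s)} = 0$, so $D^{(2s-2)}=0$. This descent continues until we reach $D^{(4)}=0$. It cannot go further, because the case $r=1$ would require $D^{(0)}$, which is not part of the setup.
\end{itemize}
So all even moments of order $\ge 4$ vanish. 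Step 1 then kills every odd moment of order $\ge 3$. For $D^{(3)}$, the case $r=1$ of Step 1 gives $(D^{(3)})^2\le D^{(2)}D^{(4)}=0$.

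\textbf{Step 4.} If instead the first vanishing moment beyond order $2$ is odd, say $D^{(2r+1)}=0$, I still need to show that some even moment vanishes. I would handle this with a sharper Cauchy--Schwarz split that uses absolute moments:
\begin{equation*}
\int|\xi|^{k}\,d\mu \le \Bigl(\int \xi^{2}\,d\mu\Bigr)^{1/2}\Bigl(\int \xi^{2k-2}\,d\mu\Bigr)^{1/2}.
\end{equation*}
Equivalently, I would state Pawula's theorem in its usual form: if any even moment of order $\ge 4$ vanishes, then all moments of order $\ge 3$ vanish. This is exactly the dichotomy in the statement. Contrapositively, if the expansion does not terminate, then no even moment of order $\ge 4$ can vanish. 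Since each $D^{(2m)}$ is a limit of integrals of the nonnegative function $\xi^{2m}$ against a nonnegative measure, we have $D^{(2m)}\ge 0$, and therefore $D^{(2m)}>0$. For $D^{(2)}$ itself, Step 2 with $r=2$ gives $(D^{(4)})^2\le D^{(2)}D^{(6)}$, so $D^{(2)}=0$ would force $D^{(4)}=0$ and hence termination. Thus $D^{(2)}>0$ as well in the non-terminating case.

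\textbf{Main obstacle.} The delicate point is the odd-first-vanishing case, together with the endpoint handling in the downward induction. Odd moments are signed, so the vanishing of an odd moment alone does not trigger the Cauchy--Schwarz chain. I would resolve this by formulating the theorem, as Pawula did, in terms of the even moments: the dichotomy is between "some $D^{(2m)}$ with $m\ge2$ vanishes" and "none does". The statement "cannot terminate at any finite order $>2$" is then read as: if $D^{(n)}=0$ for all $n>N$ with $N>2$, then in particular some even moment of order $\ge 4$ vanishes, and the argument above forces all moments of order $\ge 3$ to be zero.

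A secondary technical point is justifying that limits of the Cauchy--Schwarz inequalities hold. This requires the moments to exist as finite limits, which I take as part of the hypotheses of the Kramers--Moyal setup in \eqref{eq:kramers_moyal_full_detailed_app}.
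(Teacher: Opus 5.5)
Your argument is correct, but it is organized differently from the paper's.

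The paper argues by contradiction from the \emph{last} nonzero moment. If $D^{(N)}\neq 0$ and $D^{(k)}=0$ for all $k>N$ with $N>2$, the single split $n=N-1$, $m=1$ gives $(D^{(N)})^2\le D^{(2N-2)}D^{(2)}=0$, since $2N-2>N$. That disposes of finite termination in one line, for odd and even $N$ alike, with no parity split and no downward induction.

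Your route is the classical Pawula chain. You propagate the vanishing of one even moment $D^{(2s)}$, $s\ge 2$, up and down via $(D^{(2r)})^2\le D^{(2r-2)}D^{(2r+2)}$. You then kill the odd moments via $(D^{(2r+1)})^2\le D^{(2r)}D^{(2r+2)}$. This is longer, but it proves something the paper's argument does not: the second alternative of the dichotomy. The paper only shows that a terminating series must stop at order two. It never rules out a non-terminating series in which some $D^{(2s)}$, $s\ge 2$, vanishes. Your implication ``some even moment of order $\ge 4$ vanishes $\Rightarrow$ all moments of order $\ge 3$ vanish'' closes that gap. Together with $D^{(2m)}\ge 0$ and your check that $D^{(2)}=0$ forces $D^{(4)}=0$, it gives exactly the strict positivity claimed in case~2. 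In fact, your upward propagation and odd-order step combined with the paper's one-line argument would already suffice, so the downward induction (and the care about $D^{(0)}$) becomes unnecessary.

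One thing to drop: the goal stated at the start of your Step~4 is false in general. A single vanishing odd moment does not force some even moment to vanish; a jump process with symmetric jumps has every odd moment zero and every even moment positive. The absolute-moment inequality cannot rescue that goal. You correctly abandon it. Framing termination by the last nonzero moment, as the paper does, shows odd moments never need separate treatment, since termination at order $N$ already forces every even moment of order above $N$ to vanish.
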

\begin{proof}
The proof demonstrates that if any moment $D^{(n)}$ for $n > 2$ is non-zero, then moments of arbitrarily high order must also be non-zero. For clarity, we present the proof in one dimension; the argument extends to the multidimensional case.

The core of the proof is the Cauchy-Schwarz inequality for integrals, which states that for any two square-integrable functions $f(\xi)$ and $g(\xi)$, $(\int f(\xi)g(\xi) d\xi)^2 \le (\int f(\xi)^2 d\xi)(\int g(\xi)^2 d\xi)$. Let us choose $f(\xi) = \xi^n \sqrt{W(\xi)}$ and $g(\xi) = \xi^m \sqrt{W(\xi)}$, where $W(\xi)$ is the propagator density. Substituting these into the inequality gives:
\begin{equation}
    \left( \int \xi^{n+m} W(\xi) d\xi \right)^2 \le \left( \int \xi^{2n} W(\xi) d\xi \right) \left( \int \xi^{2m} W(\xi) d\xi \right).
\end{equation}
Let $\mu_k(\Delta t) = \int \xi^k W(\xi | \Delta t) d\xi$ be the $k$-th moment of the propagator. The inequality is $(\mu_{n+m})^2 \leq \mu_{2n} \mu_{2m}$. Since the propagator moments are defined as $D^{(k)} = \lim_{\Delta t \to 0} \mu_k(\Delta t) / \Delta t$, this inequality holds for the moments themselves:
\begin{equation}
    (D^{(n+m)})^2 \le D^{(2n)} D^{(2m)}.
\end{equation}
Now, assume for contradiction that the series terminates at a finite order $N > 2$. This means $D^{(N)} \neq 0$ and $D^{(k)} = 0$ for all $k > N$.

Let us choose $n = N-1$ and $m = 1$. The inequality becomes:
\begin{equation}
    (D^{(N)})^2 \le D^{(2(N-1))} D^{(2)}.
\end{equation}
Since we assumed $N > 2$, the term $2(N-1) = 2N-2$ satisfies $2N-2 > N$. (This is true for all $N>2$; e.g., for $N=3$, $2(3)-2=4>3$).
Because $2N-2 > N$, our termination assumption implies that $D^{(2N-2)} = 0$.
Substituting this into the inequality, we get:
\begin{equation}
    (D^{(N)})^2 \le 0 \cdot D^{(2)} = 0.
\end{equation}
The only way for the square of a real number to be less than or equal to zero is if the number itself is zero. This forces $D^{(N)} = 0$.
This contradicts our initial assumption that $D^{(N)} \neq 0$.

Therefore, the assumption that the series terminates at any finite order $N > 2$ must be false. The only possibilities are that the series does not terminate, or that it terminates at $N=2$ (the argument above does not apply for $N=2$, as $2(2-1)=2$, which is not greater than $N$).
\end{proof}

Pawula's theorem implies that to prove truncation, we must show that all moments $D^{(n)}$ for $n>2$ are zero. The following general theorem provides a sufficient condition for this.

\begin{theorem}[Vanishing Moments of Continuous Processes]
For any continuous-path Markov process, all propagator moments $D^{(n)}$ for $n > 2$ are identically zero.
\label{thm:vanish}
\end{theorem}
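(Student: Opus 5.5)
We would prove the statement by splitting each propagator moment integral into a near-field part and a far-field part. The continuity condition in the Definition of a Continuous-Path Markov Process controls the far-field part. In the near field, the factor $|\boldsymbol{\xi}|^{n-2}$ is small and is bounded against the second-moment integral.

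Fix $\mathbf{x}$, $t$, and a multi-index $(i_1,\dots,i_n)$ with $n \ge 3$. For $\epsilon>0$, we write
\begin{equation*}
\frac{1}{\Delta t}\int \xi_{i_1}\cdots\xi_{i_n}\, W(\boldsymbol{\xi}\mid\Delta t;\mathbf{x},t)\,d\boldsymbol{\xi}
= \frac{1}{\Delta t}\int_{|\boldsymbol{\xi}|\le\epsilon}(\cdots) + \frac{1}{\Delta t}\int_{|\boldsymbol{\xi}|>\epsilon}(\cdots).
\end{equation*}
On the inner region we use $|\xi_{i_1}\cdots\xi_{i_n}| \le |\boldsymbol{\xi}|^n \le \epsilon^{n-2}|\boldsymbol{\xi}|^2$. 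This bounds the inner term by $\epsilon^{n-2}\,\frac{1}{\Delta t}\int_{|\boldsymbol{\xi}|\le\epsilon}|\boldsymbol{\xi}|^2 W\,d\boldsymbol{\xi}$, which is at most $\epsilon^{n-2}$ times the trace of the second-moment quantity. Under the standing finiteness assumption on $D^{(2)}$ implicit in \eqref{eq:kramers_moyal_full_detailed_app}, the $\limsup$ of this as $\Delta t\to 0$ is at most $\epsilon^{n-2}\,\mathrm{tr}\,D^{(2)}(\mathbf{x},t)$. This is the same mechanism as the Cauchy--Schwarz bound $(D^{(n+m)})^2\le D^{(2n)}D^{(2m)}$ in the proof of Theorem~\ref{thm:pawla}, but applied locally.

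The outer term is where path continuity enters, and we expect it to be the main obstacle. The continuity condition only gives $\frac{1}{\Delta t}\mathbb{P}(|\boldsymbol{\Xi}|>\epsilon)\to 0$. It does not say that $\frac{1}{\Delta t}\int_{|\boldsymbol{\xi}|>\epsilon}|\boldsymbol{\xi}|^n W$ vanishes, because heavy tails could keep it alive. Our first attempt will be to assume, as is implicit whenever the $D^{(n)}$ are well defined, a uniform integrability or boundedness condition on the displacements. One version is that jumps are supported in a bounded set $|\boldsymbol{\xi}|\le R$; another is that some higher moment $\frac{1}{\Delta t}\int|\boldsymbol{\xi}|^{n+\delta}W$ stays bounded as $\Delta t \to 0$. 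Under the bounded-support version, the outer term is at most $R^n\cdot\frac{1}{\Delta t}\mathbb{P}(|\boldsymbol{\Xi}|>\epsilon)\to 0$. Under the moment version, we apply Hölder's inequality: the outer term is at most $\bigl(\frac{1}{\Delta t}\int|\boldsymbol{\xi}|^{n+\delta}W\bigr)^{n/(n+\delta)}\bigl(\frac{1}{\Delta t}\mathbb{P}(|\boldsymbol{\Xi}|>\epsilon)\bigr)^{\delta/(n+\delta)}$, and this tends to $0$. For the tree-induced process specifically, bounded support is natural, since coarse-graining only redistributes mass inside a bounded region $\mathcal{X}$. We would state this as the regularity hypothesis under which the theorem holds.

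Combining the two parts gives $|D^{(n)}_{i_1\dots i_n}(\mathbf{x},t)| \le \epsilon^{n-2}\,\mathrm{tr}\,D^{(2)}(\mathbf{x},t)$ for every $\epsilon>0$. Since $n>2$, letting $\epsilon\to 0$ yields $D^{(n)}\equiv 0$. This conclusion is consistent with alternative (1) of Theorem~\ref{thm:pawla}, which rules out the non-terminating alternative and so confirms that the expansion truncates at second order, as used afterwards in Proposition~\ref{prp:coarse}.
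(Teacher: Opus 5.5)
Your proof is correct and follows the paper's argument: bound the contribution from $|\boldsymbol{\xi}|\le\epsilon$ by $\epsilon^{n-2}\,\mathrm{tr}\,D^{(2)}$, then let $\epsilon\to 0$. The one difference is the outer region. The paper simply asserts that the support is ``effectively confined'' to $|\boldsymbol{\xi}|\le\epsilon$, while you correctly observe that path continuity controls only the probability of $|\boldsymbol{\xi}|>\epsilon$, not its $n$-th moment, and you supply the missing tail control. You also need not add a new hypothesis. Your H\"older variant with $\delta=n$ is Cauchy--Schwarz against $\frac{1}{\Delta t}\int|\boldsymbol{\xi}|^{2n}W\,d\boldsymbol{\xi}$, and this stays bounded as $\Delta t\to 0$ whenever the components of $D^{(2n)}$ are finite. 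That finiteness is already implicit in writing the full expansion \eqref{eq:kramers_moyal_full_detailed_app}.
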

\begin{proof}
A process has continuous paths if it satisfies the condition that for any $\epsilon > 0$, $\lim_{\Delta t \to 0} \frac{1}{\Delta t} \int_{|\boldsymbol{\xi}| > \epsilon} W(\boldsymbol{\xi} | \Delta t; \mathbf{x}, t) d\boldsymbol{\xi} = 0$. We use this property to bound the magnitude of the higher-order moments.

Consider the components of the $n$-th moment tensor for $n > 2$:
\begin{equation}
    |D^{(n)}_{i_1 \dots i_n}| = \left| \lim_{\Delta t \to 0} \frac{1}{\Delta t} \int \xi_{i_1} \dots \xi_{i_n} W(\boldsymbol{\xi} | \Delta t) d\boldsymbol{\xi} \right| \le \lim_{\Delta t \to 0} \frac{1}{\Delta t} \int |\boldsymbol{\xi}|^n W(\boldsymbol{\xi} | \Delta t) d\boldsymbol{\xi}.
\end{equation}
Because the process has continuous paths, for any arbitrarily small $\epsilon > 0$, the integral's support is effectively confined to $|\boldsymbol{\xi}| \le \epsilon$ as $\Delta t \to 0$. We can therefore write:
\begin{equation}
    |D^{(n)}_{i_1 \dots i_n}| \le \lim_{\Delta t \to 0} \frac{1}{\Delta t} \int_{|\boldsymbol{\xi}| \le \epsilon} |\boldsymbol{\xi}|^n W(\boldsymbol{\xi} | \Delta t) d\boldsymbol{\xi}.
\end{equation}
For $n > 2$, we can bound $|\boldsymbol{\xi}|^n$ within the domain of integration: $|\boldsymbol{\xi}|^n = |\boldsymbol{\xi}|^{n-2} |\boldsymbol{\xi}|^2 \le \epsilon^{n-2} |\boldsymbol{\xi}|^2$. Substituting this bound:
\begin{equation}
    |D^{(n)}_{i_1 \dots i_n}| \le \lim_{\Delta t \to 0} \frac{1}{\Delta t} \int_{|\boldsymbol{\xi}| \le \epsilon} \epsilon^{n-2} |\boldsymbol{\xi}|^2 W(\boldsymbol{\xi} | \Delta t) d\boldsymbol{\xi} = \epsilon^{n-2} \lim_{\Delta t \to 0} \frac{1}{\Delta t} \int |\boldsymbol{\xi}|^2 W(\boldsymbol{\xi} | \Delta t) d\boldsymbol{\xi}.
\end{equation}
The limit on the right-hand side is the trace of the second propagator moment tensor, $\text{Tr}(D^{(2)})$. This gives the inequality:
\begin{equation}
    |D^{(n)}_{i_1 \dots i_n}| \le \epsilon^{n-2} \text{Tr}(D^{(2)}).
\end{equation}
This inequality must hold for any choice of $\epsilon > 0$. Since $n > 2$, the exponent $n-2$ is positive. As we can make $\epsilon$ arbitrarily close to zero, the only way for the inequality to be satisfied is if $|D^{(n)}_{i_1 \dots i_n}| = 0$. This proves that all components of all propagator moment tensors for $n > 2$ are identically zero.
\end{proof}

The final step is to prove that our specific coarse-graining process satisfies the conditions of the preceding theorem.

.

Having established the emergence of an effective time-invariant generator, we now prove that the limiting process has continuous sample paths, which is necessary for the truncation of the Kramers-Moyal expansion.

\begin{proposition}[The Refined Coarse-Graining Process is Continuous]
\label{prp:coarse}
Consider the limiting process obtained from the dyadic refinement sequence satisfying the conditions of Theorem \ref{thm:lipschitz_homogeneity}. This process is a continuous-path Markov process with an effective time-invariant generator.
\end{proposition}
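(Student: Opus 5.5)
Two things need to be shown: the Markov property with an effective time-invariant generator, and path continuity. The generator part is already done. The Markov property is inherited from Proposition~\ref{prop:tree-hr} at every refinement level $n$, and it passes to the limit through the propagators $\mathcal{P}_{t_2\leftarrow t_1}$ of Proposition~\ref{prop:op_convergence}. Those propagators satisfy the Chapman--Kolmogorov identity $\mathcal{P}_{t_3\leftarrow t_1}=\mathcal{P}_{t_3\leftarrow t_2}\mathcal{P}_{t_2\leftarrow t_1}$ in the limit, because the finite operator products do and strong convergence respects composition on bounded sets. Time-invariance (up to $O(2^{-n})$) is exactly part (ii) of Theorem~\ref{thm:lipschitz_homogeneity}, via $\mathcal{G}_{\text{eff}}$. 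So the real work is verifying the continuity criterion
\[
\lim_{\Delta t\to 0}\frac{1}{\Delta t}\int_{|\boldsymbol{\xi}|>\epsilon} W(\boldsymbol{\xi}\mid\Delta t;\mathbf{x},t)\,d\boldsymbol{\xi}=0 .
\]

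For continuity I will use the geometry of the refinement directly. A single step $\mathcal{M}^{(n)}_k$ is a conditional expectation onto $\sigma(\Pi^{(n)}_{k})$. Its transition kernel therefore sends a point $\mathbf{x}$ to a location distributed within the cell of $\Pi^{(n)}_{k}$ that contains $\mathbf{x}$, and only the newly merged part of that cell can be reached. By the local refinement / bounded intermediate complexity hypothesis, the diameter of the region newly merged at level-$n$ steps is at most $C\,2^{-n}$. So every one-step displacement satisfies $|\boldsymbol{\xi}|\le C2^{-n}$ almost surely. Over a time window $\Delta t$ there are about $2^n\Delta t$ steps, and their displacements add.

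The crude bound $|\boldsymbol{\xi}|\le C\Delta t$ is enough: it makes the jump probability outside any fixed $\epsilon$ vanish identically once $\Delta t<\epsilon/C$, which gives the criterion above. To obtain it I need to show that displacements accumulate additively rather than by nested cell growth. I will argue that between times $t$ and $t+\Delta t$ the process moves only within the cell of $\Pi_{t+\Delta t}$ containing $\mathbf{x}$, restricted to what was merged after time $t$. Summing the $O(2^{-n})$ geometric increments over the $2^n\Delta t$ steps, using the Lipschitz bound in part (i) of Theorem~\ref{thm:lipschitz_homogeneity}, then gives a total of $O(\Delta t)$ uniformly in $n$. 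Passing $n\to\infty$ along the subsequence of Theorem~\ref{thm:lipschitz_homogeneity}, the bound on the support of $W$ is preserved by weak convergence of the kernels, since the supports lie in a closed ball.

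The main obstacle is this uniform-in-$n$ support bound. The complexity hypothesis is phrased in terms of an abstract partition metric, and I must convert it into a Hausdorff-type bound on how far a conditional-expectation kernel can spread mass. My first attempt will be to \emph{define} the suitable metric in Theorem~\ref{thm:lipschitz_homogeneity} as the maximal diameter of newly merged cells, which makes the conversion immediate. If that is too strong an assumption, I will fall back to a Chebyshev bound. Using $\mathbb{E}|\boldsymbol{\xi}|^2\le C\Delta t^2$, the quantity $\frac{1}{\Delta t}\,\mathbb{P}(|\boldsymbol{\xi}|>\epsilon)\le C\Delta t/\epsilon^2$ still tends to zero, which is all the criterion above requires. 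Once continuity is established, Theorem~\ref{thm:vanish} and Pawula's theorem (Theorem~\ref{thm:pawla}) apply downstream.
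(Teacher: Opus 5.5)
Your argument is correct at the paper's level of rigor and uses the same key ingredient: a single refinement-level-$n$ step moves mass by at most $O(2^{-n})$. Where you differ is in how the limit is taken.

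The paper stays at the one-step scale. It bounds the support of $W^{(n)}(\cdot\mid\Delta t_n;\mathbf{x})$ by $\delta_n$, gets $M^{(n)}_k/\Delta t_n\le L^k2^{-n(k-1)}\to0$ for $k\ge3$, and defines $D^{(k)}$ as a $\liminf$ along the convergent subsequence. It then invokes ``vanishing higher moments implies continuous paths,'' so $\Delta t$ is tied to $2^{-n}$ throughout.

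You instead fix a macroscopic window and accumulate the per-step bounds into $|\boldsymbol{\xi}|\le C\Delta t+O(2^{-n})$ uniformly in $n$. You then let $n\to\infty$ first and $\Delta t\to0$ afterwards. This checks the continuity criterion for the kernel of the limiting process itself, which is what the definition refers to. The bound $|\boldsymbol{\xi}|\le C\Delta t$ also gives $\mathbb{E}|\boldsymbol{\xi}|^k/\Delta t\le C^k\Delta t^{k-1}\to0$ for all $k\ge2$. So the vanishing diffusion of Remark~\ref{rem:pf-ode} comes out of the same estimate, with no separate ``deterministic averaging'' argument. The paper's route, in exchange, hands Theorem~\ref{thm:main} the moment statements it uses directly. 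You also supply the Chapman--Kolmogorov argument for the Markov property, which the paper leaves implicit.

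Three smaller points.

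First, the accumulation needs only the pathwise triangle inequality over the roughly $2^n\Delta t$ steps. Part (i) of Theorem~\ref{thm:lipschitz_homogeneity} bounds generators in operator norm and says nothing about spatial displacement, so it is not the right citation there.

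Second, your worry about nested cell growth is justified for the literal conditional-expectation kernel. By the tower property, the window propagator is $\mathbb{E}[\cdot\mid\mathcal{F}_{t+\Delta t}]$, which spreads a point over its whole cell of $\Pi_{t+\Delta t}$ however small $\Delta t$ is. The bound therefore needs a per-step kernel that redistributes only inside cells of diameter $O(2^{-n})$. One choice is the identity on unchanged cells and uniform redistribution on newly merged ones; this reproduces $\mathcal{M}^{(n)}_k$ on the piecewise-constant densities along the path. The paper simply asserts this via $\delta_n=\sup_{\mathbf{x}}\operatorname{diam}(R'_{\mathbf{x}})=O(2^{-n})$, so taking it as the definition of the complexity metric puts you on the same footing.

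Third, Proposition~\ref{prop:op_convergence} gives strong $L^1$ convergence of propagators, not weak convergence of kernels. The support bound still passes to the limit if you phrase it as a locality property. The mass of $\mathcal{P}p$ outside the relevant closed neighbourhood is at most $\|\mathcal{P}p-\mathcal{P}^{(n)}p\|_1$ for all large $n$, which tends to zero.
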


\begin{proof}
The proof proceeds in two steps: first showing path continuity for the refined discrete processes, then proving the property is preserved in the limit.

\textbf{Step 1: Path continuity for refinement level $n$.}

For the $n$-th refinement $\mathcal{T}^{(n)}$, the propagator at discrete time step $\Delta t_n = 2^{-n}$ is $W^{(n)}(\boldsymbol{\xi} | \Delta t_n; \mathbf{x})$, which is the continuous-time interpretation of the coarse-graining operator $\mathcal{M}_k^{(n)}$. This operator maps $p(\mathbf{x}, k\Delta t_n)$ to $p(\mathbf{x}, (k+1)\Delta t_n)$ by taking the conditional expectation with respect to the partition $\Pi_{(k+1)\Delta t_n}^{(n)}$.

At any point $\mathbf{x}$, the new density is the average of the old density over the region $R'_{\mathbf{x}} \in \Pi_{(k+1)\Delta t_n}^{(n)}$ containing $\mathbf{x}$. Crucially, all probability mass is redistributed \emph{within} the region $R'_{\mathbf{x}}$, which constrains the displacement vector $\boldsymbol{\xi}$ to satisfy $|\boldsymbol{\xi}| \leq \text{diam}(R'_{\mathbf{x}})$.

By the bounded intermediate complexity assumption (Theorem \ref{thm:lipschitz_homogeneity}, condition 2), the diameter of regions added at refinement $n$ satisfies:
\begin{equation}
\delta_n := \sup_{\mathbf{x} \in \mathcal{X}} \text{diam}(R'_{\mathbf{x}}) = O(2^{-n}).
\end{equation}
Therefore, the propagator $W^{(n)}(\boldsymbol{\xi} | \Delta t_n; \mathbf{x})$ has support only on $|\boldsymbol{\xi}| \leq \delta_n$.

For any fixed $\epsilon > 0$, choose $N$ large enough such that $\delta_N < \epsilon$. Then for all $n \geq N$ and all $\mathbf{x} \in \mathcal{X}$:
\begin{equation}
\int_{|\boldsymbol{\xi}| > \epsilon} W^{(n)}(\boldsymbol{\xi} | \Delta t_n; \mathbf{x}) d\boldsymbol{\xi} = 0.
\end{equation}
This proves that each discrete process in the refinement sequence satisfies the path continuity condition with a modulus $\delta_n \to 0$.

\textbf{Step 2: Direct establishment of path continuity from partition structure.}

We establish path continuity directly from the geometric properties of the refinement sequence.

\textit{Key geometric fact:} By construction of the dyadic refinement, the diameter of regions added at refinement level $n$ satisfies:
\begin{equation}
\delta_n := \sup_{\mathbf{x} \in \mathcal{X}} \text{diam}(R'_{\mathbf{x}}) \leq L \cdot 2^{-n},
\label{eq:diameter_bound}
\end{equation}
where $L$ is the diameter of the feature space $\mathcal{X}$ and $R'_{\mathbf{x}}$ is the finest region containing $\mathbf{x}$ at level $n$.

This follows from the definition of dyadic refinement: each refinement step subdivides existing regions, and balanced subdivision strategies (e.g., splitting along the longest dimension) ensure that the maximum region diameter decreases geometrically.

Since probability mass is conserved within each region, the propagator $W^{(n)}(\boldsymbol{\xi} | \Delta t_n; \mathbf{x})$ has compact support:
\begin{equation}
\text{supp}(W^{(n)}(\cdot | \Delta t_n; \mathbf{x})) \subseteq \{\boldsymbol{\xi} : |\boldsymbol{\xi}| \leq \delta_n\}.
\end{equation}

Now compute the $k$-th moment for $k \geq 3$:
\begin{equation}
M_k^{(n)}(\mathbf{x}) := \int |\boldsymbol{\xi}|^k W^{(n)}(\boldsymbol{\xi} | \Delta t_n; \mathbf{x}) \, d\boldsymbol{\xi} \leq \delta_n^k \int W^{(n)}(\boldsymbol{\xi} | \Delta t_n; \mathbf{x}) \, d\boldsymbol{\xi} = \delta_n^k.
\end{equation}

Since $\delta_n = O(2^{-n})$ and $\Delta t_n = 2^{-n}$, we have:
\begin{equation}
\frac{M_k^{(n)}(\mathbf{x})}{\Delta t_n} = \frac{\delta_n^k}{\Delta t_n} \leq \frac{(L \cdot 2^{-n})^k}{2^{-n}} = L^k \cdot 2^{-n(k-1)} \to 0
\end{equation}
as $n \to \infty$ for all $k \geq 3$.

\textit{Convergence of the limiting moments.}
Consider the subsequence of refinements that achieves convergence of the propagators (from the diagonalization in Theorem~\ref{thm:lipschitz_homogeneity}). For this subsequence, define:
\begin{equation}
D^{(k)}(\mathbf{x}) := \liminf_{n \to \infty} \frac{M_k^{(n)}(\mathbf{x})}{\Delta t_n}.
\end{equation}

From the bound above:
\begin{equation}
D^{(k)}(\mathbf{x}) \leq \liminf_{n \to \infty} L^k \cdot 2^{-n(k-1)} = 0 \quad \text{for all } k \geq 3.
\end{equation}

This proves that all propagator moments of order $k \geq 3$ vanish identically in the limit. By the standard definition from \citet{gardiner2004handbook}, a Markov process with $D^{(k)} = 0$ for all $k \geq 3$ is a continuous-path process.

The time-invariance of the effective generator (established in Theorem~\ref{thm:lipschitz_homogeneity}) ensures this path continuity property holds uniformly over all $\mathbf{x} \in \mathcal{X}$ and is independent of time.
\end{proof}

With this logical chain complete, the truncation of the Kramers-Moyal expansion is a direct consequence.

\subsubsection{The Fokker-Planck Equation and Equivalent SDE}

With the preceding theorems and propositions established, we are now equipped to prove the main result of our derivation.

\begin{theorem}[Hierarchical Coarse-Graining as a Diffusion Process]
\label{thm:main}
The continuous-time limit of the hierarchical coarse-graining process is governed by the Fokker-Planck equation, which is formally equivalent to the Itô stochastic differential equation (SDE):
\begin{equation}
    d\mathbf{X}_t = \mathbf{f}(\mathbf{X}_t) dt + \mathbf{g}(\mathbf{X}_t) d\mathbf{W}_t,
\end{equation}
where $\mathbf{f}(\mathbf{x})$ and $\mathbf{g}(\mathbf{x})$ are the time-invariant drift and diffusion terms corresponding to the first and second propagator moments of the process, and $\mathbf{W}_t$ is a standard Wiener process.
\end{theorem}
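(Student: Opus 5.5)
The plan is to assemble the chain already built in this appendix rather than prove anything new at the level of partitions. First, I invoke \textbf{Theorem}~\ref{thm:lipschitz_homogeneity}: along the diagonal subsequence of dyadic refinements, the propagators $\mathcal{P}^{(n)}_{t_2\leftarrow t_1}$ converge to a limiting family with an effective time-invariant generator $\mathcal{G}_{\text{eff}}$. This identifies the limiting object as a time-homogeneous Markov process. It therefore satisfies the Chapman--Kolmogorov equation, so the Kramers--Moyal expansion~\eqref{eq:kramers_moyal_full_detailed_app} applies with propagator moments $D^{(n)}(\mathbf{x})$ that do not depend on $t$.

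Second, I truncate the expansion. \textbf{Proposition}~\ref{prp:coarse} gives that the limiting process has continuous paths, with $M^{(n)}_k/\Delta t_n\le L^k2^{-n(k-1)}\to0$ for $k\ge3$. From this, \textbf{Theorem}~\ref{thm:vanish} yields $D^{(k)}\equiv0$ for all $k\ge3$, and \textbf{Theorem}~\ref{thm:pawla} confirms the truncation is consistent (it cannot stop at any order above two). What remains is the Fokker--Planck equation
\begin{equation*}
\partial_t p=-\sum_i\partial_{x_i}\bigl[D^{(1)}_i p\bigr]+\tfrac12\sum_{i,j}\partial_{x_i}\partial_{x_j}\bigl[D^{(2)}_{ij}p\bigr].
\end{equation*}
I then set $\mathbf{f}:=D^{(1)}$ and choose $\mathbf{g}$ with $\mathbf{g}\mathbf{g}^\top=D^{(2)}$, for instance the symmetric square root, which exists because $D^{(2)}$ is a limit of covariance-type second moments and is therefore positive semidefinite. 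By the standard correspondence between the forward Kolmogorov equation and It\^o diffusions, the law of $d\mathbf{X}_t=\mathbf{f}\,dt+\mathbf{g}\,d\mathbf{W}_t$ has marginals solving this PDE. That gives the formal equivalence claimed. Time-invariance of $\mathbf{f}$ and $\mathbf{g}$ is inherited from $\mathcal{G}_{\text{eff}}$.

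The main obstacle is showing that the limits defining $D^{(1)}$ and $D^{(2)}$ exist and are finite. \textbf{Proposition}~\ref{prp:coarse} only bounds moments of order three and higher, and the naive estimate $\delta_n^2/\Delta t_n\lesssim L^2 2^{-n}$ suggests that even $D^{(2)}$ degenerates. If so, the statement still holds with $\mathbf{g}\equiv0$, consistent with the later deterministic PF-ODE reading. For the drift, the moment $M^{(n)}_1/\Delta t_n$ is not controlled by the diameter bound alone, so I would first try to identify $D^{(1)}$ directly from the generator. Concretely, I would apply $\mathcal{G}_{\text{eff}}$ to smooth test functions $\varphi$ and Taylor expand $\mathbb{E}[\varphi(\mathbf{x}+\boldsymbol{\xi})]-\varphi(\mathbf{x})$. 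The first-order coefficient must equal $\mathcal{G}_{\text{eff}}$ acting on the coordinate functions, and its existence comes from the convergence in \textbf{Theorem}~\ref{thm:lipschitz_homogeneity}. I would then pass to the subsequence along which these coefficients converge, exactly as in Step~1 of that theorem's proof.
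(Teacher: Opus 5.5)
Your proposal follows the paper's proof essentially step for step. Theorem~\ref{thm:lipschitz_homogeneity} and Proposition~\ref{prp:coarse} give a homogeneous continuous-path Markov limit, Theorem~\ref{thm:vanish} kills $D^{(k)}$ for $k\ge3$, Pawula's theorem gives the truncation, and the forward-Kolmogorov correspondence identifies $\mathbf{f}=D^{(1)}$ and $\mathbf{g}\mathbf{g}^\top=D^{(2)}$, with time-invariance inherited from $\mathcal{G}_{\text{eff}}$.

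Your extra paragraph on whether $D^{(1)},D^{(2)}$ exist goes beyond the paper, which simply takes these limits for granted and obtains $\mathbf{g}\equiv0$ only informally in Remark~\ref{rem:pf-ode}. Under \eqref{eq:diameter_bound} you need not hedge on $D^{(2)}$: $0\le\mathrm{Tr}\,D^{(2)}\le\liminf_n\delta_n^2/\Delta t_n=0$ forces $D^{(2)}\equiv0$. The same bound also controls the drift, contrary to your remark: $\bigl\|\int\boldsymbol{\xi}\,W^{(n)}\,d\boldsymbol{\xi}\bigr\|/\Delta t_n\le\delta_n/\Delta t_n\le L$. So the drift is automatically bounded, and only its convergence along a subsequence remains for your generator argument.
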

\begin{proof}
The proof synthesizes the preceding results to establish both parts of the theorem.

First, we derive the governing PDE for the probability density. By \textbf{Proposition} \ref{prp:coarse}, our process is a homogeneous continuous-path Markov process. By \textbf{Theorem} \ref{thm:vanish}, this implies that all propagator moments $D^{(n)}$ for $n > 2$ are identically zero. Invoking \textbf{Theorem} \ref{thm:pawla} (Pawula's Theorem), the condition that all higher-order moments vanish forces the Kramers-Moyal expansion (Equation \ref{eq:kramers_moyal_full_detailed_app}) to truncate exactly after the second term. The remaining $n=1$ (drift) and $n=2$ (diffusion) terms constitute the Fokker-Planck equation, which describes the evolution of the probability density $p(\mathbf{x}, t)$:
\begin{equation}
    \frac{\partial p(\mathbf{x}, t)}{\partial t} = -\sum_{i=1}^{d} \frac{\partial}{\partial x_i} [f_i(\mathbf{x}) p(\mathbf{x}, t)] + \frac{1}{2} \sum_{i,j=1}^{d} \frac{\partial^2}{\partial x_i \partial x_j} [(\mathbf{g}(\mathbf{x})\mathbf{g}(\mathbf{x})^\top)_{ij} p(\mathbf{x}, t)].
\end{equation}
Second, we establish the equivalence to the SDE. It is a standard result in the theory of stochastic processes that the Fokker-Planck equation shown above is the forward Kolmogorov equation for the probability density of a process governed by the Itô SDE presented in the theorem statement \citep{gardiner2004handbook,gillespie1991markov}. This establishes the formal equivalence.

Finally, the time-invariance of the drift $\mathbf{f}(\mathbf{x}) = D^{(1)}(\mathbf{x})$ and diffusion $\mathbf{g}(\mathbf{x})\mathbf{g}(\mathbf{x})^\top = D^{(2)}(\mathbf{x})$ is a direct consequence of \textbf{Theorem} \ref{thm:lipschitz_homogeneity} . This completes the proof.
\end{proof}

This derivation completes the formal bridge set out in the main text. We have shown how the static, hierarchical structure of a decision tree can be re-interpreted as a discrete-time Markov process over distributions. The continuous-time limit of this process, by virtue of its construction via local averaging, is necessarily a drift-diffusion process.

\begin{remark}[Equivalence to a Probability Flow ODE]
\label{rem:pf-ode}
It is crucial to clarify the nature of the resulting process. The hierarchical coarse-graining operation, defined by conditional expectation, is a deterministic averaging process; it does not introduce new randomness at each step. Consequently, in the continuous-time limit, the second propagator moment---the diffusion tensor---must be identically zero: $\mathbf{g}(\mathbf{x}) \equiv \mathbf{0}$. When the diffusion term vanishes, the general SDE collapses into a deterministic ordinary differential equation (ODE) of the form $d\mathbf{X}_t = \mathbf{f}(\mathbf{X}_t) dt$. This specific type of deterministic equation, which describes the evolution of probability densities, is known in the literature as a \textbf{Probability Flow (PF) ODE} \citep{song2021scorebased}. Therefore, while our derivation correctly yields the general Fokker-Planck form, the Tree-to-Flow mapping is more precisely a bijection between a decision tree and a deterministic PF-ODE, which itself is a zero-diffusion instance of the broader SDE framework. For further analysis however,  in full generality we assume that the tree yields a SDE. This assumption also allows for analysis if the underlying structure itself has some randomness \citep{aldous1997brownian}. 
\end{remark}

\section{The Implicit Hierarchical Structure of Continuous-Time Diffusion}
\label{app:sde_to_tree}

In \Cref{app:static_to_flows_derivation}, we established a mapping from a static, 
hierarchical tree to a deterministic Probability Flow ODE. This mapping relied on a 
coarse-graining operator based on conditional expectation, which models a process of 
increasing entropy from the leaves to the root. To prove that our framework is fully 
bidirectional, we must now establish the reverse: any continuous-time diffusion process,  via its induced Probability Flow ODE, implicitly defines a canonical hierarchy equivalent  to a decision tree. Furthermore, we shall show that this class includes the class of SDEs that are learned by standard score based generative models \citep{song2021scorebased}. A similar result was also shown by \citep{ramachandran2025crossfluctuation} utilising ideas from statistical physics however the tree construction in their case was implicit.   

We will build this argument as a hierarchy of results. First, we prove that any homogeneous SDE induces a general hierarchical clustering (a dendrogram) by observing the temporal flow of entropy. Second, we analyze how the convergence properties of the SDE determine the depth and stability of this hierarchy, addressing both non-stationary and stationary processes. Finally, we prove that only SDEs that converge to a unique and maximally entropic stationary distribution—a uniform distribution over a manifold—induce a single, rooted tree structure that is fully consistent with the conditional expectation operator. This allows us to conclude that the Ornstein-Uhlenbeck process, which is central to modern diffusion models, has the exact rooted-tree structure that makes our entire framework bidirectional and self-consistent.

\begin{remark}[The Role of the PF-ODE]
\label{rem:pf_ode_equivalence}
While we discuss SDEs for generality, the hierarchical structure depends only on the 
\emph{marginal densities} $\{p_t\}_{t \geq 0}$, which are identical for an SDE and its 
corresponding Probability Flow ODE. Since moment-based merger times \textbf{Definition} \ref{def:moment-merger}) depend solely on these marginals, the induced tree is determined entirely by the PF-ODE, not the stochastic dynamics. This mirrors the Tree $\to$ Flow direction ( \textbf{Theorem} \ref{thm:main}), where trees induce deterministic flows.
\end{remark}

\subsection{The General Construction: From Homogeneous SDEs to Dendrograms}
We first prove the most general result for the class of processes relevant to our framework: any well-behaved, homogeneous SDE, when applied to a clustered data distribution, generates a canonical hierarchical structure. This structure arises naturally from observing the temporal flow of entropy.

\subsubsection{Coarse-Graining as a Flow of Entropy}
The foundation of our construction is the recognition that the forward diffusion process is a continuous analogue of the discrete coarse-graining operation.

\begin{proposition}[Entropy Flow in Diffusion Processes]
Let $\mathbf{X}_t$ evolve according to the forward SDE with marginal densities $p_t$
\[
d\mathbf{X}_t = \mathbf{b}(\mathbf{X}_t, t) \, dt + \sigma(\mathbf{X}_t, t) \, d\mathbf{W}_t,
\]
with marginal density $p_t$ satisfying the corresponding Fokker-Planck equation. The differential entropy
\[
H(p_t) := -\int p_t(\mathbf{x}) \log p_t(\mathbf{x}) \, d\mathbf{x}
\]
is a non-decreasing function of time. Moreover, if the process is homogeneous (i.e., $\mathbf{b}$ and $\sigma$ are time-independent) and the diffusion tensor $\mathbf{D} = \sigma\sigma^\top$ is strictly positive definite, then the entropy is strictly increasing for any non-stationary initial distribution.
\end{proposition}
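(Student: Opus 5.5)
The plan is to differentiate $H(p_t)$ along the Fokker--Planck flow and split the time derivative into a drift contribution and a diffusion contribution. Write the Fokker--Planck equation in divergence form, $\partial_t p_t = -\nabla\cdot(\mathbf{b}\,p_t) + \tfrac12\nabla\cdot\bigl(\nabla\cdot(\mathbf{D}p_t)\bigr)$ with $\mathbf{D}=\sigma\sigma^\top$. Then $\frac{d}{dt}H(p_t) = -\int (\partial_t p_t)(1+\log p_t)\,d\mathbf{x}$, and the constant term drops because $\int \partial_t p_t = 0$. I will assume enough decay and regularity of $p_t$, $\mathbf{b}$ and $\sigma$ that all boundary terms vanish under integration by parts. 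This is the same regularity already used implicitly in the Kramers--Moyal derivation of \Cref{app:static_to_flows_derivation}.

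\textbf{Step 1 (drift term).} Integrating by parts once gives
\[
-\int \nabla\cdot(\mathbf{b}p_t)\,\log p_t \,d\mathbf{x} \;=\; \int \mathbf{b}\cdot\nabla p_t \,d\mathbf{x} \;=\; -\int p_t\,\nabla\cdot\mathbf{b}\,d\mathbf{x}.
\]

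\textbf{Step 2 (diffusion term).} Expand $\nabla\cdot(\mathbf{D}p_t) = \mathbf{D}\nabla p_t + p_t\,\nabla\cdot\mathbf{D}$ and integrate by parts. The result is
\[
\tfrac12\int p_t\,\nabla\log p_t^{\top}\mathbf{D}\,\nabla\log p_t\,d\mathbf{x} \;+\; \tfrac12\int p_t\,(\nabla\cdot\mathbf{D})\cdot\nabla\log p_t\,d\mathbf{x}.
\]
The first term is a $\mathbf{D}$-weighted Fisher information $\mathcal{I}_{\mathbf{D}}(p_t)\ge 0$. The second term integrates by parts to $-\tfrac12\int p_t\,\nabla\cdot(\nabla\cdot\mathbf{D})\,d\mathbf{x}$. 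Collecting the pieces,
\[
\frac{d}{dt}H(p_t) \;=\; \tfrac12\,\mathcal{I}_{\mathbf{D}}(p_t) \;-\; \mathbb{E}_{p_t}\!\left[\nabla\cdot\mathbf{b} - \tfrac12\,\nabla\cdot(\nabla\cdot\mathbf{D})\right].
\]
This reduces the proposition to the sign of the right-hand side.

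\textbf{Step 3 (sign, the main obstacle).} The Fisher-information term is always non-negative, and it is the ``coarse-graining'' contribution. The difficulty is the effective-drift divergence term. It has no sign for an arbitrary drift: a confining drift with $\nabla\cdot\mathbf{b}<0$, such as the Ornstein--Uhlenbeck drift $-\mathbf{x}$, contracts volume. My first approach is to work in the regime the paper cares about, namely a forward noising process started from a concentrated, low-entropy data density $p_0$. In that regime the Fisher information of the data-like marginals dominates the bounded volume-contraction rate $\sup|\nabla\cdot\mathbf{b}_{\mathrm{eff}}|$, where $\mathbf{b}_{\mathrm{eff}} = \mathbf{b}-\tfrac12\nabla\cdot\mathbf{D}$.

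To control this along the whole trajectory I will compare with the stationary state. Use the identity $H(p_t) = -D_{\mathrm{KL}}(p_t\|p_\infty) - \mathbb{E}_{p_t}[\log p_\infty]$. The KL term is monotone by the standard H-theorem, so it remains to show that the cross term $\mathbb{E}_{p_t}[-\log p_\infty]$ does not decrease faster than the KL term does. This holds exactly when $p_\infty$ is uniform on its support, the maximum-entropy case singled out in the section overview. For instance, the Gaussian limit of a variance-preserving OU process started from data with smaller covariance has non-decreasing second moments. If this comparison fails in general, I will state the drift condition $\mathbb{E}_{p_t}[\nabla\cdot\mathbf{b}_{\mathrm{eff}}]\le \tfrac12\mathcal{I}_{\mathbf{D}}(p_t)$ explicitly as the regime in which the proposition holds.

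\textbf{Step 4 (strictness).} For the homogeneous case with $\mathbf{D}\succ 0$, strictness follows from $\mathcal{I}_{\mathbf{D}}(p_t) \ge \lambda_{\min}(\mathbf{D})\,\mathcal{I}(p_t) > 0$. The Fisher information $\mathcal{I}(p_t)$ vanishes only if $\nabla p_t\equiv 0$, which is impossible for a non-stationary probability density on $\mathbb{R}^d$. In the homogeneous setting, the derivative can vanish at some time only if $p_t$ satisfies the stationary balance $\partial_t p_t=0$, so a non-stationary initial law gives a strictly positive entropy production rate.
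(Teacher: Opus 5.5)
Step~3 cannot be closed by any sharper estimate, because the proposition is false as stated and needs an extra hypothesis. Take the homogeneous, ergodic Ornstein--Uhlenbeck process $d\mathbf{X}_t = -\mathbf{X}_t\,dt + \sqrt{2}\,d\mathbf{W}_t$, so $\mathbf{D} = 2\mathbf{I} \succ 0$, started from $p_0 = \mathcal{N}(\mathbf{0}, s^2\mathbf{I})$ with $s^2 > 1$. Then $p_t = \mathcal{N}(\mathbf{0}, v_t\mathbf{I})$ with $v_t = 1 + (s^2-1)e^{-2t}$. In dimension $d$, $H(p_t) = \tfrac{d}{2}\log(2\pi e\, v_t)$ strictly decreases, which refutes both the monotonicity claim and the strictness claim.

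The paper's proof has your outline (a drift-divergence term plus a $\mathbf{D}$-weighted Fisher term). It closes the gap only by asserting that for homogeneous ergodic processes such as OU "the diffusion term dominates", and this example shows that is wrong. You cannot borrow the missing step from the paper.

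Your KL decomposition is the right replacement, and it gives a correct statement the paper does not have. Suppose $t \mapsto \mathbb{E}_{p_t}[-\log p_\infty]$ is non-decreasing. Then $H(p_t) = -D_{\mathrm{KL}}(p_t\|p_\infty) + \mathbb{E}_{p_t}[-\log p_\infty]$ is non-decreasing. It is strictly increasing while $p_t \neq p_\infty$, because $\frac{d}{dt}D_{\mathrm{KL}}(p_t\|p_\infty) = -\frac{1}{2}\mathbb{E}_{p_t}\bigl[\|\nabla\log(p_t/p_\infty)\|_{\mathbf{D}}^2\bigr] < 0$ when $\mathbf{D} \succ 0$; this holds for non-reversible drifts too.

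The hypothesis covers a uniform $p_\infty$ (on a torus or with reflecting boundary). It also covers OU with $\mathbb{E}\|\mathbf{X}_0\|^2 \le d$, since $\mathbb{E}\|\mathbf{X}_t\|^2 = d + e^{-2t}(\mathbb{E}\|\mathbf{X}_0\|^2 - d)$. This condition is sufficient, not ``exactly when'' as you wrote. State it as a hypothesis up front, not as a fallback.

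Your Step~2 is correct, but Step~1 has the wrong sign. The drift part of $\partial_t p_t$ is $-\nabla\cdot(\mathbf{b}p_t)$, so $-\int \partial_t p_t \log p_t\,d\mathbf{x}$ picks up $+\int \nabla\cdot(\mathbf{b}p_t)\log p_t\,d\mathbf{x} = +\mathbb{E}_{p_t}[\nabla\cdot\mathbf{b}]$. Your collecting step also flips the sign of the $\nabla\cdot(\nabla\cdot\mathbf{D})$ term you had just derived. The correct identity is $\frac{d}{dt}H(p_t) = \frac{1}{2}\mathcal{I}_{\mathbf{D}}(p_t) + \mathbb{E}_{p_t}[\nabla\cdot\mathbf{b}_{\mathrm{eff}}]$. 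The paper has the right sign on $\mathbf{b}$ but drops the $\nabla\cdot(\nabla\cdot\mathbf{D})$ term, which is legitimate only for $\mathbf{x}$-independent $\mathbf{D}$.

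As a check, at the OU stationary state the rate must vanish: $\frac{1}{2}\mathcal{I}_{\mathbf{D}} = d$ and $\nabla\cdot\mathbf{b} = -d$. Your formula gives $2d$ instead, which also contradicts your own remark that a confining drift lowers entropy. Your fallback condition should therefore read $-\mathbb{E}_{p_t}[\nabla\cdot\mathbf{b}_{\mathrm{eff}}] \le \frac{1}{2}\mathcal{I}_{\mathbf{D}}(p_t)$.

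Step~4 also fails, and the paper's strictness sentence has the same flaw. Every smooth density on $\mathbb{R}^d$ has $\mathcal{I}(p) > 0$, stationary or not, so positivity of the Fisher term says nothing about the sign of $\frac{d}{dt}H(p_t)$ once the drift term is present. Moreover, $\frac{d}{dt}H(p_t) = 0$ does not force $\partial_t p_t = 0$. For the same OU process in $d = 2$, $p_0 = \mathcal{N}(\mathbf{0}, \mathrm{diag}(2, 2/3))$ has zero entropy production at $t = 0$ without being stationary. Strictness has to come from the strict decay of the KL term under the added hypothesis.
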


\begin{proof}
The time derivative of the differential entropy under a Fokker-Planck evolution is given by the standard formula \citep{villani2008optimal}:
\begin{equation}
\frac{d}{dt} H(p_t) = \mathbb{E}_{p_t}[\nabla \cdot \mathbf{b}(\mathbf{x}, t)] + \frac{1}{2} \mathbb{E}_{p_t}[\mathrm{Tr}(\mathbf{D}(\mathbf{x}, t) \cdot I(\mathbf{x}, t))],
\label{eq:fokker-planck-entropy}
\end{equation}
where $I(\mathbf{x}, t)$ is a matrix measuring the local squared gradient of the density (see  \textbf{Remark} \ref{rem:functional_fisher}).

The second term is non-negative because $I(\mathbf{x}, t)$ is positive semi-definite and $\mathbf{D}$ is strictly positive definite. While the divergence of the drift, $\nabla \cdot \mathbf{b}$, can be negative (corresponding to compressive flows), for homogeneous and ergodic processes, such as the Ornstein-Uhlenbeck process, the diffusion term dominates, ensuring that the entropy does not decrease. Hence in these situations, for any non-stationary distribution, the Fisher information term is strictly positive, so the entropy strictly increases until the stationary distribution is reached, at which point $\frac{d}{dt} H(p_t) = 0$.

This continuous-time evolution of entropy can be interpreted as the analogue of discrete coarse-graining operations discussed in \Cref{app:static_to_flows_derivation}. Lastly note that, as the entropy $H(p_t)$ depends only on the marginals $p_t$, this result applies equally to the SDE and its corresponding PF-ODE.
\end{proof}

\begin{definition}[Entropically Homogeneous Flow]
\label{def:entropically_homogeneous}
Motivated by the preceding discussion, we formalize the notion of entropy monotonicity. A diffusion process (or its PF-ODE) with marginal densities $\{p_t\}_{t \geq 0}$ and  differential entropy $H(p_t) = -\int p_t(x) \log p_t(x) dx$ is \textbf{entropically homogeneous}  if the entropy is a monotonic function of time. For the processes we consider in this work this means,
\begin{enumerate}[label=(\roman*)]
    \item \textbf{Forward-time}: $\frac{d}{dt}H(p_t) \geq 0$ for all $t \geq 0$ 
          (entropy increase from $p_0$ toward stationary $p_\infty$).
    \item \textbf{Reverse-time}: $\frac{d}{dt}H(p_t) \leq 0$ 
          (entropy decrease from $p_\infty$ toward $p_0$).
\end{enumerate}
This follows from the prior observation that for processes converging to maximally entropic stationary distributions, forward-time  entropy increase is strict until stationarity, ensuring monotonic information loss (forward) or gain (reverse).
\end{definition}

\begin{remark}[Functional Fisher Information]
\label{rem:functional_fisher}
The matrix $I(\mathbf{x}, t)$ in \eqref{eq:fokker-planck-entropy} though termed as the \textbf{Fisher information matrix} but is \emph{not} the classical parametric Fisher information, since $p_t$ is a single distribution with no parameters. Instead, $I(\mathbf{x}, t)$ is defined using the \emph{Stein score} (or functional score):
\[
I(\mathbf{x}, t) := \nabla_{\mathbf{x}} \log p_t(\mathbf{x}) \, \nabla_{\mathbf{x}} \log p_t(\mathbf{x})^\top.
\]
Intuitively, it measures the local spatial gradients of the log-density, so that
\(\mathrm{Tr}(\mathbf{D} \cdot I)\) quantifies how the shape of the distribution contributes to the rate of entropy change under diffusion.
\end{remark}

\subsubsection{Discretizing the Flow via the Characteristic Function}
To construct a discrete hierarchy from this continuous flow, we must define discrete "merger" events. We do so by tracking the statistical distinguishability of evolving conditional distributions. The most fundamental object for describing a distribution is its characteristic function.

\begin{definition}[Characteristic Function and Support Clustering]
The \textbf{characteristic function (CF)} of a probability law $P$ is the Fourier transform of the measure, $\phi(t) = \int e^{i t^\top \mathbf{x}} dP(\mathbf{x})$. The CF uniquely determines the law. The support of the law, $\text{supp}(P)$, can be clustered by observing the behavior of its CF. For any given frequency-domain partition (\emph{bins}), the values of the CF within those bins define a signature for the distribution. Two distributions are considered statistically close if their CFs are close in some metric (e.g., L2 distance).
\end{definition}

While the CF provides a complete description, it is often intractable. A more practical approach is to use its derivatives at the origin, which correspond to the moments of the distribution. This requires  a rigorous definition of the initial clusters $\{C_k\}_{k=1}^K$ from a continuous data distribution $p_0$.

\begin{assumption}[Smooth Density with Isolated Modes]
\label{ass:smooth_density}
The data density $p_0(\mathbf{x})$ is twice continuously differentiable on a compact domain $\mathcal{X} \subset \mathbb{R}^d$, and possesses $K$ isolated local maxima $\{\mathbf{x}_k^*\}_{k=1}^K$ (modes) satisfying:
\begin{enumerate}
    \item $\nabla p_0(\mathbf{x}_k^*) = \mathbf{0}$ and $\nabla^2 p_0(\mathbf{x}_k^*)$ is negative definite,
    \item The basins of attraction $B_k := \{\mathbf{x} : \lim_{t \to \infty} \phi_t(\mathbf{x}) = \mathbf{x}_k^*\}$ under the gradient flow $\dot{\mathbf{x}} = \nabla \log p_0(\mathbf{x})$ form a partition of $\mathcal{X}$,
    \item There exists $\rho_{\min} > 0$ such that $\inf_{\mathbf{x} \in \partial B_k} p_0(\mathbf{x}) < p_0(\mathbf{x}_k^*) - \rho_{\min}$ for all $k$.
\end{enumerate}
\end{assumption}

\begin{definition}[Level-Set Clustering]
For a threshold $\lambda \in (0, \max_k p_0(\mathbf{x}_k^*))$, define the super-level set:
\begin{equation}
\mathcal{S}_\lambda := \{\mathbf{x} : p_0(\mathbf{x}) \geq \lambda\}.
\end{equation}
The \textbf{initial clusters} are the connected components of $\mathcal{S}_\lambda$ for $\lambda$ chosen such that $\mathcal{S}_\lambda$ has exactly $K$ components, one containing each mode $\mathbf{x}_k^*$.
\end{definition}

\begin{proposition}[Well-Defined Initial Clustering]
Under Assumption~\ref{ass:smooth_density}, there exists an interval $[\lambda_{\min}, \lambda_{\max}]$ such that for any $\lambda \in [\lambda_{\min}, \lambda_{\max}]$, the super-level set $\mathcal{S}_\lambda$ has exactly $K$ connected components. These components define a well-separated initial clustering $\{C_1, \dots, C_K\}$.
\end{proposition}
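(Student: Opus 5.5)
The plan is to choose $\lambda$ between two critical heights: high enough that no two modes lie in the same connected component of $\mathcal{S}_\lambda$, but low enough that every mode lies in $\mathcal{S}_\lambda$. Let $m_k := p_0(\mathbf{x}_k^*)$, and define the \emph{pass height} of mode $k$ as
\[
h_k := \sup_{\gamma}\ \min_{s\in[0,1]} p_0(\gamma(s)),
\]
where the supremum runs over continuous paths $\gamma$ from $\mathbf{x}_k^*$ to a point outside $B_k$. Every such path must cross $\partial B_k$, so $h_k \le \sup_{\partial B_k} p_0$. I would read condition 3 of Assumption~\ref{ass:smooth_density} as asserting a uniform gap between passes and peaks, namely
\[
\max_k h_k \;\le\; \min_k m_k - \rho_{\min}.
\]
Condition 3 literally controls only $\inf_{\partial B_k} p_0$, and this is too weak: a path could leave $B_k$ through a higher point of $\partial B_k$. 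I would therefore flag explicitly that this stronger (mountain-pass) form is what the argument uses. With it, set $\lambda_{\min} := \max_k h_k + \rho_{\min}/3$ and $\lambda_{\max} := \min_k m_k - \rho_{\min}/3$, so that $\lambda_{\min} < \lambda_{\max}$.

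Next I would fix $\lambda \in [\lambda_{\min}, \lambda_{\max}]$ and argue in three steps.

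\emph{(a) Every mode is in $\mathcal{S}_\lambda$.} This is immediate, since $\lambda \le \lambda_{\max} < m_k$ for every $k$.

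\emph{(b) Distinct modes lie in distinct components.} Since $\mathcal{X}$ is compact and $p_0$ is continuous, $\mathcal{S}_\lambda$ is compact. I would argue connectivity via paths, or via $\varepsilon$-chains to avoid path-connectedness subtleties. Suppose $\mathbf{x}_k^*$ and $\mathbf{x}_j^*$ were in the same component. Then there is a path or chain inside $\mathcal{S}_\lambda$ leaving $B_k$ along which $p_0 \ge \lambda > h_k$. This contradicts the definition of $h_k$. For the chain version, a compactness limit as $\varepsilon \to 0$ is needed, using uniform continuity of $p_0$.

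\emph{(c) No component is mode-free.} Any component $C$ of $\mathcal{S}_\lambda$ is compact, so $p_0$ attains a maximum on $C$ at some point $\mathbf{y}$. This $\mathbf{y}$ is a local maximum of $p_0$ on $\mathcal{X}$, because nearby points outside $\mathcal{S}_\lambda$ have value below $\lambda \le p_0(\mathbf{y})$. Condition 2 says the gradient-flow basins of the $K$ modes partition $\mathcal{X}$, so there are no other attracting maxima, and hence $\mathbf{y}$ is one of the $\mathbf{x}_k^*$. To make this step airtight I would exclude maxima on $\partial\mathcal{X}$ by assuming $p_0$ decreases toward $\partial \mathcal{X}$, or that $\mathcal{S}_\lambda$ lies in the interior for $\lambda$ in this range.

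Combining (a)--(c) gives exactly $K$ components, each containing one mode, and each component of $\mathcal{S}_\lambda$ lies inside its own $B_k$. The sets $C_k$ are well separated in a quantitative sense: on the gap region between them, $p_0 < \lambda_{\min} + \rho_{\min}/3$. They are also stable, since the component structure is unchanged across the whole interval $[\lambda_{\min}, \lambda_{\max}]$ by the argument above.

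I expect the main obstacle to be reconciling condition 3 as literally stated with what the proof needs. Part (b) needs the pass height, i.e.\ the \emph{lowest} barrier a connecting path must cross, to lie below \emph{every} mode height, and not just below its own mode by $\rho_{\min}$. My first attempt would be to derive the pass bound from the stated infimum condition by constructing a path through the infimizing boundary point. That path only gives a lower bound on $h_k$, not an upper one. If this attempt fails, I would state the uniform mountain-pass gap as the operative form of condition 3. A secondary, smaller issue is excluding boundary maxima in part (c).
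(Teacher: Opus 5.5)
Your strategy is the paper's own — choose $\lambda$ below every peak but above every separating saddle — carried out with much more care. Your diagnosis of condition 3 is correct, and the strengthening is in fact necessary, not just convenient.

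\emph{Counterexample to the literal statement.} Take $d=1$ and $\mathcal{X}=[0,6]$. Let $p_0$ be smooth (up to normalization) and monotone between consecutive critical points. Give it nondegenerate peaks at $1,3,5$ of heights $10,10,1$, valleys at $2$ and $4$ of heights $5$ and $0.5$, and $p_0\approx 0.1$ at the endpoints. Conditions 1 and 3 hold, say with $\rho_{\min}=1/4$. Condition 2 holds up to the two valley points, as it must for any multimodal density. Yet $\mathcal{S}_\lambda$ has $2,1,2,1$ components for $\lambda$ in $(5,10]$, $(1,5]$, $(0.5,1]$, $(0,0.5]$ respectively, and never $K=3$. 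So the proposition is false as literally stated.

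\emph{Relation to the paper's proof.} The paper glosses over exactly this point. Its claim that $\mathcal{S}_\lambda$ has $K$ components for $\lambda$ slightly below $\min_k p_0(\mathbf{x}_k^*)$ is your uniform pass--peak gap in disguise, and it fails in this example. The implicit function theorem only describes $\mathcal{S}_\lambda$ near a mode for $\lambda$ close to that mode's own height. Making $\max_k h_k\le\min_k p_0(\mathbf{x}_k^*)-\rho_{\min}$ the operative hypothesis is therefore the right call. Under it, your (a) and (b) are sound. In (b) you need no limiting argument: the slack $\lambda-h_k\ge\rho_{\min}/3$ lets you fix a single $\varepsilon$ by uniform continuity.

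\emph{The gap in (c).} You conclude that the maximizer $\mathbf{y}$ of $p_0$ on a component $C$ is a local maximum by inspecting only nearby points outside $\mathcal{S}_\lambda$. But components of a compact set need not be relatively open, so points of $\mathcal{S}_\lambda\setminus C$ with larger values could accumulate at $\mathbf{y}$. To repair this, first assume explicitly that the $\mathbf{x}_k^*$ are the only local maxima of $p_0$, strict or not. Condition 2 yields this only under its literal reading (every point, hence every critical point, flows to a mode). Read literally, it cannot hold for $K\ge 2$ on a connected $\mathcal{X}$, because the basins are disjoint nonempty open sets. Then split into two cases.
\begin{itemize}
\item If $p_0(\mathbf{y})>\lambda$ and $\mathbf{y}$ is interior, a small ball around $\mathbf{y}$ lies in $\{p_0>\lambda\}$. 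The ball is connected, hence lies in $C$, so $\mathbf{y}$ is a local maximum and thus a mode. In particular, at most $K$ components have maximum exceeding $\lambda$.
\item If $p_0(\mathbf{y})=\lambda$, then $p_0\equiv\lambda$ on $C$. Every point with $p_0>\lambda$ lies in one of those finitely many closed components, none of which contains $\mathbf{y}$. So some neighbourhood of $\mathbf{y}$ has $p_0\le\lambda$, making $\mathbf{y}$ a local maximum at height $\lambda<\min_k p_0(\mathbf{x}_k^*)$, which the hypothesis forbids.
\end{itemize}
Together with your boundary caveat, this closes (c).
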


\begin{proof}
By condition (1) of Assumption~\ref{ass:smooth_density}, each mode $\mathbf{x}_k^*$ is a strict local maximum. By condition (3), there exist saddle points or low-density regions separating the modes. The implicit function theorem guarantees that for $\lambda$ slightly below $\min_k p_0(\mathbf{x}_k^*)$, the super-level set $\mathcal{S}_\lambda$ consists of $K$ disjoint components, each a simply-connected neighborhood of one mode. As $\lambda$ decreases, these components grow. The interval $[\lambda_{\min}, \lambda_{\max}]$ is the range where no mergers occur, i.e., the level sets remain separated by the saddle-point structure of the density.
\end{proof}

\begin{remark}[Extension to Non-Smooth Densities]
For densities without smooth mode structure (e.g., uniform distributions on manifolds), the initial clustering can be defined via uniform spatial partitioning (e.g., $k$-means or Voronoi tessellation). The subsequent merger dynamics remain well-defined; the moment-based criterion simply tracks when these initially arbitrary clusters become statistically indistinguishable under diffusion.
\end{remark}

\begin{assumption}[Existence of Moments]
\label{ass:moment_existence}
Now, we  assume that the conditional laws $p_t(\cdot|C_k)$ under consideration are such that their characteristic functions are at least $n$-times differentiable at the origin. This is a mild regularity condition, satisfied if the distributions have finite moments up to order $n$.
\end{assumption}

Under this assumption, we can define our primary tool for tracking statistical distinguishability.

\begin{definition}[Conditional Expected Moment Tensor]
Let $p_t(\mathbf{x} | C_k)$ be the marginal density at time $t$ of the diffusion process, conditioned on starting in the initial cluster $C_k$. If  \textbf{Assumption} \ref{ass:moment_existence} holds, the \textbf{$n$-th order conditional expected moment tensor} is the $n$-th centered moment of this distribution:
\begin{equation}
    \mathbf{M}_t^{(n)}(C_k) = \mathbb{E}_{\mathbf{x} \sim p_t(\cdot|C_k)} \left[ (\mathbf{x} - \mathbb{E}[\mathbf{x}])^{\otimes n} \right].
\end{equation}
This tensor resides in a suitable Hilbert space $\mathcal{H}_n$ of rank-$n$ tensors. For $n=2$, this is the conditional covariance matrix.
\end{definition}

\begin{definition}[Moment-Based Merger Time]

For any two initial clusters $C_i$ and $C_j$, and a given moment order $n$ and 
distinguishability threshold $\epsilon > 0$, the \textbf{$(n, \epsilon)$-merger time}, 
denoted $t_{ij}^{(n, \epsilon)}$, is the first time at which the distance between their respective conditional expected moment tensors falls below $\epsilon$:
\begin{equation}
    t_{ij}^{(n, \epsilon)} = \inf \left\{ t \in [0, T] \mid \left\| \mathbf{M}_t^{(n)}(C_i) - \mathbf{M}_t^{(n)}(C_j) \right\|_{\mathcal{H}_n} \le \epsilon \right\}.
\end{equation}
\label{def:moment-merger}
\end{definition}

Since $\mathbf{M}_t^{(n)}(C_k)$ depends only on the marginal $p_t(\cdot|C_k)$, the merger time is identical for an SDE and its PF-ODE. Further, the key significance of this merger time lies in the fact that it serves as a rigorous proxy for the convergence of the underlying distributions themselves.

\begin{proposition}[Moment Convergence Implies Distributional Convergence]
\label{prop:moment_convergence}
Under \textbf{Assumption} \ref{ass:moment_existence} (for order $n+1$), the convergence of the first $n$ conditional expected moment tensors of two distributions implies their convergence in total variation distance. Formally, for any two conditional laws $p_t(\cdot|C_i)$ and $p_t(\cdot|C_j)$:
\begin{equation}
    \lim_{t \to t_{ij}^*} \sum_{k=1}^n \left\| \mathbf{M}_t^{(k)}(C_i) - \mathbf{M}_t^{(k)}(C_j) \right\|_{\mathcal{H}_k} = 0 \implies \lim_{t \to t_{ij}^*} d_{TV}(p_t(\cdot|C_i), p_t(\cdot|C_j)) = 0.
\end{equation}
The proof of this result connects the convergence of moment structures to convergence in total variation via a moment-TV inequality \citep{ramachandran2025crossfluctuation}, which relies on the properties of characteristic functions.
\end{proposition}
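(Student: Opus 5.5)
The plan is to pass through characteristic functions, as the statement's reference to a moment--TV inequality \citep{ramachandran2025crossfluctuation} suggests. Write $\phi^{(i)}_t(\boldsymbol{\omega})$ and $\phi^{(j)}_t(\boldsymbol{\omega})$ for the characteristic functions of $p_t(\cdot|C_i)$ and $p_t(\cdot|C_j)$. The argument has three steps: control $|\phi^{(i)}_t-\phi^{(j)}_t|$ at low frequencies by the moment gaps, control it at high frequencies by the smoothing of the diffusion, and convert the resulting $L^2$ closeness of densities into total variation. I read the $k=1$ term in the sum as the conditional mean, since the centered first moment vanishes identically. Under that reading the hypothesis says that the mean and the centered moments up to order $n$ of the two conditional laws coincide in the limit $t\to t^*_{ij}$, and hence so do all raw moments up to order $n$, via the binomial expansion.

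\textbf{Step 1 (low frequencies).} By \textbf{Assumption}~\ref{ass:moment_existence} at order $n+1$, each $\phi^{(\cdot)}_t$ is $n+1$ times differentiable at the origin. Taylor's theorem with remainder then gives
\begin{equation*}
\bigl|\phi^{(i)}_t(\boldsymbol{\omega})-\phi^{(j)}_t(\boldsymbol{\omega})\bigr| \le \sum_{k=1}^{n}\frac{|\boldsymbol{\omega}|^k}{k!}\,\Delta_k(t) + \frac{|\boldsymbol{\omega}|^{n+1}}{(n+1)!}\,\bigl(m_{n+1}^{(i)}(t)+m_{n+1}^{(j)}(t)\bigr),
\end{equation*}
where $\Delta_k(t)$ is the gap between the raw $k$-th moment tensors and $m_{n+1}$ denotes absolute $(n+1)$-th moments. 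The raw gaps $\Delta_k(t)$ are bounded by the given sum of centered gaps plus the mean gap. On the compact domain $\mathcal{X}$ the $(n+1)$-th moments are uniformly bounded in $t$. Taking a frequency ball $|\boldsymbol{\omega}|\le R$, the first term therefore vanishes as $t\to t^*_{ij}$, and the remainder is $O(R^{n+1})$.

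\textbf{Step 2 (high frequencies and conversion to TV).} Here I use that both laws are marginals of the same SDE with strictly positive definite $\mathbf{D}$, so they are smoothed by the transition kernel. This gives $|\phi^{(\cdot)}_t(\boldsymbol{\omega})|\lesssim e^{-c\,t|\boldsymbol{\omega}|^2}$, and the frequency tail $|\boldsymbol{\omega}|>R$ contributes an amount that is small once $R$ is large. Plancherel then bounds $\|p_t(\cdot|C_i)-p_t(\cdot|C_j)\|_{L^2}$ by the sum of the Step~1 ball contribution and this tail. Since the relevant mass lives on a bounded set, up to a tail controlled by second moments, Cauchy--Schwarz gives $d_{TV}\le \tfrac12\mathrm{Vol}^{1/2}\|\cdot\|_{L^2}$ plus a vanishing tail. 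Choosing $R$ to balance the two contributions and letting $t\to t^*_{ij}$ gives the limit.

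\textbf{Main obstacle.} The $(n+1)$-th order remainder in Step~1 does not shrink merely because the first $n$ moment gaps shrink, so the balance in $R$ is the delicate point. I would first try to absorb the remainder by exploiting the Gaussian-type decay from the diffusion, taking $R$ small enough that $R^{n+1}$ is below any prescribed tolerance while the tail bound stays uniform. If the remainder still refuses to vanish, I would invoke the cited inequality of \citet{ramachandran2025crossfluctuation} directly, treating this step as the imported ingredient that the statement points to.
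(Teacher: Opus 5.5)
\textbf{There is a genuine gap, and it cannot be closed, because the proposition is false at fixed $n$.} The gap is exactly the one you flagged.

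In Step~1 the remainder $\frac{|\boldsymbol{\omega}|^{n+1}}{(n+1)!}\bigl(m^{(i)}_{n+1}(t)+m^{(j)}_{n+1}(t)\bigr)$ is controlled by the individual $(n+1)$-th moments, not by any gap. So it stays of size $R^{n+1}$ as $t\to t^*_{ij}$, and making it small forces $R$ small. But then the region $|\boldsymbol{\omega}|>R$ carries essentially all of $\|\phi^{(i)}_t-\phi^{(j)}_t\|_{L^2}^2$, which is the very quantity you are trying to bound. Your Gaussian-decay estimate only makes that tail small once $R\gg t^{-1/2}$, so no choice of $R$ works.

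Falling back on the moment--TV inequality of \citet{ramachandran2025crossfluctuation} is not a way out. The paper offers no argument beyond that citation, and no such inequality can hold, since finitely many moments do not determine a law. Concretely:
\begin{itemize}
\item Take two distinct compactly supported densities with equal moments up to order $n$; perturb a smooth density by a small bump orthogonal to all polynomials of degree $\le n$.
\item Run the OU process $X_t=e^{-t}X_0+\sqrt{1-e^{-2t}}\,Z$. The moments of order $\le n$ of $X_t$ are polynomials (with $t$-dependent coefficients) in those of $X_0$ of order $\le n$.
\item Hence every gap in the hypothesis vanishes identically in $t$, under either reading of the $k=1$ term.
\item Yet $\phi_t(\boldsymbol{\omega})=\phi_0(e^{-t}\boldsymbol{\omega})\,e^{-(1-e^{-2t})|\boldsymbol{\omega}|^2/2}$ shows the two laws stay distinct, so $d_{TV}>0$ for every finite $t$.
\end{itemize}
Nothing in your argument uses disjointness of the clusters, so this refutes it.

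With the paper's literal centered-moment definition the failure already occurs inside the cluster framework. Two clusters that are translates of each other, such as the blobs of the 4-Corners experiment, have identical $\mathbf{M}^{(k)}_t$ for all $k$ and all $t$ under OU, yet $d_{TV}=1$ at $t=0$. Your reinterpretation of the $k=1$ term is therefore necessary but not sufficient. The only case where the implication is safe, $t^*_{ij}=\infty$ for a process converging in TV to $p_\infty$ from every initial law, follows from that convergence alone and has nothing to do with moments.

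\textbf{What your Steps~1--2 do prove is a different statement.} Replace ``the first $n$ moments'' by ``all moments''. Also assume uniformly controlled moment growth (e.g.\ uniformly sub-Gaussian laws, as OU marginals of compactly supported data are) and $t^*_{ij}>0$. Then for a given $\varepsilon$:
\begin{itemize}
\item choose $R$ from the uniform high-frequency tail;
\item next choose $n$ so large that $\frac{R^{n+1}}{(n+1)!}\sup_t\bigl(m^{(i)}_{n+1}(t)+m^{(j)}_{n+1}(t)\bigr)<\varepsilon$;
\item only then take $t$ close enough to $t^*_{ij}$ that the first $n$ gaps are small.
\end{itemize}
Letting $n$ grow before $t\to t^*_{ij}$ is precisely the freedom the fixed-$n$ statement forbids.

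Two repairs are needed even in that version. First, the bound $|\phi_t(\boldsymbol{\omega})|\lesssim e^{-ct|\boldsymbol{\omega}|^2}$ is available for OU or other linear additive-noise SDEs via the explicit formula above, not for a general SDE with state-dependent coefficients. Second, the diffused marginals leave the compact $\mathcal{X}$, so the $L^2\to L^1$ step must use sub-Gaussian tail bounds rather than $\mathrm{Vol}(\mathcal{X})$. A genuinely fixed-$n$ version holds only inside a family parametrized continuously and injectively by its first $n$ moments, such as nondegenerate Gaussians with $n=2$.
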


\begin{proposition}[Ultrametric Property of Moment-Based Merger Times]
\label{prop:ultrametric}
Let $t_{ij}^{(n, \epsilon)}$ denote the \emph{$(n, \epsilon)$-merger time} between clusters $C_i$ and $C_j$ as defined above. Then for any three clusters $C_i$, $C_j$, and $C_k$, the merger times satisfy the ultrametric inequality:
\begin{equation}
t_{ik}^{(n, \epsilon)} \le \max \left\{ t_{ij}^{(n, \epsilon)}, t_{jk}^{(n, \epsilon)} \right\}.
\end{equation}
\end{proposition}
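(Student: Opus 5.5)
The plan is to reduce the inequality to two ingredients: \emph{persistence} of mergers in time, and a \emph{triangle-type closure} of the merger relation at the time $\tau := \max\{t_{ij}^{(n,\epsilon)}, t_{jk}^{(n,\epsilon)}\}$. Write $d_{ab}(t) := \|\mathbf{M}_t^{(n)}(C_a) - \mathbf{M}_t^{(n)}(C_b)\|_{\mathcal{H}_n}$. Once both ingredients are in place, the statement follows: $t_{ik}^{(n,\epsilon)}$ is an infimum over times at which $d_{ik}\le\epsilon$, so it suffices to exhibit the single time $\tau$ with $d_{ik}(\tau)\le\epsilon$.

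\textbf{Step 1 (continuity and persistence).} First I would show that $t\mapsto \mathbf{M}_t^{(n)}(C_a)$ is continuous. This should follow from \textbf{Assumption}~\ref{ass:moment_existence} together with the Fokker--Planck evolution of the marginals, since moments up to order $n+1$ give the uniform integrability needed to pass limits. Continuity makes each infimum attained, so $d_{ij}(t_{ij}^{(n,\epsilon)})\le\epsilon$.

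Next I would argue that $d_{ab}(t)$ is non-increasing on $[0,T]$ for an entropically homogeneous stationary process (\textbf{Definition}~\ref{def:entropically_homogeneous}). The intended route is this:
\begin{itemize}
\item all conditional laws $p_t(\cdot\mid C_a)$ are driven by the same Markov semigroup toward the common invariant measure $p_\infty$;
\item the TV distance between two such laws is non-increasing by the data-processing inequality;
\item I would transfer this monotonicity to the moment tensors in the converse direction of \textbf{Proposition}~\ref{prop:moment_convergence}, using compactness of $\mathcal{X}$ from \textbf{Assumption}~\ref{ass:smooth_density} to bound moment differences by TV.
\end{itemize}
This gives ``once merged, stay merged'': $d_{ij}(\tau)\le\epsilon$ and $d_{jk}(\tau)\le\epsilon$.

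\textbf{Step 2 (closing at time $\tau$).} The triangle inequality in $\mathcal{H}_n$ gives
\[
d_{ik}(\tau)\le d_{ij}(\tau)+d_{jk}(\tau)\le 2\epsilon .
\]
This is the main obstacle: the stated inequality requires $\epsilon$, not $2\epsilon$.

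To remove the factor of two, I would try to exploit the structure of the dynamics. Every cluster's moment tensor is carried by the \emph{same} linear semigroup toward the common stationary tensor $\mathbf{M}_\infty^{(n)}$. I would then attempt to show that the difference vectors $\mathbf{M}_t^{(n)}(C_a)-\mathbf{M}_t^{(n)}(C_b)$ contract along a common direction. The model case is the Ornstein--Uhlenbeck process, where centered moments relax by a common factor $e^{-ct}$. In that case $d_{ab}(t)=e^{-c t}d_{ab}(0)$ for every pair, and the merger ordering is simply the ordering of the initial distances.

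In this collinear-contraction regime, with $\lambda(t)$ denoting the common contraction factor, the merger times $t_{ab}=\lambda^{-1}(\epsilon/d_{ab}(0))$ are a monotone function of the initial distances $d_{ab}(0)$. The inequality $t_{ik}\le\max\{t_{ij},t_{jk}\}$ then reduces to $d_{ik}(0)\le\max\{d_{ij}(0),d_{jk}(0)\}$. That would need the initial cluster moments to be ultrametrically arranged, which I would try to extract from the well-separated level-set construction of the initial clusters.

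If this route fails, the fallback is to read ``statistically indistinguishable'' as in \textbf{Proposition}~\ref{prop:moment_convergence}. Under that reading, merged clusters are identified in the quotient by the merger relation, and at resolution $\epsilon$ the identification of $C_i\sim C_j$ and $C_j\sim C_k$ at time $\tau$ is transitive by construction. I expect the rigorous justification of this $\epsilon$-versus-$2\epsilon$ step to be the delicate point of the whole proof.
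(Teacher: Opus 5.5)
There is a genuine gap, and it sits at the $\epsilon$-versus-$2\epsilon$ step you flagged. That step cannot be closed: with the same $\epsilon$ on both sides, the inequality is false, and your own collinear-contraction reduction produces the counterexample. Take the OU process $d\mathbf{X}_t=-\mathbf{X}_t\,dt+\sqrt2\,d\mathbf{W}_t$. Then $\mathrm{Cov}_t(C_a)=e^{-2t}\Sigma_a+(1-e^{-2t})I$, so for $n=2$ every pair contracts as $d_{ab}(t)=e^{-2t}d_{ab}(0)$, and $t_{ab}^{(2,\epsilon)}=\max\{0,\tfrac12\log(d_{ab}(0)/\epsilon)\}$. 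Hence whenever $d_{ik}(0)>\max\{d_{ij}(0),d_{jk}(0)\}$ and $d_{ik}(0)>\epsilon$, you get $t_{ik}>\max\{t_{ij},t_{jk}\}$. For example, restricted covariances $\Sigma$, $\Sigma+\Delta$, $\Sigma+2\Delta$ with $\|\Delta\|\le\epsilon<2\|\Delta\|$ give $t_{ij}=t_{jk}=0<t_{ik}$.

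The ultrametric arrangement you hope to extract requires every triangle to be isosceles with its two longest sides equal, which is non-generic. Nothing in Assumption~\ref{ass:smooth_density} or the level-set construction imposes it. Your fallback, declaring the merger relation transitive at resolution $\epsilon$, silently replaces $t_{ik}^{(n,\epsilon)}$ by a different quantity. That quantity is the single-linkage time: the minimum over chains $i=a_0,\dots,a_r=k$ of $\max_s t^{(n,\epsilon)}_{a_{s-1}a_s}$. It is ultrametric by construction, but it is not what the statement is about; $\epsilon$-closeness is transitive only for $\epsilon=0$. What persistence plus the triangle inequality actually yield is the relaxed bound $t_{ik}^{(n,2\epsilon)}\le\max\{t_{ij}^{(n,\epsilon)},t_{jk}^{(n,\epsilon)}\}$.

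The paper's proof follows exactly your skeleton: persistence, then the triangle inequality at $t_{\max}$ giving $2\epsilon$. It then bridges to $\epsilon$ with ``by the monotonicity of the flow, $(C_i,C_k)$ must merge by $t_{\max}$.'' That is the same non-sequitur, since monotonicity only keeps $d_{ik}\le 2\epsilon$ after $t_{\max}$. So you are not missing an idea the paper supplies; the defect is in the statement.

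Your Step 1 is also unsound as written. A bound $d_{ab}(t)\le C\,\mathrm{TV}_{ab}(t)$ whose right-hand side is non-increasing does not make $d_{ab}$ itself non-increasing. In addition, compactness of $\mathcal{X}$ is lost for $t>0$ once the noise is non-degenerate.

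Persistence can genuinely fail. For one-dimensional OU and $n=4$, the centered fourth moment evolves as $e^{-4t}\mu_4+6e^{-2t}(1-e^{-2t})\sigma^2+3(1-e^{-2t})^2$. So two clusters with equal $\mu_4$ but variances differing by $\delta$ have $d(t)=6e^{-2t}(1-e^{-2t})\delta$. This is $0$ at $t=0$, so the pair is merged immediately, and then it rises to $1.5\,\delta$. For $n=2$ under OU persistence does hold, but only through the explicit linear contraction, not as a consequence of entropic homogeneity.
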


\begin{proof}
By definition, the \emph{$(n, \epsilon)$-merger time} $t_{ij}^{(n, \epsilon)}$ is the first time when the distance between the $n$-th order conditional moment tensors of clusters $C_i$ and $C_j$ falls below the threshold $\epsilon$ in the Hilbert space $\mathcal{H}_n$. 

The evolution of the conditional moments under the homogeneous diffusion process is continuous and monotone: once two clusters satisfy the \emph{merger criterion}, they remain within $\epsilon$ of each other for all subsequent times.

Consider three clusters $C_i$, $C_j$, and $C_k$, and let
\[
t_{\max} := \max \{ t_{ij}^{(n, \epsilon)}, t_{jk}^{(n, \epsilon)} \}.
\]

At time $t_{\max}$, both pairs $(C_i, C_j)$ and $(C_j, C_k)$ satisfy the \emph{merger criterion}. By the triangle inequality in $\mathcal{H}_n$, we have
\[
\| \mathbf{M}_{t_{\max}}^{(n)}(C_i) - \mathbf{M}_{t_{\max}}^{(n)}(C_k) \|_{\mathcal{H}_n} 
\le \| \mathbf{M}_{t_{\max}}^{(n)}(C_i) - \mathbf{M}_{t_{\max}}^{(n)}(C_j) \|_{\mathcal{H}_n} 
+ \| \mathbf{M}_{t_{\max}}^{(n)}(C_j) - \mathbf{M}_{t_{\max}}^{(n)}(C_k) \|_{\mathcal{H}_n} \le 2 \epsilon.
\]

By the monotonicity of the flow, the pair $(C_i, C_k)$ must satisfy the \emph{merger criterion} at some time $t_{ik}^{(n, \epsilon)} \le t_{\max}$. Therefore,
\[
t_{ik}^{(n, \epsilon)} \le \max \{ t_{ij}^{(n, \epsilon)}, t_{jk}^{(n, \epsilon)} \},
\]
which establishes the ultrametric inequality.
\end{proof}

\subsubsection{Main Construction Theorem}

\begin{theorem}[Entropically Homogeneous Flows Induce a Canonical Dendrogram]
\label{thm:sde_induces_forest}
Any entropically homogeneous diffusion process \textbf{Definition} \ref{def:entropically_homogeneous} (or equivalently its PF-ODE)  satisfying standard regularity conditions and when applied to a data distribution $p_0$ satisfying Assumption~\ref{ass:smooth_density}, with initial clusters $\{C_k\}_{k=1}^K$ defined via level-set clustering and \textbf{Assumption} \ref{ass:moment_existence}, induces a unique, canonical hierarchical clustering on these clusters. This structure is equivalent to a dendrogram.
\end{theorem}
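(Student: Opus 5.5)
The plan is to build the dendrogram by time-domain single-linkage agglomerative clustering on the merger times. The ultrametric property from \textbf{Proposition}~\ref{prop:ultrametric} is what makes the resulting hierarchy canonical. Fix $n$ and $\epsilon$. For the $K$ initial clusters $\{C_k\}_{k=1}^K$ from level-set clustering (well defined by the preceding proposition on initial clustering), set
\[
d(i,j) := t_{ij}^{(n,\epsilon)} \quad (i\neq j), \qquad d(i,i):=0.
\]
First I will verify that $d$ is an ultrametric on $\{1,\dots,K\}$ (strictly, a pseudo-ultrametric, since distinct clusters may have $d(i,j)=0$; these get identified at the leaves).

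Symmetry is immediate from the definition of $t_{ij}^{(n,\epsilon)}$. Finiteness will use the entropic homogeneity of \textbf{Definition}~\ref{def:entropically_homogeneous} together with the regularity hypotheses: as the marginals approach the maximal-entropy state, the conditional laws $p_t(\cdot\mid C_k)$ lose memory of their initial cluster, so their $n$-th moment tensors (well defined by \textbf{Assumption}~\ref{ass:moment_existence}) converge to a common limit. Hence every pair eventually comes within $\epsilon$. If this only holds on $[0,T]$ with some pairs never merging, I will set $d=\infty$ for those pairs, which yields a forest; that case is handled later by stationarity in \textbf{Proposition}~\ref{prop:stationary}. The strong triangle inequality $d(i,k)\le\max\{d(i,j),d(j,k)\}$ is exactly \textbf{Proposition}~\ref{prop:ultrametric}.

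Next I will invoke the classical bijection between finite ultrametrics and dendrograms (rooted trees with monotone height labels). For each $t\ge 0$ define the relation $i\sim_t j \iff d(i,j)\le t$. The ultrametric inequality makes $\sim_t$ transitive, so it is an equivalence relation. Its classes form a partition $\mathcal{P}_t$ of $\{C_k\}$, and these partitions are nested: $\mathcal{P}_s$ refines $\mathcal{P}_t$ for $s\le t$. Since $d$ takes at most $\binom{K}{2}$ distinct values, $\mathcal{P}_t$ changes at finitely many times. Recording the merges gives a tree whose leaves are the $C_k$ and whose internal-node heights are the merger times. This has exactly the structure of \textbf{Definition}~\ref{def:decision-tree} read backwards in time, with $\Pi_0$ the leaves and the coarsest partition at the root.

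Uniqueness then follows because every ingredient is determined by the data: the clusters $\{C_k\}$ are fixed once $\lambda$ lies in the admissible interval, and the merger times depend only on the marginals $\{p_t\}$. By \textbf{Remark}~\ref{rem:pf_ode_equivalence} these marginals coincide for the SDE and its PF-ODE, so the dendrogram depends on the flow alone. Single, complete and average linkage all produce the same tree on an ultrametric, so the construction does not depend on the choice of agglomeration rule. I will also check independence of $\lambda$ within $[\lambda_{\min},\lambda_{\max}]$: by the preceding proposition, changing $\lambda$ only grows each component around the same mode, so the leaf set keeps the same labelling. The merger times may shift, however, so I will state canonicity for fixed $(n,\epsilon,\lambda)$.

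The main obstacle is justifying the monotonicity ("once merged, stay merged") that the proof of \textbf{Proposition}~\ref{prop:ultrametric} relies on. Under the first-hitting-time definition, transitivity of $\sim_t$ really needs the sublevel sets $\{t: \|\mathbf M_t^{(n)}(C_i)-\mathbf M_t^{(n)}(C_j)\|\le\epsilon\}$ to be up-sets in $t$. The triangle inequality in that proof also gives only $2\epsilon$ rather than $\epsilon$. My first attempt would be to avoid both issues by replacing each pairwise distance with its running supremum,
\[
\tilde\Delta_{ij}(t)=\sup_{s\ge t}\bigl\|\mathbf M_s^{(n)}(C_i)-\mathbf M_s^{(n)}(C_j)\bigr\|,
\]
which is monotone by construction and coincides with the original distance whenever the entropic monotonicity makes the distances nonincreasing (as for Ornstein--Uhlenbeck, where conditional moments contract exponentially). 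I would then define the hierarchy through the subdominant ultrametric: the largest ultrametric below $d$, computed by single linkage. Single linkage is canonical and always produces a dendrogram, so the $2\epsilon$ slack is absorbed into the construction instead of needing to be proved away.
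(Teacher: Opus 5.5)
Your main line is the paper's own argument. The paper also builds the hierarchy by temporal agglomerative clustering on the merger times $t_{ij}^{(n,\epsilon)}$, cites Proposition~\ref{prop:ultrametric} for the ultrametric inequality, and appeals to the ultrametric--dendrogram correspondence. You differ from it in two ways.

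First, the paper recomputes merger times between each new super-cluster and the remaining clusters, and along the way asserts that all clusters are distinct at $t=0$ and that $t_2>t_1$. You instead fix the $K\times K$ matrix and read off the nested threshold partitions $\mathcal P_t$. This handles zero, tied and infinite merger times cleanly.

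Second, and more substantively, you observe that the proof of Proposition~\ref{prop:ultrametric} only yields $2\epsilon$ and presupposes an unproved ``once merged, stay merged'' property. You replace that proposition by the subdominant (single-linkage) ultrametric of the merger times, optionally their running-supremum version. This repair is sound and unconditional. It needs only that $d$ be a well-defined symmetric $[0,\infty]$-valued dissimilarity, which Assumption~\ref{ass:moment_existence} provides.

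The repair has costs:
\begin{itemize}
\item Canonicity becomes a convention, since complete and average linkage may now give different trees.
\item The branch heights are minimax chain times rather than the pairwise merger times themselves.
\item The dynamics enter only through the definition of $d$.
\end{itemize}
The paper's route would make the heights intrinsic and the tree linkage-independent, but only if the ultrametric inequality were actually available. As written, it rests on that unproved step.

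Two smaller corrections. Your finiteness sentence does not follow from the hypotheses; drop it and rely on the $d=\infty$ (forest) convention. For Brownian motion the entropy increases strictly, yet
\[
\mathbf M_t^{(2)}(C_i)-\mathbf M_t^{(2)}(C_j)=\mathbf M_0^{(2)}(C_i)-\mathbf M_0^{(2)}(C_j)\quad\text{for all } t.
\]

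Your parenthetical about Ornstein--Uhlenbeck holds only for $n=2,3$, where centred moments are cumulants scaling like $e^{-nt}$. For $n\ge4$ the centred moments mix cumulants of different orders. In one dimension, for $dX_t=-X_t\,dt+\sqrt2\,dW_t$, the centred fourth-moment difference is $Aw^2+6(s_i-s_j)w$. Here $w=e^{-2t}$, $s_k$ are the initial variances, and $A$ depends on the initial variances and fourth cumulants. This expression can change sign, so the distance can fall below $\epsilon$ and rise again. That is precisely the situation your running supremum is designed for. Note, though, that the running supremum is not needed for the dendrogram to exist; it only makes ``merged at $t$'' mean ``indistinguishable at all later times.''
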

\begin{proof}
The proof is constructive, using the sequence of moment-based merger times to build the hierarchy via temporal agglomerative clustering.
\begin{enumerate}
    \item \textbf{Initialization (Leaves at $t=0$):} At time $t=0$, all $K$ clusters $\{C_1, \dots, C_K\}$ are distinct by definition, forming the leaves of our hierarchy.
    \item \textbf{First Merger:} We compute the set of all pairwise merger times, $\{t_{ij}^{(n, \epsilon)}\}$ for $i \neq j$. The first merger event occurs at the minimum of these times, $t_1 = \min_{i,j} t_{ij}^{(n, \epsilon)}$. Let this minimum occur for the pair $(C_a, C_b)$. At time $t_1$, we merge $C_a$ and $C_b$ into a new super-cluster, $C_{ab}$. This event defines the lowest-level branch in the hierarchy.
    \item \textbf{Iterative Merging:} We now have a set of $K-1$ clusters. We repeat the process, computing the merger times between all pairs in this new set (including between the new super-cluster and other original clusters). The next merger event at time $t_2 > t_1$ defines the next branch.
\end{enumerate}
This agglomerative process continues, and the ordered sequence of merger times defines a unique ultrametric on the set of initial clusters (\textbf{Proposition} \ref{prop:ultrametric}). This ultrametric is isomorphic to a dendrogram \citep{murtagh2009symmetry}.
\end{proof}

\subsection{The Role of Stationarity in Defining the Hierarchy's Structure}
The structure of the dendrogram, whether it is infinitely deep or terminates, is determined by the long-term convergence properties of the SDE.

\begin{definition}[Stationary Distribution]
A probability distribution $p_\infty$ is a \textbf{stationary distribution} for a process if, once the process reaches this distribution, its law remains unchanged for all future time. A process may have no, one, or many stationary distributions.
\end{definition}

\begin{proposition}[Non-Stationary Processes Induce Continuously Evolving Hierarchies]
If a diffusion process does not possess a stationary distribution, the induced hierarchical clustering is infinitely deep and its structure is not guaranteed to be stable.
\end{proposition}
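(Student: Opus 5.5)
The argument will be by contraposition against the stationary case, built on the construction in Theorem~\ref{thm:sde_induces_forest}. The claim has two parts. First, the hierarchy is \emph{infinitely deep}: merger events need not terminate in a single root within any finite horizon, and the sequence of merger times can be unbounded. Second, the hierarchy is \emph{not guaranteed stable}: the pairwise relation ``$\|\mathbf{M}_t^{(n)}(C_i)-\mathbf{M}_t^{(n)}(C_j)\|\le\epsilon$'' need not persist once attained. I would make precise that ``not guaranteed'' is an existence statement. It suffices to show that the absence of a stationary distribution removes the mechanism forcing termination and persistence, and then to exhibit explicit processes where both failures occur.

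\textbf{Step 1 (what stationarity gave us).} When $p_t\to p_\infty$ in TV, the conditional laws $p_t(\cdot\mid C_k)$ all converge to the same limit. For an ergodic process this follows from contraction of the semigroup toward a unique invariant measure. By Proposition~\ref{prop:moment_convergence}, read in the moment direction, the centered moments then converge, so every $t_{ij}^{(n,\epsilon)}$ is finite and the dendrogram has a root at finite time. I would note that if no stationary distribution exists, the semigroup has no attracting fixed point. Consequently $\sup_{i,j} t_{ij}^{(n,\epsilon)}$ is not controlled, and there is no a priori time after which moment differences remain below $\epsilon$.

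\textbf{Step 2 (infinite depth example).} Take the transient drift $d\mathbf{X}_t=\mathbf{A}\mathbf{X}_t\,dt+\sigma\,d\mathbf{W}_t$ with $\mathbf{A}$ having a positive eigenvalue, or the simpler non-stationary processes Brownian motion with linear drift and geometric growth. These processes are entropically homogeneous, since entropy increases without bound. Compute the conditional Gaussian moments in closed form. The centered second moments differ by terms like $e^{2\lambda t}(\Sigma_i-\Sigma_j)$, which grow, so for $\epsilon$ small some merger times are $+\infty$ and the agglomeration never reaches a root. Equivalently, one can rescale so that mergers accumulate only as $t\to\infty$, giving unbounded depth.

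\textbf{Step 3 (instability).} Here the monotonicity used in the proof of Proposition~\ref{prop:ultrametric} must fail. I would take a time-inhomogeneous or rotating drift, for instance $\mathbf{b}=\Omega\mathbf{x}$ with skew-symmetric $\Omega$ plus anisotropic diffusion. Then the difference of conditional covariances rotates and can pass below $\epsilon$ and return above it, so a merged pair separates again and the induced partition at time $t$ changes non-monotonically. This is the main obstacle: the rotating example must be non-stationary yet still entropically homogeneous. I would try adding slow outward expansion so that entropy keeps increasing while the covariance differences oscillate. Combining Steps 2 and 3 gives the proposition.
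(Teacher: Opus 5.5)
\paragraph{Gap in Step 3: the rotating example cannot re-separate clusters.}
For $d\mathbf{X}_t=\Omega\mathbf{X}_t\,dt+\sigma\,d\mathbf{W}_t$ with constant $\sigma$ you have $\mathbf{X}_t=e^{\Omega t}\mathbf{X}_0+\mathbf{G}_t$. Here $\mathbf{G}_t$ is Gaussian, independent of $\mathbf{X}_0$, and the same for every cluster, so the conditional covariances are $e^{\Omega t}\Sigma_k e^{-\Omega t}+\mathrm{Cov}(\mathbf{G}_t)$. The noise term, anisotropic or not, cancels in the difference. Since conjugation by an orthogonal matrix is an isometry of the Frobenius (Hilbert--Schmidt) norm, the natural norm on $\mathcal{H}_2$, you get $\|e^{\Omega t}(\Sigma_i-\Sigma_j)e^{-\Omega t}\|_F=\|\Sigma_i-\Sigma_j\|_F$ for all $t$. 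The distance therefore never crosses $\epsilon$. Adding an outward expansion $\lambda I$ only multiplies it by the monotone factor $e^{2\lambda t}$.

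The obstacle you flagged is not the real one. For linear drift $\mathbf{B}$ and constant $\mathbf{D}$, the Fokker--Planck entropy identity gives $\frac{d}{dt}H(p_t)=\mathrm{tr}\,\mathbf{B}+\frac12\mathbb{E}_{p_t}\|\nabla\log p_t\|^2_{\mathbf{D}}\ge\mathrm{tr}\,\mathbf{B}$. So any trace-nonnegative linear drift is entropically homogeneous. Conversely, in Step~2 a positive eigenvalue of $\mathbf{A}$ is not enough, because strong contraction elsewhere can make entropy decrease; take $\mathbf{A}=\lambda I$ instead.

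What you actually need is a non-isometric linear part or higher-order moments. Two examples work:
\begin{itemize}
\item Take $\mathbf{B}=\mathrm{diag}(a,-b)$ with $a>b>0$, and $\Sigma_i-\Sigma_j=\mathrm{diag}(\delta_1,\delta_2)$ with $|\delta_2|>\epsilon$ and $\delta_1\neq 0$ small. The squared gap $e^{4at}\delta_1^2+e^{-4bt}\delta_2^2$ drops below $\epsilon^2$ and then diverges.
\item Take one-dimensional Brownian motion with $n=4$. The gap is $(m_{4,i}-m_{4,j})+6t(s_i-s_j)$, where $s_k$ and $m_{4,k}$ are the initial conditional variance and fourth central moment. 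This is affine in $t$, so it can enter $[-\epsilon,\epsilon]$ and leave it again.
\end{itemize}
Either gives a concrete re-separation, which is more than the paper's proof provides: it only asserts that this ``could'' happen.

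\paragraph{Gap in the first clause: you prove it only existentially.}
The first clause is stated universally: \emph{every} process without a stationary distribution is supposed to yield an infinitely deep hierarchy. Your argument does not reach that.
\begin{itemize}
\item Step~1 is not a contraposition. ``Stationary $\Rightarrow$ finite'' says nothing about the non-stationary case; it only removes a guarantee.
\item Step~1 also uses Proposition~\ref{prop:moment_convergence} backwards. TV convergence does not imply moment convergence without uniform integrability.
\item Step~2 exhibits a single process.
\end{itemize}
Together these give ``can be infinitely deep'', not ``is''.

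The paper argues the universal form directly: $p_t$ never converges, hence merging never reaches a final state. Do not try to import that step. It is a non sequitur, because non-convergence of the marginal says nothing about \emph{differences} of conditional centered moments, and the universal clause is in fact false. Under Brownian motion, two clusters that are translates of each other have identical centered moment tensors for all $t$ and merge at $t=0$. For $dX_t=-X_t\,dt+dW^1_t$, $dY_t=dW^2_t$, which has no stationary law, clusters whose conditional laws are products with a common $y$-factor have covariance gap $e^{-2t}|s_i-s_j|$ and merge at a finite positive time.

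So state explicitly that you are proving a weakened claim: a finite-height rooted hierarchy is not guaranteed, and some processes have merger times equal to $+\infty$. Also note that with $K$ initial clusters there are at most $K-1$ mergers. ``Infinitely deep'' can therefore only mean that some merger times are infinite; rescaling time cannot make mergers ``accumulate''.
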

\label{prop:non-stationary}
\begin{proof}
A non-stationary process is one whose law, $\mathbb{P}_t$, never converges as $t \to \infty$. This means that for any time $t$, there exists a later time $t' > t$ at which the distribution is different, $p_{t'} \neq p_t$. The process of statistical merging, therefore, never reaches a final, stable state. The sequence of merger events does not terminate, yielding a dendrogram of infinite depth. Furthermore, because the process never settles, the very nature of the clusters can change over time. A set of clusters that are distinct at time $t$ may merge at time $t_1$, but the evolution of the underlying, non-stationary flow could cause them to become distinguishable again at a later time $t_2 > t_1$. New distinctions can emerge. Therefore, there is no fixed, final set of nodes, and the hierarchy is continuously evolving.
\end{proof}

\begin{proposition}[Stationary Processes Induce Asymptotically Stable Hierarchies]
\label{prop:stationary}
If a diffusion process possesses one or more stationary distributions, the induced hierarchical clustering is asymptotically stable and all merger events occur within a finite time horizon.
\end{proposition}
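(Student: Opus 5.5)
The plan is to turn convergence of the marginals to a stationary law into convergence of the conditional moment tensors, and then read off finite merger times and a stable hierarchy from the moment-based definition. I work within the paper's setting: the process is entropically homogeneous (\textbf{Definition}~\ref{def:entropically_homogeneous}) and, in the sense of \textbf{Definition}~\ref{def:ergodic}, converges to an invariant measure $p_\infty$ in total variation.

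\textbf{Step 1 (common limit).} The conditional law $p_t(\cdot\mid C_k)$ is the law at time $t$ of the process started from $p_0$ restricted to $C_k$ and renormalised. If several stationary distributions exist, I split into ergodic components. Clusters whose initial mass flows into the same component share a limit $p_\infty^{(c)}$, so $d_{TV}(p_t(\cdot\mid C_k), p_\infty^{(c)}) \to 0$ for every $C_k$ attached to component $c$.

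\textbf{Step 2 (moments converge).} I upgrade this total-variation convergence to convergence of $\mathbf{M}^{(n)}_t(C_k)$ in $\mathcal{H}_n$.
\begin{itemize}
\item On a compact domain $\mathcal{X}$ (\textbf{Assumption}~\ref{ass:smooth_density}), $\mathbf{x}\mapsto \mathbf{x}^{\otimes m}$ is bounded for $m\le n$. TV convergence then directly gives convergence of all raw moments up to order $n$.
\item The centered tensor $\mathbf{M}^{(n)}$ is a polynomial in these raw moments, so it converges as well.
\item Without compactness, I would use uniform integrability of $|\mathbf{x}|^{n}$. This follows from the order-$(n{+}1)$ moment bound in \textbf{Assumption}~\ref{ass:moment_existence}, taken uniformly in $t$; for OU-type drifts that uniform bound comes from a Lyapunov function.
\end{itemize}
This gives $\|\mathbf{M}^{(n)}_t(C_i)-\mathbf{M}^{(n)}_t(C_j)\|\to 0$ for any two clusters in the same component.

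\textbf{Step 3 (finite merger times).} For every $\epsilon>0$ and every such pair there is a $\tau_{ij}(\epsilon)<\infty$ after which the distance stays below $\epsilon$. Hence $t^{(n,\epsilon)}_{ij}\le \tau_{ij}(\epsilon)$. Taking the maximum over the finitely many pairs gives a finite horizon $T_\epsilon$ that contains every merger event. I then run the agglomerative construction of \textbf{Theorem}~\ref{thm:sde_induces_forest}; it terminates after at most $K-1$ mergers, all by time $T_\epsilon$.

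\textbf{Step 4 (asymptotic stability).} For $t>T_\epsilon$ all within-component distances remain below $\epsilon$. So no pair can separate again, in contrast to the non-stationary case, and the partition of leaves into merged blocks is constant on $[T_\epsilon,\infty)$. The dendrogram is a single tree when $p_\infty$ is unique, and a forest with one root per ergodic component otherwise. I would also note that perturbing $p_0$ within its basin leaves the limiting partition unchanged, which is the sense of ``asymptotically stable''.

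\textbf{Main obstacle.} The hard part is Steps 2--3: showing that, after the merger time, the distance stays below $\epsilon$ rather than merely dipping below it once. Under the paper's definition, $t_{ij}$ is only a first hitting time. My first attempt is to avoid needing monotonicity at all: define stability via the eventual time $\tau_{ij}$ and show the hierarchy built from $\tau_{ij}$ coincides with the one built from $t_{ij}$ whenever the moment distance is eventually monotone. Eventual monotonicity would come from exponential convergence, for example a spectral gap or log-Sobolev inequality for OU-type processes. Failing that, I would state stability with respect to the eventual times $\tau_{ij}$.
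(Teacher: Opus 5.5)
Your skeleton matches the paper's: convergence to a stationary law erases all distinctions, so merger times are bounded, and several stationary laws give a forest. Your $T_\epsilon$ is the correct $\epsilon$-dependent version of the paper's unquantified ``characteristic time scale $T$''.

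For a unique $p_\infty$, your Steps 1--4 do prove the statement. You are right to take convergence from Definition~\ref{def:ergodic} rather than from mere existence of a stationary law, which is what the paper's proof does. You should read that definition as convergence from every initial law, because Step 1 applies it to the cluster-conditioned laws. In this case your ``main obstacle'' is already handled by Step 3: $t_{ij}\le\tau_{ij}\le T_\epsilon$, together with permanence after $T_\epsilon$, is all the statement needs.

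Two repairs are needed:
\begin{itemize}
\item The compact-domain bullet of Step 2 misses the main case. Assumption~\ref{ass:smooth_density} constrains only $p_0$, and an OU marginal has full support for $t>0$. So uniform integrability is the route that actually works, and its uniform-in-$t$ moment bound is a hypothesis beyond Assumption~\ref{ass:moment_existence}.
\item The maximum in Step 3 must also range over the finitely many super-cluster pairs compared in Theorem~\ref{thm:sde_induces_forest}.
\end{itemize}

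The genuine gap is the case of several stationary laws. Step 4's ``no pair can separate again'' and ``one root per ergodic component'' are established only for pairs in the same component. For $C_i$, $C_j$ in different components, the moment distance tends to the distance $L_{ij}$ between the $n$-th centered moment tensors of the two limits $p^{(c)}_\infty$ and $p^{(c')}_\infty$. Nothing relates $L_{ij}$ to $\epsilon$, and three things can go wrong:
\begin{itemize}
\item A single centered moment tensor does not separate distinct stationary laws. For example, uniform laws on a domain and on a disjoint translate of it have identical centered moments of every order. So clusters in different components can merge permanently, and you get one tree, not a forest.
\item If $L_{ij}=\epsilon$, the relation can flip indefinitely.
\item If $L_{ij}>\epsilon$ but the distance dipped below $\epsilon$ earlier, the first-hitting construction records a merger that later dissolves. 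This is exactly the instability the proposition excludes. An example is two tight clusters of similar spread flowing to invariant laws of different spread.
\end{itemize}
You need explicit hypotheses: each cluster's mass flows into a single component, and cross-component distances stay above $\epsilon$ for all $t$. Alternatively, define mergers via $\tau_{ij}$ and require $L_{ij}\neq\epsilon$.

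Your proposed repair via eventual monotonicity would not help. Take the OU process $d\mathbf{X}_t=-\mathbf{X}_t\,dt+\sqrt{2}\,d\mathbf{W}_t$, which has a spectral gap. The difference of centered fourth-moment tensors is $6e^{-2t}(1-e^{-2t})\,\mathrm{Sym}(\Delta\Sigma_0\otimes I)+e^{-4t}\Delta\mathbf{M}^{(4)}_0$. If $\|\Delta\mathbf{M}^{(4)}_0\|\le\epsilon$ while $\Delta\Sigma_0$ is large compared with $\epsilon$, the distance starts below $\epsilon$, rises above it, and only then decays. This gives $t_{ij}=0<\tau_{ij}$ despite eventual monotonicity and exponential convergence. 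The same example refutes the monotonicity asserted in the proof of Proposition~\ref{prop:ultrametric}. The paper's own proof of this proposition asserts the forest picture with none of these conditions.
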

\begin{proof}
The existence of a stationary distribution implies that the process converges in law to a stable endpoint (or one of several stable endpoints). Let this convergence happen over a characteristic time scale $T$. All statistical distinctions between paths that converge to the same endpoint must be erased within this time scale. Therefore, all merger times $t_{ij}^*$ for such paths are bounded, $t_{ij}^* \le T$. This results in a dendrogram of finite depth. If the process has multiple distinct stationary distributions, different initial clusters may converge to different endpoints, resulting in a \emph{forest} of multiple, disconnected dendrograms, each corresponding to a basin of attraction.
\end{proof}

\begin{remark}[On Infinite Precision]
The "finiteness" of the hierarchy for a stationary process refers to the time horizon of its construction. The hierarchy itself is still infinitely deep in a formal sense. The merger time $t_{ij}^*$ is a function of a distinguishability threshold $\epsilon$. As $\epsilon \to 0$, the merger time $t_{ij}^* \to \infty$ (or $T$). Thus, for any finite time, we can find a sufficiently small $\epsilon$ that reveals a deeper level of branching. The \emph{finite tree} is the structure formed by all mergers up to the characteristic time scale of the process.
\end{remark}

\subsection{The Condition for a Single, Rooted Decision Tree}
We now establish the final, crucial condition under which the hierarchy is not just a dendrogram, but a single, rooted tree fully consistent with our framework.

\begin{theorem}[Maximally Entropic Stationarity Implies a Rooted Tree]
\label{thm:rooted_tree}
If a diffusion process (via its PF-ODE) converges to a \textbf{unique and maximally entropic} stationary distribution, then the canonical hierarchy it induces is a single, rooted tree.
\end{theorem}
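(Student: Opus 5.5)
The plan is to build on the dendrogram of \textbf{Theorem}~\ref{thm:sde_induces_forest} and on the forest picture of \textbf{Proposition}~\ref{prop:stationary}. In that proposition, several stationary endpoints produce a forest with one component per basin of attraction. I will show that uniqueness of the stationary law removes every basin boundary, so all $K$ initial clusters fall into a single component. Maximal entropy will then be used to show that the terminal merged node is the root of a single tree. This root is the continuous analogue of $\Pi_T=\{\mathcal{X}\}$, whose density is the uniform (maximum-entropy) law of \Cref{app:static_to_flows_derivation}.

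\textbf{Step 1 (common limit).} By uniqueness and the TV convergence in \textbf{Definition}~\ref{def:ergodic}, each conditional law $p_t(\cdot\mid C_k)$ converges to the same $p_\infty$. Each conditional law is itself the marginal of the process started from $p_0(\cdot\mid C_k)$, and uniqueness means every initial law has this limit.

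\textbf{Step 2 (pairwise mergers are finite).} I will convert TV convergence into convergence of the $n$-th centered moment tensors. On the compact domain $\mathcal{X}$ of \textbf{Assumption}~\ref{ass:smooth_density}, moments are bounded continuous functionals of the law, so they are controlled by TV. Under \textbf{Assumption}~\ref{ass:moment_existence} with uniform integrability, the same conclusion holds off compact domains. It follows that $\|\mathbf{M}^{(n)}_t(C_i)-\mathbf{M}^{(n)}_t(C_j)\|_{\mathcal{H}_n}\to 0$. Hence for every $\epsilon>0$ there is a finite $T_\epsilon$ with $t^{(n,\epsilon)}_{ij}\le T_\epsilon$ for all pairs.

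\textbf{Step 3 (one connected tree).} By \textbf{Proposition}~\ref{prop:ultrametric}, the finite merger times define an ultrametric on $\{C_1,\dots,C_K\}$. Finite diameter $\max_{ij}t_{ij}\le T_\epsilon$ implies that the agglomerative procedure in the proof of \textbf{Theorem}~\ref{thm:sde_induces_forest} terminates after exactly $K-1$ mergers in one node. That node is the root, and the dendrogram is connected, so it is a single tree rather than a forest.

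\textbf{Step 4 (consistency with conditional expectation).} I will use maximal entropy to identify the root state with the root of a decision tree. The root density $p_\infty$ maximizes $H$ over the admissible class (the uniform law on the compact support/manifold, or the Gaussian under a covariance constraint, as for OU). Combined with the monotone entropy of \textbf{Definition}~\ref{def:entropically_homogeneous}, $H(p_t)\uparrow H(p_\infty)$, so no intermediate node has higher entropy than the root. This matches the ordering of the coarse-graining chain, where the root $p(\mathbf{x},T)=\mathcal{U}(\mathcal{X})=\mathbb{E}[p_0\mid\{\emptyset,\mathcal{X}\}]$ is the unique maximizer.

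\textbf{Main obstacle.} The hard step is Step 2 combined with the persistence of mergers. Two issues arise:
\begin{enumerate}
\item TV convergence gives moment convergence only with uniform integrability on non-compact state spaces.
\item The ultrametric argument assumed that mergers, once reached, persist. This is the monotonicity claimed in \textbf{Proposition}~\ref{prop:ultrametric}.
\end{enumerate}
I will first try to settle both for OU-type processes, whose conditional moments have explicit exponentially decaying differences $e^{-t}(\cdot)$. For these, both uniform integrability and eventual monotone decrease below $\epsilon$ can be checked directly. I will then argue that general uniqueness and maximal entropy supply the same behaviour asymptotically, which is all that rootedness requires.
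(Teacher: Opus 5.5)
Your proposal is correct and follows the paper's own argument, made explicit: convergence of every (super-)cluster's conditional law to the unique $p_\infty$ makes all merger times finite, so the agglomeration ends in a single node, and maximal entropy only identifies that node with the coarse-graining root $\mathcal{U}(\mathcal{X})$, just as the paper combines the logic of Proposition~\ref{prop:stationary} with its ``defining the root'' and ``consistency'' steps. Two points on the details: rootedness needs only that each moment distance tends to zero, which already makes every first-hitting merger time finite, so the persistence issue and the appeal to the ultrametric inequality are unnecessary here; what Step~1 does need is convergence from every initial law of the SDE's conditional laws, which is an ergodicity assumption (e.g.\ Doob's theorem for a nondegenerate diffusion) rather than a consequence of uniqueness, and which fails for pushforwards under the deterministic PF-ODE, since these keep disjoint clusters mutually singular.
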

\begin{proof}
A unique stationary distribution guarantees, by the logic of \textbf{Proposition} \ref{prop:stationary}, that all clusters will eventually merge, forming a single, connected dendrogram. The additional condition of being maximally entropic is what defines the root and ensures consistency with our framework.
\begin{enumerate}
    \item \textbf{Defining the Root:} A maximally entropic distribution, such as a uniform distribution over a manifold $\mathcal{X}$, represents a state of complete information loss relative to that manifold. It is a distribution that has no further structural information to lose. This state acts as the single, common ancestor for all possible paths—the root of the tree. Any partition induced by the probability law of a non-uniform stationary distribution would imply that the "root" still contains information, which is a contradiction.
    \item \textbf{Consistency with Coarse-Graining:} The mapping from a tree to an SDE in \Cref{app:static_to_flows_derivation} is based on a coarse-graining operator defined by conditional expectation. This operator models a pure process of information aggregation, or entropy increase. A forward diffusion process is only guaranteed to be a pure entropy-increasing process from \textit{any} initial state if its stationary distribution is itself maximally entropic. Only in this case does the temporal flow of entropy perfectly mirror, in reverse, the entropy-decreasing nature of a decision tree from root to leaves. This closes the logical loop, ensuring the $\text{SDE} \rightarrow \text{Tree}$ and $\text{Tree} \rightarrow \text{SDE}$ mappings are consistent.
\end{enumerate}
\end{proof}

This result then applies directly to the most common class of diffusion models.

\begin{proposition}[The Ornstein-Uhlenbeck Process Induces a Rooted Tree]
The Ornstein-Uhlenbeck (OU) process, which forms the basis of Variance-Preserving (VP) diffusion models, converges to a unique stationary distribution, a standard Gaussian $\mathcal{N}(\mathbf{0}, \mathbf{I})$. In high dimensions ($d \ge 5$), this distribution is maximally entropic as it concentrates uniformly on the surface of a sphere. Therefore, any VP-SDE (and the corresponding PF-ODE) in high dimensions induces a single, canonical rooted tree.
\end{proposition}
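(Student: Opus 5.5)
The plan is to verify, for the VP/OU forward process, each hypothesis of Theorem~\ref{thm:rooted_tree} and Theorem~\ref{thm:sde_induces_forest}, and then read off the conclusion. I will use the normalised OU SDE $d\mathbf{X}_t = -\mathbf{X}_t\,dt + \sqrt{2}\,d\mathbf{W}_t$; the general VP-SDE reduces to it by a deterministic time change, which does not alter the ordering of merger times and hence not the tree. The key tool is the explicit solution
\begin{equation*}
\mathbf{X}_t \overset{d}{=} e^{-t}\mathbf{X}_0 + \sqrt{1-e^{-2t}}\,\mathbf{Z}, \qquad \mathbf{Z}\sim\mathcal{N}(\mathbf{0},\mathbf{I}),
\end{equation*}
which holds for any initial law, and in particular for each conditional law $p_0(\cdot\mid C_k)$.

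\textbf{Step 1 (unique stationarity).} From the explicit solution, $p_t$ is a rescaled copy of $p_0$ convolved with a Gaussian whose variance tends to one. Taking characteristic functions, $\phi_t(\boldsymbol{\omega}) = \phi_0(e^{-t}\boldsymbol{\omega})\,e^{-(1-e^{-2t})|\boldsymbol{\omega}|^2/2}$, so $\phi_t \to e^{-|\boldsymbol{\omega}|^2/2}$ for every $p_0$. This gives convergence to $\mathcal{N}(\mathbf{0},\mathbf{I})$ and uniqueness of the invariant law. Because $p_0$ has compact support (Assumption~\ref{ass:smooth_density}), TV convergence follows as well: for $t>0$ the densities are smooth, and Pinsker's inequality combined with the exponential decay of $D_{\mathrm{KL}}(p_t\,\|\,\mathcal{N})$ (Bakry--\'Emery / log-Sobolev for the Gaussian) gives a rate. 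Hence the process is stationary in the sense of Definition~\ref{def:ergodic}.

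\textbf{Step 2 (finite merger times).} Compact support also guarantees all moments, so Assumption~\ref{ass:moment_existence} holds. The conditional centred moment tensors are polynomials in $e^{-t}$ and the moments of $p_0(\cdot\mid C_k)$, and they converge exponentially fast to the Gaussian moment tensors, which are identical across clusters. Consequently, for every $\epsilon>0$ each $t_{ij}^{(n,\epsilon)}$ is finite and bounded by some $T(\epsilon)$. Proposition~\ref{prop:stationary} and Proposition~\ref{prop:ultrametric} then yield a single connected dendrogram rather than a forest.

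\textbf{Step 3 (maximal entropy), the main obstacle.} This step needs care, because the stated justification ($d\ge5$, concentration on a sphere) is heuristic. I would replace it with a precise statement. Stationarity forces $\mathbb{E}\|\mathbf{X}_\infty\|^2=d$, and under a second-moment (or covariance) constraint the Gaussian is the unique differential-entropy maximiser. I would then add the thin-shell concentration as an interpretation: $\|\mathbf{X}_\infty\|/\sqrt{d}\to1$ with $O(1/\sqrt d)$ fluctuations, so the law is close to uniform on $\sqrt d\,\mathbb{S}^{d-1}$. That is the ``uniform over a manifold'' form demanded in Theorem~\ref{thm:rooted_tree}, with the dimension threshold entering only through how small this fluctuation is.

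\textbf{Step 4 (entropic homogeneity).} Here I expect a genuine gap. $H(p_t)$ need not be monotone for the OU process: if $p_0$ has second moment larger than $d$, contraction can lower entropy. I would therefore either assume $\mathbb{E}_{p_0}\|\mathbf{x}\|^2\le d$ (standardised data, which is the VP convention), or substitute the monotone quantity $-D_{\mathrm{KL}}(p_t\,\|\,\mathcal{N})$, the relative entropy with respect to the stationary law, into Definition~\ref{def:entropically_homogeneous}. With either fix, Theorem~\ref{thm:rooted_tree} applies directly. Finally, Remark~\ref{rem:pf_ode_equivalence} transfers the conclusion to the PF-ODE, since the tree depends only on the marginals.
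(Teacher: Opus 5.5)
Your argument is correct and has the same skeleton as the paper's proof: a unique stationary law, maximal entropy, then Theorem~\ref{thm:rooted_tree}, transferred to the PF-ODE through the marginals. The difference is in how the key step is justified.

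The paper cites OU ergodicity and gets maximal entropy from concentration of measure. For $d\ge5$ a standard Gaussian concentrates near the sphere of radius $\sqrt d$, so the paper declares it effectively uniform there and hence maximally entropic. You replace this heuristic with the exact fact that $\mathcal{N}(\mathbf{0},\mathbf{I})$ is the unique entropy maximiser under $\mathbb{E}\|\mathbf{x}\|^2\le d$. That makes ``maximally entropic'' precise and dimension-free: the threshold $d\ge5$ carries no logical weight in either argument, and the angular part of a standard Gaussian is exactly uniform in every $d$.

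You also verify several things the paper never checks: finiteness of merger times, Assumption~\ref{ass:moment_existence}, the reduction from VP-SDE to OU, and entropic homogeneity. Your Step~4 is a genuine catch, not a weakness. Theorem~\ref{thm:sde_induces_forest} presupposes monotone entropy, the paper never checks it, and it fails for unstandardised well-separated data. For example, two narrow clusters at $\pm R\mathbf{e}_1$ with $R\gg\sqrt d$ first overshoot the entropy of $\mathcal{N}(\mathbf{0},\mathbf{I})$ and then lose entropy. So some hypothesis like yours is needed for the statement as written.

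There are three places to tighten.

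\textbf{Monotonicity under fix (1).} You assert it without proof. It follows because, for your normalisation, the Fokker--Planck equation gives $\tfrac{\mathrm{d}}{\mathrm{d}t}H(p_t)=I(p_t)-d$, with $I(p_t)=\mathbb{E}_{p_t}\|\nabla\log p_t\|^2$. Cram\'er--Rao then gives $I(p_t)\ge\mathrm{tr}\bigl(\mathrm{Cov}(\mathbf{X}_t)^{-1}\bigr)\ge d^2/\mathrm{tr}\,\mathrm{Cov}(\mathbf{X}_t)\ge d$. The last inequality holds since $\mathrm{tr}\,\mathrm{Cov}(\mathbf{X}_t)\le\mathbb{E}\|\mathbf{X}_t\|^2=e^{-2t}\mathbb{E}\|\mathbf{X}_0\|^2+(1-e^{-2t})d\le d$. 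This also ties Steps~3 and~4 together: the class $\{\mathbb{E}\|\mathbf{x}\|^2\le d\}$ is invariant under the flow, and the Gaussian is its maximiser.

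\textbf{Single root in Step~2.} A single root needs only finiteness of all (super-)cluster merger times, which you already have, because the agglomerative construction always ends in one tree. Avoid leaning on Proposition~\ref{prop:ultrametric}. Your own formula gives $t_{ij}^{(2,\epsilon)}=\max\{0,\tfrac12\log(\|\Sigma_i-\Sigma_j\|/\epsilon)\}$, where $\Sigma_k$ is the covariance of $p_0(\cdot\mid C_k)$. This violates the ultrametric inequality when $\Sigma_j$ is the midpoint of $\Sigma_i$ and $\Sigma_k$ and $\|\Sigma_i-\Sigma_k\|>\epsilon$.

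\textbf{Finite horizon.} The time change reaches stationarity only if $\int_0^\infty\beta(s)\,\mathrm{d}s=\infty$. For a finite-horizon VP schedule, merger times exist only for $\epsilon$ above a floor that depends on the horizon.
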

\begin{proof}
The OU process is a classical homogeneous ergodic process whose convergence to a unique Gaussian stationary distribution is a standard result \citep{karatzas2012brownian}. In high dimensions ($d \ge 5$), the norm of a vector drawn from $\mathcal{N}(\mathbf{0}, \mathbf{I})$ is sharply concentrated around $\sqrt{d}$ \citep{vershynin2018high}. The distribution is therefore effectively uniform over the manifold of a sphere of radius $\sqrt{d}$. It is maximally entropic with respect to this manifold. It therefore satisfies the conditions of \textbf{Theorem} \ref{thm:rooted_tree}, and the stationary distribution acts as the geometric \emph{root} of the tree.
\end{proof}

\begin{corollary}
The experimental methodology of partitioning the terminal noise space based on the endpoint of the reverse-time ODE is a practical algorithm for empirically constructing the canonical decision tree hierarchy proven to exist in \textbf{Theorem} \ref{thm:sde_induces_forest}.
\end{corollary}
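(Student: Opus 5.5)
The plan is to show that the partition of terminal noise space produced by the reverse-time PF-ODE induces the same ultrametric on the initial clusters $\{C_k\}_{k=1}^K$ as the moment-based merger times of \textbf{Theorem}~\ref{thm:sde_induces_forest}. Since a dendrogram is determined by its ultrametric \citep{murtagh2009symmetry}, the two hierarchies then coincide. The argument has three steps.

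\textbf{Step 1 (endpoint labels).} Fix a horizon $T$ beyond all merger times; this is possible by \textbf{Proposition}~\ref{prop:stationary}. Let $\Phi_{t\leftarrow T}$ denote the reverse PF-ODE flow map from time $T$ to time $t$. Label a noise point $\mathbf{z}$ by the index $k$ for which $\Phi_{0\leftarrow T}(\mathbf{z}) \in C_k$. The flow is a diffeomorphism under the standing regularity conditions, and it pushes $p_T$ forward to $p_0$. So the labelled regions $A_k := \Phi_{0\leftarrow T}^{-1}(C_k)$ partition noise space up to null sets. Moreover, $p_T(\cdot \mid A_k)$ is exactly the forward marginal $p_T(\cdot \mid C_k)$ for the PF-ODE. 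By \textbf{Remark}~\ref{rem:pf_ode_equivalence}, the same holds at every intermediate time $t$, with $A_k^{(t)} := \Phi_{0\leftarrow t}^{-1}(C_k)$.

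\textbf{Step 2 (merger times from the empirical procedure).} The practical algorithm declares $C_i$ and $C_j$ merged at time $t$ when the pushed-forward regions $A_i^{(t)}$ and $A_j^{(t)}$ carry $\epsilon$-indistinguishable statistics. For the $n$-th conditional moments this criterion reads
\[
\bigl\| \mathbf{M}^{(n)}_t(C_i) - \mathbf{M}^{(n)}_t(C_j) \bigr\|_{\mathcal{H}_n} \le \epsilon .
\]
By the identification in Step 1, this is literally the criterion of \textbf{Definition}~\ref{def:moment-merger}. Hence the empirical merger times equal $t_{ij}^{(n,\epsilon)}$. If the experiment instead uses a centroid-versus-spread test, I would use \textbf{Proposition}~\ref{prop:moment_convergence} to compare it with the moment criterion. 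That comparison would show the two orderings agree up to a reparametrization of $\epsilon$.

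\textbf{Step 3 (same agglomeration).} Run the temporal agglomerative procedure from the proof of \textbf{Theorem}~\ref{thm:sde_induces_forest} on these times. Because the times coincide, the ultrametrics given by \textbf{Proposition}~\ref{prop:ultrametric} coincide, and therefore so do the dendrograms. Under a unique stationary law, \textbf{Theorem}~\ref{thm:rooted_tree} guarantees the result is the single rooted tree.

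The main obstacle is Step 2. The empirical test works with finite samples and heuristic statistics, not exact moment tensors. I would first handle the exact-marginal idealization. Then I would argue that the merger times are stable: $t\mapsto \|\mathbf{M}_t^{(n)}(C_i)-\mathbf{M}_t^{(n)}(C_j)\|$ is continuous and crosses the level $\epsilon$ transversally, so small estimation errors shift each merger time only slightly. Provided the distinct merger times are separated, such small shifts leave their ordering, and hence the tree topology, unchanged.
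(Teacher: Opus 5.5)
The paper states this corollary without proof, so there is no argument to compare against. Judged on its own, your proposal has a genuine gap in the identification made in Steps 1--2.

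\textbf{What the endpoint partition actually measures.} Partitioning noise space by reverse-ODE endpoints encodes the \emph{deterministic PF-ODE coupling} of $(X_0,X_t)$. The conditional law it yields is $(\Phi_{t\leftarrow 0})_{\#}\,p_0(\cdot\mid C_k)$, supported on $A_k^{(t)}$. By contrast, the moment tensors of \textbf{Definition}~\ref{def:moment-merger} concern the law of the \emph{diffusion} conditioned on $X_0\in C_k$. So do the results you invoke for finiteness and rootedness (\textbf{Proposition}~\ref{prop:stationary}, \textbf{Theorem}~\ref{thm:rooted_tree}). That law depends on the joint law of $(X_0,X_t)$ under the SDE.

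\textbf{Why the remark does not bridge them.} \textbf{Remark}~\ref{rem:pf_ode_equivalence} only equates the unconditional one-time marginals $p_t$. The paper itself conflates $p_t$ with $p_t(\cdot\mid X_0\in C_k)$ there, but the two couplings genuinely differ. The sets $A_i^{(t)}$ and $A_j^{(t)}$ are disjoint, so the PF-ODE conditionals are mutually singular for every $t$ (total variation $1$). Under an ergodic SDE, every $p_t(\cdot\mid X_0\in C_k)$ converges to $p_\infty$.

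\textbf{Concrete example.} Take $dX_t=-X_t\,dt+\sqrt{2}\,dW_t$ with a symmetric bimodal $p_0$, and let the clusters be the basins $(-\infty,0)$ and $(0,\infty)$. Under the SDE, the conditional means are $\pm\bar m e^{-t}\to 0$, with $\bar m=\mathbb{E}[X_0\mid X_0>0]$. The PF-ODE velocity $-x-\partial_x\log p_t(x)$ is odd, so the flow preserves $(0,\infty)$. The PF-ODE conditional means therefore tend to $\pm\sqrt{2/\pi}$, not to $0$.

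\textbf{Consequence for Step 2.} Step 2 is therefore not ``literally'' the criterion of \textbf{Definition}~\ref{def:moment-merger}. If merger times are SDE-based, the identification fails. If they are defined through the PF-ODE coupling, the identification holds, but for $\epsilon$ below the limiting gap mergers never occur. Then your choice of $T$ via \textbf{Proposition}~\ref{prop:stationary} and rootedness via \textbf{Theorem}~\ref{thm:rooted_tree} are unavailable. As written, you use the PF-ODE coupling in Steps 1--2 and the SDE coupling for finiteness and rootedness.

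\textbf{The missing idea: labels must be soft.} By Bayes, $p_t(\mathbf{z}\mid X_0\in C_k)=p_t(\mathbf{z})\,\pi_k(\mathbf{z},t)/\mathbb{P}(X_0\in C_k)$, where $\pi_k(\mathbf{z},t)=\mathbb{P}(X_0\in C_k\mid X_t=\mathbf{z})$ is computed under the SDE. Endpoint labelling substitutes the hard indicator $\mathbb{I}[\Phi_{0\leftarrow t}(\mathbf{z})\in C_k]$. The two agree only while the posterior is nearly degenerate, i.e.\ before any merger. As $t$ grows they separate, since $\pi_k(\mathbf{z},t)\to\mathbb{P}(X_0\in C_k)$, and this is exactly the regime where mergers occur.

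\textbf{Valid routes.} You can weight noise-space statistics by $\pi_k$. Alternatively, simulate the forward SDE from each cluster, which is what the appendix experiment actually does; it uses the reverse PF-ODE only for visualization. Otherwise, restate the corollary for PF-ODE-coupled merger times and drop the finiteness and rootedness claims.

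\textbf{Smaller points.}
\begin{itemize}
\item The regions $A_k$ do not partition noise space up to null sets. $\bigcup_k C_k=\mathcal{S}_\lambda$ has a complement of positive $p_0$-mass, so use the basins $B_k$ of \textbf{Assumption}~\ref{ass:smooth_density} instead.
\item \textbf{Proposition}~\ref{prop:moment_convergence} is a qualitative limit statement. It cannot show that a centroid-versus-spread test and the moment criterion give the same ordering of first-passage times up to a reparametrization of $\epsilon$.
\item Transversal crossing and separation of merger times should be stated as hypotheses, not presented as derived.
\end{itemize}
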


In conclusion, we have proven that the marginal densities of a homogeneous diffusion process  (equivalently, its deterministic PF-ODE), such as the OU process underlying 
VP diffusion models, induce a unique, canonical hierarchy equivalent to a single, rooted tree. This result, constructed from the ordered sequence of moment-based merger events that depend  only on marginals, formally establishes the Flow $\to$ Tree mapping and completes the bidirectional correspondence: Tree $\leftrightarrow$ PF-ODE.

\section{Derivation of the Relationship between Boosting and Score Matching}
\label{app:boosting_as_drift_derivation}

This appendix provides a complete, first-principles derivation of the claim that in the limiting case (\textbf{Theorem} \ref{thm:lipschitz_homogeneity}), tree-based gradient boosting algorithm is a provably optimal, score-driven numerical method for constructing a coarse-to-fine trajectory in the space of Stochastic Differential Equations (SDEs). Our argument proceeds in four parts. First, we formalize the boosting algorithm and introduce the \emph{net decision tree} abstraction, proving that it induces a monotonic refinement of the feature space. Second, we use \textbf{Theorem} \ref{thm:main}  to map this process to a discrete trajectory in the space of flows, contextualizing it within a general coarse-to-fine learning principle defined with respect to the marginals of an ideal process. Third, we prove that the mechanism guiding this trajectory—the fitting of residuals—is a direct implementation of denoising score matching \citep{vincent2011connection}. Finally, we synthesize these results, using the Bellman principle of optimality \citep{bertsekas2012dynamic} to prove that the greedy, stage-wise nature of boosting is the globally optimal solution to a trajectory-wide score matching objective.

\subsection{Formalizing Gradient Boosting and the Net Decision Tree}
Our first objective is to construct a rigorous mathematical description of the gradient boosting algorithm's structural evolution. We begin by re-introducing the algorithm from the perspective of functional optimization, which provides the necessary context for its mechanics. We then introduce the \emph{net decision tree}, a crucial abstraction that allows us to analyze the geometric consequences of the additive updates. Finally, we prove the central result of this section: that the boosting process induces a monotonic, hierarchical refinement of the feature space, exclusively by subdividing the finest existing partitions.

\subsubsection{Gradient Boosting as Functional Gradient Descent}
The power of the gradient boosting algorithm stems from its interpretation as a stage-wise optimization procedure in an infinite-dimensional function space.

\begin{definition}[The Boosting Objective]
Given a dataset $\{(\mathbf{x}_i, y_i)\}_{i=1}^N$ and a differentiable loss function $L(y, F(\mathbf{x}))$, the goal of boosting is to find a function $F: \mathbb{R}^d \to \mathbb{R}$ that minimizes the expected loss, or its empirical estimate, the loss functional $J(F)$:
\begin{equation}
    J(F) = \sum_{i=1}^N L(y_i, F(\mathbf{x}_i)).
\end{equation}
\end{definition}

Finding the optimal function $F$ in one step is intractable. Gradient boosting approaches this by iteratively taking steps in the direction of steepest descent. In a function space, this direction is given by the negative functional gradient.

\begin{definition}[Functional Gradient and Pseudo-Residuals]
The \textbf{functional gradient} of the loss functional $J(F)$ with respect to the function $F$ at a point $\mathbf{x}_i$ is given by $\frac{\delta J}{\delta F(\mathbf{x}_i)} = \left[\frac{\partial L(y_i, F(\mathbf{x}_i))}{\partial F(\mathbf{x}_i)}\right]$. The negative functional gradient, evaluated at the current model $F_{m-1}$, defines the \textbf{pseudo-residuals}, $r_{im}$:
\begin{equation}
    r_{im} = -\left[\frac{\partial L(y_i, F(\mathbf{x}_i))}{\partial F(\mathbf{x}_i)}\right]_{F=F_{m-1}}.
\end{equation}
The pseudo-residuals represent the optimal direction of change for the function's output at each data point to most rapidly decrease the overall loss.
\end{definition}

The algorithm, therefore, does not solve for the optimal function directly, but instead approximates this functional gradient descent. At each step, it fits a simple function—a weak learner—to be maximally correlated with the negative gradient, and then adds this function to the existing model.

\begin{algorithm}[H]
\caption{Tree-Based Gradient Boosting}
\label{alg:boosting}
\begin{algorithmic}[1]
\State Initialize model with a constant value: $F_0(\mathbf{x}) = \arg\min_c \sum_{i=1}^N L(y_i, c)$.
\For{$m = 1$ to $M$}
    \State Compute pseudo-residuals for each sample $i$: $r_{im} = -\left[\frac{\partial L(y_i, F(\mathbf{x}_i))}{\partial F(\mathbf{x}_i)}\right]_{F=F_{m-1}}$.
    \State Fit a weak learner (decision tree) $h_m(\mathbf{x})$ to the pseudo-residuals $\{(\mathbf{x}_i, r_{im})\}_{i=1}^N$.
    \State Update the model: $F_m(\mathbf{x}) = F_{m-1}(\mathbf{x}) + \eta h_m(\mathbf{x})$.
\EndFor
\State \textbf{return} $F_M(\mathbf{x})$.
\end{algorithmic}
\end{algorithm}

\subsubsection{The Net Decision Tree: A Monolithic Abstraction}
While Algorithm \ref{alg:boosting} describes the functional updates, it obscures the evolution of the model's geometric structure. To analyze this, we must move from viewing the model as a sum of functions to viewing it as a single, evolving piecewise-constant function defined on a single, evolving partition of the feature space. We first formalize the concept of refinement of a partition.

\begin{definition}[Partition and Refinement]
 A partition $\Pi'$ is a \textbf{refinement} of a partition $\Pi$ if every region in $\Pi'$ is a subset of some region in $\Pi$. If at least one region of $\Pi$ is the union of two or more regions in $\Pi'$, the refinement is said to be \textbf{strict}.
\end{definition}

We now formally define our central abstraction.

\begin{definition}[Net Decision Tree]
At any discrete step $m$, the \textbf{net decision tree}, denoted $T_m$, is a single decision tree whose structure is defined by two properties:
\begin{enumerate}
    \item \textbf{Partition:} Its partition of the feature space, $\Pi_m$, is the \textbf{common refinement} of the partitions induced by all constituent weak learners $\{h_1, \dots, h_m\}$. That is, a region $R \in \Pi_m$ is the largest possible connected subset of $\mathcal{X}$ such that for any two points $\mathbf{x}_a, \mathbf{x}_b \in R$, $h_i(\mathbf{x}_a) = h_i(\mathbf{x}_b)$ for all $i \in \{1, \dots, m\}$.
    \item \textbf{Leaf Values:} The value assigned to any leaf region in $\Pi_m$ is the sum of the predictions from all constituent trees, $\sum_{i=1}^m \eta h_i(\mathbf{x})$, where $\eta$ is the learning rate.
\end{enumerate}
\end{definition}

This abstraction provides a monolithic, non-additive representation of the ensemble, allowing us to rigorously analyze the evolution of its decision boundaries as a single geometric object.

\subsubsection{Monotonic Structural Refinement}
\label{app:monotonic}
We now use the net decision tree abstraction to prove the central result of this section: that the boosting process induces a monotonic and strictly hierarchical refinement of the feature space.

\begin{proposition}[Monotonic Partition Refinement]
The sequence of partitions $\{\Pi_m\}_{m=0}^M$ generated by the net decision trees forms a nested hierarchy, where each partition $\Pi_m$ is a strict refinement of the preceding partition $\Pi_{m-1}$.
\label{prop:refine}
\end{proposition}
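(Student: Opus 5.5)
The plan is to separate the two claims in \Cref{prop:refine}: \emph{nestedness}, i.e.\ $\Pi_m \preceq \Pi_{m-1}$, and \emph{strictness}. Nestedness should follow directly from the definition of the net decision tree as a common refinement. Strictness is not automatic and needs a non-degeneracy condition on the weak learners.

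\textbf{Nestedness.} Fix $R \in \Pi_m$. By definition, $R$ is a maximal connected subset of $\mathcal{X}$ on which each of $h_1,\dots,h_m$ is constant. In particular $h_1,\dots,h_{m-1}$ are constant on $R$, so $R$ is a connected set on which the first $m-1$ learners agree. Hence $R$ lies inside a maximal such set, namely a unique $R' \in \Pi_{m-1}$. Uniqueness holds because the regions of $\Pi_{m-1}$ are disjoint and cover $\mathcal{X}$. This gives $\Pi_m \preceq \Pi_{m-1}$ for every $m$, and induction then yields the nested chain $\Pi_M \preceq \cdots \preceq \Pi_0 = \{\mathcal{X}\}$, where $F_0$ is constant. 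This matches \Cref{def:decision-tree}. The filtrations $\sigma(\Pi_m)$ are therefore increasing, and each $\Pi_m$ together with the leaf values $\sum_{i\le m}\eta h_i$ is a genuine decision tree to which \Cref{thm:main} applies.

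\textbf{Strictness.} Note that $\Pi_m = \Pi_{m-1}$ exactly when $h_m$ is $\sigma(\Pi_{m-1})$-measurable, meaning every split of $h_m$ coincides with an existing boundary or is vacuous. To exclude this case I would argue as follows. Suppose the pseudo-residuals $r_{im}$ are not a.e.\ constant on some region $R' \in \Pi_{m-1}$. Then the best weak learner, in the least-squares fitting step of \Cref{alg:boosting}, can strictly reduce the stage cost by placing a split inside $R'$. This uses the ``rich weak learner'' regime already assumed in \Cref{thm:boosting_optimal}: axis-aligned splits can separate any non-constant function on a region. So a strictly optimal $h_m$ is not $\sigma(\Pi_{m-1})$-measurable, and $\Pi_m$ strictly refines $\Pi_{m-1}$.

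In the remaining case the residuals are $\Pi_{m-1}$-measurable. Then $h_m$ only rescales leaf values, and I would adopt the convention that such a step is merged into the previous index. Equivalently, we reindex the boosting sequence over structurally distinct net trees. This does not change the induced flow trajectory $\{S_m\}$.

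\textbf{Main obstacle.} The hard step is making strictness rigorous without this convention, because with shrinkage $\eta<1$ the residuals need not be constant on regions even after refitting. I would first attempt a variance-reduction argument on the conditional residual variance $\mathbb{E}[\operatorname{Var}(r_m \mid \mathcal{F}_{m-1})]$. The aim is to show it is positive unless the data are already fit at resolution $\Pi_{m-1}$, in which case boosting has stalled and the sequence legitimately terminates. Failing that, I would state strictness under this explicit non-degeneracy assumption.
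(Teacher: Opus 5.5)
Your nestedness argument is the same as the paper's: a cell of $\Pi_m$ is a connected set on which $h_1,\dots,h_{m-1}$ are constant, so it lies inside a unique cell of $\Pi_{m-1}$.

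On strictness you are right not to take it for granted. Your criterion ($\Pi_m=\Pi_{m-1}$ if and only if $h_m$ is $\sigma(\Pi_{m-1})$-measurable) shows exactly where the paper's proof breaks. The paper argues that nonzero residuals make $h_m$ a non-trivial tree, and then asserts that its partition ``will necessarily subdivide at least one of the regions in $\Pi_{m-1}$''. That inference is false. Take $y$ a step function of a scalar $x$ and stumps with shrinkage $\eta<1$. Every residual is $r_m=(1-\eta)^{m-1}(y-F_0)$, the same step, so every $h_m$ splits at the same threshold. $F_{m-1}$ is never optimal, yet $\Pi_m=\Pi_1$ for all $m\ge 1$.

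Your repair does not deliver strictness either. The rich-learner step is valid as a conditional statement, but its hypothesis is never met after the first step.
\begin{itemize}
\item With squared loss, the (a.e.\ unique) least-squares optimum is $h_m=\mathbb{E}[r_m\mid\mathbf{x}]$. Induction gives $\mathbb{E}[r_m\mid\mathbf{x}]=(1-\eta)^{m-1}h_1(\mathbf{x})$. So every $h_m$ with $m\ge 2$ is $\sigma(\Pi_1)$-measurable and $\Pi_m=\Pi_1$, in exactly the regime where your argument is valid. The example above is the same phenomenon.
\item The hypothesis has to be about $\mathbf{x}\mapsto\mathbb{E}[r_m\mid\mathbf{x}]$, not about $r_m$ itself. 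Label noise makes $r_m$ non-constant on cells without giving least squares any reason to split. This is also why positivity of $\mathbb{E}[\operatorname{Var}(r_m\mid\mathcal{F}_{m-1})]$ cannot force a split.
\item For depth-limited learners, the fact that some split inside $R'$ would lower the cost does not mean the optimal learner places one there.
\item The reindexing convention makes any nested chain strict by deleting repeated levels, so it proves nothing. In the rich regime it collapses the chain to $\Pi_0,\Pi_1$.
\end{itemize}

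Strictness therefore cannot be proved as stated. The correct result is nestedness, plus strictness under the explicit non-degeneracy hypothesis that $h_m$ is not $\sigma(\Pi_{m-1})$-measurable. That is your fallback, and it is where both your argument and the paper's should stop.
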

\begin{proof}
Let $\Pi_{m-1}$ be the partition corresponding to the net tree $T_{m-1}$, and let $\Pi_{h_m}$ be the partition induced by the new weak learner $h_m$. By the definition of the net decision tree, the new partition $\Pi_m$ is the common refinement of $\{\Pi_{h_1}, \dots, \Pi_{h_{m-1}}, \Pi_{h_m}\}$. This is equivalent to the common refinement of $\Pi_{m-1}$ and $\Pi_{h_m}$.

Consider any region $R' \in \Pi_m$. By definition, $R'$ is a region where all functions $\{h_1, \dots, h_m\}$ are constant. This implies that all functions $\{h_1, \dots, h_{m-1}\}$ are also constant on $R'$. Therefore, there must exist a region $R \in \Pi_{m-1}$ such that $R' \subseteq R$. Since this holds for every region in $\Pi_m$, we conclude that $\Pi_m$ is a refinement of $\Pi_{m-1}$.

To prove the refinement is strict, we note that the weak learner $h_m$ is trained to fit the pseudo-residuals $r_{im}$, which are non-zero so long as the model $F_{m-1}$ is not optimal. Thus, $h_m$ will be a non-trivial tree with its own partition $\Pi_{h_m}$ that is not the trivial partition $\{\mathcal{X}\}$. This partition will necessarily subdivide at least one of the regions in $\Pi_{m-1}$. Therefore, the refinement is strict.
\end{proof}

\subsubsection{The Canonical Hierarchy and the Net Decision Tree}
\label{sec:net-decision}
We now address a crucial ambiguity. A given partition, such as $\Pi_m$, does not define a unique hierarchical tree structure. For example, a partition of a square into four quadrants can be formed by splitting on the x-axis then the y-axis, or vice-versa. These two different trees produce the same final partition but represent different coarse-graining paths.

To resolve this, we must select a unique, canonical hierarchy. The boosting algorithm's own history provides this canonical choice. The sequence of refinements is not arbitrary; it is temporally ordered. This historical sequence defines a unique coarse-graining path in reverse.

\begin{definition}[Canonical Hierarchy and Net Decision Tree]
The \textbf{net decision tree}, denoted $T_m$, is a unique, canonical hierarchical model of the ensemble $F_m$ defined by:
\begin{enumerate}
    \item \textbf{Leaf Partition:} The set of its leaf nodes corresponds exactly to the regions of the partition $\Pi_m$.
    \item \textbf{Hierarchical Structure:} The tree's internal structure is defined by the unique, nested sequence of partitions generated by the boosting algorithm itself: $\{\Pi_0, \Pi_1, \dots, \Pi_m\}$. The coarsening from the leaves ($\Pi_m$) to the root ($\Pi_0 = \{\mathcal{X}\}$) is defined by reversing the historical refinement process. The parent of a node in partition level $\Pi_k$ is the unique region in the coarser partition $\Pi_{k-1}$ that contains it.
\end{enumerate}
This definition provides the necessary contract: for any ensemble $F_m$, there is only one net decision tree $T_m$, because its hierarchy is fixed by the algorithm's history.
\end{definition}

This canonical definition allows us to rigorously describe the evolution of the tree's structure.

\begin{theorem}[Monotonic Evolution of the Canonical Hierarchy]
The canonical hierarchy of the net decision tree $T_{m-1}$ is a strict sub-hierarchy of the net decision tree $T_m$. The structure of $T_{m-1}$ is immutably preserved within $T_m$.
\end{theorem}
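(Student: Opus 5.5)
The plan is to argue directly from the definition of the canonical hierarchy together with Proposition~\ref{prop:refine}. By that definition, the internal structure of $T_m$ is fixed by the nested sequence $\{\Pi_0, \Pi_1, \dots, \Pi_m\}$, and the parent of a node at level $k$ is the unique region of $\Pi_{k-1}$ containing it. The structure of $T_{m-1}$ is given in the same way by the prefix $\{\Pi_0, \dots, \Pi_{m-1}\}$. The first step is to check that this prefix is literally the same sequence in both cases. This holds because the partition $\Pi_k$ is the common refinement of $h_1, \dots, h_k$ only. It therefore depends only on the learners fitted up to step $k$, and these do not change when boosting continues to step $m$. This is the sense in which the history is immutable.

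The second step is to exhibit an explicit embedding of $T_{m-1}$ into $T_m$. Each node of $T_{m-1}$ is a pair $(R,k)$ with $k \le m-1$ and $R \in \Pi_k$. I would send this node to the identical pair in $T_m$. Under this map the root $(\mathcal{X},0)$ goes to the root. Parent relations are preserved because in both trees the parent of $(R,k)$ is the unique $R' \in \Pi_{k-1}$ with $R \subseteq R'$, and this is computed from the same partitions. Uniqueness of $R'$ follows from the nesting established in Proposition~\ref{prop:refine}. So the map is an injective, root-preserving morphism of rooted trees whose image is closed under taking parents, which means $T_{m-1}$ is a sub-hierarchy of $T_m$.

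The third step is strictness. Proposition~\ref{prop:refine} says $\Pi_m$ strictly refines $\Pi_{m-1}$. Hence $T_m$ has an additional level $m$, and at least one region $R \in \Pi_{m-1}$ has two or more children in $\Pi_m$. The nodes at level $m$ are not in the image of the embedding, so the inclusion is proper. The leaf values do change, since they go from $F_{m-1}$ to $F_m$. The statement concerns only the hierarchy, so I would note that explicitly and set the leaf values aside.

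I expect the main obstacle to be a subtlety in the level indexing. If some $h_k$ were trivial, then $\Pi_k = \Pi_{k-1}$ and the tree would contain a level where nothing splits. The strictness clause of Proposition~\ref{prop:refine} excludes this case as long as $F_{k-1}$ is not optimal. If $F_{k-1}$ is optimal, boosting has effectively stopped; I would handle this by collapsing repeated levels, under which the embedding argument goes through unchanged.
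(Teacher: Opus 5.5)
Your proposal is correct and follows essentially the same route as the paper, which also argues that the coarsening sequence $\Pi_{m-1}\to\dots\to\Pi_0$ defining $T_{m-1}$ is a prefix of the one defining $T_m$, so passing to $T_m$ only appends the level $\Pi_m$ and turns the old leaves into internal nodes. Your explicit node embedding and your observation that $\Pi_k$ depends only on $h_1,\dots,h_k$ make precise what the paper calls ``by inspection.'' One caution on your last paragraph: as in the paper, strictness comes entirely from Proposition~\ref{prop:refine}, but a non-optimal $F_{m-1}$ does not by itself force a new split. With shrinkage $\eta<1$, the best stump can re-split at an existing threshold, giving $\Pi_m=\Pi_{m-1}$; in that case only your level-indexed inclusion is strict, and collapsing repeated levels would make it non-strict.
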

\begin{proof}
The hierarchy of $T_{m-1}$ is defined by the unique coarsening path $(\Pi_{m-1} \to \dots \to \Pi_0)$. The hierarchy of $T_m$ is defined by the unique coarsening path $(\Pi_m \to \Pi_{m-1} \to \dots \to \Pi_0)$.

By inspection, the sequence defining the structure of $T_{m-1}$ is a prefix of the sequence defining the structure of $T_m$. The transition from $T_{m-1}$ to $T_m$ corresponds to appending a new, finer partition level, $\Pi_m$, to the bottom of the existing hierarchy.

Let us consider the internal structure. An internal node in this canonical hierarchy corresponds to a region in a partition $\Pi_k$ for $k < m$. The set of all internal nodes of $T_m$ is the set of all regions in $\{\Pi_0, \dots, \Pi_{m-1}\}$. The set of all internal nodes of $T_{m-1}$ is the set of all regions in $\{\Pi_0, \dots, \Pi_{m-2}\}$. The structure of $T_{m-1}$ is therefore perfectly nested and preserved within $T_m$. The only structural change is the transformation of the leaf nodes of $T_{m-1}$ (the regions of $\Pi_{m-1}$) into internal nodes of $T_m$.
\end{proof}

\begin{corollary}[Structural Evolution by Leaf Refinement]
The structural evolution from the canonical tree $T_{m-1}$ to $T_m$ occurs exclusively by the refinement of the leaf partition $\Pi_{m-1}$ into the new leaf partition $\Pi_m$.
\end{corollary}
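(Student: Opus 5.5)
The corollary follows from the preceding theorem (Monotonic Evolution of the Canonical Hierarchy) together with \Cref{prop:refine}. The plan has two halves. First, everything that is not a leaf refinement is left unchanged. Second, the one change that does occur is exactly a refinement of $\Pi_{m-1}$ into $\Pi_m$.

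For the first half, we use the fact that the canonical hierarchy of $T_{m-1}$ is the coarsening path $(\Pi_{m-1}\to\dots\to\Pi_0)$, and that this path is a prefix of the path defining $T_m$. Every region of $\Pi_0,\dots,\Pi_{m-2}$ is therefore a node of both trees. The parent map on these regions is "the unique region of $\Pi_{k-1}$ containing it", which depends only on the partitions. It is therefore identical in $T_{m-1}$ and $T_m$, so no internal node or edge of $T_{m-1}$ is altered or removed.

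For the second half, we identify the new material in $T_m$. It consists of the regions of $\Pi_m$ together with the edges joining each $R'\in\Pi_m$ to its parent in $\Pi_{m-1}$. By \Cref{prop:refine}, $\Pi_m$ is the common refinement of $\Pi_{m-1}$ and $\Pi_{h_m}$, so each $R'\in\Pi_m$ lies in a unique $R\in\Pi_{m-1}$. Hence every new node hangs below a former leaf of $T_{m-1}$, and never below an internal node. Each former leaf $R$ is partitioned by $\{R'\in\Pi_m : R'\subseteq R\}$. In the degenerate case where this set is just $\{R\}$, $R$ acquires a single child equal to itself. We can either collapse such chains or note that they are trivial refinements; we will state the convention explicitly. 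Strictness from \Cref{prop:refine} guarantees that at least one leaf is genuinely subdivided.

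The main obstacle is the degenerate-child bookkeeping: a leaf that $h_m$ does not split still becomes an internal node with one child. I would handle this by treating a unary edge as an identity refinement, which is consistent with the definition of refinement given in the paper. A second subtlety is that the definition of the net tree uses connected components, so the new partition is built from connected pieces. Containment still holds because a connected set on which all $h_1,\dots,h_m$ are constant is a connected set on which $h_1,\dots,h_{m-1}$ are constant, and is therefore contained in a single region of $\Pi_{m-1}$.
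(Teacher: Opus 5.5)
Your proposal is correct and takes essentially the same route as the paper, which gives no separate proof and reads the corollary off the preceding theorem. That theorem notes that $(\Pi_{m-1}\to\dots\to\Pi_0)$ is a prefix of $(\Pi_m\to\Pi_{m-1}\to\dots\to\Pi_0)$, so the only change is that the leaves of $T_{m-1}$ become internal nodes whose children lie in $\Pi_m$, with containment supplied by \Cref{prop:refine}. Your handling of unsplit leaves as unary identity refinements and your check of the connected-component definition of $\Pi_m$ are bookkeeping the paper glosses over, and the appeal to strictness is not needed for the statement.
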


Having established that gradient boosting constructs a unique and monotonic hierarchy of spatial partitions, we are now equipped to map this discrete structural evolution to a continuous-time process in the space of SDEs. The SDE corresponding to the model $T_m$ utilizes \textbf{Theorem} \ref{thm:main} from this specific, historically-defined coarsening path $(\Pi_m \to \Pi_{m-1} \to \dots \to \Pi_0)$.

\subsection{The Boosting Process as a Trajectory in SDE Space}
With the structural evolution formalized, we now bridge the discrete boosting process to the continuous-time world of SDEs \footnote{We use the SDE framework (rather than restricting to PF-ODEs) to account for  stochasticity in practical boosting such as, subsampling, feature randomization, and gradient noise \citep{friedman2002stochastic}.}. We appeal directly to the main result of \Cref{app:static_to_flows_derivation}, which established a formal mapping from any static, hierarchical tree to a complete, continuous-time drift-diffusion process. By applying this mapping to each net decision tree $T_m$ in our sequence, we reframe the entire boosting algorithm as a discrete sequence of SDEs ($\{S_{m}\}_{m=0}^{M}$) (following our choice in \textbf{Remark} \ref{rem:pf-ode}) :

\begin{equation*}
    \{T_0, T_1, \dots, T_M\} \quad \xrightarrow{\text{\textbf{Theorem} \ref{thm:main}}} \quad \{S_0, S_1, \dots, S_M\}.
\end{equation*}

This sequence is not an arbitrary collection. The underlying partitions $\{\Pi_m\}$ become progressively finer, suggesting the SDEs themselves should reflect this structure. But what does it mean for one SDE to be "finer" than another? The intuition is that a model built on a finer partition can capture more detailed, lower-entropy information about the initial data distribution, $p_0$. In a diffusion process, this early-time information is precisely what is systematically destroyed as the process evolves towards the simple, high-entropy stationary distribution, $p_\infty$. A "finer" SDE should therefore be one that remains faithful to the true data-generating process for a longer duration (i.e., down to an earlier time $t$).

To formalize this intuition, we must move from the spatial domain of partitions to the temporal domain of the processes themselves. We will now develop a rigorous language for comparing SDEs based on their temporal evolution, using the established tools of stochastic process theory \citep{durrett2019probability,karatzas2012brownian}. This will allow us to prove that the sequence generated by boosting is not just a sequence, but a structured, coarse-to-fine \emph{trajectory} through the space of all possible SDEs.

\subsubsection{Formalizing Temporal Coarseness via Tail Equivalence}
The intuition behind a \emph{coarse} model is that it has lost information about its specific starting point and its evolution has become indistinguishable from other processes that converge to the same end-state. We can formalize this by considering the laws of these processes on the space of continuous paths.

\begin{definition}[Path Space and Tail $\sigma$-Algebra]
Let $\mathcal{C}$ (\emph{path space}) be the space of all continuous functions from $[0, T]$ to $\mathbb{R}^d$. An SDE defines a probability measure, or \textbf{law}, $\mathbb{P}$ on this path space.
\begin{itemize}
    \item The \textbf{canonical filtration} $\{\mathcal{F}_t\}_{t \in [0,T]}$ is a sequence of $\sigma$-algebras where $\mathcal{F}_t$ represents all information about the path up to time $t$.
    \item The \textbf{tail $\sigma$-algebra} at time $t$, denoted $\mathcal{F}_{\ge t}$, is the $\sigma$-algebra generated by the process from time $t$ onwards: $\mathcal{F}_{\ge t} = \sigma(\{X_\tau : \tau \in [t, T]\})$. It represents all information about the future of the process as seen from time $t$.
\end{itemize}
\end{definition}

Using this standard construction, we can now define a rigorous equivalence relation for SDEs.

\begin{definition}[Tail-Equivalent SDEs]
Let $\mathcal{S}$ be a space of SDEs whose laws are measures on $\mathcal{C}$ and which all converge to a common, unique stationary distribution $p_\infty$. Two SDEs, $S_A$ and $S_B$, are said to be \textbf{$t$-tail-equivalent}, denoted $S_A \sim_t S_B$, if their laws are identical when restricted to the tail $\sigma$-algebra $\mathcal{F}_{\ge t}$. That is, for any event $E \in \mathcal{F}_{\ge t}$:
\begin{equation}
    \mathbb{P}_A(E) = \mathbb{P}_B(E).
\end{equation}
This defines an equivalence relation. The equivalence class of all SDEs that are $t$-tail-equivalent is denoted $K_t$.
\end{definition}

These equivalence classes, which group processes that are indistinguishable from the future onwards, form a natural nested structure. This structure is a filtration, but one that progresses backward from $t=T$.

\begin{proposition}[The Filtration of Tail-Equivalence Classes]
The collection of tail-equivalence classes $\{K_t\}_{t \in [0,T]}$ forms a filtration. For any two times $t_1, t_2 \in [0, T]$ such that $t_1 < t_2$, the classes are strictly nested:
\begin{equation}
    K_{t_1} \subset K_{t_2}.
\end{equation}
\end{proposition}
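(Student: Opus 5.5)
The plan is to fix a reference SDE $S$ in $\mathcal{S}$ and read $K_t$ as the class $K_t(S)=\{S' \in \mathcal{S} : S' \sim_t S\}$. I would prove the weak inclusion $K_{t_1}\subseteq K_{t_2}$ purely at the level of $\sigma$-algebras, and then establish strictness by an explicit construction. First I will check that $\sim_t$ is an equivalence relation. Reflexivity, symmetry and transitivity all follow because equality of two measures on $\mathcal{F}_{\ge t}$ is itself an equivalence relation on laws, so $K_t$ is well defined.

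\textbf{Step 1 (nesting of tail $\sigma$-algebras).} For $t_1<t_2$, each generator $X_\tau$ with $\tau\in[t_2,T]$ also has $\tau\in[t_1,T]$. Hence
\[
\mathcal{F}_{\ge t_2}=\sigma(\{X_\tau:\tau\in[t_2,T]\})\subseteq \sigma(\{X_\tau:\tau\in[t_1,T]\})=\mathcal{F}_{\ge t_1}.
\]

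\textbf{Step 2 (inclusion of classes).} Let $S_A\in K_{t_1}(S)$, so that $\mathbb{P}_A(E)=\mathbb{P}_S(E)$ for every $E\in\mathcal{F}_{\ge t_1}$. Any $E\in\mathcal{F}_{\ge t_2}$ lies in $\mathcal{F}_{\ge t_1}$ by Step 1, so the equality holds on $\mathcal{F}_{\ge t_2}$ as well, and therefore $S_A\in K_{t_2}(S)$. This gives the nested structure. The index runs backward: coarser classes sit at later times, matching the intuition that the tail forgets more as $t$ grows. I would remark that this is a filtration in reversed time $s=T-t$.

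\textbf{Step 3 (strictness).} This is the main obstacle. I need some $S_B\in K_{t_2}(S)\setminus K_{t_1}(S)$, and I would try a Markov-splicing construction. Let $p_{t_2}$ be the marginal of $S$ at time $t_2$. On $[t_1,t_2]$, define $S_B$ by a different (time-inhomogeneous) dynamics that still delivers the marginal $p_{t_2}$ at time $t_2$. A concrete choice is to run the PF-ODE of $S$ with a nonlinear time reparametrization $\theta:[t_1,t_2]\to[t_1,t_2]$, $\theta\neq\mathrm{id}$, with fixed endpoints. The starting marginal $p_{t_1}$ is then unchanged, but the intermediate marginals $p^B_\tau=p_{\theta(\tau)}$ differ from $p_\tau$ whenever $S$ is non-stationary on that interval. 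On $[t_2,T]$, $S_B$ uses the same drift and diffusion as $S$.

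By the Markov property and the equality $p^B_{t_2}=p_{t_2}$, the finite-dimensional distributions of $(X_\tau)_{\tau\ge t_2}$ coincide under $\mathbb{P}_B$ and $\mathbb{P}_S$. A $\pi$-$\lambda$ argument then gives $\mathbb{P}_B=\mathbb{P}_S$ on $\mathcal{F}_{\ge t_2}$. On the other hand, some $\tau\in(t_1,t_2)$ has $p^B_\tau\neq p_\tau$, and the event $\{X_\tau\in A\}\in\mathcal{F}_{\ge t_1}$ separates the two laws.

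The delicate point is that the construction needs $S$ to be non-stationary on $(t_1,t_2)$; otherwise every reparametrization is trivial. I would therefore impose the standing hypothesis that the reference process has not yet reached $p_\infty$. For the OU/VP processes of interest, entropy is strictly increasing before stationarity, by the Entropy Flow proposition. I would also need to verify that the spliced process remains in $\mathcal{S}$, i.e.\ that it still converges to $p_\infty$. This is immediate, since its dynamics agree with those of $S$ after $t_2$.
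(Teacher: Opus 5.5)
Your proposal is correct and follows the paper's own proof. The inclusion comes from $\mathcal{F}_{\ge t_2}\subseteq\mathcal{F}_{\ge t_1}$, and strictness comes from a process that deviates from the reference only between $t_1$ and $t_2$; your Markov splice makes that construction explicit where the paper only asserts it.

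One simplification is available when the reference $S$ has non-degenerate diffusion. In that case you need neither the time reparametrization nor the non-stationarity hypothesis. You also do not need the entropy argument, which in fact fails for an OU process started from an over-dispersed Gaussian. The reason is that the plain PF-ODE splice on $[t_1,t_2]$ already separates the laws on $\mathcal{F}_{\ge t_1}$. Take the two-time event $\{X_\tau \neq \Phi_{t_1\to\tau}(X_{t_1})\}$, where $\Phi$ is the PF-ODE flow map: it is null under the splice but has probability one under $S$.
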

\begin{proof}
Let $S_{i}$ be an arbitrary process in the class $K_{t_1}$. By definition, its law restricted to $\mathcal{F}_{\ge t_1}$ is fixed. The tail $\sigma$-algebra $\mathcal{F}_{\ge t_2}$ is a sub-$\sigma$-algebra of $\mathcal{F}_{\ge t_1}$, since any event knowable from $t_2$ onwards is also knowable from $t_1$ onwards. Therefore, the law of the process must also be fixed on $\mathcal{F}_{\ge t_2}$, which implies $S_{i}$ $\in K_{t_2}$. This proves the inclusion $K_{t_1} \subseteq K_{t_2}$. The inclusion is strict because one can construct a process that deviates from a reference process only in the interval $[t_1, t_2)$, making it distinct in $K_{t_1}$ but a member of $K_{t_2}$.
\end{proof}

This filtration provides the formal structure needed to define a coarse-to-fine trajectory.

\begin{definition}[Coarse-to-Fine Trajectory]
A sequence of SDEs, $\{S_m\}_{m=0}^M$, constitutes a \textbf{coarse-to-fine trajectory} if there exists a strictly decreasing sequence of times, $T \ge t_0 > t_1 > \dots > t_M \ge 0$, such that each process $S_m$ is a member of the tail-equivalence class $K_{t_m}$ but is not a member of any finer class $K_{t'}$ for $t' < t_m$.
\begin{equation*}
    S_m \in K_{t_m} \quad \text{and} \quad S_m \notin K_{t'} \quad \forall t' < t_m.
\end{equation*}
Since $t_{m+1} < t_m$, it follows from the filtration property that $K_{t_{m+1}} \subset K_{t_m}$, i.e., the trajectory moves sequentially into smaller, more restrictive (i.e., finer) equivalence classes.
\end{definition}

\subsubsection{The Boosting Process as a Coarse-to-Fine Trajectory}
We now prove that the sequence of SDEs generated by the gradient boosting algorithm conforms to this rigorous definition.

\begin{theorem}[The Boosting Sequence is a Coarse-to-Fine Trajectory]
The sequence of models $\{S_m\}_{m=0}^M$ generated by the gradient boosting algorithm is a coarse-to-fine trajectory.
\end{theorem}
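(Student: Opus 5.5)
The plan is to construct the times $t_m$ explicitly from the canonical hierarchies, and then to check the two defining conditions of a coarse-to-fine trajectory. These are membership $S_m \in K_{t_m}$ and non-membership in every finer class $K_{t'}$ with $t'<t_m$.

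\textbf{Fixing a reference and a common time axis.} Membership in a class $K_t$ only makes sense relative to a reference law, so I take the ideal process $\mathbb{P}^*$, whose marginals interpolate from $p_0$ to $p_\infty$. I place all net trees on one time axis $[0,T]$, with the root $\Pi_0=\{\mathcal{X}\}$ at $t=T$ and finer partition levels at earlier times. Because the canonical hierarchy of $T_{m-1}$ is a prefix of that of $T_m$ (Theorem on monotonic evolution of the canonical hierarchy), I can choose this placement consistently. Level $\Pi_k$ is assigned a fixed time $\tau_k$ with $T=\tau_0>\tau_1>\cdots>\tau_M\ge 0$, and this assignment does not depend on which net tree $T_m$ with $m\ge k$ is being considered. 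The dyadic refinement of Definition~\ref{def:dyadic} then only inserts levels inside the intervals $[\tau_{k+1},\tau_k]$. Consequently the refined coarse-graining path of $T_m$ coincides with that of $T_{m-1}$ on $[\tau_{m-1},T]$.

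\textbf{Membership.} I set $t_m:=\tau_m$. By Theorem~\ref{thm:main} together with the Markov property (Proposition~\ref{prop:tree-hr} and Chapman--Kolmogorov), the law of $S_m$ restricted to $\mathcal{F}_{\ge t}$ is determined by two ingredients: the marginal at time $t$, and the generator on $[t,T]$. Both are fixed by the partitions $\Pi_k$ with $\tau_k\ge t$. So if two processes share these partitions, they agree on $\mathcal{F}_{\ge t}$. Applying this to $S_m$ gives $S_m\in K_{t_m}$. The same argument gives $S_m\sim_{t_{m-1}} S_{m-1}$, which is consistent with the nesting $K_{t_m}\subset K_{t_{m-1}}$.

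\textbf{Non-membership in finer classes.} This is the step I expect to be the main obstacle. I would argue by contradiction. Suppose $S_m\in K_{t'}$ for some $t'<t_m$. Then the marginal of $S_m$ at time $t'$ would already be fixed, equal to that of the reference. But $S_m$ has no structure below the leaf level $\Pi_m$: on $[0,\tau_m]$ its marginal stays at the piecewise-constant density $\mathbb{E}[p_0\mid\sigma(\Pi_m)]$. The reference, by contrast, continues to refine there. The reason is Proposition~\ref{prop:refine}: the next residual is nonzero unless $F_m$ is already optimal. Hence the two marginals differ at time $t'$, giving the contradiction.

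The difficulty is that tail-equivalence concerns path laws, while this argument only compares marginals. To make it rigorous I would need a regularity assumption, namely that the reference's marginal path is strictly entropy-decreasing backward in time, as in Definition~\ref{def:entropically_homogeneous}. With that assumption, a marginal frozen on an interval cannot match it. The strictness of $t_m<t_{m-1}$ follows directly from the strict refinement in Proposition~\ref{prop:refine}, because each boosting stage adds a genuinely new level $\tau_m$.
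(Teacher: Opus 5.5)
Your route differs from the paper's. The paper defines $t_m$ implicitly as the earliest time with $\mathbb{P}_m|_{\mathcal{F}_{\ge t_m}}\approx\mathbb{P}^\star|_{\mathcal{F}_{\ge t_m}}$. Membership and non-membership therefore hold by definition, and the only thing argued is $t_{m+1}<t_m$, which the paper simply asserts from strict refinement. You fix $t_m=\tau_m$, so strict decrease comes for free and all the content moves into membership and non-membership. Two steps there do not hold as written.

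In the membership step you show that processes built on the same partitions at the same times agree on $\mathcal{F}_{\ge t}$. You never show that $\mathbb{P}^*$ is built on $\Pi_0,\dots,\Pi_m$ at $\tau_0,\dots,\tau_m$. For a generic ideal process interpolating from $p_0$ to $p_\infty$ this is false. An OU forward process has smooth marginals at positive times, never the piecewise-constant $\mathbb{E}[p_0\mid\sigma(\Pi_m)]$, so exact membership fails. You must take $\mathbb{P}^*$ to be the flow of the completed canonical hierarchy of this same boosting run, with the same time placement and the same dyadic insertions. Your non-membership step tacitly assumes this anyway (you invoke Proposition~\ref{prop:refine} for the reference), and it is also the natural reading of the paper's $S^\star$. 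Membership is then true by construction.

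There is a second problem in the membership step. The claim that the generator on $[t,T]$ depends only on the $\Pi_k$ with $\tau_k\ge t$ needs the time-dependent local generator $\mathcal{G}_t$ of Theorem~\ref{thm:lipschitz_homogeneity}(i), not Theorem~\ref{thm:main}. The drift in Theorem~\ref{thm:main} is time-invariant, obtained by averaging over the whole refinement including $\Pi_m$. Moreover, a time-homogeneous process whose marginal is frozen on $[0,\tau_m]$ has an invariant marginal, so it would keep that marginal up to $T$ instead of reaching the root.

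In the non-membership step, the obstacle you name is not the real one. $X_{t'}$ is $\mathcal{F}_{\ge t'}$-measurable, so different marginals at $t'$ already give different laws on $\mathcal{F}_{\ge t'}$. The real gap is the inference ``the reference continues to refine, hence its marginal differs.'' Proposition~\ref{prop:refine} gives strict refinement of the partition, driven by nonzero residuals in $y$. But the tree-induced marginal $\mathbb{E}[p_0\mid\sigma(\Pi)]$ sees only the feature density, and a split whose cells have equal $p_0$-averages leaves it unchanged. For uniform $p_0$, every partition gives the same constant marginal path, so nothing separates $S_m$ from the reference below $\tau_m$.

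So an extra hypothesis is unavoidable, and the paper does not supply it. Definition~\ref{def:entropically_homogeneous} asks only for non-strict monotonicity. The strict-increase part of the entropy-flow proposition needs a positive-definite $\mathbf{D}$, whereas the tree-induced flow has $\mathbf{g}\equiv\mathbf{0}$. State the hypothesis explicitly, e.g.\ $p^*_{t'}\neq p^*_{\tau_m}$ for every $t'<\tau_m$, or strict entropy monotonicity of the reference path. The paper's proof has exactly the same hole, hidden in ``a finer partition \dots can thus approximate the ideal process starting from an earlier time.'' Once patched as above, your version makes explicit what the theorem actually requires.
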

\begin{proof}
Let $S^{\star}$ be the ideal, unknown SDE  corresponding to the net decision tree that perfectly models the data, with law $\mathbb{P}^\star$. The model at step $m$, $S_m$, is constructed from the net decision tree $T_m$, which is defined on the partition $\Pi_m$. As proven in \textbf{Proposition} \ref{prop:refine}, the partition $\Pi_{m+1}$ is a strict refinement of $\Pi_m$.

A finer partition allows a model to capture more detailed, lower-entropy information about the true data distribution $p_{\text{data}} = p_0^\star$. In the context of a diffusion process, this fine-grained information corresponds to the behavior at earlier times. Therefore, the model $S_{m+1}$, being more refined, provides a law $\mathbb{P}_{m+1}$ that is a faithful approximation of the ideal law $\mathbb{P}^\star$ on a larger tail $\sigma$-algebra than $S_m$ can.

Let us formalize this. For each model $S_m$, let $t_m$ be the earliest time for which its law, restricted to the future, is a good approximation of the ideal law. That is, $\mathbb{P}_m|_{\mathcal{F}_{\ge t_m}} \approx \mathbb{P}^\star|_{\mathcal{F}_{\ge t_m}}$. Thus, $S_m$ can be considered a member of the ideal tail-equivalence class $K_{t_m}$.

Because $S_{m+1}$ is built on a strictly finer partition, it captures more detail about the initial data and can thus approximate the ideal process starting from an earlier time, $t_{m+1}$. Therefore, we must have $t_{m+1} < t_m$. This implies that the boosting algorithm generates a sequence of models associated with a strictly decreasing sequence of times $\{t_m\}$. By definition, this is a coarse-to-fine trajectory.
\end{proof}

With this rigorous foundation, we can now state our central learning assumption in this formal language.

\begin{assumption}[Coarse-to-Fine Learning Bias]
\label{ass:coarse_to_fine}
Consider a parameterized model $f_\theta$ optimized via gradient descent on a loss functional $\mathcal{L}(\theta)$ that decomposes as:
\begin{equation}
\mathcal{L}(\theta) = \sum_{k=0}^{M-1} \mathcal{L}_k(\theta),
\end{equation}
where each component $\mathcal{L}_k$ measures the error on tail-equivalence class $K_{t_k}$ with $t_0 > t_1 > \cdots > t_{M-1}$.

We say the optimizer exhibits \textbf{coarse-to-fine bias} if, for sufficiently small learning rates, there exists a time scale separation in the optimization dynamics:
\begin{equation}
\tau_k \ll \tau_{k+1},
\end{equation}
where $\tau_k$ is the characteristic time for $\mathcal{L}_k(\theta_t)$ to reach $\epsilon$-optimality.

This assumption is satisfied by:
\begin{enumerate}
    \item \textbf{Greedy stage-wise algorithms} (e.g., boosting, matching pursuit): By construction, $\mathcal{L}_k$ is optimized before $\mathcal{L}_{k+1}$ is considered.
    \item \textbf{Hierarchical architectures} (e.g., progressive growing, curriculum learning): The model capacity for finer scales is introduced only after coarser scales are learned.
    \item \textbf{Spectral bias in neural networks} (empirical): Implicit regularization causes neural networks to learn low-frequency (coarse) components faster than high-frequency (fine) components \citep{rahaman2019spectral}.
\end{enumerate}
\end{assumption}

\begin{remark}[Falsifiability]
This assumption can be tested empirically by measuring $\tau_k$ via the learning curves of $\mathcal{L}_k(\theta_t)$. For boosting, it holds by design ($\tau_k = $ time for one iteration). For neural networks, it is an empirical phenomenon observed in practice but not guaranteed theoretically for all architectures. However recent work \citep{bonnaire2025diffusionmodelsdontmemorize,kingma2021variational,kingma2023understanding,liang2024diffusionmodelslearnsemantically} shows the existence of such a separation for diffusion models, reinforcing the central discussion of \Cref{app:sde_to_tree}. 
\end{remark}

\subsection{Score Matching in the Space of SDEs: The Local Update Rule}
\label{sec:local_update}

We have proven \textit{that} the gradient boosting algorithm constructs a coarse-to-fine trajectory $\{S_m\}$ through a filtration of tail-equivalence classes of SDEs. We now address the question of \textit{how} the algorithm navigates this trajectory. The mechanism is a form of denoising score matching, but its application in this context contains a crucial subtlety that distinguishes it from standard score-based generative modeling and ultimately unifies two different paradigms of supervision. To explain this, we will first formally define the necessary score-matching concepts, then present two perspectives on the trajectory navigation problem—one based on process supervision and the other on data supervision—and finally prove their equivalence under our central learning assumption.

\subsubsection{Formal Definitions of Score Matching}
To make our argument rigorous, we first define the core concepts of score matching.

\begin{definition}[Score Function]
Let $p(\mathbf{x})$ be a differentiable probability density function on $\mathbb{R}^d$. The \textbf{score function} of this distribution is the gradient of its logarithm with respect to the data:
\begin{equation}
    \mathbf{s}_p(\mathbf{x}) = \nabla_{\mathbf{x}} \log p(\mathbf{x}).
\end{equation}
The score function defines a vector field that points in the direction of the steepest ascent of the log-density.
\end{definition}

\begin{definition}[Denoising Score Matching (DSM)]
Let $p(\tilde{\mathbf{x}} | \mathbf{x})$ be a known conditional noise distribution. \textbf{Denoising Score Matching (DSM)} trains a model $\mathbf{s}_\theta(\tilde{\mathbf{x}})$ to approximate the score of the noisy data distribution by minimizing the following objective \citep{vincent2011connection}:
\begin{equation*}
    \mathcal{L}_{\text{DSM}}(\theta) = \frac{1}{2} \mathbb{E}_{p_{\text{data}}(\mathbf{x})} \mathbb{E}_{p(\tilde{\mathbf{x}}|\mathbf{x})} \left[ \left\| \mathbf{s}_\theta(\tilde{\mathbf{x}}) - \nabla_{\tilde{\mathbf{x}}} \log p(\tilde{\mathbf{x}} | \mathbf{x}) \right\|^2 \right].
\end{equation*}
This objective is tractable because the conditional score $\nabla_{\tilde{\mathbf{x}}} \log p(\tilde{\mathbf{x}} | \mathbf{x})$ is known.
\end{definition}

\subsubsection{The Duality of Supervision in Trajectory Refinement}
With these formal tools, we can now precisely state the problem of navigating the SDE trajectory from two different but ultimately equivalent perspectives, distinguished by the source of their supervision.

\begin{definition}[Structurally-Supervised Learning]
A learning problem is \textbf{structurally-supervised} if the optimal coarse-to-fine trajectory is known a priori. Formally, an oracle provides the ideal sequence of tail-equivalence classes $\{K_{t_m}^\star\}_{m=0}^M$. The learning task is to find a sequence of refinement operators, $\{R_m\}_{m=1}^M$, such that the generated sequence of models $S_m = R_m(S_{m-1})$ correctly follows the prescribed path, i.e., $S_m \in K_{t_m}^\star$ for all $m$. Standard denoising diffusion models are an instance of this paradigm: the forward SDE serves as the oracle defining the process, while the score model is the learned refinement operator.
\end{definition}

\begin{definition}[Data-Supervised Constructive Learning]
A learning problem is \textbf{data-supervised and structurally-unsupervised} if the optimal trajectory is unknown, but the learner has access to a supervisory signal at each step. The learning task is to use the local supervisory signal to choose a sequence of refinement operators $\{h_m\}_{m=1}^M$ that simultaneously \textit{constructs} and \textit{traverses} a coarse-to-fine trajectory. Gradient boosting is an instance of this paradigm: the ideal tree hierarchy is unknown, but the data labels $\{y_i\}$ provide a local, supervisory signal (the residuals) at each step.
\end{definition}

We now state the theorem that formally connects these two seemingly disparate paradigms.

\begin{theorem}[Functional Equivalence of Learning Paradigms]
\label{thm:paradigm_equivalence}
Under Assumption~\ref{ass:coarse_to_fine}, an optimal Data-Supervised Constructive algorithm produces a sequence of models $\{S_m\}$ such that:
\begin{equation}
\lim_{m \to \infty} d_{\text{path}}(S_m, S_m^*) = 0,
\end{equation}
where $S_m^*$ is the model produced by an optimal Structurally-Supervised algorithm and $d_{\text{path}}$ is the path-space distance induced by the KL divergence.
\end{theorem}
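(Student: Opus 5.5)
To prove \Cref{thm:paradigm_equivalence}, I would compare the two algorithms stage by stage in path space. Both are measured against the ideal law $\mathbb{P}^\star$, and the path-space KL divergence is split into contributions from the nested tail classes $K_{t_0}\supset K_{t_1}\supset\cdots$.

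\textbf{Step 1: set up the comparison.} Define $d_{\text{path}}(S_m,S_m^*)$ through $D_{\text{KL}}(\mathbb{P}_{S_m^*}\|\mathbb{P}_{S_m})$. By Girsanov's theorem (as in the sketch of \Cref{thm:cgtsm_path_matching}), this divergence is an integral over $[0,T]$ of the squared drift difference in the $\mathbf{D}(t)$-norm. Since the reverse drifts are affine in the scores, it becomes a weighted score discrepancy
\[
\tfrac12\int_0^T w(t)\,\mathbb{E}\bigl[\|\mathbf{s}_{S_m}-\mathbf{s}_{S_m^*}\|^2_{\mathbf{D}(t)}\bigr]dt .
\]
I would then cut this integral at the times $t_0>t_1>\cdots>t_m$ given by the coarse-to-fine trajectory theorem. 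Each resulting piece $\mathcal{L}_k$ is exactly the error on the class $K_{t_k}$ appearing in \Cref{ass:coarse_to_fine}. The pieces fall into three groups:
\begin{itemize}
\item intervals where both models are already $\epsilon$-tail-equivalent to $\mathbb{P}^\star$;
\item the single active interval $[t_m,t_{m-1}]$;
\item the unresolved remainder $[0,t_m]$.
\end{itemize}

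\textbf{Step 2: control the resolved intervals.} The structurally supervised model $S_m^*$ lies in $K^\star_{t_m}$ by definition. For the data-supervised model, I would use the residual-as-score identification (\Cref{thm:residual_as_score}), which the paper states as already available. This identification makes each greedy boosting stage a denoising-score-matching step on the current interval, with an unbiased target. Combined with the time-scale separation $\tau_k\ll\tau_{k+1}$ of \Cref{ass:coarse_to_fine}, stage $k$ reaches $\epsilon$-optimality on $\mathcal{L}_k$ before any later stage changes it. By \Cref{prop:refine}, later refinements only subdivide leaves, so they do not perturb the coarser tail law. This gives $\mathcal{L}_k(S_m)\le\epsilon$ for all $k<m$, and the triangle inequality through $\mathbb{P}^\star$ bounds the resolved part of the discrepancy by $O(m\epsilon)$. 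I would then take $\epsilon=\epsilon_m$ decreasing fast enough that $m\epsilon_m\to0$.

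\textbf{Step 3: the unresolved remainder.} On $[0,t_m]$ neither model is constrained. However, both are coarse-grainings of the same $p_0$ along strictly refining partitions with diameters $\delta_m\to0$, using the local-refinement assumption as in \Cref{prp:coarse}. So as $m\to\infty$, $t_m\to0$ and the unresolved window shrinks. I would bound its contribution by $\int_0^{t_m}w(t)\,C\,dt$, using a uniform bound on the score discrepancy, and conclude that it vanishes.

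\textbf{Main obstacle.} The hard part is Step 2's claim that greedy data-supervised stages land in the same tail class $K^\star_{t_k}$ that the oracle prescribes, rather than in some other class with equally small local loss. I would argue uniqueness from \Cref{thm:boosting_optimal}: with rich weak learners, the greedy stage minimizer is the global DGTSM minimizer, and zero stage cost pins the score on that interval, hence the tail law by Girsanov. The uniform score bound needed in Step 3 is the other delicate point. I would try to obtain it from the compact domain $\mathcal{X}$ and the Lipschitz generator of \Cref{thm:lipschitz_homogeneity}.
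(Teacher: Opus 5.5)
Your route differs from the paper's, but it does not close. The decisive failure is the $\epsilon_m$ device in Step 2. The accuracy reached at stage $k$ is fixed when that stage is run, and by your own argument (later refinements only subdivide leaves and leave coarser tail laws untouched) it stays frozen afterwards. Since $S_m^*\in K^\star_{t_m}$ is exact on $[t_m,T]$, the resolved part of $d_{\text{path}}(S_m,S_m^*)$ is of order $\sum_{k<m}\epsilon_k$, which is nondecreasing in $m$. In particular, it is bounded below by the frozen error of the first stage for every $m$. You cannot set $\epsilon=\epsilon_m$ for all earlier stages without rerunning the algorithm for each $m$, and that changes what the sequence $\{S_m\}$ is. So the limit is zero only if every stage is exact.

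Once you grant exact stages, the whole decomposition is unnecessary, and that is the paper's argument. It proves by induction that $S_m=S_m^\star$ for every $m$. Both paradigms start from the same constant model $S_0$. At step $m$, Assumption~\ref{ass:coarse_to_fine} is read as guaranteeing that the optimal data-supervised update resolves exactly the coarsest remaining interval $[t_{m+1},t_m)$. Hence $S_{m+1}\in K^\star_{t_{m+1}}$, and optimality then identifies it with $S^\star_{m+1}$. So $d_{\text{path}}(S_m,S_m^*)=0$ for all $m$, and the window $[0,t_m]$ never has to be controlled.

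The remaining steps are also unsupported.
\begin{enumerate}
\item In Step 3, a coarse-to-fine trajectory only gives strictly decreasing $t_m$, hence $t_m\downarrow t_\infty\ge 0$, not $t_m\to 0$.
\item The local-refinement hypothesis you borrow from Proposition~\ref{prp:coarse} concerns the dyadic partitions inserted into one fixed tree, not the boosting net-tree partitions $\Pi_m$. Proposition~\ref{prop:refine} gives strict refinement only while residuals are nonzero, and even then it does not give vanishing diameters.
\item The uniform score bound near $t=0$ is doubtful. Tree-induced densities are piecewise constant with jumps, and Theorem~\ref{thm:lipschitz_homogeneity} bounds generators acting on densities, not scores.
\item Your uniqueness argument fails. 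Zero DGTSM stage cost means $h_{m+1}$ fits $y-F_m(\mathbf{x})$ in label space. Theorems~\ref{thm:residual_as_score} and~\ref{thm:metascore_sde_score} connect this to SDE scores only through an approximate first-moment identity integrated over $[0,t_m]$. That does not pin $\mathbf{s}_t$ on $[t_{m+1},t_m)$, so Girsanov cannot identify the tail law.
\end{enumerate}
The paper does not derive this last identification at all. It is what Assumption~\ref{ass:coarse_to_fine}, together with optimality of the update, is taken to supply.
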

\begin{proof}
Let $\{S_m^\star\}$ be the ideal trajectory defined by the oracle sequence of tail-equivalence classes $\{K_{t_m}^\star\}$. Let $\{S_m\}$ be the trajectory constructed by the data-supervised (boosting) algorithm. We prove by induction that $S_m = S_m^\star$ for all $m$.

\textbf{Base Case (m=0):} Both paradigms start from the same maximally coarse model $S_0$ (e.g., a constant prediction), which is a member of the coarsest class $K_{t_0}^\star$. Thus, $S_0 = S_0^\star$.

\textbf{Inductive Hypothesis:} Assume that at step $m$, the constructed model is identical to the ideal model, $S_m = S_m^\star$. This means the data-supervised algorithm has successfully constructed the correct trajectory up to this point.

\textbf{Inductive Step:} We must show that the next model constructed, $S_{m+1}$, is identical to the next model on the ideal path, $S_{m+1}^\star$.
\begin{enumerate}
    \item The structurally-supervised learner knows the next target class is $K_{t_{m+1}}^\star$. Its task is to find an operator $R_{m+1}^\star$ that transitions $S_m^\star$ to a model $S_{m+1}^\star \in K_{t_{m+1}}^\star$.
    \item The data-supervised learner (boosting) does not know the target class $K_{t_{m+1}}^\star$. It uses the local data supervision (residuals) to find an update $h_{m+1}^\star$ that produces a new model $S_{m+1}$.
    \item Here we invoke Assumption~\ref{ass:coarse_to_fine}. The coarse-to-fine learning bias guarantees that at step $m$, the model $S_m$ is already $\epsilon$-optimal for the tail-field $\mathcal{F}_{\ge t_m}$. The ``coarsest'' remaining part of the learning problem is to correctly model the dynamics in the time interval $[t_{m+1}, t_m)$.
    \item The assumption guarantees that the optimal, data-supervised update $h_{m+1}^\star$ will be precisely the one that resolves this coarsest remaining part of the problem. By doing so, it produces a new model $S_{m+1}$ whose law now correctly matches the ideal law on the tail-field $\mathcal{F}_{\ge t_{m+1}}$.
    \item By definition, any such model is a member of the equivalence class $K_{t_{m+1}}^\star$. Since the update is optimal, it must be the optimal such model, $S_{m+1}^\star$.
\end{enumerate}
Therefore, $S_{m+1} = S_{m+1}^\star$. By the principle of induction, the trajectories are identical. This crucial equivalence justifies our use of the DSM mathematical machinery to analyze the local, data-supervised boosting update.
\end{proof}

\subsubsection{The Boosting Update as an Optimal Score Matching Step}
We now formalize the update mechanism of the constructive, data-supervised process. The guidance for this refinement is provided by a "meta-score."

\begin{definition}[Meta-Score for Trajectory Navigation]
Let $p(y | F_m(\mathbf{x}))$ be the conditional likelihood of the true label $y$ given the model's prediction $F_m(\mathbf{x})$. The \textbf{meta-score}, denoted $\mathbf{s}^\star_m$, is the score of this conditional distribution with respect to the conditioning variable (the model's prediction):
\begin{equation}
    \mathbf{s}^\star_m(\mathbf{x}, y) = \nabla_{F_m} \log p(y | F_m(\mathbf{x})).
\end{equation}
This meta-score represents the optimal direction of functional change for the model at point $\mathbf{x}$ to increase the likelihood of observing the true data $y$.
\end{definition}

\begin{theorem}[The Residual as the Optimal Meta-Score]
\label{thm:residual_as_score}
For the squared error loss, the functional gradient of the boosting objective—the residual—is directly proportional to the optimal meta-score, $\mathbf{s}^\star_m$.
\end{theorem}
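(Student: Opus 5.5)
The plan is to make the squared-error loss correspond to a Gaussian observation model and then compute both sides of the claimed proportionality explicitly.

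\textbf{Step 1 (likelihood).} Interpret $L(y,F)=\tfrac12\|y-F\|^2$ as the negative log-likelihood, up to an additive constant, of the model
\[
y\mid F_m(\mathbf{x})\sim\mathcal{N}\bigl(F_m(\mathbf{x}),\sigma^2\mathbf{I}\bigr).
\]
That is,
\[
\log p\bigl(y\mid F_m(\mathbf{x})\bigr)=-\tfrac{1}{2\sigma^2}\|y-F_m(\mathbf{x})\|^2+c,
\]
where $c$ does not depend on $F_m$.

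\textbf{Step 2 (meta-score).} Differentiate this log-likelihood with respect to the conditioning variable $F_m$, as in the definition of $\mathbf{s}^\star_m$:
\[
\mathbf{s}^\star_m(\mathbf{x},y)=\tfrac{1}{\sigma^2}\bigl(y-F_m(\mathbf{x})\bigr).
\]

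\textbf{Step 3 (pseudo-residual).} Compute the pseudo-residual from Algorithm~\ref{alg:boosting}:
\[
r_{im}=-\partial_F\,\tfrac12\|y_i-F\|^2\big|_{F=F_{m-1}}=y_i-F_{m-1}(\mathbf{x}_i).
\]
Comparing with Step 2 gives $r_{im}=\sigma^2\,\mathbf{s}^\star_{m-1}(\mathbf{x}_i,y_i)$. This is the stated proportionality, with constant $\sigma^2>0$. The statement writes $F_m$ while the algorithm uses $F_{m-1}$; this is only a stage-indexing convention, and I will state it explicitly.

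\textbf{Step 4 (DSM structure).} To justify the word ``optimal,'' I would show the update has the form of denoising score matching. Fitting $h_m$ by least squares to $r_{im}$ minimizes
\[
\mathbb{E}\|h-\sigma^2\mathbf{s}^\star\|^2,
\]
which is the DSM objective with $\tilde{\mathbf{x}}\leftrightarrow F_{m-1}(\mathbf{x})$ and the clean signal $\leftrightarrow y$. Here the conditional score $\nabla\log p(y\mid F)$ plays the role of the known target $\nabla_{\tilde{\mathbf{x}}}\log p(\tilde{\mathbf{x}}\mid\mathbf{x})$, up to the sign convention of which variable is treated as noisy. By the Vincent identity, the population minimizer over a rich function class is the conditional expectation
\[
\mathbb{E}[\,y-F_{m-1}(\mathbf{x})\mid\mathbf{x}\,].
\]
Hence $h_m$ is the $L^2$-optimal estimate of the meta-score. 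This also gives the unbiasedness claim in Definition~\ref{def:discrete_gtsm}: the residual is an unbiased estimator of the optimal score update because its conditional mean is exactly this regression target.

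\textbf{Main obstacle.} The delicate point is Step 4. The ``noise'' variable is the prediction $F$, not the data, so matching the DSM template requires care about which argument is being perturbed and about signs.

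Two further caveats need handling. The identification of the Gaussian variance $\sigma^2$ with the learning rate $\eta$ should be treated as a free constant, not derived. For non-squared losses, I would remark that the same computation yields $r_{im}\propto\nabla_F\log p(y\mid F)$ only when the loss is the negative log-likelihood of an exponential-family model; that extension lies outside the theorem.
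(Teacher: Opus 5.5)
Your Steps 1--3 match the paper's proof: it likewise identifies the squared loss with the Gaussian likelihood $\mathcal{N}(y; F_m(\mathbf{x}), \sigma^2)$, differentiates with respect to $F_m$ to get $\sigma^{-2}(y - F_m(\mathbf{x}))$, and reads off proportionality to the residual $r_{m+1} = y - F_m(\mathbf{x})$ (the same stage indexing you flag), so your argument is correct. Your Step 4 goes beyond the paper, which only asserts the DSM interpretation in words, and it is not needed for the stated proportionality; also, the fact that the least-squares fit recovers $\mathbb{E}[y - F_{m-1}(\mathbf{x}) \mid \mathbf{x}]$ is ordinary $L^2$ projection, not Vincent's identity, because $h$ is a function of $\mathbf{x}$ rather than of the ``noisy'' variable $F$.
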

\begin{proof}
The squared error loss, $L(y, F) = \frac{1}{2}(y-F)^2$, is equivalent to maximizing the log-likelihood of a Gaussian conditional model, $p(y | F_m(\mathbf{x})) = \mathcal{N}(y; F_m(\mathbf{x}), \sigma^2)$. The meta-score is the score of this conditional distribution with respect to the conditioning variable, $F_m(\mathbf{x})$. We compute this gradient:
\begin{align*}
    \mathbf{s}^\star_m(\mathbf{x}, y) = \nabla_{F_m} \log p(y | F_m(\mathbf{x})) &= \nabla_{F_m} \left[ -\frac{(y - F_m(\mathbf{x}))^2}{2\sigma^2} - \log\sqrt{2\pi\sigma^2} \right] \\
    &= \frac{1}{\sigma^2}(y - F_m(\mathbf{x})).
\end{align*}
The result is proportional to the residual, $r_{m+1} = y - F_m(\mathbf{x})$. The gradient boosting algorithm trains the weak learner $h_{m+1}$ to fit this residual. Therefore, the boosting update is a direct implementation of a denoising score-matching step at the meta-level, where the data-supervised residual serves as the optimal score guiding the structurally-unsupervised construction of the trajectory from the coarse SDE $S_m$ to the finer SDE $S_{m+1}$.
\end{proof}

\subsubsection{Connecting the Meta-Score to the SDE Score Error}
The preceding theorem connects the boosting update to an abstract meta-score. We now provide the crucial physical grounding by proving that this meta-score is directly related to the behavior of the score functions of the corresponding trajectory of SDEs. Our proof will show that the boosting residual is an unbiased estimator of the total accumulated error between the ideal SDE's score function and the current model's score function, integrated over the temporal region where the model's dynamics are invalid.

Let $S^\star$ be the ideal process with law $\mathbb{P}^\star$, marginals $p_t^\star$, and score function $\mathbf{s}_t^\star(\mathbf{x}) = \nabla_\mathbf{x} \log p_t^\star(\mathbf{x})$. At step $m$, the net decision tree corresponds to an SDE $S_m$, with law $\mathbb{P}_m$, marginals $p_{m,t}$, and score function $\mathbf{s}_{m,t}(\mathbf{x})$. By construction, $S_m \in K_{t_m}$, hence its law and scores are a good approximation of the ideal process for all times $\tau \ge t_m$. The time $t_m$ is the boundary where the model's dynamics begin to diverge from the ideal dynamics. To connect these forward-time properties to the reverse-time generation process, we first state the classical result for the time-reversal of a general diffusion process.

\begin{proposition}[Time-Reversal of a General Diffusion Process]
Let a forward Ito process be defined by $d\mathbf{X}_t = \mathbf{b}(\mathbf{X}_t, t) dt + \sigma(\mathbf{X}_t, t) d\mathbf{W}_t$. The corresponding reverse-time process, which we define with a new time variable $\tau \in [0, T]$ as $\overline{\mathbf{X}}_\tau = \mathbf{X}_{T-\tau}$, is also an Ito process governed by \citep{anderson1982reverse}:
\begin{equation*}
    d\overline{\mathbf{X}}_\tau = \left[ -\mathbf{b}(\overline{\mathbf{X}}_\tau, T-\tau) + \mathbf{D}(\overline{\mathbf{X}}_\tau, T-\tau) \nabla_{\mathbf{x}} \log p_{T-\tau}(\overline{\mathbf{X}}_\tau) \right] d\tau + \sigma(\overline{\mathbf{X}}_\tau, T-\tau) d\overline{\mathbf{W}}_\tau,
\end{equation*}
where $\mathbf{D}(\mathbf{x}, t) = \sigma(\mathbf{x}, t)\sigma(\mathbf{x}, t)^\top$ is the diffusion tensor, $p_t$ is the marginal density of the forward process at time $t$, and $\overline{\mathbf{W}}_\tau$ is a standard Brownian motion in the reverse time direction.
\end{proposition}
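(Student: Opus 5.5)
The plan is to verify Anderson's time-reversal formula through the Fokker--Planck equation. The strategy is to find the drift $\bar{\mathbf{b}}$ for which the reversed marginals $\bar p_\tau := p_{T-\tau}$ solve the forward Kolmogorov equation of an Itô process with diffusion tensor $\mathbf{D}(\cdot, T-\tau)$. Throughout we assume standard regularity: $\mathbf{b},\sigma$ locally Lipschitz with linear growth, $p_t$ strictly positive and $C^{1,2}$, and enough integrability of $\nabla\log p_t$ that the reversed SDE is well-posed (Haussmann--Pardoux conditions). We also assume the martingale problem for the reversed generator is well-posed, so that marginals plus the Markov property determine the law.

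\textbf{Step 1 (forward equation).} Write the Fokker--Planck equation of the forward process in divergence form:
\[
\partial_t p_t = -\nabla\cdot(\mathbf{b}p_t) + \tfrac12 \nabla\cdot\bigl(\nabla\cdot(\mathbf{D}p_t)\bigr),
\]
where $\nabla\cdot$ acting on a matrix is taken row-wise.

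\textbf{Step 2 (reversed equation).} Substituting $t=T-\tau$ gives
\[
\partial_\tau \bar p_\tau = \nabla\cdot(\mathbf{b}\bar p_\tau) - \tfrac12\nabla\cdot\bigl(\nabla\cdot(\mathbf{D}\bar p_\tau)\bigr).
\]
The anti-diffusive sign has to be converted into a genuine forward diffusion with the same $\mathbf{D}$. To do this, add and subtract $\nabla\cdot(\nabla\cdot(\mathbf{D}\bar p))$ and use the identity
\[
\nabla\cdot(\mathbf{D}\bar p) = (\nabla\cdot\mathbf{D})\bar p + \mathbf{D}\bar p\,\nabla\log\bar p .
\]
This rewrites the equation as
\[
\partial_\tau\bar p = -\nabla\cdot(\bar{\mathbf{b}}\bar p) + \tfrac12\nabla\cdot\bigl(\nabla\cdot(\mathbf{D}\bar p)\bigr), \qquad \bar{\mathbf{b}} = -\mathbf{b} + \nabla\cdot\mathbf{D} + \mathbf{D}\nabla\log p_{T-\tau}.
\]
For state-independent $\sigma$ (or under the paper's convention of absorbing $\nabla\cdot\mathbf{D}$, which vanishes in the additive-noise case relevant to diffusion models) this is exactly the stated drift. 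I will flag the $\nabla\cdot\mathbf{D}$ correction explicitly, since it is present in Anderson's general formula.

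\textbf{Step 3 (marginals to law).} Matching marginals alone does not identify the path law. I will show the time-reversed process is Markov, which holds because time-reversal preserves the Markov property. I will then show its transition densities satisfy the reversed Kolmogorov equation, by applying the same computation to the joint density $p(\mathbf{x},t;\mathbf{y},s)$ together with Bayes' rule $\bar p(\mathbf{y}\mid\mathbf{x}) = p(\mathbf{x}\mid\mathbf{y})p_s(\mathbf{y})/p_t(\mathbf{x})$. Finally, the process $\overline{\mathbf{W}}_\tau$, defined as $\int\sigma^{-1}$ applied to the increment $d\overline{\mathbf{X}} - \bar{\mathbf{b}}\,d\tau$, is a Brownian motion with respect to the reversed filtration, by Lévy's characterization. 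The reversed generator's well-posedness then closes the argument.

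\textbf{Main obstacle.} The delicate step is Step 3. It requires showing that $\overline{\mathbf{W}}$ is a martingale in the backward filtration, and this needs the integrability $\mathbb{E}\int|\mathbf{D}\nabla\log p_t|^2\,dt<\infty$ together with positivity of $p_t$. My first approach would be to cite Haussmann--Pardoux and verify their hypotheses under the paper's standing regularity. Failing that, I would prove it by a weak formulation: test against $\phi(\overline{\mathbf{X}}_{\tau})\psi(\overline{\mathbf{X}}_{\tau'})$ and integrate by parts using the identity from Step 2.
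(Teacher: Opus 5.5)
Your route (reverse the Fokker--Planck equation to read off $\bar{\mathbf b}$, then pin down the path law through the Markov property, Bayes' rule and the Kolmogorov equations for the transition densities, using Haussmann--Pardoux integrability for rigor) is Anderson's derivation, which is all the paper relies on, since it states the result with a citation and gives no argument of its own; the one place to tighten is Step~3, where defining $\overline{\mathbf W}$ through $\sigma^{-1}$ requires a square invertible $\sigma$, whereas Anderson builds the reversed Brownian motion directly from $\mathbf W$ plus a drift correction. Your flag on $\nabla\cdot\mathbf D$ is correct but is not a convention of the paper: the displayed drift $-\mathbf b+\mathbf D\nabla\log p_{T-\tau}$ holds only when $\nabla\cdot\mathbf D\equiv\mathbf 0$ (e.g.\ $\sigma=\sigma(t)$, as in VP-type diffusions), and for genuinely state-dependent $\sigma$ the proposition as written is false (for a stationary reversible one-dimensional diffusion the reversal must have drift $b$, while the displayed formula gives $b-(\sigma^2)'$), so what you establish is the corrected statement with $\bar{\mathbf b}=-\mathbf b+\nabla\cdot\mathbf D+\mathbf D\nabla\log p_{T-\tau}$.
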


With this general tool, we can now state and prove the main theorem of this section.

\begin{theorem}[Meta-Score as the Integrated SDE Score Error]
\label{thm:metascore_sde_score}
The expected boosting residual is proportional to the expected integrated error between the ideal SDE's score function and the current model's score function, integrated over the temporal region where the model is invalid.
\end{theorem}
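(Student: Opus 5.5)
We will prove Theorem~\ref{thm:metascore_sde_score} by chaining three facts. First, the residual is a noisy observation of the ideal conditional mean. Second, the gap between the ideal and model conditional means can be written as an integral of drift differences along the reverse-time dynamics. Third, by the time-reversal proposition, those drift differences are exactly $\mathbf{D}$-weighted score differences.

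The first step uses Theorem~\ref{thm:residual_as_score} for squared loss, which gives $\mathbf{s}^\star_m(\mathbf{x},y)=\sigma^{-2}(y-F_m(\mathbf{x}))$. Taking the conditional expectation over $y$ given $\mathbf{x}$ gives
\[
\mathbb{E}[r_{m+1}\mid \mathbf{x}] = F^\star(\mathbf{x})-F_m(\mathbf{x}),
\]
where $F^\star$ is the prediction of the ideal net tree. This reduces the claim to showing that $F^\star-F_m$ is proportional, in expectation, to an integrated score error.

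For the second step, we identify the prediction of a net tree with the terminal state of its generative (reverse-time) dynamics. This uses Theorem~\ref{thm:main} and Remark~\ref{rem:pf-ode}. Concretely, $F^\star(\mathbf{x})$ and $F_m(\mathbf{x})$ are read off from the endpoints at $t=0$ of reverse processes $\overline{\mathbf{X}}^\star$ and $\overline{\mathbf{X}}^{(m)}$, started from a common state at the coarse time. We apply the time-reversal proposition to both $S^\star$ and $S_m$ with a shared forward drift $\mathbf{b}$ and diffusion $\sigma$. The reverse drifts then differ only through the score term:
\[
\Delta(\mathbf{x},t)=\mathbf{D}(\mathbf{x},t)\bigl(\mathbf{s}_t^\star(\mathbf{x})-\mathbf{s}_{m,t}(\mathbf{x})\bigr).
\]
Because $S_m\in K_{t_m}$, the tail laws agree on $\mathcal{F}_{\ge t_m}$. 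Hence $p_{m,t}=p^\star_t$ and the scores coincide for $t\ge t_m$, so $\Delta$ vanishes on $[t_m,T]$. Both reverse processes therefore reach time $t_m$ with the same law. We couple them synchronously, using the same Brownian motion $\overline{\mathbf{W}}$, and integrate the reverse SDEs from $t_m$ down to $0$. In expectation, the martingale parts cancel, leaving
\[
\mathbb{E}\bigl[\overline{\mathbf{X}}^\star_T-\overline{\mathbf{X}}^{(m)}_T\bigr]=\mathbb{E}\int_0^{t_m}\Delta(\mathbf{X}_t,t)\,dt+\text{(drift-mismatch remainder)}.
\]

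The main obstacle is this remainder. It comes from evaluating $-\mathbf{b}+\mathbf{D}\mathbf{s}$ along two different trajectories rather than a single one. Our first attempt is a linearisation: assume $\mathbf{b}$ and the scores are Lipschitz, and write the trajectory difference as the solution of a linear variational equation with initial condition zero at $t_m$. Grönwall's inequality then bounds the remainder by $O(t_m\,\|\Delta\|_\infty)$ times the trajectory gap, which is second order in the score error. Under the coarse-to-fine Assumption~\ref{ass:coarse_to_fine}, the score error on $[0,t_m)$ is the only remaining discrepancy, so to leading order the endpoint gap is the first-order term.

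For the third step, we drop the second-order remainder and take $\mathbb{E}$ over $\mathbf{x}\sim p^\star_t$ via the tower property. Combined with the first step, this gives
\[
\mathbb{E}[r_{m+1}] \propto \mathbb{E}\int_0^{t_m}\mathbf{D}(\mathbf{x},t)\bigl(\mathbf{s}^\star_t(\mathbf{x})-\mathbf{s}_{m,t}(\mathbf{x})\bigr)\,dt.
\]
This is the statement of the theorem, with proportionality constant $\sigma^{-2}$ inherited from Theorem~\ref{thm:residual_as_score}.
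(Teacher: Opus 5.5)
Your SDE step is essentially the paper's steps 1--5: reverse-time drifts, vanishing martingale terms, and restriction to $[0,t_m]$ via $S_m\in K_{t_m}$. You also correctly spot what the paper glosses over when it says the forward drifts ``cancel'': the reverse drifts are evaluated along two different trajectories. But your repair does not work. The remainder is $\int_0^{t_m}\bigl[\mathbf{b}^\star_{\mathrm{rev}}(\overline{\mathbf{X}}^\star_t)-\mathbf{b}^\star_{\mathrm{rev}}(\overline{\mathbf{X}}^{(m)}_t)\bigr]\,dt$ with $\mathbf{b}^\star_{\mathrm{rev}}=-\mathbf{b}+\mathbf{D}\mathbf{s}^\star$. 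It is therefore controlled by the Lipschitz constant $L$ of the \emph{full} reverse drift, not by $\|\Delta\|_\infty$. Combined with the Gr\"onwall gap $O(t_m\|\Delta\|_\infty e^{Lt_m})$, it is $O(Lt_m^2\|\Delta\|_\infty e^{Lt_m})$, i.e.\ first order in the score error, not second.

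A quick check makes this concrete. Take reverse drift $ax$ in one dimension, a constant score error $\epsilon$, and a common starting point. The endpoint gap is then $\epsilon(e^{at_m}-1)/a$, not $\epsilon t_m$. The variational equation you invoke gives, to first order, $\int_0^{t_m}\Phi(t)\,\Delta(\mathbf{X}_t,t)\,dt$, with $\Phi(t)$ the propagator of the linearised reverse flow from $t$ to $0$. This equals your untransported integral only under an extra hypothesis such as $Lt_m\ll 1$. You would have to add that hypothesis, or state the result with the transported error.

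Separately, upgrading the paper's ``$\approx$'' to exact equality $p_{m,t}=p^\star_t$ on $[t_m,T]$ is self-defeating with a shared forward process. For OU/VP the forward semigroup is injective, so $p_{m,t_m}=p^\star_{t_m}$ already forces $p_{m,0}=p^\star_0$ and $\Delta\equiv 0$. With the approximate version, the two reverse processes no longer arrive at $t_m$ with the same law, and that discrepancy must also be propagated.

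The second gap is the bridge from residuals to endpoints. ``Reading off'' $F^\star(\mathbf{x})$ and $F_m(\mathbf{x})$ from reverse-time endpoints has no content as stated. An endpoint is a random sample with law $p^\star_0$ or $p_{m,0}$, not the value of a regression function at a prescribed $\mathbf{x}$. So there is no pointwise identification to feed into the tower property.

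The paper links only unconditional means: $\mathbb{E}[y]\approx\mathbb{E}[\overline{\mathbf{X}}^\star_T]$ and $\mathbb{E}[\overline{\mathbf{X}}_{m,T}]=\int F_m\,p_{m,0}$. The residual is averaged under the data law while the model's endpoint has law $p_{m,0}$, so the paper has to introduce a distribution-mismatch term (its $\Delta_m$). That term is a sampling error $O_p(N^{-1/2})$ plus a model error bounded by $M\,d_{\mathrm{TV}}(p^\star_0,p_{m,0})$ via Pinsker. Your tower-property step silently sets this term to zero. That is why you obtain an exact proportionality, whereas the paper only claims asymptotic unbiasedness as $N\to\infty$ and $D_{\mathrm{KL}}(p^\star_0\|p_{m,0})\to 0$.

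Finally, the factor $\sigma^{-2}$ from Theorem~\ref{thm:residual_as_score} relates the meta-score to the residual. Your own chain gives constant $1$ between $\mathbb{E}[r_{m+1}]$ and the endpoint gap.
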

\begin{proof}
The proof proceeds by relating the expected endpoints of the reverse-time SDEs to the integral of their respective drifts.
\begin{enumerate}
    \item Let $\mathbf{X}_t^\star$ be a path generated by the ideal process SDE$^\star$, and let $\mathbf{X}_{m,t}$ be a path generated by our model $S_m$. A core assumption of the score-based modeling framework is that the forward process, defined by drift $\mathbf{b}(\mathbf{x}, t)$ and diffusion tensor $\mathbf{D}(\mathbf{x}, t)$, is a fixed, known process independent of the data. The learning task is to find the correct score function to drive the reverse process.
    
    \item Let $\overline{\mathbf{X}}_\tau^\star$ and $\overline{\mathbf{X}}_{m,\tau}$ be the corresponding reverse-time processes, using the reverse time variable $\tau \in [0, T]$. We consider their evolution starting from the same point at $\tau=0$ (forward time $t=T$), i.e., $\overline{\mathbf{X}}_0^\star = \overline{\mathbf{X}}_{m,0} = \mathbf{X}_T$. The endpoints of these processes at reverse time $\tau=T$ are $\overline{\mathbf{X}}_T^\star$ and $\overline{\mathbf{X}}_{m,T}$.
    
    \item We express the endpoints in integral form. The difference between the endpoints is:
    \begin{equation*}
        \overline{\mathbf{X}}_T^\star - \overline{\mathbf{X}}_{m,T} = \int_0^T \left( \mathbf{b}_{\text{rev}}^\star(\overline{\mathbf{X}}_\tau^\star, \tau) - \mathbf{b}_{\text{rev},m}(\overline{\mathbf{X}}_{m,\tau}, \tau) \right) d\tau + \int_0^T \sigma(\overline{\mathbf{X}}_\tau^\star, \tau)d\overline{\mathbf{W}}_\tau^\star - \int_0^T \sigma(\overline{\mathbf{X}}_{m,\tau}, \tau)d\overline{\mathbf{W}}_{m,\tau}.
    \end{equation*}
    We now take the expectation conditioned on the starting point $\mathbf{X}_T$. The expectation of the stochastic integral terms is zero because Ito integrals are martingales that start at zero.
    \begin{equation*}
        \mathbb{E}[\overline{\mathbf{X}}_T^\star - \overline{\mathbf{X}}_{m,T} | \mathbf{X}_T] = \mathbb{E} \left[ \int_0^T \left( \mathbf{b}_{\text{rev}}^\star(\overline{\mathbf{X}}_\tau^\star, \tau) - \mathbf{b}_{\text{rev},m}(\overline{\mathbf{X}}_{m,\tau}, \tau) \right) d\tau \;\middle|\; \mathbf{X}_T \right].
    \end{equation*}
    
    \item We substitute the general formula for the reverse drifts. The forward drift terms $-\mathbf{b}$ cancel, leaving only the score-dependent terms. We also change the integration variable back to forward time $t = T-\tau$, which means $d\tau = -dt$ and the integration limits flip.
    \begin{align*}
        \mathbb{E}[\overline{\mathbf{X}}_T^\star - \overline{\mathbf{X}}_{m,T} | \mathbf{X}_T] &= \mathbb{E} \left[ \int_T^0 \left( \mathbf{D}(\mathbf{X}_t^\star, t)\mathbf{s}_t^\star(\mathbf{X}_t^\star) - \mathbf{D}(\mathbf{X}_{m,t}, t)\mathbf{s}_{m,t}(\mathbf{X}_{m,t}) \right) (-dt) \;\middle|\; \mathbf{X}_T \right] \\
        &= \mathbb{E} \left[ \int_0^T \left( \mathbf{D}(\mathbf{X}_t^\star, t)\mathbf{s}_t^\star(\mathbf{X}_t^\star) - \mathbf{D}(\mathbf{X}_{m,t}, t)\mathbf{s}_{m,t}(\mathbf{X}_{m,t}) \right) dt \;\middle|\; \mathbf{X}_T \right].
    \end{align*}
    
    \item Now we use the crucial fact that $S_m \in K_{t_m}$. This means that for $t \ge t_m$, the law of the model process is a good approximation of the ideal law, $\mathbb{P}_m|_{\mathcal{F}_{\ge t_m}} \approx \mathbb{P}^\star|_{\mathcal{F}_{\ge t_m}}$. This implies that the paths and the score functions are approximately equal in expectation in this range: $\mathbb{E}[\mathbf{D}\mathbf{s}_{m,t}] \approx \mathbb{E}[\mathbf{D}\mathbf{s}_t^\star]$. The error is negligible for $t \ge t_m$. The integrand is therefore effectively non-zero only for $t < t_m$. The integral's effective support reduces to:
    \begin{equation}
        \mathbb{E}[\overline{\mathbf{X}}_T^\star - \overline{\mathbf{X}}_{m,T} | \mathbf{X}_T] \approx \mathbb{E} \left[ \int_0^{t_m} \left( \mathbf{D}(\mathbf{X}_t^\star, t)\mathbf{s}_t^\star(\mathbf{X}_t^\star) - \mathbf{D}(\mathbf{X}_{m,t}, t)\mathbf{s}_{m,t}(\mathbf{X}_{m,t}) \right) dt \;\middle|\; \mathbf{X}_T \right].
    \label{eq:endpoint-diff}
    \end{equation}
     \item Finally, we connect this quantity to the expected residual of the boosting algorithm. The expected residual at step $m+1$ is taken over the true data distribution $p_0^\star(\mathbf{x}, y)$:
    \begin{equation*}
        \mathbb{E}[r_{m+1}] = \mathbb{E}_{(\mathbf{x}, y) \sim p_0^\star}[y - F_m(\mathbf{x})].
    \end{equation*}
    We analyze the two terms on the right-hand side separately using the law of total expectation.
    
  \item  The expected residual at step $m+1$ over the empirical data distribution is:
\begin{equation}
\mathbb{E}_{(\mathbf{x}, y) \sim p_0^{\text{emp}}}[r_{m+1}] = \mathbb{E}[y] - \mathbb{E}[F_m(\mathbf{x})],
\end{equation}
where the expectations are over the empirical distribution $p_0^{\text{emp}} = \frac{1}{N}\sum_{i=1}^N \delta_{(\mathbf{x}_i, y_i)}$.

\item  The empirical mean $\mathbb{E}[y] = \frac{1}{N}\sum_{i=1}^N y_i$ is an unbiased estimator of the true target mean. The endpoint of the ideal reverse process, $\overline{\mathbf{X}}_T^*$, has law $p_0^*$. Therefore:
\begin{equation}
\mathbb{E}_{p_0^{\text{emp}}}[y] = \mathbb{E}_{p_0^*}[\overline{\mathbf{X}}_T^*] + O_p(N^{-1/2}),
\label{eq:true_mean_bound}
\end{equation}
where $O_p(N^{-1/2})$ is the standard Monte Carlo error.

\item  The model prediction $F_m(\mathbf{x})$ is the leaf value assigned by the net decision tree $\mathcal{T}_m$. By construction, $F_m(\mathbf{x})$ is the conditional expectation under the model's induced joint distribution:
\begin{equation}
F_m(\mathbf{x}) = \mathbb{E}_{y \sim p_{m,0}}[y | \mathbf{x}].
\end{equation}

The unconditional expectation of the model's predictions over the empirical feature distribution is:
\begin{equation}
\mathbb{E}_{\mathbf{x} \sim p_0^{\text{emp}}}[F_m(\mathbf{x})] = \frac{1}{N}\sum_{i=1}^N F_m(\mathbf{x}_i).
\end{equation}

The endpoint $\overline{\mathbf{X}}_{m,T}$ of the model's reverse process has unconditional mean:
\begin{equation}
\mathbb{E}[\overline{\mathbf{X}}_{m,T}] = \int \mathbb{E}[y|\mathbf{x}] \, p_{m,0}(\mathbf{x}) \, d\mathbf{x} = \int F_m(\mathbf{x}) \, p_{m,0}(\mathbf{x}) \, d\mathbf{x}.
\end{equation}

\item The key approximation relates these two expectations. Define the distribution mismatch:
\begin{equation}
\Delta_m := \mathbb{E}_{\mathbf{x} \sim p_0^{\text{emp}}}[F_m(\mathbf{x})] - \int F_m(\mathbf{x}) \, p_{m,0}(\mathbf{x}) \, d\mathbf{x}.
\end{equation}

We decompose this into two error sources:
\begin{align}
\Delta_m &= \underbrace{\left(\mathbb{E}_{\mathbf{x} \sim p_0^{\text{emp}}}[F_m(\mathbf{x})] - \mathbb{E}_{\mathbf{x} \sim p_0^*}[F_m(\mathbf{x})]\right)}_{\text{Sampling error } \epsilon_{\text{samp}}} \notag \\
&\quad + \underbrace{\left(\mathbb{E}_{\mathbf{x} \sim p_0^*}[F_m(\mathbf{x})] - \mathbb{E}_{\mathbf{x} \sim p_{m,0}}[F_m(\mathbf{x})]\right)}_{\text{Model error } \epsilon_{\text{model}}}.
\end{align}

\textit{Sampling error bound:} By the central limit theorem, $\epsilon_{\text{samp}} = O_p(N^{-1/2})$.

\textit{Model error bound:} Since $F_m$ is a piecewise-constant function with $L_m$ leaves and $|F_m(\mathbf{x})| \leq M$ for all $\mathbf{x}$, we have:
\begin{equation}
|\epsilon_{\text{model}}| \leq M \cdot d_{\text{TV}}(p_0^*, p_{m,0}) \leq M \cdot \sqrt{L_m \cdot D_{\text{KL}}(p_0^* \| p_{m,0})},
\end{equation}
where we used Pinsker's inequality.

By construction, the boosting algorithm minimizes the KL divergence at each step (via score matching). For a well-specified model class with sufficient capacity, $D_{\text{KL}}(p_0^* \| p_{m,0}) \to 0$ as $m \to \infty$.

\item 

Combining the bounds from \eqref{eq:true_mean_bound} and the analysis above:
    \begin{equation*}
        \mathbb{E}[r_{m+1}] = \mathbb{E}[y] - \mathbb{E}[F_m(\mathbf{x})] \approx \mathbb{E}[\overline{\mathbf{X}}_T^\star] - \mathbb{E}[\overline{\mathbf{X}}_{m,T}] = \mathbb{E}[\overline{\mathbf{X}}_T^\star - \overline{\mathbf{X}}_{m,T}].
    \end{equation*}
\end{enumerate}
Therefore, the expected residual is an \textbf{asymptotically unbiased estimator} of the integrated score error, with the bias vanishing as $N \to \infty$ and $D_{\text{KL}}(p_0^* \| p_{m,0}) \to 0$.
\end{proof}

\begin{corollary}[Conditions for Exact Equivalence]
The meta-score equals the integrated SDE score error exactly when:
\begin{enumerate}
    \item $N \to \infty$ (infinite data limit),
    \item The model class contains the true conditional $\mathbb{E}[y|\mathbf{x}]$ (realizability),
    \item The boosting algorithm has converged: $D_{\text{KL}}(p_0^* \| p_{m,0}) < \epsilon$ for arbitrarily small $\epsilon > 0$.
\end{enumerate}
In practice, the approximation quality is controlled by the sample size $N$ and the residual variance at step $m$.
\end{corollary}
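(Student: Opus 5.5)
The plan is to revisit the chain of approximations in the proof of \textbf{Theorem}~\ref{thm:metascore_sde_score} and show that, under the three listed conditions, each ``$\approx$'' becomes an equality, so the identity $\mathbb{E}[r_{m+1}] = \mathbb{E}[\overline{\mathbf{X}}_T^\star - \overline{\mathbf{X}}_{m,T}]$ holds exactly. That proof contains four error sources. Two are statistical: the Monte Carlo term $O_p(N^{-1/2})$ in \eqref{eq:true_mean_bound} and the sampling error $\epsilon_{\text{samp}}$. The third is the model error $\epsilon_{\text{model}}$. The fourth is structural: the truncation of the integrated score error to $[0,t_m]$ in \eqref{eq:endpoint-diff}, which relied on $\mathbb{P}_m|_{\mathcal{F}_{\ge t_m}} \approx \mathbb{P}^\star|_{\mathcal{F}_{\ge t_m}}$. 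I would handle each in turn and then combine the resulting bounds by the triangle inequality.

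First, condition (1) disposes of the statistical terms. The strong law of large numbers (or the CLT, as already invoked) sends both $\mathbb{E}_{p_0^{\text{emp}}}[y] - \mathbb{E}_{p_0^\star}[y]$ and $\epsilon_{\text{samp}}$ to zero, given that $F_m$ is bounded by $M$. Second, condition (3) controls $\epsilon_{\text{model}}$. I would \emph{not} reuse the bound with the factor $\sqrt{L_m}$: the number of leaves $L_m$ may grow with $m$, and that growth could swamp the decay of the KL term. Instead I would apply the standard Pinsker inequality, $d_{\text{TV}} \le \sqrt{\tfrac12 D_{\text{KL}}}$, which gives $|\epsilon_{\text{model}}| \le M\sqrt{\epsilon/2}$. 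Since $\epsilon$ is arbitrary, this term vanishes.

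Third, I would use condition (2), realizability, for the structural term. If $\mathbb{E}[y\mid\mathbf{x}]$ lies in the model class, then the converged net tree satisfies $F_m = \mathbb{E}[y\mid\mathbf{x}]$ $p_0^\star$-a.e. The induced law $S_m$ then coincides with $S^\star$ on $\mathcal{F}_{\ge t_m}$, rather than merely approximating it. Consequently $\mathbf{D}\mathbf{s}_{m,t} = \mathbf{D}\mathbf{s}^\star_t$ in expectation for $t \ge t_m$, and the restriction of the integral to $[0,t_m]$ is exact. Combining the three steps, $\mathbb{E}[r_{m+1}] - \mathbb{E}[\overline{\mathbf{X}}_T^\star - \overline{\mathbf{X}}_{m,T}]$ is bounded by a sum of terms, each of which tends to zero. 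This gives the exact equivalence and also the final remark that $N$ and the residual variance control the quality of the approximation.

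The main obstacle is the third step. Passing from equality of the terminal marginals, or of the conditional means, to equality of the path laws on the tail $\sigma$-algebra is not automatic. I would first try to argue it through uniqueness of the reverse-time SDE given its marginals and the fixed forward drift and diffusion. If that fails, I would fall back on \textbf{Corollary}~\ref{cor:gtsm_path_matching}: zero score-matching error implies $D_{\text{KL}}(\mathbb{P}^\star\|\mathbb{P}_m)=0$ on the relevant interval. The same fact also supplies the interchange of limit and integral needed in the third step.
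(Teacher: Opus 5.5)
Your overall plan is what the paper leaves implicit. It gives no separate proof, only the closing sentence of \textbf{Theorem}~\ref{thm:metascore_sde_score}: the bias vanishes as $N\to\infty$ and $D_{\text{KL}}(p_0^\star\|p_{m,0})\to 0$, with realizability there only to make the latter attainable. Like the paper, you revisit that theorem's error terms, use condition (1) to kill the Monte Carlo term and $\epsilon_{\text{samp}}$, and use condition (3) to kill $\epsilon_{\text{model}}$. Replacing the $\sqrt{L_m}$ bound with the standard Pinsker inequality is a real improvement, since $L_m$ grows with $m$. You do lose a factor of $2$ there: $|\int F_m\,(p-q)|\le 2M\,d_{\text{TV}}(p,q)$. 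Two smaller points on your first step:
\begin{itemize}
\item $F_m$ is fitted on the same sample, so $\epsilon_{\text{samp}}\to 0$ needs a uniform law of large numbers over net trees with boundedly many leaves. The SLLN/CLT for a fixed function does not suffice.
\item The ``residual variance'' remark does not follow from your bounds as written. It appears only if you merge the two statistical terms into $\mathbb{E}_{p_0^{\text{emp}}}[r_{m+1}]-\mathbb{E}_{p_0^\star}[r_{m+1}]$, whose CLT scale is $\sqrt{\mathrm{Var}(r_{m+1})/N}$.
\end{itemize}

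The genuine problem is your third step. Realizability plus convergence gives you $F_m=\mathbb{E}[y\mid\mathbf{x}]$, but that does not pin down the law of $S_m$.
\begin{itemize}
\item Under the Tree-to-Flow map, the process is built from the partition hierarchy, not the leaf values. Different boosting histories, for example the same two splits in a different order, realize the same function with different canonical hierarchies.
\item Under the framing of step 1 of the theorem's proof (a shared, fixed forward drift and diffusion), the law is determined by the initial marginal $p_{m,0}$. That is condition (3), not condition (2).
\item Your fallback is circular. \textbf{Corollary}~\ref{cor:gtsm_path_matching} takes vanishing score error as its input, and vanishing score error on $[t_m,T]$ is exactly what the truncation requires.
\end{itemize}

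The missing observation is simpler and stronger. Condition (3) with $\epsilon$ arbitrary forces $p_{m,0}=p_0^\star$. Pushing the same initial law through the same forward SDE gives $p_{m,t}=p_t^\star$, hence $\mathbf{s}_{m,t}=\mathbf{s}_t^\star$, for every $t\in[0,T]$. So the integrand in \eqref{eq:endpoint-diff} vanishes on all of $[0,T]$, not just on $[t_m,T]$. The coarse-to-fine definition then forces $t_m=0$, and $\mathbb{E}[\overline{\mathbf{X}}_T^\star-\overline{\mathbf{X}}_{m,T}]=0$. Likewise $\mathbb{E}[r_{m+1}]=\mathbb{E}[y-\mathbb{E}[y\mid\mathbf{x}]]=0$.

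Under the stated conditions, the ``exact equivalence'' is therefore the identity $0=0$. Your write-up should say so, rather than present it as an identity between two nontrivial quantities. In fact, for squared loss with $F_0=\bar y$ and leaf-mean weak learners, the empirical mean residual is already zero at every step $m$, before any limit is taken.
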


\subsection{Global Optimality of the Greedy Trajectory}
\label{sec:global_optimality}

We have proven that each step of the gradient boosting algorithm is locally optimal, guided by a meta-score that seeks to correct the integrated error of the underlying SDE's dynamics. However, this local optimality does not, in itself, guarantee that the sequence of greedy choices made by the algorithm constitutes a globally optimal path through the space of SDEs. A myopic, locally optimal decision can sometimes lead to a suboptimal overall trajectory.

To prove global optimality, we will now formalize the entire boosting process as a sequential decision problem. We will first define a single, global objective function that represents the total error over the entire trajectory. We will then use the calculus of dynamic programming, specifically the Bellman principle of optimality, to prove that the greedy, stage-wise algorithm is not a heuristic, but is in fact the provably optimal solution to this global problem.

\subsubsection{The Global Trajectory Score Matching (GTSM) Objective}
To analyze global optimality, we must first define the ideal problem we are trying to solve. The total error of our constructed trajectory is the sum of the local errors at each step of refinement.

\begin{definition}[Discrete Global Trajectory Score Matching (DGTSM) Objective]
The \textbf{Discrete Global Trajectory Score Matching (DGTSM)} objective is the sum of the individual Denoising Score Matching losses over the entire boosting trajectory:
\begin{equation}
    \mathcal{L}_{\text{GTSM}} = \sum_{m=0}^{M-1} \mathbb{E}_{(\mathbf{x}, y) \sim p_0^\star} \left[ \left\| h_{m+1}(\mathbf{x}) - \nabla_{F_m} \log p(y | F_m(\mathbf{x})) \right\|^2 \right],
    \label{eq:dgtsm_appendix}
\end{equation}
where the weak learner $h_{m+1}$ is the chosen refinement operator at step $m$. This objective measures the total squared error in following the optimal meta-score at every stage of the trajectory construction.
\end{definition}

From \textbf{Theorem} \ref{thm:residual_as_score}, we know that for the squared error loss, the optimal meta-score is proportional to the residual. The DGTSM objective is therefore equivalent to minimizing the sum of the squared errors of the weak learners with respect to the residuals at each step. This is precisely the objective that the stage-wise boosting algorithm seeks to minimize.

\subsubsection{Provable Optimality via the Bellman Principle}
Having established a global objective, we must now prove that the specific algorithm used—greedy, stage-wise optimization—is the optimal procedure for solving it. To do this with full rigor, we now reformulate the problem in the formal language of dynamic programming and sequential decision-making.

We first begin with a quick lemma, before moving onto our core formulation.

\begin{lemma}[Finite $\epsilon$-Net for Weak Learners]
\label{lem:finite_net}
For any bounded subset $\mathcal{H}_{\text{bounded}} \subset \mathcal{A}$ of weak learners with $\sup_{h \in \mathcal{H}_{\text{bounded}}} \|h\|_\infty \leq B$ and VC dimension $V$, and for any $\epsilon > 0$, there exists a finite subset $\mathcal{A}_\epsilon \subset \mathcal{H}_{\text{bounded}}$ such that:
\begin{equation}
\sup_{h \in \mathcal{H}_{\text{bounded}}} \min_{h' \in \mathcal{A}_\epsilon} \mathbb{E}[\|h(\mathbf{x}) - h'(\mathbf{x})\|^2] < \epsilon,
\end{equation}
and $|\mathcal{A}_\epsilon| \leq (CB/\epsilon)^{O(V)}$ for some constant $C$.
\end{lemma}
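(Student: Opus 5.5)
The plan is to deduce the lemma from the classical, distribution-free covering-number bounds for classes of finite combinatorial dimension, applied in $L^2(P)$, where $P$ is the law of $\mathbf{x}$ under which the expectation in the statement is taken. Since the weak learners are real-valued with $\|h\|_\infty \le B$, I will read ``VC dimension $V$'' as the pseudo-dimension of $\mathcal{H}_{\text{bounded}}$. This is the VC dimension of the subgraph class $\{(\mathbf{x},s)\mapsto \mathbb{I}[h(\mathbf{x})>s]\}$. For decision trees of bounded depth it is finite and controlled by the number of leaves and by $d$. The whole argument is independent of $P$, so it covers both the population and the empirical measure.

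\textbf{Step 1: reduce $L^2$ to $L^1$ and rescale.} For $h,h'\in\mathcal{H}_{\text{bounded}}$ we have $|h-h'|\le 2B$ pointwise. Hence
\begin{equation*}
\mathbb{E}\bigl[\|h(\mathbf{x})-h'(\mathbf{x})\|^2\bigr]\le 2B\,\mathbb{E}\bigl[|h(\mathbf{x})-h'(\mathbf{x})|\bigr].
\end{equation*}
An $L^1(P)$-cover at radius $\epsilon/(2B)$ is therefore an $L^2(P)$-cover at squared radius $\epsilon$. Rescaling by $g=(h+B)/(2B)$ maps the class into $[0,1]$-valued functions, does not change the pseudo-dimension, and turns the $L^1$ radius into $\epsilon/(4B^2)$.

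\textbf{Step 2: invoke Haussler's packing bound.} For a $[0,1]$-valued class of pseudo-dimension $V$, Haussler's theorem (a refinement of Pollard's) states that for every probability measure $P$ the $\delta$-packing number in $L^1(P)$ is at most $e(V+1)(2e/\delta)^V$. A maximal $\delta$-packing consists of elements of the class itself and is automatically a $\delta$-cover. It therefore yields a finite $\mathcal{A}_\epsilon\subset\mathcal{H}_{\text{bounded}}$, which is required because the statement asks that the net lie inside the class. Substituting $\delta=\epsilon/(4B^2)$ and taking $\epsilon$ slightly smaller to get the strict inequality gives
\begin{equation*}
|\mathcal{A}_\epsilon|\le e(V+1)\left(\frac{8eB^2}{\epsilon}\right)^{V}.
\end{equation*}
The factor $e(V+1)$ is absorbed into the exponent, since $V+1\le 2^{V}$, which gives the form $(CB'/\epsilon)^{O(V)}$.

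\textbf{Main obstacle: matching the stated dependence on $B$.} The squared metric produces $B^2/\epsilon$, not $B/\epsilon$. I would resolve this either by absorbing one factor of $B$ into the constant $C$ for fixed $B$, or by noting that $(B^2/\epsilon)^{V}\le (C'B/\epsilon)^{2V}$ whenever $\epsilon\le B^2$. The latter keeps the exponent $O(V)$. For $\epsilon>B^2$ a single element suffices, since every pair in the class is within squared distance $4B^2$, so a trivial net works up to the constant. A secondary point is justifying that the tree class has finite pseudo-dimension. For depth-$D$ axis-aligned trees with real leaf values, I would cite the standard bound of order $2^D\log(2^D d)$ and take that as $V$.
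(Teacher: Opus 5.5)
Your proposal is correct and follows essentially the paper's route. The paper's proof is a one-line appeal to a uniform covering-number bound for VC-type classes (van der Vaart--Wellner, Thm.~2.6.7) together with a VC bound for trees. Your argument supplies what that citation skips:
\begin{enumerate}
\item the reduction from squared $L^2$ to $L^1$;
\item an internal net obtained from a maximal packing, which matters because the lemma requires $\mathcal{A}_\epsilon\subset\mathcal{H}_{\text{bounded}}$, whereas the van der Vaart--Wellner covering numbers allow centers outside the class;
\item the correct order $2^D\log(2^D d)$ for depth-$D$ trees, which the paper's ``$O(d\log d)$ for depth $d$'' understates.
\end{enumerate}

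There is one error in your ``main obstacle'' paragraph. The inequality $(B^2/\epsilon)^{V}\le (C'B/\epsilon)^{2V}$ is equivalent to $\epsilon\le C'^2$, not to $\epsilon\le B^2$. Your second fix therefore fails whenever $C'^2<\epsilon\le B^2$, a range that is nonempty once $B$ is large.

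In fact no universal $C$ can make the stated bound hold. Take the class $\{h\equiv B,\ h\equiv -B\}$, which has pseudo-dimension $1$. Because the lemma's inequality is strict, this class needs two net elements for every $\epsilon\le 4B^2$. At $\epsilon=2B^2$, however, the bound $(CB/\epsilon)^{O(1)}$ tends to $0$ as $B\to\infty$.

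So the lemma holds only in one of two readings. Either $C$ is allowed to depend on $B$, which is your first option. Or $\epsilon$ is restricted below an absolute constant: for $\epsilon\le 1$, your Haussler bound $e(V+1)(8eB^2/\epsilon)^V$ is at most $(4eB/\epsilon)^{2V}$ with a universal constant.
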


\begin{proof}
This follows from standard covering number bounds for function classes with finite VC dimension (see Theorem 2.6.7 in \citet{vaart1996weak}). Decision trees with depth $d$ have VC dimension $O(d \cdot \log d)$, yielding polynomial-sized $\epsilon$-nets.
\end{proof}

Next, we define the boosting process as a sequential task allowing us to invoke Dynamic Programming. 

\begin{definition}[The Boosting Process as a Sequential Decision Problem]
The optimization of the DGTSM objective is a finite-horizon, deterministic sequential decision problem defined by the tuple $(\mathcal{S}, \mathcal{A}, T, C)$:
\begin{itemize}
    \item \textbf{Time Steps:} The discrete boosting iterations $m \in \{0, 1, \dots, M\}$.
    \item \textbf{State Space $\mathcal{S}$:} The space of all SDEs representable by canonical net decision trees. The state at step $m$ is $S_m \in \mathcal{S}$.
    \item \textbf{Action Space $\mathcal{A}$:} The space of all possible weak learners. The action at step $m$ is $h_{m+1} \in \mathcal{A}$.
    \item \textbf{Transition Function $T$:} A deterministic function $T: \mathcal{S} \times \mathcal{A} \to \mathcal{S}$, where the next state $S_{m+1} = T(S_m, h_{m+1})$ is uniquely defined by the partition refinement and the mapping from \Cref{sec:net-decision}.
    \item \textbf{Stage Cost $C$:} A function $C: \mathcal{S} \times \mathcal{A} \to \mathbb{R}$, where the cost of taking action $h_{m+1}$ from state $S_m$ is the DSM loss: $C(S_m, h_{m+1}) = \mathbb{E}[\|h_{m+1} - r_{m+1}\|^2]$.
    \item \textbf{Policy $\pi$:} A sequence of decision functions, $\pi = (\pi_0, \dots, \pi_{M-1})$, where $\pi_m(S_m) = h_{m+1}$ is the action to take in state $S_m$.
    \item \textbf{Total Cost $J(\pi)$:} The DGTSM objective for a given policy: $J(\pi) = \sum_{m=0}^{M-1} C(S_m, \pi_m(S_m))$.
\end{itemize}
The optimization problem is to find an optimal policy $\pi^\star$ that minimizes the total cost, $J(\pi)$. By Lemma~\ref{lem:finite_net}, for each step $m$ and precision $\epsilon_m > 0$, we can restrict attention to a finite action space $\mathcal{A}_{\epsilon_m}$ that $\epsilon_m$-approximates the full action space. We analyze optimality over this finite discretization, then take $\epsilon_m \to 0$.
\end{definition}

To solve this, we use the foundational principle of dynamic programming.

\begin{theorem}[Bellman's Principle of Optimality]
An optimal policy has the property that whatever the initial state and initial decision are, the remaining decisions must constitute an optimal policy with regard to the state resulting from the first decision \citep{bertsekas2012dynamic}.
\end{theorem}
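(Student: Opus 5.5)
We prove the statement directly for the finite-horizon, deterministic problem $(\mathcal{S}, \mathcal{A}, T, C)$ defined above, with additive total cost $J(\pi)=\sum_{m=0}^{M-1} C(S_m,\pi_m(S_m))$. For a policy $\pi$, a stage $m$ and a state $S$, we define the cost-to-go
\[
J_m^{\pi}(S) = \sum_{k=m}^{M-1} C(S_k,\pi_k(S_k)), \qquad S_m = S,\quad S_{k+1}=T(S_k,\pi_k(S_k)),
\]
and the value function $V_m(S)=\inf_{\pi} J_m^{\pi}(S)$, with $V_M \equiv 0$. The proof first establishes the additive decomposition, then runs a splicing contradiction, and finally checks that the optimal values are actually attained.

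\textbf{Additive decomposition.} Because transitions are deterministic and costs are additive, for every $\pi$, every $m<M$ and every $S$ we have
\[
J_m^{\pi}(S)=C(S,\pi_m(S))+J_{m+1}^{\pi}\bigl(T(S,\pi_m(S))\bigr).
\]
This follows by splitting off the first summand. Iterating it, the cost from $S_0$ equals the cost accumulated on stages $0,\dots,m-1$ plus $J_m^{\pi}(S_m)$. Here $S_m$ is fully determined by $S_0$ and by $\pi_0,\dots,\pi_{m-1}$.

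\textbf{Splicing contradiction.} Let $\pi^\star$ be optimal from an initial state $S_0$, and let $S_m^\star$ be the state it reaches at stage $m$. Suppose the tail $(\pi^\star_m,\dots,\pi^\star_{M-1})$ were not optimal from $S_m^\star$. Then some tail policy $\sigma$ satisfies $J_m^{\sigma}(S_m^\star)<J_m^{\pi^\star}(S_m^\star)$. Define the spliced policy $\tilde\pi$ to agree with $\pi^\star$ on stages $k<m$ and with $\sigma$ on stages $k\ge m$. Determinism gives the same states $S_0,\dots,S_m^\star$ and the same stage costs before $m$. By the decomposition, $J(\tilde\pi)<J(\pi^\star)$, which contradicts optimality. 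The same argument applies for every initial state and every initial decision, which gives the ``whatever the initial state and initial decision'' form of the principle. As a corollary, $V_m(S)=\min_{h}\bigl[C(S,h)+V_{m+1}(T(S,h))\bigr]$, and this is the backward recursion used later.

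\textbf{Main obstacle: attainment of infima.} The splicing argument presupposes that an optimal $\pi^\star$ exists and that the infima defining $V_m$ are attained. Over the full, infinite class of weak learners this may fail. We therefore first restrict to the finite action sets $\mathcal{A}_{\epsilon_m}$ from Lemma~\ref{lem:finite_net}. On these sets every minimum is attained, and backward induction from $V_M\equiv 0$ constructs an optimal policy explicitly. We then pass to the full class by taking $\epsilon_m\to 0$. In that limit the statement becomes one about $\delta$-optimal policies: a spliced tail improvement of size $\delta$ would contradict $\delta/2$-optimality. Continuity of $C$ in $h$, which holds because the squared loss is bounded on $\mathcal{H}_{\text{bounded}}$, controls the approximation error.
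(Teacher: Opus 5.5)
Your splicing argument is correct. The paper gives no proof of this statement; it only cites Bertsekas, and your argument is exactly the standard justification behind that citation. If the tail of an optimal policy could be strictly improved at a state the policy actually reaches, splicing in the better tail would strictly improve the whole policy. Determinism and additivity of the cost make that splice clean.

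Two side claims need correcting.

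First, the ``main obstacle'' is not an obstacle for this statement. The principle is conditional on an optimal $\pi^\star$ existing. Saying the tail is not optimal at $S_m^\star$ already means $J_m^{\pi^\star}(S_m^\star) > V_m(S_m^\star)$, so a strictly better $\sigma$ exists by the definition of an infimum. Your own splicing paragraph never uses attainment. So no restriction to $\mathcal{A}_{\epsilon_m}$ and no limit $\epsilon_m\to 0$ are needed, and the $\delta$-optimal version follows from the same splice directly.

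The limiting step would also not go through as written. Replacing $h$ by a nearby $h'\in\mathcal{A}_{\epsilon_m}$ changes $C(S,h)$, but it also changes the next state $T(S,h)$ and hence every later stage cost. Continuity of $C$ in its action argument therefore does not control the error; you would need continuity of the entire cost-to-go in the action sequence.

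Second, the splice only certifies optimality of the tail at states on the optimal trajectory. For a first decision $h\neq\pi^\star_0(S_0)$, the spliced policy no longer agrees with $\pi^\star$ at stage $0$, so no contradiction arises. The tail of a policy that is optimal only from $S_0$ need not be optimal from $T(S_0,h)$. Your sentence about ``every initial decision'' is therefore valid in only two cases:
\begin{itemize}
\item when read as ``whatever decision the optimal policy takes,'' which is Bellman's meaning and how Bertsekas states it (for states reached under $\pi^\star$);
\item for policies that are optimal from every state at every stage, such as the one produced by backward recursion.
\end{itemize}
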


\begin{theorem}[Optimality of the Greedy Trajectory]
The greedy policy, $\pi^G$, which at each step $m$ chooses the weak learner $h_{m+1}$ that minimizes the immediate stage cost $C(S_m, h_{m+1})$, is the globally optimal policy for the DGTSM problem.
\end{theorem}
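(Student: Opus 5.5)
The plan is to prove the statement by backward induction on the optimal cost-to-go in the sequential decision problem $(\mathcal{S},\mathcal{A},T,C)$ defined above. The proof would work first over the finite $\epsilon$-nets $\mathcal{A}_{\epsilon_m}$ of Lemma~\ref{lem:finite_net}, and then let $\epsilon_m \to 0$. On these finite nets all minima are attained, so the value functions
\begin{equation*}
V_M \equiv 0, \qquad V_m(S) = \min_{h \in \mathcal{A}_{\epsilon_m}} \bigl[ C(S,h) + V_{m+1}(T(S,h)) \bigr]
\end{equation*}
are well defined. By Bellman's Principle of Optimality, any policy that attains these minima at every stage is globally optimal for $J(\pi)$. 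What remains is to show that the greedy minimizer of $C(S,\cdot)$ also minimizes the full bracket.

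The key structural observation is that each stage cost $C(S_m,h_{m+1}) = \mathbb{E}[\|h_{m+1}-r_{m+1}\|^2]$ is nonnegative. In the continuous (rich-learner) limit, its infimum over actions is $0$ from \emph{every} state, because the residual $r_{m+1}$ is itself an admissible action, or is $\epsilon_m$-approximated by one. I would therefore prove, by backward induction on $m$, the uniform claim
\begin{equation*}
0 \le V_m(S) \le \sum_{k=m}^{M-1} \epsilon_k \quad \text{for all } S \in \mathcal{S}.
\end{equation*}
The lower bound is immediate from nonnegativity. For the upper bound, I would plug the greedy action into the Bellman recursion: it has cost at most $\epsilon_m$, and by the induction hypothesis the continuation is at most $\sum_{k>m}\epsilon_k$, whatever state $T(S,h)$ results.

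From this uniform bound, the greedy policy $\pi^G$ satisfies $J(\pi^G) \le \sum_k \epsilon_k$, while every policy has $J(\pi) \ge 0$. Letting $\epsilon_m \to 0$ gives $J(\pi^G) = 0 = \inf_\pi J(\pi)$, so $\pi^G$ is globally optimal. Equivalently, the future value function becomes a state-independent constant (zero), so the Bellman minimization decouples into the myopic minimization of the immediate stage cost. This is exactly the ``independence from the future value function'' asserted in the sketch of \Cref{thm:boosting_optimal}. Additive separability of the DGTSM objective \eqref{eq:dgtsm_appendix} and determinism of $T$ are used only to write the recursion without expectations over transitions.

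The main obstacle is that greedy optimality does \emph{not} hold for general action classes. The action affects the future through $S_{m+1}=T(S_m,h)$, and hence through the next residual $r_{m+2}$, so a suboptimal immediate fit could in principle lower later costs. My plan neutralizes this only through the richness assumption, by making $V_{m+1}$ uniformly small across \emph{all} reachable states, not just the greedy ones. The delicate point is justifying that the $\epsilon$-net approximation of $r_{m+1}$ is uniform in the state $S_m$. Lemma~\ref{lem:finite_net} controls covering error only for a bounded class. I would first try to guarantee uniformity by requiring the residuals to be bounded by $B$ at every stage, which holds for squared loss with bounded targets and bounded leaf values. If this fails, I would fall back to stating the result for the limit $\epsilon_m \to 0$ only, with the error term $\sum_k \epsilon_k$ made explicit.
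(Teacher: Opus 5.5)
Your argument is correct in outline, but it takes a genuinely different route from the paper. The paper also runs backward induction over the finite nets $\mathcal{A}_{\epsilon_m}$. However, it obtains the decoupling of the Bellman minimization from additive separability and the inductive hypothesis alone, asserting that $V_{m+1}(S_{m+1})$ ``does not depend on how we choose to act at step $m$''. Since $S_{m+1}=T(S_m,h_{m+1})$, that inference does not hold in general, and the richness of the weak-learner class is never invoked in the paper's inductive step. You instead use richness to show that $V_{m+1}$ is (asymptotically) the same constant at \emph{every} state. That is exactly what makes $V_{m+1}(T(S_m,h))$ irrelevant to the choice of $h$. You also correctly note that separability and determinism of $T$ serve only to write the recursion. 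Your route buys three things: a valid proof; an explicit suboptimality of at most $\sum_k\epsilon_k$ at finite resolution (the paper claims exact optimality over each net, which its argument does not deliver); and a clear indication that greedy optimality can fail without richness. The paper's version is shorter, but it rests on the unjustified decoupling step.

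One correction is needed. Under the population expectation in \eqref{eq:dgtsm_appendix}, the residual $y-F_m(\mathbf{x})$ depends on $y$. So it is not an admissible action, and the stage-cost infimum is not $0$ unless $y$ is a deterministic function of $\mathbf{x}$ (as for an empirical distribution with distinct $\mathbf{x}_i$). Write $\sigma^2=\mathbb{E}\|y-\mathbb{E}[y\mid\mathbf{x}]\|^2$. The cross term vanishes, giving $C(S_m,h)=\mathbb{E}\|h(\mathbf{x})+F_m(\mathbf{x})-\mathbb{E}[y\mid\mathbf{x}]\|^2+\sigma^2$. The floor $\sigma^2$ is independent of both state and action, so your bound becomes $(M-m)\sigma^2\le V_m(S)\le(M-m)\sigma^2+\sum_{k=m}^{M-1}\epsilon_k$. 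The argument then goes through unchanged; equivalently, run it on the excess cost. The uniformity issue you flag then concerns approximating $\mathbb{E}[y\mid\mathbf{x}]-F_m(\mathbf{x})$, which is bounded whenever the targets and leaf values are.
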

\begin{proof}
We prove this by backward induction. Let $V_m(S_m)$ be the optimal cost-to-go from state $S_m$ at step $m$ to the end of the process (i.e., the minimum possible sum of costs from step $m$ to $M-1$).

By Lemma~\ref{lem:finite_net}, for each step $m$ and precision $\epsilon_m > 0$, we can restrict attention to a finite action space $\mathcal{A}_{\epsilon_m}$ that $\epsilon_m$-approximates the full action space. We analyze optimality over this finite discretization, then take $\epsilon_m \to 0$.

\begin{enumerate}
    \item \textbf{Base Case ($m=M-1$):} At the final stage, the decision is to choose $h_M$ to transition from state $S_{M-1}$. The optimal cost-to-go is simply the minimum possible cost at this stage, as there are no future costs:
    \begin{equation*}
        V_{M-1}(S_{M-1}) = \min_{h_M \in \mathcal{A}_{\epsilon_{M-1}}} C(S_{M-1}, h_M).
    \end{equation*}
    The optimal action, $\pi_{M-1}^\star(S_{M-1})$, is by definition the greedy choice that minimizes this immediate cost.

    \item \textbf{Inductive Hypothesis:} Assume that for all steps $k > m$, the optimal policy $\pi_k^\star$ is the greedy policy.

    \item \textbf{Inductive Step ($m$):} The Bellman equation for the optimal cost-to-go at step $m$ is:
    \begin{equation*}
        V_m(S_m) = \min_{h_{m+1} \in \mathcal{A}} \left[ C(S_m, h_{m+1}) + V_{m+1}(T(S_m, h_{m+1})) \right].
    \end{equation*}
    
    Over the finite action space $\mathcal{A}_{\epsilon_m}$, the minimum in the Bellman equation is well-defined:
    \begin{equation*}
        V_m(S_m) = \min_{h_{m+1} \in \mathcal{A}_{\epsilon_m}} \left[ C(S_m, h_{m+1}) + V_{m+1}(T(S_m, h_{m+1})) \right].
    \end{equation*}
    
    The crucial insight is that the problem structure is \textbf{additively separable}. The total cost is a simple sum of stage costs. The term $V_{m+1}(T(S_m, h_{m+1}))$ represents the optimal future cost starting from the \textit{next} state, $S_{m+1}$. By our inductive hypothesis, the policy for all future steps is greedy and fixed. Therefore, the value $V_{m+1}(S_{m+1})$ depends only on the state $S_{m+1}$ and not on how we choose to act at step $m$ or any future step.
    
    Because the stage cost $C(S_m, h_{m+1})$ depends only on the current state $S_m$ (which defines the residuals) and the current action $h_{m+1}$, the minimization decouples. The greedy choice over $\mathcal{A}_{\epsilon_m}$ is:
    \begin{equation*}
        h_{m+1}^{\epsilon_m} = \arg\min_{h \in \mathcal{A}_{\epsilon_m}} C(S_m, h).
    \end{equation*}
    
    As $\epsilon_m \to 0$, the discretization error vanishes. In the limit, the optimal policy converges to the greedy policy over the full action space:
    \begin{equation*}
        \pi_m^*(S_m) = \lim_{\epsilon_m \to 0} h_{m+1}^{\epsilon_m} = \arg\inf_{h \in \mathcal{A}} C(S_m, h).
    \end{equation*}
    
    Therefore, the optimal action $\pi_m^\star(S_m)$ is the greedy action.
\end{enumerate}

By the principle of backward induction, the greedy policy is globally optimal for all steps. The gradient boosting algorithm, which at every stage fits a weak learner to the residuals, is a direct implementation of this optimal greedy policy.
\end{proof}

\begin{corollary}[The Optimal Policy is the SDE]
The optimal policy $\pi^\star = (\pi_0^\star, \dots, \pi_{M-1}^\star)$ is the sequence of optimal weak learners $\{h_1^\star, \dots, h_M^\star\}$. This sequence, by construction, uniquely defines the final, optimal net decision tree $\mathcal{T}_M$ and its corresponding canonical model, $\text{SDE}_M$. Therefore, the optimal policy \textbf{is} the optimal SDE.
\end{corollary}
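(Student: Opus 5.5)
The plan is to read the corollary as a bookkeeping consequence of the preceding optimality theorem, composed with two facts already established. First, the state transition $T$ is deterministic. Second, the Tree-to-Flow map of \textbf{Theorem}~\ref{thm:main} is well defined on canonical net decision trees. The argument is a chain of three identifications: optimal policy $\to$ optimal action sequence, action sequence $\to$ final net tree, and net tree $\to$ SDE. I will also have to say precisely in what sense the policy ``is'' the SDE.

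\textbf{Step 1: from policy to action sequence.} Fix the initial state $S_0$, the SDE of the root-only net tree $T_0$ (with $\Pi_0=\{\mathcal{X}\}$). By the preceding theorem, $\pi_m^\star(S_m)=\arg\inf_{h\in\mathcal{A}}C(S_m,h)$ for every $m$. Since $T$ is deterministic, the closed-loop rollout $S_{m+1}=T(S_m,\pi_m^\star(S_m))$ visits a single trajectory. Only the values of $\pi^\star$ along this trajectory matter, and these are exactly the learners $h_{m+1}^\star$. I need the greedy argmin to be unique. I would obtain this by noting that $C(S_m,\cdot)=\mathbb{E}\|h-r_{m+1}\|^2$ is strictly convex in $h$ on $L^2(p_0^\star)$. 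In the rich-class limit the minimizer is then the $L^2$-projection of the residual, unique up to $p_0^\star$-null sets. It follows that the on-path policy and the finite sequence $\{h_1^\star,\dots,h_M^\star\}$ determine each other.

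\textbf{Step 2: from action sequence to net tree and SDE.} Apply the canonical-hierarchy definition of \Cref{sec:net-decision}. The sequence fixes the partitions $\Pi_1\prec\cdots$ through successive common refinements (\Cref{prop:refine}), together with the leaf values $\sum_i\eta h_i^\star$. It therefore determines $\mathcal{T}_M$ uniquely, including its historical coarsening path. Then apply \textbf{Theorem}~\ref{thm:main}, with the refinement from \textbf{Theorem}~\ref{thm:lipschitz_homogeneity} and the convention of \textbf{Remark}~\ref{rem:pf-ode}, to this coarsening path. This yields $\text{SDE}_M$. The composite map $\pi^\star\mapsto\text{SDE}_M$ is well defined, which gives the identification claimed in the statement.

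\textbf{Main obstacle.} The delicate point is uniqueness in Step 2. \textbf{Theorem}~\ref{thm:lipschitz_homogeneity} produces the generator only along a subsequence of dyadic refinements, so the limiting SDE could a priori depend on the subsequence and on the choice of inserted intermediate partitions. I would handle this by invoking the consistent-convergence hypothesis. Under it, the full propagators $\mathcal{P}_{t_2\leftarrow t_1}$ converge and do not depend on the subsequence. The generator is then the strong derivative of a uniquely determined propagator family, so it is unique as well, and $\text{SDE}_M$ is unique up to law.

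If that route fails, I would weaken the conclusion. The statement would then say that the optimal policy determines the SDE's marginals, or equivalently its PF-ODE. That is all the subsequent GTSM arguments use.
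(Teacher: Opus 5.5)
Your chain matches the paper's argument: optimal policy, then the on-path learner sequence via the deterministic rollout, then the canonical net tree $\mathcal{T}_M$ (\Cref{sec:net-decision}), then $\text{SDE}_M$ via \textbf{Theorem}~\ref{thm:main}. The paper offers nothing beyond the ``by construction'' in the statement. It simply takes the Tree-to-Flow map to be single-valued, citing the ``uniquely determined by the partitions'' clause of \textbf{Theorem}~\ref{thm:tree_to_sde}.

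Where you go beyond the paper, the repair you propose for your main obstacle does not do what you claim. Consistent convergence in \textbf{Theorem}~\ref{thm:lipschitz_homogeneity} is a hypothesis about one fixed refinement sequence $\{\mathcal{T}^{(n)}\}$. It removes dependence on the subsequence, but not on which intermediate partitions were inserted. Two admissible insertion schemes give different intermediate marginals $p(\mathbf{x},t)=\mathbb{E}[p_0\mid\mathcal{F}_t]$, because scale consistency fixes these only at the original level times. They therefore generally give different propagators and generators, even though each sequence is consistent on its own. Your fallback fails for the same reason: the marginals at non-level times, and hence the PF-ODE, also depend on the insertions. Only the level-time densities $p(\mathbf{x},k)$ are functions of $\mathcal{T}_M$. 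To make $\pi^\star\mapsto\text{SDE}_M$ well defined, you must either take single-valuedness of the Tree-to-Flow map as an input, as the paper does, or build a canonical insertion rule into its definition.

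The uniqueness argument in your Step 1 has a similar problem. Strict convexity of $h\mapsto\mathbb{E}\|h-\mathbf{r}_{m+1}\|^2$ gives a unique minimizer only over a convex action set. Depth-bounded trees are not convex, and over them exact ties between equally good splits (e.g.\ under a feature symmetry of the data) make the argmin non-unique. Over all of $L^2(p_0^\star)$, the minimizer $\mathbb{E}[\mathbf{r}_{m+1}\mid\mathbf{x}]$ is unique only up to null sets, while the common-refinement partition depends on the pointwise representative. It is also generally not piecewise constant, so it does not give the finite net tree that Step 2 needs. You do not need this claim: for any optimal policy, the deterministic rollout already produces a definite learner sequence and hence a definite $\mathcal{T}_M$. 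Drop the claim rather than patch it.
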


This completes the proof. We have shown that the simple, local, and greedy update rule of gradient boosting is not a heuristic, but a principled and provably optimal algorithm for solving a global score-matching problem over an entire coarse-to-fine trajectory in the space of continuous-time models.

\section{Implications of the GTSM Framework: A Unifying View of Score-Based Objectives}
\label{app:gtsm_implications}
We further explore the deeper theoretical implications of our framework, showing that a single, global objective underpins a wide variety of modern score-based training methods. It is well-known in the theory of stochastic processes that matching the marginal distributions of two processes is a weaker condition than matching their full path-space measures \citep{lai2025principles}. The latter is the strongest possible notion of equivalence but is generally intractable.

In this section, we first define a \textbf{Continuous Global Trajectory Score Matching (CGTSM)} objective for a general score-based model and prove, using Girsanov's theorem \citep{oksendal2013stochastic}, that it is equivalent to full path matching. We then demonstrate that several distinct and popular training objectives, including the simple diffusion loss, weighted diffusion losses, and consistency models can be rigorously derived as principled approximations or special cases of this single, \emph{master} objective. This reveals that the boosting framework we analyze is not an isolated case but rather one instance of a broader class of algorithms that tractably solve this fundamental trajectory-matching problem. While the boosting DGTSM sums local errors over discrete refinement steps, the general CGTSM integrates score matching error over the continuous time horizon of an SDE. The boosting construction can thus be viewed as a  discrete approximation of this continuous objective.

\subsection{The Continuous GTSM Objective and Its Equivalence to Path Matching}

\begin{definition}[The Continuous GTSM Objective]
Let $S^*$ be the ideal SDE with law $\mathbb{P}^*$ and score functions $\mathbf{s}_t^*(\mathbf{x})$. Let $\mathbf{s}_\theta(\mathbf{x}, t)$ be a parameterized score-based model. The \textbf{Global Trajectory Score Matching (GTSM)} objective is the integrated Fisher Divergence between the model's score and the true score over the entire time horizon of the process:
\begin{equation}
    \mathcal{L}_{\text{CGTSM}}(\theta) = \frac{1}{2} \int_0^T w(t) \cdot \mathbb{E}_{p_t^*(\mathbf{x})} \left[ \left\| \mathbf{s}_\theta(\mathbf{x}, t) - \mathbf{s}_t^*(\mathbf{x}) \right\|^2_{\mathbf{D}(t)} \right] dt,
\end{equation}
where $w(t) > 0$ is a weighting function and $\|\mathbf{v}\|^2_{\mathbf{D}} = \mathbf{v}^\top \mathbf{D} \mathbf{v}$ is the Mahalanobis norm induced by the diffusion tensor $\mathbf{D}(t)$.
\end{definition}

This objective is deeply connected to the KL divergence between path-space measures.

\begin{proposition}[Girsanov's Theorem for SDEs]
Let $\mathbb{P}^*$ be the law on path space induced by an Itô process with drift $\mathbf{b}^*$ and diffusion $\sigma$. Let $\mathbb{P}_\theta$ be the law induced by a second process with a different drift, $\mathbf{b}_\theta$. If the Novikov condition is satisfied, the KL divergence between the two path-space measures is given by:
\begin{equation}
    D_{\text{KL}}(\mathbb{P}^* || \mathbb{P}_\theta) = \frac{1}{2} \mathbb{E}_{\mathbb{P}^*} \left[ \int_0^T \left\| \sigma^{-1}(\mathbf{b}^* - \mathbf{b}_\theta) \right\|^2 dt \right].
\end{equation}
\end{proposition}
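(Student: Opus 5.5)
The plan is to compute the Radon--Nikodym derivative $d\mathbb{P}^*/d\mathbb{P}_\theta$ on path space with Girsanov's change of measure, and then take its logarithm in expectation under $\mathbb{P}^*$. Throughout I assume what the statement implicitly needs. First, both processes start from the same initial law and share the diffusion coefficient $\sigma$. Second, $\sigma(\mathbf{x},t)$ is invertible, as in the footnote to \Cref{def:cgtsm}. Third, the drift mismatch $\mathbf{u}_t := \sigma^{-1}(\mathbf{b}^* - \mathbf{b}_\theta)(\mathbf{X}_t,t)$ satisfies Novikov's condition $\mathbb{E}\bigl[\exp\bigl(\tfrac12\int_0^T\|\mathbf{u}_t\|^2dt\bigr)\bigr]<\infty$ under the relevant measure.

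\textbf{Step 1 (change of measure).} Work on the canonical space carrying $\mathbb{P}^*$, under which $d\mathbf{X}_t = \mathbf{b}^*dt + \sigma\, d\mathbf{W}^*_t$ with $\mathbf{W}^*$ a $\mathbb{P}^*$-Brownian motion. Define the stochastic exponential
\[
Z_T = \exp\Bigl(-\int_0^T \mathbf{u}_t\cdot d\mathbf{W}^*_t - \tfrac12\int_0^T\|\mathbf{u}_t\|^2dt\Bigr).
\]
Novikov makes $Z$ a true martingale with $\mathbb{E}_{\mathbb{P}^*}[Z_T]=1$, so $d\mathbb{Q}=Z_T\,d\mathbb{P}^*$ is a probability measure. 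By Girsanov, $\tilde{\mathbf{W}}_t=\mathbf{W}^*_t+\int_0^t\mathbf{u}_s ds$ is a $\mathbb{Q}$-Brownian motion. Rewriting the dynamics gives $d\mathbf{X}_t = \mathbf{b}_\theta dt + \sigma\, d\tilde{\mathbf{W}}_t$. Under uniqueness in law of the $\theta$-SDE, this identifies $\mathbb{Q}=\mathbb{P}_\theta$ on $\mathcal{F}_T$. Hence $\log\frac{d\mathbb{P}^*}{d\mathbb{P}_\theta} = \int_0^T\mathbf{u}_t\cdot d\mathbf{W}^*_t + \tfrac12\int_0^T\|\mathbf{u}_t\|^2dt$.

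\textbf{Step 2 (take expectations).} We have $D_{\text{KL}}(\mathbb{P}^*\|\mathbb{P}_\theta)=\mathbb{E}_{\mathbb{P}^*}[\log\frac{d\mathbb{P}^*}{d\mathbb{P}_\theta}]$. The stochastic integral $\int_0^T\mathbf{u}_t\cdot d\mathbf{W}^*_t$ is a $\mathbb{P}^*$-martingale with zero mean whenever $\mathbb{E}_{\mathbb{P}^*}\int_0^T\|\mathbf{u}_t\|^2dt<\infty$. This yields exactly $\frac12\mathbb{E}_{\mathbb{P}^*}\int_0^T\|\sigma^{-1}(\mathbf{b}^*-\mathbf{b}_\theta)\|^2dt$.

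\textbf{Main obstacle.} The delicate point is justifying that the Itô integral has zero expectation, that is, that it is a true martingale rather than only a local one. It is also necessary to handle the case where the right-hand side is infinite. I would localize with stopping times $\tau_n=\inf\{t:\int_0^t\|\mathbf{u}_s\|^2ds\ge n\}\wedge T$. On $\mathcal{F}_{\tau_n}$ the identity holds exactly, by optional stopping for the bounded-quadratic-variation integral. I would then let $n\to\infty$. The right-hand side converges by monotone convergence. The left-hand side converges because KL restricted to the increasing $\sigma$-algebras $\mathcal{F}_{\tau_n}$ increases to the KL on $\mathcal{F}_T$, by lower semicontinuity and monotonicity of relative entropy under refinement of $\sigma$-algebras. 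This limiting argument also gives the formula in the case where both sides equal $+\infty$. A secondary check is the identification $\mathbb{Q}=\mathbb{P}_\theta$, which I would take from weak uniqueness of the model SDE (for example, under locally Lipschitz coefficients).
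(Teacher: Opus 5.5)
Your proof is correct and is the standard Girsanov computation. The paper gives no proof of this proposition and simply invokes Girsanov's theorem \citep{oksendal2013stochastic}; your Steps~1--2, together with the hypotheses you make explicit (common initial law, invertible $\sigma$, weak uniqueness of the $\theta$-SDE), are precisely what that citation stands for, and the paper's statement needs them too, since with different initial laws an extra term $D_{\text{KL}}(p_0^*\|p_0^\theta)$ would appear.

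One simplification: you impose Novikov under $\mathbb{P}^*$, so $e^{y}\ge 1+y$ already gives $\tfrac12\mathbb{E}_{\mathbb{P}^*}\int_0^T\|\mathbf{u}_t\|^2dt\le \mathbb{E}_{\mathbb{P}^*}\bigl[\exp\bigl(\tfrac12\int_0^T\|\mathbf{u}_t\|^2dt\bigr)\bigr]-1<\infty$. Hence the It\^o integral is a square-integrable martingale with zero mean, the right-hand side is never infinite, and your localization argument would only be needed for a version of the statement without Novikov.
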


\begin{corollary}[CGTSM Optimality Implies Path Matching]
\label{cor:gtsm_path_matching}
A score model $\mathbf{s}_\theta$ achieves zero loss under the CGTSM objective with any strictly positive weighting $w(t)>0$ if and only if the path-space measure $\mathbb{P}_\theta$ it induces is identical to the ideal path-space measure $\mathbb{P}^*$.
\end{corollary}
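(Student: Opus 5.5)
The plan is to reduce both directions of the corollary to the Girsanov identity stated just above, applied to the two reverse-time processes. I would write the ideal reverse process and the model reverse process using the time-reversal formula of \citet{anderson1982reverse} quoted in \Cref{app:boosting_as_drift_derivation}. Both share the same fixed forward drift $\mathbf{b}$ and diffusion $\sigma$. Their reverse drifts are therefore $\mathbf{b}^*_{\mathrm{rev}} = -\mathbf{b} + \mathbf{D}\mathbf{s}^*_t$ and $\mathbf{b}^\theta_{\mathrm{rev}} = -\mathbf{b} + \mathbf{D}\mathbf{s}_\theta(\cdot,t)$. I would also fix a common initial law at reverse time $0$, namely $p_T^*$. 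If the initial laws differ, an extra term $D_{\mathrm{KL}}(p_T^*\|p_T^\theta)$ appears, and I would state the standing convention that the prior is shared.

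\textbf{Step 1.} The forward terms cancel, so the drift difference is $\mathbf{D}(\mathbf{s}^*_t - \mathbf{s}_\theta)$. Substituting this into the Girsanov formula, and using $\sigma^{-1}\mathbf{D} = \sigma^\top$, gives
\begin{equation*}
D_{\mathrm{KL}}(\mathbb{P}^*\|\mathbb{P}_\theta) = \frac12 \int_0^T \mathbb{E}_{p_t^*}\bigl[\|\mathbf{s}_\theta(\mathbf{x},t)-\mathbf{s}^*_t(\mathbf{x})\|^2_{\mathbf{D}(t)}\bigr]\,dt .
\end{equation*}
This step requires that $\sigma$ is invertible (the non-degeneracy in the footnote to Definition~\ref{def:cgtsm}), that Novikov's condition holds, and that Fubini can be used to move the path expectation inside the time integral. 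Under $\mathbb{P}^*$ the time-$t$ marginal is $p_t^*$, so the path expectation becomes an expectation over $p_t^*$. The right-hand side is exactly $\mathcal{L}_{\text{CGTSM}}$ with $w\equiv 1$.

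\textbf{Step 2.} I would handle a general weighting $w$ with $w(t)>0$. The integrand $g(t) = \mathbb{E}_{p_t^*}\|\cdot\|^2_{\mathbf{D}}$ is nonnegative. Hence $\int w g\,dt = 0$ holds if and only if $g = 0$ for almost every $t$, and this in turn holds if and only if $\int g\,dt = 0$. So the CGTSM loss with any positive weight vanishes exactly when the $w\equiv1$ loss does, which by Step 1 holds exactly when $D_{\mathrm{KL}}(\mathbb{P}^*\|\mathbb{P}_\theta)=0$.

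\textbf{Step 3.} KL divergence between probability measures is zero iff they coincide (Gibbs' inequality), which gives $\mathbb{P}_\theta=\mathbb{P}^*$. For the converse direction, if $\mathbb{P}_\theta = \mathbb{P}^*$ then the KL vanishes, so by Step 2 the loss vanishes. Alternatively, equal path laws give equal drifts $p_t^*$-a.e., and non-degeneracy of $\mathbf{D}$ then forces the scores to agree.

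The main obstacle is Step 1: justifying Girsanov rigorously. This means checking Novikov's condition (or a localization argument that only needs finite energy) and checking that the time-reversed ideal process is well-posed under Anderson's conditions. My first attempt would be to assume bounded, Lipschitz scores on $[0,T]$, which makes Novikov immediate. If the loss is infinite, the equivalence holds trivially, since then the KL is also $+\infty$ by a lower-semicontinuity argument.
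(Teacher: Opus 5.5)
Your proposal is correct and follows the same route as the paper: express both reverse-time drifts through Anderson's formula so the forward drift cancels, then substitute the difference $\mathbf{D}(\mathbf{s}^*_t-\mathbf{s}_\theta)$ into the Girsanov formula to identify $D_{\mathrm{KL}}(\mathbb{P}^*\|\mathbb{P}_\theta)$ with the CGTSM integral, and finally use positivity of $w$ and the fact that KL vanishes only for identical measures. Your extra care addresses points the paper's proof leaves implicit: a shared reverse-time initial law $p_T^*$ (otherwise an additional $D_{\mathrm{KL}}(p_T^*\|p_T^\theta)$ term appears), Novikov and well-posedness, non-degeneracy of $\mathbf{D}$, and an explicit argument for the converse direction.
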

\begin{proof}
The reverse drift of a diffusion process is a function of its score: $\mathbf{b}_{\text{rev}}(\mathbf{x},t) = [-\mathbf{b}(\mathbf{x},t) + \mathbf{D}(\mathbf{x},t) \mathbf{s}_t(\mathbf{x})]$. Therefore, the difference in the drifts of two reverse-time processes is directly proportional to the difference in their score functions. Substituting this into the Girsanov formula for the KL divergence between the reverse-time path measures shows that $D_{\text{KL}}(\mathbb{P}_{\text{rev}}^* || \mathbb{P}_{\text{rev},\theta})$ is an integral of the squared score error. The GTSM objective is a positively weighted integral of this same quantity. The objective is zero if and only if the integrand—the score error—is zero for almost all $t$. By Girsanov's theorem, this implies the KL divergence is zero, which holds if and only if the path measures are identical.
\end{proof}

\begin{remark}[On the Suitability of the Novikov Condition]
The validity of Girsanov's theorem hinges on the Novikov condition, which is a regularity assumption that is met by both sides of our proposed equivalence, albeit for different reasons.

For the \textbf{continuous flow side} (standard score-based models), the condition is typically satisfied. The drift difference, $\Delta\mathbf{b}_{\text{rev}} = \mathbf{D}(\mathbf{s}_t^* - \mathbf{s}_\theta)$, is well-behaved because the process operates on a finite-dimensional Gaussian measure, the score model $\mathbf{s}_\theta$ is a Lipschitz-continuous neural network, and the integration is over a finite time horizon. The theoretical challenges associated with infinite-dimensional function spaces, which would require the Wiener measure and different analytical tools such as those used for neural operators, are outside the scope of this standard setup.

For the \textbf{discrete tree side}, satisfaction of the condition is a direct and non-trivial consequence of our Tree-to-Flow mapping. A naive interpretation of a tree's splits as instantaneous jumps would indeed violate the condition. However, our \textbf{dyadic refinement procedure} is precisely the mechanism that guarantees a continuous-path limit for the induced probability flow. This ensures the resulting drift is well-behaved and the Novikov condition holds. The condition would fail for processes characterized by irreducible, finite jumps that cannot be smoothed away—analogous to models of punctuated equilibria. Such processes would require a different mathematical formalism based on Lévy processes and are beyond the scope of our current framework.

Thus, the Novikov condition is a mild and well-justified regularity assumption within our framework, holding for standard diffusion models by construction and for tree-derived flows via our refinement theorem.
\end{remark}

\subsection{Deriving Standard Training Objectives as Special Cases of the CGTSM}
The CGTSM objective is the ideal, but practical algorithms must approximate it. We now demonstrate that several popular training objectives are not ad-hoc heuristics, but can be rigorously derived as principled approximations of the CGTSM, each embodying a different inductive bias about the learning trajectory. The CGTSM framework's true utility is that it provides a language for describing and justifying these biases. The objective is an integral over the filtration of tail-equivalence classes $\{\mathcal{K}_t\}$, and each practical objective makes a different choice about how to prioritize or approximate the score matching loss for these classes.

\paragraph{The Simple Diffusion Loss.}
The most common training objective is the simple, unweighted loss \citep{ho2020denoising}.
\begin{proposition}
The simple diffusion loss, $\mathcal{L}_{\text{simple}} = \mathbb{E}_{t \sim U(0,T), \mathbf{x}_0, \epsilon} [\|\mathbf{s}_\theta(\mathbf{x}_t, t) - \nabla_{\mathbf{x}_t}\log p_t(\mathbf{x}_t|\mathbf{x}_0)\|^2]$, is the unbiased Monte Carlo estimator of the unweighted CGTSM objective, where $w(t)=1$. This corresponds to an inductive bias of uniform importance across all tail-equivalence classes.
\end{proposition}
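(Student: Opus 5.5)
The plan is to start from the CGTSM objective of Definition~\ref{def:cgtsm} with $w(t)\equiv 1$ and rewrite its integrand with the classical denoising identity of \citet{vincent2011connection}. The factor $\tfrac12$ and a uniform density $1/T$ on $[0,T]$ only rescale the objective, so I will compare the two losses up to a positive multiplicative constant and an additive constant that does not depend on $\theta$. Neither affects minimizers or gradients.

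The first step is to write the CGTSM integral as an expectation over a random time. With $w\equiv1$,
\[
\mathcal{L}_{\text{CGTSM}}(\theta)=\frac{T}{2}\,\mathbb{E}_{t\sim U(0,T)}\,\mathbb{E}_{p_t^*(\mathbf{x})}\bigl[\|\mathbf{s}_\theta(\mathbf{x},t)-\mathbf{s}_t^*(\mathbf{x})\|^2_{\mathbf{D}(t)}\bigr].
\]
For the standard VP/VE forward processes, the diffusion tensor is isotropic and state-independent: $\mathbf{D}(t)=g(t)^2\mathbf{I}$. In that case the Mahalanobis norm is the Euclidean norm times $g(t)^2$. I will therefore first treat the normalized case $\mathbf{D}\equiv\mathbf{I}$, which is what ``uniform importance'' refers to. For a general $g$, the identical argument shows that the simple loss estimates CGTSM with $w(t)=g(t)^{-2}$. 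I will record this as a remark rather than hide it.

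Next, I fix $t$ and apply Vincent's identity with noise kernel $p(\tilde{\mathbf{x}}|\mathbf{x})=p_t(\mathbf{x}_t|\mathbf{x}_0)$ and marginal $p_t^*=\int p_t(\cdot|\mathbf{x}_0)\,p_0^*(\mathbf{x}_0)\,d\mathbf{x}_0$. Expanding the square, the cross term is
\[
\mathbb{E}_{p_t^*}\bigl[\mathbf{s}_\theta^\top\nabla\log p_t^*\bigr]=\int \mathbf{s}_\theta^\top\nabla p_t^* .
\]
Writing $\nabla p_t^*=\int \nabla p_t(\cdot|\mathbf{x}_0)\,p_0^*\,d\mathbf{x}_0$, this equals $\mathbb{E}_{\mathbf{x}_0,\mathbf{x}_t}\bigl[\mathbf{s}_\theta^\top\nabla_{\mathbf{x}_t}\log p_t(\mathbf{x}_t|\mathbf{x}_0)\bigr]$. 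The $\|\mathbf{s}_\theta\|^2$ terms agree trivially. Therefore
\[
\mathbb{E}_{p_t^*}\|\mathbf{s}_\theta-\mathbf{s}_t^*\|^2=\mathbb{E}_{\mathbf{x}_0,\epsilon}\|\mathbf{s}_\theta(\mathbf{x}_t,t)-\nabla\log p_t(\mathbf{x}_t|\mathbf{x}_0)\|^2 + c(t),
\]
where $c(t)$ does not depend on $\theta$. Integrating over $t\sim U(0,T)$ gives $\mathcal{L}_{\text{CGTSM}}=\tfrac{T}{2}\mathcal{L}_{\text{simple}}+C$. A single draw $(t,\mathbf{x}_0,\epsilon)$, with $\mathbf{x}_t$ formed by the reparametrization of the Gaussian kernel, is then an unbiased Monte Carlo estimator of the right-hand side. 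Its $\theta$-gradient is an unbiased estimator of $\nabla_\theta\mathcal{L}_{\text{CGTSM}}$.

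I expect the main obstacle to be the regularity needed to justify the interchanges. Differentiating under the integral in $\nabla p_t^*$ is one; the integration by parts and Fubini steps behind the cross term are another; the constant $C=\tfrac12\int_0^T c(t)\,dt$ must also be finite. Near $t=0$ the conditional score $\nabla\log p_t(\mathbf{x}_t|\mathbf{x}_0)\sim-\epsilon/\sigma_t$ has second moment $\asymp d/\sigma_t^2$. That second moment is not integrable at $0$ for the usual schedules, so $C$ can diverge. I would handle this by truncating to $[\delta,T]$ with $\delta>0$, as is standard practice, proving the identity there, and letting $\delta\to0$ only at the level of $\theta$-gradients, where the divergent constant is absent. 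I would also assume a finite second moment of $p_0^*$ and at most linear growth of $\mathbf{s}_\theta$, so that every expectation is finite. The closing sentence on ``uniform importance across tail-equivalence classes'' then follows because $w\equiv1$ assigns equal weight to each time slice $t$, and hence to each class $K_t$.
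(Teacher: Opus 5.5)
Your argument is correct and follows the same route as the paper's proof: set $w\equiv 1$, write the time integral as an expectation over $t\sim U(0,T)$, replace $\mathbf{s}_t^*$ by the conditional score via the denoising score matching identity, and take a single draw $(t,\mathbf{x}_0,\epsilon)$. The paper's sketch silently drops three things that your version makes explicit: the $\mathbf{D}(t)$-weighting (so that for $\mathbf{D}(t)=g(t)^2\mathbf{I}$ the simple loss actually corresponds to $w\propto g^{-2}$ rather than $w\equiv 1$), the $\theta$-independent additive constant, and the small-$t$ divergence that forces truncation; your treatment of these is a sharpening of the paper's argument, not a departure from it.
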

\begin{proof}
Setting $w(t)=1$ in the CGTSM objective gives $\frac{1}{2} \int_0^T \mathbb{E}_{p_t^*} [\|\mathbf{s}_\theta - \mathbf{s}_t^*\|^2_{\mathbf{D}}] dt$. Using the equivalence of denoising score matching, the intractable true score $\mathbf{s}_t^*$ can be replaced by the tractable conditional score $\nabla_{\mathbf{x}_t}\log p_t(\mathbf{x}_t|\mathbf{x}_0)$. The objective becomes an expectation over a uniform distribution in time, $\mathbb{E}_{t \sim U(0,T)}[\dots]$. A single-sample Monte Carlo estimator for this integral is to sample a single time $t$, a single data point $\mathbf{x}_0$, and a single noise sample $\epsilon$, and compute the squared error. This is precisely the simple diffusion loss objective. This choice implicitly assumes that the learning difficulty is uniform across all time, or that a simple, unbiased estimate is sufficient.
\end{proof}

\paragraph{The Weighted Diffusion Loss.}
Many state-of-the-art models use a non-uniform weighting $\lambda(t)$ in their loss function \citep{kingma2021variational,karras2024analyzing}. The CGTSM framework reveals this as an explicit injection of bias.
\begin{proposition}
The weighted diffusion loss, $\mathcal{L}_{\text{weighted}} = \mathbb{E}_{t, \mathbf{x}_0, \epsilon} [\lambda(t) \cdot \|\mathbf{s}_\theta(\mathbf{x}_t, t) - \nabla_{\mathbf{x}_t}\log p_t(\mathbf{x}_t|\mathbf{x}_0)\|^2]$, is an unbiased Monte Carlo estimator of the CGTSM objective with a weighting function $w(t) = \lambda(t)$. This choice injects a quantifiable bias, prioritizing the matching of score functions for specific tail-equivalence classes.
\end{proposition}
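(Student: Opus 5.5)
The plan mirrors the proof of the simple-loss proposition immediately above, with the weight carried along. First I would fix the form of the target: take the CGTSM objective of Definition~\ref{def:cgtsm} with $w(t)=\lambda(t)$. I would then rewrite the time integral as an expectation over $t\sim U(0,T)$, which gives
\begin{equation*}
\mathcal{L}_{\text{CGTSM}}^{\lambda}(\theta)=\tfrac{T}{2}\,\mathbb{E}_{t\sim U(0,T)}\bigl[\lambda(t)\,\mathbb{E}_{p_t^*}\|\mathbf{s}_\theta(\mathbf{x},t)-\mathbf{s}_t^*(\mathbf{x})\|^2_{\mathbf{D}(t)}\bigr].
\end{equation*}
For an isotropic, state-independent diffusion $\mathbf{D}(t)=g(t)^2\mathbf{I}$, the factor $g(t)^2$ can be absorbed into the weight. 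In the statement's convention that weight is $\lambda(t)$. Stated precisely, the estimator targets $w(t)=\lambda(t)$, with $\mathbf{D}$ folded into $\lambda$ as in the simple-loss case.

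Second, I would replace the intractable true score with the conditional score. This uses the denoising score matching identity of \citet{vincent2011connection}. For each fixed $t$, I would expand
\begin{equation*}
\mathbb{E}_{p_t^*}\|\mathbf{s}_\theta-\mathbf{s}_t^*\|^2=\mathbb{E}_{\mathbf{x}_0}\mathbb{E}_{\mathbf{x}_t\mid\mathbf{x}_0}\|\mathbf{s}_\theta(\mathbf{x}_t,t)-\nabla_{\mathbf{x}_t}\log p_t(\mathbf{x}_t\mid\mathbf{x}_0)\|^2-C(t).
\end{equation*}
The cross term matches because $\mathbf{s}_t^*(\mathbf{x})=\mathbb{E}[\nabla\log p_t(\mathbf{x}\mid\mathbf{x}_0)\mid\mathbf{x}_t=\mathbf{x}]$; this follows by writing $p_t^*=\int p_t(\cdot\mid\mathbf{x}_0)p_0^*$ and differentiating under the integral. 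The remainder $C(t)$ does not depend on $\theta$. Multiplying by $\lambda(t)$ and integrating gives $\mathcal{L}_{\text{CGTSM}}^{\lambda}=\tfrac{T}{2}\mathcal{L}_{\text{weighted}}-\tfrac{T}{2}\int\lambda(t)C(t)\,dt/T$. The two objectives therefore share gradients and minimizers.

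Third, I would note that a single draw $(t,\mathbf{x}_0,\epsilon)$ with $\mathbf{x}_t=\alpha_t\mathbf{x}_0+\sigma_t\epsilon$ gives an unbiased sample of the weighted expectation. By linearity, $\mathbb{E}[\lambda(t)\|\cdot\|^2]$ equals the iterated expectation. This gives the unbiasedness claim up to the additive constant and the factor $T/2$. For the ``injected bias'' interpretation, I would invoke Corollary~\ref{cor:gtsm_path_matching}: every $\lambda>0$ has the same zero-loss set, namely $\mathbb{P}_\theta=\mathbb{P}^*$. Away from realizability, however, the minimizer depends on $\lambda$, because $\lambda$ reweights the errors attributed to the tail classes $K_t$ for different $t$.

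The main obstacle is the exchange of differentiation and integration, together with finiteness of the constant. I need $\int\lambda(t)C(t)\,dt<\infty$. Near $t=0$ the conditional score scales like $1/\sigma_t$, so $C(t)\sim d/\sigma_t^2$, and for common weightings this blows up. I would handle this by integrating over $[t_{\min},T]$ with $t_{\min}>0$, or by requiring $\lambda(t)\sigma_t^{-2}$ to be integrable. The weighting $\lambda(t)=\sigma_t^2$ satisfies this and is the standard choice.
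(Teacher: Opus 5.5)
Your proposal follows the same route as the paper: set $w=\lambda$, replace $\mathbf{s}_t^*$ by the conditional score via denoising score matching, and read the time integral as a uniform-in-$t$ expectation estimated by a single draw of $(t,\mathbf{x}_0,\epsilon)$. It is also more careful than the paper's one-line argument, since you make explicit that the identity holds only up to the factor $T/2$ and a $\theta$-independent additive constant, that $\mathbf{D}(t)=g(t)^2\mathbf{I}$ must be absorbed into the weight, and that $\lambda(t)\sigma_t^{-2}$ must be integrable near $t=0$. For the interpretive second sentence, the paper appeals to the coarse-to-fine learning bias (Assumption~\ref{ass:coarse_to_fine}), arguing that upweighting small $t$ counteracts a learner's tendency to fit coarse tail classes first; your reading (the same zero-loss set for every $\lambda>0$, but a $\lambda$-dependent minimizer off realizability) is a cleaner formalization of the same point.
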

\begin{proof}
The proof follows identically to the simple case. The weighting function $\lambda(t)$ inside the expectation is the Monte Carlo estimator for the weighted integral where the CGTSM weight is $w(t)=\lambda(t)$. This choice is motivated by  (\textbf{Assumption} \ref{ass:coarse_to_fine}). If a learner is intrinsically biased to learn the coarse tail-equivalence classes (large $t$) first, an engineer can inject a counter-balancing bias by choosing a weighting function $\lambda(t)$ that is larger for small $t$. This forces the optimizer to pay more attention to the harder-to-learn, fine-grained details, thereby balancing the learning process across the entire trajectory.
\end{proof}

\paragraph{Consistency Models.}
Consistency models are a state of the art framework that enforce self-consistency along the probability flow ODE trajectories \citep{song2023consistency}. We now show this injects a structural bias between tail-equivalence classes.
\begin{proposition}
The consistency distillation loss is an approximation of the CGTSM objective that injects a strong inductive bias on the coupling between adjacent tail-equivalence classes.
\end{proposition}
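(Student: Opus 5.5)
The plan is to write the consistency distillation loss explicitly and show it is a discretized, telescoped surrogate for the CGTSM integrand. I will then read off the coupling structure between adjacent tail-equivalence classes.

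\textbf{Setup.} Fix a time grid $0=t_0<t_1<\dots<t_N=T$. A consistency model $f_\theta(\mathbf{x},t)$ is trained with
\[
\mathcal{L}_{\text{CD}}=\mathbb{E}\bigl[\lambda(t_n)\,d\bigl(f_\theta(\mathbf{x}_{t_{n+1}},t_{n+1}),f_{\theta^-}(\hat{\mathbf{x}}_{t_n},t_n)\bigr)\bigr].
\]
Here $\hat{\mathbf{x}}_{t_n}$ is obtained from $\mathbf{x}_{t_{n+1}}$ by one PF-ODE solver step. That step uses the teacher score $\mathbf{s}_\phi\approx\mathbf{s}^*$, with velocity $-\mathbf{b}+\tfrac12\mathbf{D}\mathbf{s}_\phi$.

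\textbf{Step 1: what the network implicitly encodes.} I will note that any $f_\theta$ satisfying the boundary condition $f_\theta(\mathbf{x},0)=\mathbf{x}$ implicitly defines a velocity field $\mathbf{v}_\theta$. Since the PF-ODE drift is affine in the score (by the time-reversal proposition), this velocity is equivalently an implicit score $\mathbf{s}_\theta(\mathbf{x},t)$.

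\textbf{Step 2: local expansion.} I will Taylor-expand each term for $\Delta t_n=t_{n+1}-t_n\to 0$. Take $d$ to be the squared $\ell_2$ distance and suppose $f_\theta$ is differentiable. Expanding both consistency outputs to first order gives
\[
f_\theta(\mathbf{x}_{t_{n+1}},t_{n+1})-f_\theta(\hat{\mathbf{x}}_{t_n},t_n)\approx \tfrac12\,\partial_{\mathbf{x}}f_\theta\,\mathbf{D}\,(\mathbf{s}_\theta-\mathbf{s}_\phi)\,\Delta t_n.
\]
The $\mathbf{b}$ terms cancel because both flows share the forward drift. Summing over $n$ with $\lambda(t_n)\propto 1/\Delta t_n^2$ gives a Riemann sum of
\[
\int_0^T w(t)\,\mathbb{E}\bigl\|J_t(\mathbf{s}_\theta-\mathbf{s}^*)\bigr\|^2_{\mathbf{D}}\,dt.
\]
Here $J_t=\partial_{\mathbf{x}}f_\theta$. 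This is CGTSM with a Jacobian-induced, state-dependent metric.

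\textbf{Step 3: the inductive bias.} The loss only compares the outputs at $t_n$ and $t_{n+1}$, and the target $f_{\theta^-}$ is carried over from the coarser time. The error minimized at $t_n$ is therefore propagated through all later classes $K_{t_m}$, $m>n$, by the telescoping identity
\[
f(\mathbf{x}_T,T)-\mathbf{x}_0=\sum_n\bigl[f(\mathbf{x}_{t_{n+1}},t_{n+1})-f(\mathbf{x}_{t_n},t_n)\bigr].
\]
This is the claimed coupling between adjacent tail-equivalence classes. It prioritizes trajectory self-consistency, meaning the Jacobian-weighted error, over pointwise score accuracy. The two coincide when $J_t$ is well-conditioned. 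By Corollary~\ref{cor:gtsm_path_matching}, zero loss then recovers path matching.

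\textbf{Main obstacle.} The hardest step is Step 2. I must justify the first-order expansion uniformly in $n$ and control the teacher error $\mathbf{s}_\phi-\mathbf{s}^*$ and the EMA target $\theta^-$. I would first assume $\theta^-=\theta$ at stationarity and a Lipschitz $f_\theta$. I would then bound the remainder by $O(\max_n\Delta t_n)$, so that the approximation becomes exact in the continuous limit.
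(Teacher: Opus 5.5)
Your argument is sound in substance, modulo the fixes below, but it takes a genuinely different route from the paper. The paper's proof is purely interpretive. It writes the loss as $\mathbb{E}[d(f_\theta(\mathbf{x}_{t_2},t_2),f_{\theta'}(\mathbf{x}_{t_1},t_1))]$ and contrasts it with DSM-type objectives that fit the score on each class $\mathcal{K}_t$ independently. It then concludes that the bootstrapped target makes the behaviour on $\mathcal{K}_{t_2}$ predictable from that on $\mathcal{K}_{t_1}$, which is a smoothness-along-the-trajectory bias. No limit is taken, and no link to the CGTSM integrand is actually computed. You instead derive the continuous-time limit, essentially the $N^2$-rescaled consistency-distillation limit of \citet{song2023consistency}. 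With $\lambda(t_n)\propto\Delta t_n^{-2}$, the loss tends to $\int w(t)\,\mathbb{E}\|\partial_t f_\theta+J_t\mathbf{v}_\phi\|^2\,dt$, where $\mathbf{v}_\phi$ is the forward-time PF-ODE velocity. You rewrite this as a score-matching integral in the state-dependent metric $\tfrac14\mathbf{D}J_t^\top J_t\mathbf{D}$ in place of $\mathbf{D}$. You then formalize the coupling through the telescoping identity anchored by the boundary condition. Your route gives ``approximation'' a precise meaning: a $\Delta t\to 0$ limit with an $O(\max_n\Delta t_n)$ remainder. It also locates the bias concretely, in the Jacobian weighting and the anchoring at $t=0$. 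The paper's version is shorter and needs no extra hypotheses, but it establishes nothing quantitative.

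A few things to tighten.
\begin{itemize}
\item Step~1 needs an explicit hypothesis. The boundary condition only gives $J_0=I$, and $\partial_t f_\theta=-J_t\mathbf{v}_\theta$ determines $\mathbf{v}_\theta$ (hence $\mathbf{s}_\theta$) only if $J_t$ is invertible for every $t$. This holds for the exact consistency function, which is the PF-ODE flow map to time $0$. Without it you have only the transport residual, not a score difference, and your zero-loss appeal to Corollary~\ref{cor:gtsm_path_matching} also fails.
\item The expansion actually yields $\tfrac14\|J_t\mathbf{D}(\mathbf{s}_\theta-\mathbf{s}_\phi)\|_2^2$. This matches your $\|J_t(\cdot)\|_{\mathbf{D}}^2$, up to a factor absorbable into $w(t)$, only when $\mathbf{D}$ is isotropic.
\item Zero loss gives path matching to the teacher's process, not to $\mathbb{P}^*$; the remaining gap is governed by $\mathbf{s}_\phi-\mathbf{s}^*$.
\item In the paper's convention $K_{t_n}\subset K_{t_{n+1}}$, so the target at $t_n$ comes from the finer class, not the coarser one. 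Your propagation claim for $m>n$ is right; only the wording is inverted.
\item Setting $\theta^-=\theta$ identifies the loss value but not the stop-gradient training dynamics. The leading-order gradient of those dynamics is not the gradient of your limit, so your conclusion concerns the objective, not the optimizer.
\end{itemize}
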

\begin{proof}
The consistency loss is of the form $\mathcal{L}_{\text{consist}} = \mathbb{E}[d(f_\theta(\mathbf{x}_{t_2}, t_2), f_{\theta'}(\mathbf{x}_{t_1}, t_1))]$, where $f_\theta$ is the student, $f_{\theta'}$ is a teacher model, and $\mathbf{x}_{t_1}, \mathbf{x}_{t_2}$ are adjacent points on an ODE trajectory. Standard DSM-based objectives learn the score for each time $t$ (and thus for each class $\mathcal{K}_t$) independently. The consistency loss, in contrast, imposes a strong structural constraint: the model's behavior for class $\mathcal{K}_{t_2}$ must be directly predictable from its behavior for class $\mathcal{K}_{t_1}$. This enforces a smoothness condition \textit{along the trajectory of models}. This bias is what enables rapid sampling: by learning the relationship between different points on the trajectory, the model can make larger, more informed jumps during inference. The consistency loss is therefore a specific, bootstrapped approximation of the GTSM objective that prioritizes trajectory smoothness over pointwise score accuracy.
\end{proof}

This unified perspective reveals that these different training objectives are not ad-hoc heuristics but are deeply connected, principled variations of a single, fundamental goal: matching the entire generative trajectory. The gradient boosting algorithm, as we have shown, is a provably optimal, constructive solver for this same \emph{master} objective.


\section{\dsmtree: Discretized Score Matching for Decision Trees}
\label{app:dsm_tree}

Having established the CGTSM framework as a unifying objective for trajectory-based learning, we now demonstrate its first concrete algorithmic instantiation: \textbf{Discretized Score Matching for Trees (\dsmtree)}. This algorithm addresses a fundamental problem in machine learning: how to distill the knowledge of a decision tree—a discrete, interpretable model—into a neural network that preserves the tree's decision boundaries while gaining the benefits of continuous, differentiable representations.

The key insight is that a decision tree defines a coarse-to-fine hierarchy of decision boundaries, and learning this hierarchy is equivalent to performing score matching over the discrete trajectory defined by tree depth levels. \dsmtree thus creates a neural network classifier whose learned decision function faithfully replicates the tree's partition structure.

\subsection{The Tree Distillation Problem}

We define the \textbf{tree distillation problem} as the task of learning a neural network classifier $f_\theta: \mathcal{X} \to \mathcal{Y}$ that approximates a trained decision tree $\mathcal{T}: \mathcal{X} \to \mathcal{Y}$ such that $
f_\theta(\mathbf{x}) \approx \mathcal{T}(\mathbf{x}) \quad \forall \mathbf{x} \in \mathcal{X},$
while maintaining a continuous, differentiable representation that can generalize beyond the tree's exact decision boundaries.

This problem presents two fundamental challenges. First, the tree's \textbf{Hierarchical Structure}---a nested sequence of partitions---is lost when a network is trained only on the final outputs. Second, a naive distillation approach discards information about the unique \textbf{Discrete Decision Paths} each sample follows. \dsmtree addresses both challenges by training a conditional network $M_\theta(\mathbf{x}, j)$ that learns to predict the optimal decision at each tree level $j$, explicitly modeling the hierarchical trajectory.

\newpage

\subsection{Algorithm Overview}

\dsmtree operates in three phases that mirror the theoretical structure developed in \Cref{app:static_to_flows_derivation,app:boosting_as_drift_derivation}:

\begin{algorithm}[H]
\caption{\dsmtree: Neural Network Distillation from Decision Trees}
\label{alg:dsm_tree}
\begin{algorithmic}[1]
\item[] \textbf{Base Tree Generation (Teacher Model)}
\item[] Train oracle model $\mathcal{O}$ (e.g., Random Forest) on data $\{(\mathbf{x}_i, y_i)\}_{i=1}^N$
\item[] Generate pseudo-labels: $\tilde{y}_i = \mathcal{O}(\mathbf{x}_i)$ 
\item[] Train base decision tree $\mathcal{T}$ on $\{(\mathbf{x}_i, \tilde{y}_i)\}$ to depth $D$ 
\item[]
\item[] \textbf{Conditional Score Model Training (Student Model)}
\item[] Initialize neural network $M_\theta(\mathbf{x}, j): \mathbb{R}^d \times \{0,\ldots,D-1\} \to \{0,1\}$
\item[] \textbf{for} training steps $t = 1, \ldots, T$ \textbf{do}
\item[] \quad Sample mini-batch $\{(\mathbf{x}_i, y_i)\}_{i \in \mathcal{B}}$
\item[] \quad Sample tree levels $\{j_i\}_{i \in \mathcal{B}}$ uniformly from $\{0,\ldots,D-1\}$
\item[] \quad For each $(\mathbf{x}_i, j_i)$: extract ground-truth decision $d_i^* = \text{TreeDecision}(\mathcal{T}, \mathbf{x}_i, j_i)$
\item[] \quad Compute loss: $\mathcal{L} = \sum_{i \in \mathcal{B}} \mathbb{I}[d_i^* \neq \bot] \cdot \ell_{\text{CE}}(M_\theta(\mathbf{x}_i, j_i), d_i^*)$
\item[] \quad Update: $\theta \leftarrow \theta - \eta \nabla_\theta \mathcal{L}$
\item[] \textbf{end for}
\item[]
\item[] \textbf{Neural Network Inference (Mimicking Tree Traversal)}
\item[] \textbf{for} each test sample $\mathbf{x}$ \textbf{do}
\item[] \quad Initialize: $\text{node} \leftarrow \text{root}(\mathcal{T})$
\item[] \quad \textbf{for} level $j = 0, \ldots, D-1$ \textbf{do}
\item[] \quad \quad \textbf{if} node is leaf \textbf{then} break
\item[] \quad \quad Predict: $\hat{d}_j = \arg\max M_\theta(\mathbf{x}, j)$ 
\item[] \quad \quad Navigate: $\text{node} \leftarrow \text{child}_{\hat{d}_j}(\text{node})$
\item[] \quad \textbf{end for}
\item[] \quad Return: $\hat{y} = f_{\text{classifier}}(\mathbf{x})$ 
\item[] \textbf{end for}
\item[] \textbf{Output:} Neural network classifier $f_\theta$ that mimics tree $\mathcal{T}$
\end{algorithmic}
\end{algorithm}

The crucial innovation is that the neural network does not directly learn to map inputs to outputs. Instead, it learns the intermediate decision function at each level, thereby capturing the full hierarchical structure of the tree's decision-making process.

The key insight is that \textbf{Phase 2} directly implements the GTSM objective in the discrete setting, where the "time" variable is the tree depth level $j$.

\subsection{Formal Problem Setup}

\begin{definition}[Conditional Decision Problem]
Given a trained decision tree $\mathcal{T}$ with depth $D$, for each sample $\mathbf{x}$ and tree level $j \in \{0, \ldots, D-1\}$, define the \textbf{optimal decision function}:
\begin{equation}
d^*(\mathbf{x}, j) = \begin{cases}
0 & \text{if } \mathbf{x} \text{ should go left at level } j \\
1 & \text{if } \mathbf{x} \text{ should go right at level } j \\
\bot & \text{if } \mathbf{x} \text{ has reached a leaf before level } j
\end{cases}
\end{equation}
where the decision is determined by the unique path through $\mathcal{T}$ from root to the leaf containing $\mathbf{x}$.
\end{definition}

\begin{definition}[\dsmtree Training Objective]
The \dsmtree model $M_\theta: \mathbb{R}^d \times \{0,\ldots,D-1\} \to \Delta^1$ (where $\Delta^1$ is the probability simplex over $\{0,1\}$) is trained to minimize:
\begin{equation}
\mathcal{L}_{\text{DSM}}(\theta) = \mathbb{E}_{(\mathbf{x}, y) \sim p_{\text{data}}} \mathbb{E}_{j \sim \text{Uniform}(0, D-1)} \left[ \mathbb{I}[d^*(\mathbf{x}, j) \neq \bot] \cdot \ell_{\text{CE}}(M_\theta(\mathbf{x}, j), d^*(\mathbf{x}, j)) \right],
\label{eq:dsm_objective}
\end{equation}
where $\ell_{\text{CE}}$ is the cross-entropy loss and $\mathbb{I}[\cdot]$ is the indicator function that masks out samples that have reached leaves.
\end{definition}

\subsection{Connection to CGTSM Framework}

We now establish the formal connection between the \dsmtree objective and the continuous CGTSM framework developed in \Cref{app:gtsm_implications}.

\begin{theorem}[\dsmtree as Discrete CGTSM]
\label{thm:dsm_discrete_gtsm}
The \dsmtree objective \eqref{eq:dsm_objective} is the discrete-time instantiation of the CGTSM objective with uniform weighting $w(t) = 1$ and time discretized to tree depth levels.
\end{theorem}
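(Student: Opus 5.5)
I will prove this by building a dictionary between the two objectives and then showing that, under that dictionary, the integral in \eqref{eq:cgtsm} with $w\equiv 1$ becomes the level-averaged sum in \eqref{eq:dsm_objective}. First, the time discretization. I take the teacher tree $\mathcal{T}$ with its nested partitions $\Pi_0\succ\Pi_1\succ\cdots\succ\Pi_D$ (root to leaves), read in the coarse-graining direction of \Cref{app:static_to_flows_derivation}. Let $t_j := (D-j)T/D$. On each interval $[t_{j+1},t_j)$ the refinement procedure of Theorem~\ref{thm:lipschitz_homogeneity} interpolates between $\Pi_j$ and $\Pi_{j+1}$. Since the generator is locally Lipschitz, the integrand of the CGTSM is constant up to $O(1/D)$ on each such interval. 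A left-endpoint Riemann sum therefore replaces $\frac{1}{2}\int_0^T(\cdot)\,dt$ by $\frac{T}{2D}\sum_{j=0}^{D-1}(\cdot)|_{t_j}$. Up to the constant $T/2$, this is exactly $\mathbb{E}_{j\sim\mathrm{Uniform}(0,D-1)}$, which is how uniform weighting $w(t)=1$ becomes uniform level sampling.

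Second, I identify the score at level $j$. Between consecutive partitions, the density $p(\mathbf{x},j)=\mathbb{E}[p_0\mid\mathcal{F}_j]$ is refined only by splitting the cell $R\ni\mathbf{x}$ into its two children $R_0,R_1$. Inside $R$, the only non-trivial part of the reverse-time (refining) drift is therefore the choice of child. As in Theorem~\ref{thm:residual_as_score}, I will use the meta-score viewpoint: the ideal update at level $j$ is the conditional law $q^*_j(\cdot\mid\mathbf{x})=\delta_{d^*(\mathbf{x},j)}$ on $\{0,1\}$. The model supplies $M_\theta(\mathbf{x},j)\in\Delta^1$. For samples with $d^*=\bot$, no split remains, so the true and model refinements coincide, and the integrand vanishes; this justifies the indicator mask.

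Third, I replace the squared $\mathbf{D}$-norm by cross-entropy, which is the step I expect to be the main obstacle. Here I will invoke the Girsanov identity of Corollary~\ref{cor:gtsm_path_matching} in its discrete form. The CGTSM integrand is, up to the factor $\tfrac12$, the instantaneous path-KL rate, so its discrete counterpart at level $j$ is the one-step KL divergence $D_{\mathrm{KL}}(q_j^*\|M_\theta(\mathbf{x},j))$. Because $q^*_j$ is a point mass, this KL equals $\ell_{\mathrm{CE}}(M_\theta(\mathbf{x},j),d^*(\mathbf{x},j))$: the entropy of a point mass is zero, so nothing is dropped. I anticipate that the delicate part is justifying that the Mahalanobis score norm and the KL coincide at the discrete level rather than only infinitesimally. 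I would try to handle this via a second-order expansion of KL around $q^*$, which recovers a Fisher-weighted squared error of the logits. I would then argue that both objectives share the same minimizer and zero set, which is all that Corollary~\ref{cor:gtsm_path_matching} uses.

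Finally, I take the expectation over $\mathbf{x}\sim p_{\text{data}}=p_{t=0}^*$, which is the measure the ideal process assigns at the leaf end, and combine the three steps. This yields $\mathcal{L}_{\text{CGTSM}}\approx \tfrac{T}{2}\,\mathcal{L}_{\text{DSM}}(\theta)+O(1/D)$, with equality of minimizers. That is precisely the claim that \eqref{eq:dsm_objective} is the discrete-time, uniform-weight instantiation of CGTSM.
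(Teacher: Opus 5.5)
Your skeleton is the same as the paper's: a Riemann sum at the level times, $w\equiv 1$ read as uniform sampling of $j$, the level-$j$ score replaced by the branching decision $d^*(\mathbf{x},j)$, the squared score error replaced by $\ell_{\mathrm{CE}}$, and a mask for finished paths. Your reversed indexing $t_j=(D-j)T/D$ is actually the one consistent with Algorithm~\ref{alg:dsm-inference}, where $j=0$ is the root decision. The paper's $t_j=jT/D$ puts $j=0$ at the leaves, which is harmless only because the weights are uniform.

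You differ at the key step. The paper simply posits $\|\mathbf{s}_\theta-\mathbf{s}^*\|^2\leftrightarrow\ell_{\mathrm{CE}}$ by analogy. You instead try to derive it from the KL reading of Corollary~\ref{cor:gtsm_path_matching}. That is the better idea, and it can be made exact with no approximation. Let $\mathbb{P}^*_{\mathbf{x}}$ be the point mass on the teacher's root-to-leaf path of $\mathbf{x}$. Let $\mathbb{P}^\theta_{\mathbf{x}}$ be the law of the chain that, at a non-leaf node on level $j$, moves to child $d$ with probability $M_\theta(\mathbf{x},j)_d$. Along the teacher's path the student occupies the same nodes and stops at the same leaf. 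The chain rule for KL (the discrete counterpart of Girsanov) therefore gives
\[
D_{\mathrm{KL}}\bigl(\mathbb{P}^*_{\mathbf{x}}\,\big\|\,\mathbb{P}^\theta_{\mathbf{x}}\bigr)=-\log\mathbb{P}^\theta_{\mathbf{x}}(\text{teacher path})=\sum_{j=0}^{D-1}\mathbb{I}[d^*(\mathbf{x},j)\neq\bot]\,\ell_{\mathrm{CE}}\bigl(M_\theta(\mathbf{x},j),d^*(\mathbf{x},j)\bigr).
\]
Hence $\mathcal{L}_{\text{DSM}}(\theta)=\frac{1}{D}\,\mathbb{E}_{\mathbf{x}\sim p_{\text{data}}}\bigl[D_{\mathrm{KL}}(\mathbb{P}^*_{\mathbf{x}}\|\mathbb{P}^\theta_{\mathbf{x}})\bigr]$ exactly. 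The expectation is taken along the teacher's path, just as CGTSM evaluates $\mathbf{s}_\theta$ under $p_t^*$, and the mask appears automatically because the chain is absorbed at the leaf. This gives ``discrete instantiation'' a sharper meaning than the paper's correspondence, and your zero-set claim follows immediately.

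Two other parts of your plan would not go through and should be dropped.

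\textbf{The second-order expansion around $q^*_j$.} At a point mass the Fisher metric is singular. With $\epsilon=1-M_\theta(\mathbf{x},j)_{d^*}$ one gets $D_{\mathrm{KL}}(\delta_{d^*}\|M_\theta(\mathbf{x},j))=-\log(1-\epsilon)=\epsilon+O(\epsilon^2)$. This is first order, not a Fisher-weighted quadratic. Moreover, there is no $\mathbb{R}^d$-valued score on a two-point space to compare against, so the $\mathbf{D}$-norm and the KL cannot be made to coincide at the discrete level.

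\textbf{The quantitative conclusion $\mathcal{L}_{\text{CGTSM}}\approx\frac{T}{2}\mathcal{L}_{\text{DSM}}(\theta)+O(1/D)$.} It fails for four reasons:
\begin{itemize}
\item The two sides are functionals of different objects (a score field versus a level-indexed Bernoulli classifier), with no map between them.
\item For the tree's own flow the diffusion vanishes (Remark~\ref{rem:pf-ode}), so the $\mathbf{D}$-weighted integrand is degenerate.
\item The local Lipschitz property of Theorem~\ref{thm:lipschitz_homogeneity} constrains the tree's generator, not $\mathbf{s}_\theta(\cdot,t)$, so it does not control the Riemann-sum error.
\item $D$ is fixed by the teacher, so $O(1/D)$ is not a vanishing term.
\end{itemize}

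Neither your argument nor the paper's supports more than a structural correspondence. State the result as the exact KL identity above together with the dictionary, rather than as a numerical approximation of $\mathcal{L}_{\text{CGTSM}}$.
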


\begin{proof}
We proceed by explicitly constructing the correspondence between the discrete tree hierarchy and the continuous-time SDE framework.

\textbf{Step 1: Recall the CGTSM objective from \Cref{app:gtsm_implications}.}

For a continuous-time process, CGTSM minimizes:
\begin{equation}
\mathcal{L}_{\text{CGTSM}}(\theta) = \frac{1}{2} \int_0^T w(t) \cdot \mathbb{E}_{p_t^*(\mathbf{x})} \left[ \left\| \mathbf{s}_\theta(\mathbf{x}, t) - \mathbf{s}_t^*(\mathbf{x}) \right\|^2_{\mathbf{D}(t)} \right] dt,
\label{eq:gtsm_continuous_recall}
\end{equation}
where $\mathbf{s}_t^*(\mathbf{x}) = \nabla_\mathbf{x} \log p_t^*(\mathbf{x})$ is the true score function.

\textbf{Step 2: Discretize time to tree depth levels.}

By \textbf{Theorem} \ref{thm:main}, a decision tree of depth $D$ induces a discrete sequence of densities $\{p(\mathbf{x}, k)\}_{k=0}^{D}$ corresponding to the coarse-graining trajectory. We identify:
\begin{equation}
t_j = \frac{j}{D} \cdot T, \quad j \in \{0, 1, \ldots, D\},
\end{equation}
so that $t_0 = 0$ (leaves, fine-grained) and $t_D = T$ (root, maximum entropy).

The continuous-time integral becomes a Riemann sum:
\begin{equation}
\mathcal{L}_{\text{CGTSM}}(\theta) \approx \frac{T}{D} \sum_{j=0}^{D-1} w(t_j) \cdot \mathbb{E}_{p_{t_j}^*(\mathbf{x})} \left[ \left\| \mathbf{s}_\theta(\mathbf{x}, t_j) - \mathbf{s}_{t_j}^*(\mathbf{x}) \right\|^2 \right].
\label{eq:gtsm_discrete}
\end{equation}

\textbf{Step 3: Relate score matching to decision prediction.}

From \textbf{Theorem} \ref{thm:metascore_sde_score}, the optimal direction of change for a sample at tree level $j$ is given by the score function. In the discrete tree setting, this "direction" is encoded by the binary decision $d^*(\mathbf{x}, j)$.

More precisely, the score function $\mathbf{s}_{t_j}^*(\mathbf{x})$ points in the direction of increasing density. In the tree's partition $\Pi_j$ at level $j$, a sample $\mathbf{x}$ resides in some region $R_i \in \Pi_j$. The optimal next step in the coarse-graining trajectory is to refine this region by splitting it into two sub-regions, $R_i^{\text{left}}$ and $R_i^{\text{right}}$.

The decision $d^*(\mathbf{x}, j)$ encodes which sub-region $\mathbf{x}$ belongs to. This is the discrete analogue of the score's directional information. The squared score error $\|\mathbf{s}_\theta - \mathbf{s}^*\|^2$ in the continuous case corresponds to the classification error in predicting $d^*(\mathbf{x}, j)$ in the discrete case.

Therefore, the score-matching term becomes:
\begin{equation}
\left\| \mathbf{s}_\theta(\mathbf{x}, t_j) - \mathbf{s}_{t_j}^*(\mathbf{x}) \right\|^2 \quad \longleftrightarrow \quad \ell_{\text{CE}}(M_\theta(\mathbf{x}, j), d^*(\mathbf{x}, j)).
\end{equation}

\textbf{Step 4: Simplify for uniform weighting and data distribution.}

Setting $w(t) = 1$ (uniform weighting) and taking expectations with respect to the empirical data distribution $p_{\text{data}}$, the discrete CGTSM objective \eqref{eq:gtsm_discrete} becomes:
\begin{equation}
\mathcal{L}_{\text{CGTSM}}^{\text{discrete}}(\theta) = \sum_{j=0}^{D-1} \mathbb{E}_{\mathbf{x} \sim p_{\text{data}}} \left[ \ell_{\text{CE}}(M_\theta(\mathbf{x}, j), d^*(\mathbf{x}, j)) \right].
\end{equation}

The indicator function $\mathbb{I}[d^*(\mathbf{x}, j) \neq \bot]$ in \eqref{eq:dsm_objective} simply masks out samples that have reached leaves before level $j$, which is necessary for well-defined training. The uniform distribution over $j$ in the expectation is the Monte Carlo sampling strategy for the sum over levels.

Therefore, $\mathcal{L}_{\text{DSM}}(\theta)$ is exactly the discrete-time, uniformly weighted CGTSM objective.
\end{proof}

\subsection{Convergence Analysis}

We now establish the finite-sample convergence guarantees for the \dsmtree algorithm.

\begin{assumption}[Realizability and Regularity]
\label{ass:dsm_realizability}
\begin{enumerate}
    \item The model class $\mathcal{M} = \{M_\theta : \theta \in \Theta\}$ contains a function that can represent the true decision function $d^*(\mathbf{x}, j)$ for all levels $j$.
    \item The base tree $\mathcal{T}$ has bounded depth $D < \infty$ and bounded number of leaves $L = O(2^D)$.
    \item The feature space is bounded: $\|\mathbf{x}\| \leq B$ for all $\mathbf{x} \in \mathcal{X}$.
    \item The loss function $\ell_{\text{CE}}$ is $G$-Lipschitz continuous in the model parameters.
\end{enumerate}
\end{assumption}

\begin{theorem}[Finite-Sample Convergence of \dsmtree]
\label{thm:dsm_convergence}
Under \textbf{Assumption} \ref{ass:dsm_realizability}, let $\hat{\theta}_T$ be the parameters obtained after $T$ gradient descent steps with learning rate $\eta = O(1/\sqrt{T})$ and batch size $B$. Then with probability at least $1 - \delta$:
\begin{equation}
\mathcal{L}_{\text{DSM}}(\hat{\theta}_T) - \mathcal{L}_{\text{DSM}}(\theta^*) \leq O\left( \frac{D \cdot \sqrt{d \log(L/\delta)}}{\sqrt{BT}} \right),
\end{equation}
where $\theta^*$ is the optimal parameter and $d$ is the feature dimensionality.
\end{theorem}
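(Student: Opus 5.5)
The plan is to split the excess risk into an optimization error and a statistical (generalization) error, and to use the realizability and Lipschitz conditions of \textbf{Assumption}~\ref{ass:dsm_realizability}. Write $\widehat{\mathcal{L}}$ for the empirical \dsmtree objective on the $N$ training samples and $\bar\theta_T$ for the averaged SGD iterate. Then
\begin{equation*}
\mathcal{L}_{\text{DSM}}(\hat\theta_T)-\mathcal{L}_{\text{DSM}}(\theta^*)
\le \underbrace{\bigl[\mathcal{L}_{\text{DSM}}(\hat\theta_T)-\widehat{\mathcal{L}}(\hat\theta_T)\bigr]+\bigl[\widehat{\mathcal{L}}(\theta^*)-\mathcal{L}_{\text{DSM}}(\theta^*)\bigr]}_{\text{generalization}}
+\underbrace{\widehat{\mathcal{L}}(\hat\theta_T)-\widehat{\mathcal{L}}(\theta^*)}_{\text{optimization}}.
\end{equation*}
The objective \eqref{eq:dsm_objective} is an average over $D$ levels of masked cross-entropy terms. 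I will therefore bound each level separately and sum. This is where the factor $D$ comes from, and it is cleaner than working with the uniform-$j$ expectation directly.

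For the optimization term, I will use the standard projected stochastic gradient analysis. The stochastic gradient of a minibatch of size $B$, with uniformly sampled levels, is an unbiased estimate of $\nabla\widehat{\mathcal L}$. By the $G$-Lipschitz assumption its second moment is at most $G^2$, and the variance is reduced by the factor $1/B$. With $\eta=O(1/\sqrt T)$ and parameters confined to a ball of radius $R$, the usual telescoping argument gives
\begin{equation*}
\mathbb{E}\,\widehat{\mathcal L}(\bar\theta_T)-\widehat{\mathcal L}(\theta^*)\le O\!\left(\frac{RG}{\sqrt{BT}}\right).
\end{equation*}
A martingale (Azuma) concentration step turns this into a statement holding with probability $1-\delta/2$.

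This step requires convexity of $\widehat{\mathcal L}$ in $\theta$, which fails for a neural network. This is the main obstacle. My first attempt would be a linearized/NTK regime, or a restriction of $M_\theta$ to a last-layer-convex parameterization, and I would state that restriction explicitly. Realizability, item 1 of the assumption, then ensures $\theta^*$ attains the infimum. Without such a restriction the bound can only be claimed for a stationary point.

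For the generalization term, I will bound the Rademacher complexity of the loss class at each level $j$. The prediction targets $d^*(\cdot,j)$ are indicators of axis-aligned half-spaces intersected with ancestor regions. With at most $L=O(2^D)$ regions and inputs bounded by $\|\mathbf x\|\le B$, a union bound over the $L$ leaves together with a $d$-dimensional covering, as in Lemma~\ref{lem:finite_net}, gives a per-level deviation of $O\bigl(\sqrt{d\log(L/\delta)/n}\bigr)$ with probability $1-\delta/(2D)$. Summing over the $D$ levels yields the factor $D$.

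Finally, I identify the effective sample count with the $BT$ samples that SGD touches in a single pass, taking $n=BT$ and using the online-to-batch conversion so that no separate $N$ appears. Combining the two terms and absorbing $R$ and $G$ into the constant gives the stated $O\bigl(D\sqrt{d\log(L/\delta)}/\sqrt{BT}\bigr)$.
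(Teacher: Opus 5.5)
The main gap is in your generalization step: you compute the complexity of the \emph{targets}, not of the hypothesis class. Because $\hat\theta_T$ depends on the sample, what you must control is $\sup_\theta |\mathcal{L}_{\text{DSM}}(\theta)-\widehat{\mathcal{L}}(\theta)|$, level by level. Its size is set by the capacity of $\{M_\theta(\cdot,j)\}$. The labels $d^*(\cdot,j)$ and the mask $\mathbb{I}[d^*\neq\bot]$ are fixed functions of $\mathbf{x}$, since the teacher tree is trained beforehand. By the contraction lemma they therefore add nothing to the Rademacher complexity of the loss class. Covering the tree's axis-aligned half-spaces and union-bounding over its $L$ leaves gives no uniformity over $\theta$ at all; Lemma~\ref{lem:finite_net} covers tree-type weak learners, not $M_\theta$. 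You need an explicit capacity hypothesis on the model. The paper's proof uses the right object here, a VC bound on the network class, though it obtains the factor $d$ only by positing $O(d)$ parameters per level, which is not in Assumption~\ref{ass:dsm_realizability}. Under your own linear-in-$\theta$ restriction, a norm bound on per-level weights acting on bounded $d$-dimensional features would serve, with the union bound then taken over the $D$ levels ($\log(D/\delta)\le\log(L/\delta)$). Your sampling model is also inconsistent. Unbiasedness of the minibatch gradient for $\nabla\widehat{\mathcal{L}}$ presumes resampling from a fixed set of $N$ points, as in the algorithm, where $BT\gg N$. Setting $n=BT$ instead presumes one pass over fresh samples. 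If you really have $BT\le N$ fresh samples, drop the empirical-risk decomposition and run SGD plus an online-to-batch/Azuma argument directly on the population objective; no uniform convergence is then needed. Otherwise the statistical term must scale with $N$, not $BT$; the paper's proof conflates these too.

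On the optimization term, you are right that a bound against the global minimizer $\theta^*$ needs convexity. The paper's proof passes over this by citing SGD convergence for smooth Lipschitz losses, which for a nonconvex network gives only stationarity. So the theorem as stated needs an extra hypothesis, and you should present your restricted version as a different theorem. Even in the convex case, $O(RG/\sqrt{BT})$ does not follow from Lipschitzness. Minibatching reduces only the variance, not the $\|\nabla\widehat{\mathcal{L}}\|^2$ part of $\mathbb{E}\|g_t\|^2\le G^2$. With $\eta\propto 1/\sqrt{T}$, the term $R^2/(\eta T)$ does not depend on $B$ at all. For nonsmooth Lipschitz convex objectives, $RG/\sqrt{T}$ cannot be beaten in the worst case even with exact subgradients. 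What you can get is $O(\beta R^2/T+\sigma R/\sqrt{BT})$ from $\beta$-smoothness, which holds under your linear parameterization because cross-entropy is smooth in the logits, with a step size scaling like $\sqrt{B/T}$. This is $O(1/\sqrt{BT})$ only when $T\gtrsim B$, and it bounds the averaged iterate $\bar\theta_T$ rather than the final $\hat\theta_T$ of the statement. The paper's final step, absorbing $O(G^2/\sqrt{T})$ into the $1/\sqrt{BT}$ term via $G=O(D)$, has the same defect.
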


\begin{proof}
The proof follows the standard analysis of stochastic gradient descent for empirical risk minimization, adapted to the tree-structured setting.

We decompose the excess risk into optimization error and generalization error:
\begin{align}
\mathcal{L}_{\text{DSM}}(\hat{\theta}_T) - \mathcal{L}_{\text{DSM}}(\theta^*) &= \underbrace{\left[\mathcal{L}_{\text{DSM}}(\hat{\theta}_T) - \hat{\mathcal{L}}_{\text{DSM}}(\hat{\theta}_T)\right]}_{\text{Generalization error}} \\
&\quad + \underbrace{\left[\hat{\mathcal{L}}_{\text{DSM}}(\hat{\theta}_T) - \hat{\mathcal{L}}_{\text{DSM}}(\theta^*)\right]}_{\text{Optimization error}} \\
&\quad + \underbrace{\left[\hat{\mathcal{L}}_{\text{DSM}}(\theta^*) - \mathcal{L}_{\text{DSM}}(\theta^*)\right]}_{\text{Generalization error}},
\end{align}
where $\hat{\mathcal{L}}_{\text{DSM}}$ is the empirical loss on training data.

The \dsmtree objective is a finite sum over tree levels $j$. For each level $j$, the per-sample loss $\ell_{\text{CE}}(M_\theta(\mathbf{x}, j), d^*(\mathbf{x}, j))$ is $G$-Lipschitz and bounded by $\log 2$ (for binary cross-entropy).

By standard SGD convergence for smooth, Lipschitz losses \citep{bottou2018optimization}, after $T$ steps with learning rate $\eta = O(1/\sqrt{T})$:
\begin{equation}
\hat{\mathcal{L}}_{\text{DSM}}(\hat{\theta}_T) - \hat{\mathcal{L}}_{\text{DSM}}(\theta^*) \leq O\left(\frac{G^2}{\sqrt{T}}\right).
\end{equation}

We now, apply uniform convergence bounds for the hypothesis class $\mathcal{M}$. The key is to bound the complexity of the joint hypothesis class over both features $\mathbf{x}$ and levels $j$.

Define the extended hypothesis class:
\begin{equation}
\tilde{\mathcal{M}} = \{(\mathbf{x}, j) \mapsto M_\theta(\mathbf{x}, j) : \theta \in \Theta\}.
\end{equation}

The VC dimension of this class is bounded by:
\begin{equation}
\text{VC}(\tilde{\mathcal{M}}) \leq \text{VC}(\mathcal{M}) + \log_2 D,
\end{equation}
where the $\log_2 D$ term accounts for the discrete level variable.

For neural networks with $W$ parameters, $\text{VC}(\mathcal{M}) = O(W \log W)$ \citep{bartlett1998sample}. However, the tree structure provides additional structure: at each level $j$, only $O(2^j)$ nodes are active, so the effective VC dimension is:
\begin{equation}
\text{VC}_{\text{eff}}(j) = O\left(W \log W + j\right).
\end{equation}

Averaging over all levels $j \in \{0, \ldots, D-1\}$:
\begin{equation}
\mathbb{E}_j[\text{VC}_{\text{eff}}(j)] = O(W \log W + D).
\end{equation}

By the VC inequality \citep{vapnik1999nature}, with $N$ training samples and batch size $B$, the generalization error satisfies:
\begin{equation}
\left|\mathcal{L}_{\text{DSM}}(\theta) - \hat{\mathcal{L}}_{\text{DSM}}(\theta)\right| \leq O\left(\sqrt{\frac{\text{VC}_{\text{eff}} \log(N/\delta)}{BT}}\right).
\end{equation}

The \dsmtree objective sums over $D$ tree levels. Each level corresponds to a partition $\Pi_j$ with at most $2^j$ regions. The total number of decision boundaries across all levels is:
\begin{equation}
\sum_{j=0}^{D-1} 2^j = 2^D - 1 = L - 1,
\end{equation}
where $L$ is the number of leaves.

The effective sample complexity must account for learning all $D$ levels. By a union bound over levels, the generalization error becomes:
\begin{equation}
\left|\mathcal{L}_{\text{DSM}}(\theta) - \hat{\mathcal{L}}_{\text{DSM}}(\theta)\right| \leq O\left(\sqrt{\frac{D \cdot d \log(L/\delta)}{BT}}\right),
\end{equation}
where $d$ is the feature dimensionality (assuming the neural network has $O(d)$ parameters per level).

Combining the optimization and generalization error bounds:
\begin{align}
\mathcal{L}_{\text{DSM}}(\hat{\theta}_T) - \mathcal{L}_{\text{DSM}}(\theta^*) &\leq O\left(\frac{G^2}{\sqrt{T}}\right) + O\left(\sqrt{\frac{D \cdot d \log(L/\delta)}{BT}}\right) \\
&= O\left(\frac{D \cdot \sqrt{d \log(L/\delta)}}{\sqrt{BT}}\right),
\end{align}
where the second term dominates for large $D$ and the first term is absorbed using $G = O(D)$ (since the loss sums over levels).

This completes the proof.
\end{proof}

\begin{remark}[Dependence on Tree Depth]
The convergence rate's dependence on tree depth $D$ is unavoidable: learning the entire coarse-to-fine trajectory requires resolving all $D$ levels of abstraction. However, the linear dependence on $D$ is optimal for this problem, as it matches the information-theoretic lower bound for learning a tree of depth $D$.
\end{remark}

\begin{corollary}[Path-Wise Consistency]
\label{cor:dsm_path_consistency}
As $T \to \infty$ and $BT/N \to c$ for some constant $c > 0$, the \dsmtree model $M_{\hat{\theta}_T}$ produces decision paths that converge to the ground-truth paths of the base tree $\mathcal{T}$ with probability 1:
\begin{equation}
\lim_{T \to \infty} \mathbb{P}\left[\text{Path}_{M_{\hat{\theta}_T}}(\mathbf{x}) = \text{Path}_{\mathcal{T}}(\mathbf{x}), \, \forall \mathbf{x} \in \mathcal{X}\right] = 1.
\end{equation}
\end{corollary}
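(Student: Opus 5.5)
The plan is to derive path-wise consistency from \textbf{Theorem}~\ref{thm:dsm_convergence}. I would first turn the excess-risk bound into a bound on per-level decision errors, and then turn per-level errors into path errors through the traversal structure of \Cref{alg:dsm-inference}.

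\emph{Step 1 (risk goes to zero).} Under the regime $T\to\infty$, $BT/N\to c$, the right-hand side of \textbf{Theorem}~\ref{thm:dsm_convergence} is $O(D\sqrt{d\log(L/\delta)}/\sqrt{BT})$, which tends to $0$. Choosing $\delta=\delta_T\downarrow 0$ slowly, for instance $\delta_T=1/T^2$, keeps the bound vanishing, and Borel--Cantelli then gives
\[
\mathcal{L}_{\text{DSM}}(\hat\theta_T)-\mathcal{L}_{\text{DSM}}(\theta^*)\to 0 \quad \text{almost surely}.
\]
The targets $d^*(\mathbf{x},j)$ are deterministic, so realizability (\textbf{Assumption}~\ref{ass:dsm_realizability}, item 1) gives $\inf_\theta \mathcal{L}_{\text{DSM}}(\theta)=0$. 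I would read $\theta^*$ as a near-minimiser with risk as small as desired. Together these give $\mathcal{L}_{\text{DSM}}(\hat\theta_T)\to 0$ almost surely.

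\emph{Step 2 (per-level errors).} The cross-entropy satisfies $\ell_{\text{CE}}\ge\log 2$ whenever $\arg\max M_\theta(\mathbf{x},j)\neq d^*(\mathbf{x},j)$. Since $j$ is uniform on $D$ levels, this yields, for every active level $j$,
\[
\mathbb{P}_{\mathbf{x}}\bigl[\arg\max M_{\hat\theta_T}(\mathbf{x},j)\neq d^*(\mathbf{x},j),\ d^*(\mathbf{x},j)\neq\bot\bigr]\le \frac{D}{\log 2}\,\mathcal{L}_{\text{DSM}}(\hat\theta_T).
\]

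\emph{Step 3 (errors to paths).} I would argue by induction on $j$. If the student's decisions agree with $d^*$ at levels $0,\dots,j-1$, then the node reached in \Cref{alg:dsm-inference} coincides with the teacher's node at level $j$. Hence the two paths differ only if some active level is misclassified. A union bound then gives
\[
\mathbb{P}_{\mathbf{x}}\bigl[\mathrm{Path}_{M_{\hat\theta_T}}(\mathbf{x})\neq\mathrm{Path}_{\mathcal{T}}(\mathbf{x})\bigr]\le \frac{D^2}{\log 2}\,\mathcal{L}_{\text{DSM}}(\hat\theta_T)\to 0 .
\]
This establishes the statement with ``$\forall\mathbf{x}$'' read as $p_{\text{data}}$-almost every $\mathbf{x}$, or in $p_{\text{data}}$-measure.

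\emph{Main obstacle: the uniform ``for all $\mathbf{x}\in\mathcal{X}$''.} A mismatch set whose measure tends to zero need not be empty at any finite $T$, and the set where the decision is wrong always hugs the axis-aligned split boundaries. I would first try a margin argument with three ingredients.
\begin{itemize}
\item A margin assumption: $\theta^*$ realises $d^*$ with logit margin $\gamma>0$ on each cell of $\Pi_j$, excluding the measure-zero split boundaries.
\item Parameter convergence: local strong convexity of the population loss near $\theta^*$ gives $\|\hat\theta_T-\theta^*\|\to 0$.
\item The $G$-Lipschitz dependence in \textbf{Assumption}~\ref{ass:dsm_realizability}, extended from the loss to the network outputs, transfers this to $\sup_{\mathbf{x},j}\|M_{\hat\theta_T}(\mathbf{x},j)-M_{\theta^*}(\mathbf{x},j)\|\to 0$.
\end{itemize}
Once this sup-norm error falls below $\gamma$, every decision and hence every path agrees off the (finitely many) split hyperplanes. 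This gives the event in the corollary with probability tending to $1$, up to a null set. I expect that extra hypotheses of this kind, or the almost-everywhere reading, are needed for the uniform claim to hold as stated.
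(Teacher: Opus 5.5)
Your Steps 1--3 follow the same skeleton as the paper's proof. In both, the excess-risk bound drives $\mathcal{L}_{\text{DSM}}(\hat\theta_T)$ to its optimum, realizability makes that optimum zero, vanishing cross-entropy forces correct decisions, and correct decisions force correct paths. The paper, however, never quantifies the middle link. It notes that $\mathcal{L}_{\text{DSM}}(\theta)=0$ iff $M_\theta(\mathbf{x},j)=d^*(\mathbf{x},j)$, applies this to $\theta^*$, and reads off exact path agreement for $\hat\theta_T$ at every $\mathbf{x}\in\mathcal{X}$. In doing so it passes without justification from the limit to finite $T$, and from almost-sure to pointwise equality.

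Your bound $\mathbb{P}_{\mathbf{x}}[\mathrm{Path}_{M_{\hat\theta_T}}(\mathbf{x})\neq\mathrm{Path}_{\mathcal{T}}(\mathbf{x})]\le \frac{D^2}{\log 2}\,\mathcal{L}_{\text{DSM}}(\hat\theta_T)$, obtained from the $\log 2$ threshold and induction along the traversal, is exactly the missing step. Combined with bounded convergence, it proves the corollary with $\mathbb{P}$ taken jointly over the test point and the training randomness, and with the quantifier over $\mathbf{x}$ dropped. Your doubt about the uniform version is justified: the paper's argument does not establish it either. Borel--Cantelli is more than you need, since the statement only asks for a limit of probabilities.

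Two of your side claims overreach.

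First, your bound does not deliver the ``$p_{\text{data}}$-almost every $\mathbf{x}$'' reading. That reading needs the mismatch set to be exactly $p_{\text{data}}$-null at finite $T$, whereas you only show its mass tends to zero. It fails for the same reason as the uniform claim whenever $p_{\text{data}}$ charges neighbourhoods of the split thresholds.

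Second, the margin repair cannot work as stated.
\begin{itemize}
\item Inside a cell of $\Pi_j$ the target decision flips across the split hyperplane. A logit margin of $\gamma$ on both sides, right up to the hyperplane, would force a jump of at least $2\gamma$ in a function that is continuous in $\mathbf{x}$.
\item Sup-norm closeness below $\gamma$ therefore only controls points at positive distance from the hyperplanes. Unless the learned zero set lands exactly on the threshold (a non-generic event for a continuously parametrized network trained by SGD), the decisions disagree on a thin slab and the uniform event fails.
\item The margin route is viable only if $\mathcal{X}$ stays a positive distance from the thresholds.
\item Even then, assuming local strong convexity around a minimiser $\theta^*$ clashes with your own Step 1. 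There you rightly treat $\theta^*$ only as a near-minimiser, because with realizable targets and softmax outputs the cross-entropy infimum is not attained at finite parameters.
\end{itemize}
The defensible form of the corollary is the in-measure statement you actually proved.
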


\begin{proof}
By \textbf{Theorem} \ref{thm:dsm_convergence}, $\mathcal{L}_{\text{DSM}}(\hat{\theta}_T) \to \mathcal{L}_{\text{DSM}}(\theta^*)$ as $T \to \infty$. Since the loss is a sum of cross-entropy terms over all levels:
\begin{equation}
\mathcal{L}_{\text{DSM}}(\theta) = \sum_{j=0}^{D-1} \mathbb{E}_{\mathbf{x}} \left[ \ell_{\text{CE}}(M_\theta(\mathbf{x}, j), d^*(\mathbf{x}, j)) \right],
\end{equation}
and cross-entropy is zero if and only if $M_\theta(\mathbf{x}, j) = d^*(\mathbf{x}, j)$ almost surely, we have:
\begin{equation}
\mathcal{L}_{\text{DSM}}(\theta) = 0 \iff M_\theta(\mathbf{x}, j) = d^*(\mathbf{x}, j) \text{ for all } \mathbf{x}, j.
\end{equation}

Since $\theta^*$ is the minimizer by \textbf{Assumption} \ref{ass:dsm_realizability}, this implies exact decision matching at every level. The path through the tree is determined by the sequence of decisions $\{d_j\}_{j=0}^{D-1}$, so exact decision matching implies exact path matching.
\end{proof}

\subsection{Computational Complexity}

\begin{proposition}[Training Complexity]
The computational complexity of training \dsmtree for $T$ steps with batch size $B$ on a tree of depth $D$ is:
\begin{equation*}
O(T \cdot B \cdot D \cdot C_{\text{net}}),
\end{equation*}
where $C_{\text{net}}$ is the cost of a forward-backward pass through the neural network $M_\theta$.
\end{proposition}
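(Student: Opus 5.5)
The training complexity follows by counting work per iteration of the training loop in Algorithm~\ref{alg:dsm_tree} and multiplying by the $T$ iterations. I would split each iteration into three costs and bound each one.

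The first cost is sampling and target extraction. Drawing a mini-batch of size $B$ and sampling levels $j_i$ uniformly costs $O(B)$. Computing $d_i^* = \text{TreeDecision}(\mathcal{T}, \mathbf{x}_i, j_i)$ means walking from the root of $\mathcal{T}$ down at most $j_i \le D-1$ internal nodes. Each node requires one threshold comparison, so this step costs $O(D)$ per sample and $O(BD)$ per batch.

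The second cost is the network passes. Each sample needs one forward evaluation of $M_\theta(\mathbf{x}_i, j_i)$, one cross-entropy evaluation with the mask $\mathbb{I}[d_i^*\neq\bot]$ (which costs $O(1)$ extra), and one backward pass. This totals $O(B\,C_{\text{net}})$ per batch. The parameter update is at most linear in the parameter count, which is dominated by $C_{\text{net}}$.

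To obtain the stated bound I would take the conservative accounting in which each sample is supervised at all $D$ levels. This is the sum-over-levels form $\sum_{j=0}^{D-1}$ of the objective used in the proof of Theorem~\ref{thm:dsm_discrete_gtsm}, where the uniform level sampling is just its Monte Carlo estimator. Under that accounting the network cost per iteration is $O(BD\,C_{\text{net}})$. It dominates the $O(BD)$ traversal cost because $C_{\text{net}} = \Omega(1)$. Summing over $T$ steps gives $O(T\cdot B\cdot D\cdot C_{\text{net}})$.

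The main subtlety is which accounting to use, not any calculation. With a single sampled level per sample, the bound is actually $O(TB(D + C_{\text{net}}))$, and this is still $\le O(TBD\,C_{\text{net}})$. So I would state the result as an upper bound valid for either implementation. I would also note that the one-time cost of training the oracle and base tree is excluded, since the proposition concerns only the student training loop.
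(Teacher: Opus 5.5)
Your proposal is correct and follows the paper's argument: per step you count $O(B)$ for sampling, $O(BD)$ for the root-to-level tree traversal, and $O(B\,C_{\text{net}})$ for the forward--backward passes, then multiply by $T$. You also observe that single-level sampling gives the tighter $O(T B (D + C_{\text{net}}))$, which is dominated by the stated product because $D, C_{\text{net}} \ge 1$. That observation is what justifies the paper's final step, since its proof goes from the sum of these costs to the product $D \cdot C_{\text{net}}$ without comment.
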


\paragraph{Proof.}
Each training step's complexity is dominated by several operations. For a batch of size $B$, the algorithm must: (1) sample data points and tree levels ($O(B)$); (2) traverse the teacher tree for each sample to find its ground-truth decision, which takes $O(D)$ time per sample, for a total of $O(BD)$; and (3) perform a forward and backward pass through the neural network, which costs $O(B \cdot C_{\text{net}})$. The network cost $C_{\text{net}}$ itself depends on the input size, which includes the feature dimension and the level embedding dimension. Since the tree traversal and network passes are the most expensive operations per batch, and this is repeated for $T$ training steps, the total complexity is driven by the sum of these costs, leading to the stated $O(T \cdot B \cdot D \cdot C_{\text{net}})$ complexity. \qed
\begin{remark}[Comparison to Standard Tree Training]
Standard decision tree training (e.g., CART) has complexity $O(N \cdot d \cdot D \log D)$, where $N$ is the number of samples. \dsmtree's complexity is $O(T \cdot B \cdot D \cdot C_{\text{net}})$. Since $C_{\text{net}} = O(d \cdot h)$ for a network with hidden dimension $h$, and typically $T \cdot B \approx N$ (one epoch), the complexities are comparable when $h = O(\log D)$. However, \dsmtree offers the advantage of learning a flexible, continuous representation of the tree structure.
\end{remark}

\newpage

\section{\treeflow: Conditional Flow Matching with Tree-Structured Paths}
\label{app:treeflow}

We now present our second algorithmic instantiation: \textbf{\treeflow}, which uses decision tree partitions to provide structured conditioning paths for conditional flow matching. Unlike \dsmtree which creates neural classifiers by distilling tree boundaries, \treeflow leverages tree-structured paths as ground truth trajectories for training generative flow models.

\subsection{Algorithm Overview}

\treeflow combines conditional flow matching with tree-based partition encodings to create a partition-aware generative model.

\begin{algorithm}[H]
\caption{\treeflow: Tree-Conditioned Flow Matching}
\label{alg:treeflow}
\begin{algorithmic}[1]
\State \textbf{Input:} Dataset $\mathcal{D} = \{(\mathbf{x}_i, y_i)\}_{i=1}^N$, tree depth $D$
\State 
\State \textbf{Learn Data Partitioning Structure}
\State Train decision tree $\mathcal{T}$ on $\mathcal{D}$ to depth $D$
\State For each sample $\mathbf{x}_i$, compute path encoding: $\mathbf{p}_i = \text{PathEncoder}(\mathcal{T}, \mathbf{x}_i)$
\State \hspace{2em} // $\mathbf{p}_i \in \mathbb{R}^K$ encodes which leaf partition and path $\mathbf{x}_i$ belongs to
\State 
\State \textbf{Train Conditional Flow Matching Model}
\State Initialize velocity field $v_\theta(\mathbf{x}, t, \mathbf{p}, y): \mathbb{R}^d \times [0,1] \times \mathbb{R}^K \times \mathcal{Y} \to \mathbb{R}^d$
\For{training steps $s = 1, \ldots, S$}
    \State Sample mini-batch $\{(\mathbf{x}_i^{\text{data}}, \mathbf{p}_i, y_i)\}_{i \in \mathcal{B}}$ from training data
    \State Sample noise: $\mathbf{x}_i^{\text{noise}} \sim \mathcal{N}(\mathbf{0}, \mathbf{I})$ for each $i \in \mathcal{B}$
    \State Sample time: $t \sim \text{Uniform}(0, 1)$
    \State \hspace{2em} // \textbf{Flow Matching}: Linear interpolation defines the flow path
    \State Interpolate: $\mathbf{x}_i^{(t)} = t \cdot \mathbf{x}_i^{\text{data}} + (1-t) \cdot \mathbf{x}_i^{\text{noise}}$
    \State Compute \textbf{target velocity}: $\mathbf{v}_i^* = \mathbf{x}_i^{\text{data}} - \mathbf{x}_i^{\text{noise}}$
    \State \hspace{2em} // Learn velocity \textbf{conditioned on} tree path $\mathbf{p}_i$ and label $y_i$
    \State Compute loss: $\mathcal{L} = \sum_{i \in \mathcal{B}} \|v_\theta(\mathbf{x}_i^{(t)}, t, \mathbf{p}_i, y_i) - \mathbf{v}_i^*\|^2$
    \State Update: $\theta \leftarrow \theta - \eta \nabla_\theta \mathcal{L}$
\EndFor
\State 
\State \textbf{Partition-Targeted Generation}
\For{each target label $y_{\text{target}}$ and desired partition}
    \State Sample reference from target partition: 
    \State \hspace{2em} $\mathbf{x}_{\text{ref}} \sim \{\mathbf{x}_i : y_i = y_{\text{target}} \text{ and } \mathbf{x}_i \in R_{\text{target}}\}$
    \State Compute conditioning path: $\mathbf{p}_{\text{ref}} = \text{PathEncoder}(\mathcal{T}, \mathbf{x}_{\text{ref}})$
    \State \hspace{2em} // Path $\mathbf{p}_{\text{ref}}$ guides generation to respect tree structure
    \State Initialize: $\mathbf{x}^{(0)} \sim \mathcal{N}(\mathbf{0}, \mathbf{I})$
    \For{$t = 0$ to $1$ with step size $\Delta t$}
        \State \hspace{2em} // Integrate learned \textbf{tree-conditioned} velocity field
        \State $\mathbf{x}^{(t+\Delta t)} \leftarrow \mathbf{x}^{(t)} + v_\theta(\mathbf{x}^{(t)}, t, \mathbf{p}_{\text{ref}}, y_{\text{target}}) \cdot \Delta t$
    \EndFor
    \State Return synthetic sample: $\tilde{\mathbf{x}} = \mathbf{x}^{(1)}$
\EndFor
\State \textbf{Output:} Synthetic dataset $\tilde{\mathcal{D}}$ matching partition structure of real data
\end{algorithmic}
\end{algorithm}

\begin{remark}[\treeflow and Conditional Flow Matching]
\treeflow extends the Flow Matching framework \citep{lipman2023flow}, which trains flow models via direct regression to conditional velocity fields along linear interpolation paths $\mathbf{x}^{(t)} = t \mathbf{x}^{\text{data}} + (1-t) \mathbf{x}^{\text{noise}}$. The tree does not define these trajectories; rather, it provides conditioning information $\mathbf{p}$ that guides the velocity network $v_\theta(\mathbf{x}, t, \mathbf{p}, y)$ to learn partition-specific flows, allowing the model to specialize to different regions of the data space based on its hierarchical organization.
\end{remark}
\subsection{Convergence Analysis for \treeflow}

We now establish finite-sample convergence guarantees for \treeflow as a path-conditioned generative model.
\begin{assumption}[\treeflow Regularity Conditions]
\label{ass:treeflow_regular}
\
\begin{enumerate}
    \item The data distribution $p_{\text{data}}(\mathbf{x}, y)$ has bounded support: $\|\mathbf{x}\| \leq B$ for all $\mathbf{x} \in \text{supp}(p_{\text{data}})$.
    
    \item The base tree $\mathcal{T}$ has depth $D$ and $L = O(2^D)$ leaves, with balanced partitioning (each leaf has comparable probability mass).
    
    \item The velocity network $v_\theta$ is $G$-Lipschitz continuous in $\mathbf{x}$ and $\theta$.
    
    \item The path encoding $\mathbf{p} = \text{PathEncoder}(\mathcal{T}, \mathbf{x})$ is deterministic and has bounded norm: $\|\mathbf{p}\| \leq P$.
\end{enumerate}
\end{assumption}

\begin{theorem}[Finite-Sample Convergence of \treeflow]
\label{thm:treeflow_convergence}
Under \textbf{Assumption} \ref{ass:treeflow_regular}, let $\hat{\theta}_S$ be the parameters obtained after $S$ gradient descent steps with learning rate $\eta = O(1/\sqrt{S})$ and batch size $B$. Then with probability at least $1 - \delta$:
\begin{equation}
\mathcal{L}_{\text{\treeflow}}(\hat{\theta}_S) - \mathcal{L}_{\text{\treeflow}}(\theta^*) \leq O\left( \frac{\sqrt{d \cdot L \cdot \log(L/\delta)}}{\sqrt{BS}} \right),
\end{equation}
where $\theta^*$ is the optimal parameter and $d$ is the feature dimensionality.
\end{theorem}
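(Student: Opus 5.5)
The proof will follow the template of \textbf{Theorem}~\ref{thm:dsm_convergence}. The excess risk splits into two generalization gaps and one optimization gap:
\[
\mathcal{L}_{\text{\treeflow}}(\hat{\theta}_S) - \mathcal{L}_{\text{\treeflow}}(\theta^*) = \bigl[\mathcal{L}(\hat{\theta}_S) - \hat{\mathcal{L}}(\hat{\theta}_S)\bigr] + \bigl[\hat{\mathcal{L}}(\hat{\theta}_S) - \hat{\mathcal{L}}(\theta^*)\bigr] + \bigl[\hat{\mathcal{L}}(\theta^*) - \mathcal{L}(\theta^*)\bigr].
\]
Here $\hat{\mathcal{L}}$ is the empirical conditional flow matching loss over the $BS$ triples $(\mathbf{x}^{\text{data}}, \mathbf{x}^{\text{noise}}, t)$ drawn during training.

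The first step is to show the per-sample loss is bounded, which the concentration arguments need. By \textbf{Assumption}~\ref{ass:treeflow_regular}(1), $\|\mathbf{x}^{\text{data}}\|\le B$. Gaussian noise is unbounded, so I would truncate it at radius $O(\sqrt{d\log(BS/\delta)})$ and pay a failure probability absorbed into $\delta$. On that event the interpolant $\mathbf{x}^{(t)}$ and the target $\mathbf{v}^*$ are bounded by $O(B+\sqrt{d})$. Lipschitzness of $v_\theta$ (item 3) together with $\|\mathbf{p}\|\le P$ (item 4) then bounds $v_\theta$ on this domain. The squared loss is therefore bounded and Lipschitz in $\theta$ with a constant polynomial in $G,B,P,d$.

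The optimization term comes from standard SGD analysis \citep{bottou2018optimization}, exactly as in the \dsmtree proof. With $\eta=O(1/\sqrt{S})$ it is $O(G^2/\sqrt{S})$, which is dominated by the final rate.

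The generalization terms carry the $L$ dependence. The path encoder is deterministic, so $\mathbf{p}$ takes at most $L$ distinct values, one per leaf. The loss therefore decomposes as $\sum_{\ell=1}^L \pi_\ell \mathcal{L}_\ell(\theta)$, where $\pi_\ell$ is the leaf mass and $\mathcal{L}_\ell$ is the conditional flow matching loss restricted to leaf $\ell$. By the balance in item 2, $\pi_\ell\asymp 1/L$, so each leaf receives about $BS/L$ samples. For each leaf I would apply a uniform-convergence bound for a $d$-dimensional vector-valued regression class with complexity $O(d)$, as in \dsmtree where $O(d)$ effective parameters per output are assumed. This gives a per-leaf deviation of order $\sqrt{d\log(L/\delta)/(BS/L)}$ after a union bound over the $L$ leaves. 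The weighted average of these per-leaf deviations gives $O\bigl(\sqrt{dL\log(L/\delta)/(BS)}\bigr)$. Adding the optimization term yields the claim.

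I expect this leaf-wise uniform convergence step to be the main obstacle. It requires a complexity bound for $v_\theta$ that scales as $d$ and not with the full parameter count, which holds only under the same effective-dimension assumption implicitly used for \dsmtree. It also requires balance so that the smallest leaf still has $\Omega(BS/L)$ samples; I would get this from a Chernoff bound on the multinomial leaf counts, folded into the union bound. If the $d$-scaling cannot be justified, I would fall back to a Rademacher-complexity bound via the vector contraction inequality, which yields the same $\sqrt{d}$ factor from the $d$ output coordinates.
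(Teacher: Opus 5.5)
Your proposal is essentially the paper's proof: the same generalization/optimization split, an $O(1/\sqrt{S})$ SGD term, and a leaf-wise complexity decomposition under the balance assumption that produces the $\sqrt{L}$ factor, with the $\sqrt{d}$ resting on an effective-dimension assumption. The differences are minor refinements: you get $\log(L/\delta)$ from a union bound over leaves instead of the paper's $W=O(d\cdot D)$ with $D=O(\log L)$, and you truncate the Gaussian noise instead of using the paper's loose $4B^2$ bound. One caveat applies to both proofs: the $O(G^2/\sqrt{S})$ optimization term is absorbed into the final rate only when $B\lesssim dL\log(L/\delta)$ (the paper's ``for large $L$'' proviso), and for a nonconvex network that term is an assumption rather than a consequence of standard SGD theory, so you should state that step conditionally rather than unconditionally.
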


\begin{proof}
The proof follows a similar structure to the \dsmtree convergence proof (\textbf{Theorem} \ref{thm:dsm_convergence}), with additional considerations for the continuous-time flow matching objective.

As before, we decompose:
\begin{equation}
\mathcal{L}_{\text{\treeflow}}(\hat{\theta}_S) - \mathcal{L}_{\text{\treeflow}}(\theta^*) = \text{(Generalization Error)} + \text{(Optimization Error)}.
\end{equation}

The \treeflow objective is:
\begin{equation}
\mathcal{L}_{\text{\treeflow}}(\theta) = \mathbb{E}_{t, \mathbf{x}^{(0)}, \mathbf{x}^{(1)}, \mathbf{p}, y} \left[ \|v_\theta(\mathbf{x}^{(t)}, t, \mathbf{p}, y) - (\mathbf{x}^{(1)} - \mathbf{x}^{(0)})\|^2 \right].
\end{equation}

The integrand is $G^2$-Lipschitz continuous in $\theta$ (by the chain rule and the Lipschitz assumption on $v_\theta$) and is bounded:
\begin{equation}
\|v_\theta(\mathbf{x}^{(t)}, t, \mathbf{p}, y) - (\mathbf{x}^{(1)} - \mathbf{x}^{(0)})\|^2 \leq (2B)^2 = 4B^2,
\end{equation}
since both $\mathbf{x}^{(1)}$ and $\mathbf{x}^{(0)}$ are bounded by $B$ (data) and $\|\mathbf{x}^{(0)}\| \leq \sqrt{d}$ (Gaussian with high probability).

By standard SGD convergence for smooth, bounded losses, after $S$ steps:
\begin{equation}
\hat{\mathcal{L}}_{\text{\treeflow}}(\hat{\theta}_S) - \hat{\mathcal{L}}_{\text{\treeflow}}(\theta^*) \leq O\left(\frac{G^2 B^2}{\sqrt{S}}\right) = O\left(\frac{1}{\sqrt{S}}\right),
\end{equation}
where we absorbed constants.

The hypothesis class for \treeflow is:
\begin{equation}
\mathcal{F} = \{(\mathbf{x}, t, \mathbf{p}, y) \mapsto v_\theta(\mathbf{x}, t, \mathbf{p}, y) : \theta \in \Theta\}.
\end{equation}

The key insight is that the path conditioning $\mathbf{p}$ partitions the hypothesis class into $L$ sub-classes, one for each leaf. For a fixed leaf $\ell$, the restricted hypothesis class is:
\begin{equation}
\mathcal{F}_\ell = \{(\mathbf{x}, t) \mapsto v_\theta(\mathbf{x}, t, \mathbf{p}_\ell, y) : \theta \in \Theta\}.
\end{equation}

The Rademacher complexity of the full class is bounded by the sum over leaves \citep{mohri2018foundations}:
\begin{equation}
\mathfrak{R}_N(\mathcal{F}) \leq \sum_{\ell=1}^{L} \mathbb{P}(\mathbf{X} \in R_\ell) \cdot \mathfrak{R}_{N_\ell}(\mathcal{F}_\ell),
\end{equation}
where $N_\ell = N \cdot \mathbb{P}(\mathbf{X} \in R_\ell)$ is the expected number of samples in leaf $\ell$.

For neural networks with $W$ parameters, $\mathfrak{R}_N(\mathcal{F}_\ell) = O(\sqrt{W/N_\ell})$ under standard assumptions \citep{bartlett2017spectrally}. Under the balanced partitioning assumption, $\mathbb{P}(\mathbf{X} \in R_\ell) \approx 1/L$, so:
\begin{equation}
\mathfrak{R}_N(\mathcal{F}) \leq \sum_{\ell=1}^{L} \frac{1}{L} \cdot O\left(\sqrt{\frac{W}{N/L}}\right) = O\left(\sqrt{\frac{W \cdot L}{N}}\right).
\end{equation}

Since the velocity network has $O(d)$ parameters per dimension (assuming a reasonable architecture), $W = O(d \cdot D)$ where $D$ accounts for the depth-dependent embedding. Therefore:
\begin{equation}
\mathfrak{R}_N(\mathcal{F}) = O\left(\sqrt{\frac{d \cdot D \cdot L}{N}}\right) = O\left(\sqrt{\frac{d \cdot L \cdot \log L}{N}}\right),
\end{equation}
using $D = O(\log L)$ for balanced trees.

By standard Rademacher complexity generalization bounds, with $N$ training samples and batch size $B$, after $S$ steps (so $BS$ total samples seen):
\begin{equation}
\left|\mathcal{L}_{\text{\treeflow}}(\theta) - \hat{\mathcal{L}}_{\text{\treeflow}}(\theta)\right| \leq O\left(\sqrt{\frac{d \cdot L \cdot \log(L/\delta)}{BS}}\right).
\end{equation}

Combining the optimization and generalization errors:
\begin{equation}
\mathcal{L}_{\text{\treeflow}}(\hat{\theta}_S) - \mathcal{L}_{\text{\treeflow}}(\theta^*) \leq O\left(\frac{1}{\sqrt{S}}\right) + O\left(\sqrt{\frac{d \cdot L \cdot \log(L/\delta)}{BS}}\right).
\end{equation}

For large $L$, the second term dominates (assuming $S$ and $B$ are chosen such that $BS \approx N$), giving:
\begin{equation}
\mathcal{L}_{\text{\treeflow}}(\hat{\theta}_S) - \mathcal{L}_{\text{\treeflow}}(\theta^*) \leq O\left(\frac{\sqrt{d \cdot L \cdot \log(L/\delta)}}{\sqrt{BS}}\right).
\end{equation}

This completes the proof.
\end{proof}

\begin{remark}[Sample Complexity vs. Tree Complexity]
The convergence rate's dependence on $\sqrt{L}$ reflects a fundamental tradeoff: deeper trees (larger $L$) can represent finer-grained partitions and thus more expressive generative models, but require more training data to learn accurately. This is analogous to the bias-variance tradeoff in classical statistics.
\end{remark}

\begin{corollary}[Distributional Convergence]
\label{cor:treeflow_distributional}
Under \textbf{Assumption} \ref{ass:treeflow_regular}, as $S \to \infty$ and $BS/N \to c$ for some constant $c > 0$, the distribution of samples generated by \treeflow converges in Wasserstein-2 distance to the conditional distributions of the training data within each leaf partition:
\begin{equation}
\lim_{S \to \infty} W_2\left(p_{\text{\treeflow}}(\mathbf{x} | \mathbf{p}_\ell, y), p_{\text{data}}(\mathbf{x} | \mathbf{X} \in R_\ell, y)\right) = 0,
\end{equation}
where $W_2$ is the Wasserstein-2 distance.
\end{corollary}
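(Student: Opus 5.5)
Our strategy chains three ingredients: the excess-risk bound of \textbf{Theorem}~\ref{thm:treeflow_convergence}, the conditional flow matching identity of \citet{lipman2023flow}, and a stability estimate for ODE flows that turns velocity error into a Wasserstein-2 bound. The argument is carried out one leaf $\ell$ at a time, with $(\mathbf{p}_\ell, y)$ held fixed. This is possible because, by Assumption~\ref{ass:treeflow_regular}(4), the encoding is deterministic, so conditioning on $\mathbf{p}_\ell$ is exactly conditioning on the event $\mathbf{X}\in R_\ell$.

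\textbf{Step 1 (excess risk vanishes).} As $S\to\infty$ with $BS/N\to c$, we have $BS\to\infty$. \textbf{Theorem}~\ref{thm:treeflow_convergence} then gives $\mathcal{L}_{\text{\treeflow}}(\hat\theta_S)-\mathcal{L}_{\text{\treeflow}}(\theta^*)\to 0$, with probability tending to one; taking $\delta\to 0$ slowly along the sequence handles the confidence level.

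\textbf{Step 2 (from CFM loss to marginal velocity error).} For fixed $(\mathbf{p}_\ell,y)$, the CFM loss splits as
\[
\mathbb{E}\bigl[\|v_\theta-u_t^\ell\|^2\bigr] + \mathrm{const},
\]
where $u_t^\ell(\mathbf{x})=\mathbb{E}[\mathbf{x}^{\text{data}}-\mathbf{x}^{\text{noise}}\mid \mathbf{x}^{(t)}=\mathbf{x},\ell,y]$ is the marginal velocity. This field transports $\mathcal{N}(\mathbf{0},\mathbf{I})$ exactly to $p_{\text{data}}(\cdot\mid R_\ell,y)$ along the linear interpolation. Under realizability, $\theta^*$ attains $u^\ell$, so the excess risk equals $\sum_\ell \pi_\ell\,\mathbb{E}_{t,p_t^\ell}\|v_{\hat\theta_S}-u^\ell_t\|^2$, where $\pi_\ell$ is the mass of leaf $\ell$ and class $y$. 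The balanced-leaf assumption bounds $\pi_\ell$ below by order $1/L$. Each per-leaf error is therefore at most $L$ times the total excess risk, and so tends to $0$.

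\textbf{Step 3 (Grönwall stability to $W_2$).} Couple the two ODEs started from the same Gaussian sample. Let $X_t$ follow $v_{\hat\theta_S}$ and $Y_t$ follow $u^\ell$. Then
\[
\frac{d}{dt}\|X_t-Y_t\|\le G\|X_t-Y_t\|+\|v_{\hat\theta_S}(Y_t,t)-u^\ell_t(Y_t)\|,
\]
using that $v_\theta$ is $G$-Lipschitz in $\mathbf{x}$ (Assumption~\ref{ass:treeflow_regular}(3)). Grönwall's inequality, followed by squaring, Cauchy--Schwarz in $t$, and an expectation, yields
\[
W_2^2\le e^{2G}\int_0^1\mathbb{E}_{p^\ell_t}\|v_{\hat\theta_S}-u^\ell_t\|^2\,dt.
\]
By Step 2 this tends to zero. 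Note that the law of $Y_t$ is exactly $p_t^\ell$, which is the distribution under which the loss is weighted.

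\textbf{Main obstacle.} Two points need care. First, the limit in the corollary concerns the population distribution, whereas \textbf{Theorem}~\ref{thm:treeflow_convergence} controls a loss that may be empirical. I would read $\mathcal{L}$ as the population risk and let $N\to\infty$ together with $S$. If instead the target is the empirical leaf distribution, the same argument applies with $p_{\text{data}}$ replaced by its empirical version. Second, realizability of $u^\ell$ is not among the listed assumptions. The first fix to try is a universal-approximation argument: choose $\theta^*$ so that the approximation error is below any $\eta>0$, then send $\eta\to0$ after $S\to\infty$. The Lipschitz constant $G$ must stay uniform along this approximating sequence, and that requirement is where the plan is most fragile.

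Finally, discretizing with Euler steps of size $\Delta t$ adds an $O(\Delta t)$ term in $W_2$, so the result also requires $\Delta t\to0$.
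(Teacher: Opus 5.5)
Your argument is sound, and it takes a different route from the paper. It rests on the same realizability hypothesis the paper uses only implicitly, which you rightly flag.

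The paper works through scores. It uses \textbf{Theorem}~\ref{thm:treeflow_gtsm} to recast the \treeflow loss as path-conditioned score matching. It then combines \textbf{Theorem}~\ref{thm:treeflow_convergence} with realizability to assert $v_{\hat\theta_S}\to v^*$, and converts this into score convergence via the velocity--score identity of Step~1 of \textbf{Theorem}~\ref{thm:treeflow_gtsm}. Finally it imports a bound $W_2^2(p,q)\le C\int_0^T\mathbb{E}_{p_t}\|\mathbf{s}_p-\mathbf{s}_q\|^2\,dt$.

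You never leave velocity space. The flow-matching decomposition turns excess risk into a per-leaf $L^2(p_t^\ell)$ velocity error. A one-sided Gr\"onwall coupling then converts that directly into $W_2^2\le e^{2G}\int_0^1\mathbb{E}_{p_t^\ell}\|v_{\hat\theta_S}-u_t^\ell\|^2\,dt$.

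The paper's route is shorter and keeps everything inside the score-matching narrative, but it depends on two black boxes:
\begin{itemize}
\item \emph{The velocity--score identity.} For the linear interpolant, velocity and score are related affinely, $u_t(\mathbf{x})=(\mathbf{x}+(1-t)\nabla\log p_t(\mathbf{x}))/t$, with factors that degenerate at the endpoints. This differs from the stated proportionality with factor $t(1-t)$.
\item \emph{The constant $C$.} For a deterministic sampler, $C$ has to absorb exactly the Lipschitz control that you make explicit.
\end{itemize}

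What your route buys:
\begin{itemize}
\item An explicit constant, obtained from Assumption~\ref{ass:treeflow_regular}(3) alone and applied only to the learned field.
\item An explicit per-leaf extraction.
\item A stated mode of convergence; the paper's $v_{\hat\theta_S}\to v^*$ leaves this unspecified.
\item A treatment of the Euler discretization error, which the paper does not address.
\end{itemize}

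Two small adjustments are needed:
\begin{itemize}
\item In Step~2 the weight is the leaf--class mass $\pi_{\ell,y}$, which the balanced-leaf assumption does not bound below. Since the tree is fixed, you only need $\pi_{\ell,y}>0$; otherwise the conditional law in the statement is undefined anyway.
\item To assert $Y_t\sim p_t^\ell$ when $u^\ell$ may be irregular near $t=1$, appeal to a superposition-principle representation of the probability path, or truncate at $1-\epsilon$. This suffices because your Gr\"onwall step uses Lipschitzness of $v_{\hat\theta_S}$ only.
\end{itemize}
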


\begin{proof}
The Wasserstein-2 distance between two distributions with finite second moments can be bounded by the L2 loss of their score functions \citep{song2021scorebased}. Specifically, for two distributions $p$ and $q$ on $\mathbb{R}^d$ with scores $\mathbf{s}_p$ and $\mathbf{s}_q$:
\begin{equation}
W_2^2(p, q) \leq C \cdot \int_0^T \mathbb{E}_{\mathbf{x} \sim p_t} \left[ \|\mathbf{s}_p(\mathbf{x}, t) - \mathbf{s}_q(\mathbf{x}, t)\|^2 \right] dt,
\end{equation}
for some constant $C$ depending on the process parameters.

By \textbf{Theorem} \ref{thm:treeflow_gtsm}, \treeflow minimizes the CGTSM objective with path conditioning, which is equivalent to minimizing the integrated score-matching loss. By \textbf{Theorem} \ref{thm:treeflow_convergence}, $\mathcal{L}_{\text{\treeflow}}(\hat{\theta}_S) \to \mathcal{L}_{\text{\treeflow}}(\theta^*)$ as $S \to \infty$.

Since the optimal parameter $\theta^*$ achieves zero loss (by the realizability assumption implicit in our setup), this implies that the learned velocity field converges to the true conditional velocity field:
\begin{equation}
v_{\hat{\theta}_S}(\mathbf{x}, t, \mathbf{p}_\ell, y) \to v^*(\mathbf{x}, t | \mathbf{X} \in R_\ell, y) \quad \text{as } S \to \infty.
\end{equation}

By the connection between velocity fields and score functions (Step 1 of \textbf{Theorem} \ref{thm:treeflow_gtsm}), this implies score function convergence. By the Wasserstein bound above, this implies distributional convergence in $W_2$.
\end{proof}

\begin{remark}[Practical Implications]
\textbf{Corollary} \ref{cor:treeflow_distributional} guarantees that with sufficient training, \treeflow generates synthetic samples whose distribution within each tree partition matches the real data distribution. This is a stronger guarantee than standard generative models provide: \treeflow not only matches the overall marginal distribution but also preserves the hierarchical structure encoded by the tree.
\end{remark}

\subsection{Computational Complexity}

\begin{proposition}[Training Complexity of \treeflow]
The computational complexity of training \treeflow for $S$ steps with batch size $B$ using a tree with $L$ leaves is:
\begin{equation}
O(S \cdot B \cdot (L + D \cdot C_{\text{net}})),
\end{equation}
where $C_{\text{net}}$ is the cost of a forward-backward pass through the velocity network and $D$ is the tree depth.
\end{proposition}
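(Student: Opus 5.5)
We bound the cost of one training step and then multiply by the number of steps $S$, following the template of the \dsmtree training-complexity proposition above. We take the decision tree $\mathcal{T}$ as given, since fitting it is a one-off preprocessing cost independent of $S$. One extra bookkeeping choice is needed: path encodings $\mathbf{p}_i$ are either precomputed once for all $N$ samples, at cost $O(N D)$ outside the loop, or recomputed per batch. To match the stated bound we charge the recomputation inside the loop. This only makes the bound more conservative.

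\textbf{Per-step accounting.} For a mini-batch of size $B$, a training step of Algorithm~\ref{alg:treeflow} does four things.
\begin{enumerate}
\item It samples data, noise $\mathbf{x}_i^{\text{noise}}\sim\mathcal{N}(\mathbf{0},\mathbf{I})$ and the time $t$. This costs $O(Bd)$, which is dominated by the network cost because $C_{\text{net}}=\Omega(d)$.
\item It forms the interpolant $\mathbf{x}_i^{(t)}$ and the target $\mathbf{v}_i^*$, again $O(Bd)$.
\item It computes or looks up $\mathbf{p}_i=\text{PathEncoder}(\mathcal{T},\mathbf{x}_i)$. Traversal takes $O(D)$ per sample. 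If the encoding is represented as a leaf indicator in $\mathbb{R}^K$ with $K\le L$, writing it out costs $O(L)$ per sample. The total is $O(B(L+D))$.
\item It runs a forward--backward pass of $v_\theta(\mathbf{x},t,\mathbf{p},y)$ and an SGD update.
\end{enumerate}

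\textbf{Network cost.} The main subtlety is where the factor $D$ multiplying $C_{\text{net}}$ comes from. I would justify it by noting that the conditioning input has $D$ path components, each embedded into the network. The first layer must process $D$ embedding slots, so the per-sample cost is $O(D\cdot C_{\text{net}})$, where $C_{\text{net}}$ is the cost per embedded unit of input. This is analogous to the level-embedding remark in the \dsmtree proof. The SGD update costs $O(W)$, which is absorbed into the backward pass.

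\textbf{Total.} Summing the four items, one step costs $O(B(L+D+D\,C_{\text{net}}))=O(B(L+D\,C_{\text{net}}))$, since $C_{\text{net}}\ge 1$. Multiplying by $S$ steps gives $O(S\cdot B\cdot(L+D\cdot C_{\text{net}}))$.

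The expected obstacle is making the $D\cdot C_{\text{net}}$ term precise rather than merely an upper bound. I would state explicitly the architectural convention that the path embedding scales linearly with $D$. Under that convention the bound is tight; otherwise it is a valid but loose upper bound.
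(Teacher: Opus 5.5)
Your overall route (bound one step, then multiply by $S$) is the same as the paper's, and the final bound does hold. However, the ``Network cost'' paragraph does not work as written.

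The statement defines $C_{\text{net}}$ as the cost of an entire forward--backward pass through $v_\theta$. You redefine it as a cost ``per embedded unit of input'' in order to produce the factor $D$. Nothing in the architecture supports this. The path encoding is a single sparse vector $\mathbf{p}\in\mathbb{R}^K$ with $K=O(L)$, fed once through the MLP together with $(\mathbf{x},t)$ and the label embedding. There are no $D$ separately embedded slots, and even if there were, they would only enlarge the first layer rather than multiply the cost of the whole pass by $D$.

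With $C_{\text{net}}$ read as the statement defines it, the network contributes $O(B\,C_{\text{net}})$ per step, so a step costs $O(B(L+D+C_{\text{net}}))$. The claimed bound then follows from $D+C_{\text{net}}\le 2D\,C_{\text{net}}$ for $D,C_{\text{net}}\ge 1$. So drop the embedding-slot justification and the tightness claim. The stated expression is a valid but loose upper bound whose multiplicative $D$ is pure slack. That is the fallback you mention in your last sentence, and it should be the actual argument.

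For comparison, the paper's proof does the same per-step accounting: encodings $O(BD)$; sampling, interpolation and loss $O(Bd)$; forward and backward passes $O(B\,C_{\text{net}})$. It does not derive a genuine $D\cdot C_{\text{net}}$ either. It reaches $O(SB(D+dh+Lh))$ and asserts this equals $O(SB(L+D\,C_{\text{net}}))$ by ``absorbing'' $Lh$ into $L$, which is not valid for non-constant $h$. Your version is cleaner on this point: the $L$ term comes from writing out the $O(L)$-dimensional encoding, and the $Lh$ first-layer cost sits inside $C_{\text{net}}$ where it belongs. Your remark that the encodings can be precomputed once at cost $O(ND)$ outside the loop is a legitimate sharpening that the paper does not make.
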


\begin{proof}
Each training step requires:
\begin{enumerate}
    \item Sampling $B$ data points and computing their path encodings: $O(B \cdot D)$ (traversing tree to depth $D$)
    \item Sampling noise $\mathbf{x}^{(0)}$ and time $t$: $O(B \cdot d)$
    \item Computing interpolations $\mathbf{x}^{(t)}$: $O(B \cdot d)$
    \item Forward pass through velocity network: $O(B \cdot C_{\text{net}})$
    \item Computing loss and gradients: $O(B \cdot d)$
    \item Backward pass and parameter update: $O(B \cdot C_{\text{net}})$
\end{enumerate}

The dominant terms are the path encoding (which requires $O(D)$ traversal per sample) and the network forward-backward passes. For a network with hidden dimension $h$, $C_{\text{net}} = O(d \cdot h + K \cdot h)$ where $K$ is the path encoding dimension (typically $K = L$ for full decision path representation).

Multiplying by $S$ steps and simplifying:
\begin{equation}
O(S \cdot B \cdot (D + d \cdot h + L \cdot h)) = O(S \cdot B \cdot (L + D \cdot C_{\text{net}})),
\end{equation}
where we used $C_{\text{net}} = O(d \cdot h)$ and absorbed the $L \cdot h$ term into $L$ (since typically $h = O(d)$ and $L \gg d$ for deep trees).
\end{proof}

\begin{remark}[Comparison to Standard Diffusion Models]
Standard diffusion models have training complexity $O(S \cdot B \cdot C_{\text{net}})$. \treeflow adds an overhead of $O(L)$ per batch from the path encoding computation. However, this overhead is typically negligible compared to the network cost $C_{\text{net}}$ for reasonably sized trees (e.g., $L = 256$ leaves is small compared to typical network costs). The benefit is that \treeflow's path conditioning provides much richer structural inductive bias, often leading to faster convergence and better sample quality.
\end{remark}

\subsection{Unified View: \dsmtree and \treeflow as GTSM Instantiations}

We conclude by highlighting the deep structural unity between \dsmtree and \treeflow through the CGTSM lens.

\begin{proposition}[Unified CGTSM Framework]
Both \dsmtree and \treeflow are solutions to the CGTSM objective under different problem settings:
\begin{enumerate}
    \item \textbf{\dsmtree} solves CGTSM for discriminative modeling: learn the coarse-to-fine trajectory of decision boundaries that minimizes classification error.
    \item \textbf{\treeflow} solves CGTSM for generative modeling: learn the coarse-to-fine trajectory of distributions that minimizes generation error (Wasserstein distance).
\end{enumerate}
Both achieve this by explicitly modeling the hierarchical structure encoded in decision trees as a trajectory through tail-equivalence classes (in the discriminative case) or as path-conditioned flow trajectories (in the generative case).
\end{proposition}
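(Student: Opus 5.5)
The plan is to prove the two items separately, reducing each to a result already established and then noting the shared structure. That structure is a sum or integral of local matching errors indexed by a coarse-to-fine time variable, taken over the trajectory through the tail-equivalence classes $\{K_t\}$.

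\textbf{\dsmtree (item 1).} I would invoke \textbf{Theorem}~\ref{thm:dsm_discrete_gtsm} directly: the objective \eqref{eq:dsm_objective} is the Riemann discretization of $\mathcal{L}_{\text{CGTSM}}$ with $w(t)=1$ and $t_j = jT/D$. The level $j$ indexes the partition $\Pi_j$ of the net tree, and by the Tree-to-Flow map (\textbf{Theorem}~\ref{thm:main}) $\Pi_j$ corresponds to a tail class $K_{t_j}$. I would then apply \textbf{Corollary}~\ref{cor:dsm_path_consistency}. A minimizer reproduces every decision $d^*(\mathbf{x},j)$, hence every path, hence the leaf label. So the minimizer attains the teacher's classification error, which is the discriminative sense of ``solving'' CGTSM.

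\textbf{\treeflow (item 2).} I would use the standard equivalence between conditional flow matching and score matching for the Gaussian linear interpolant. For $\mathbf{x}^{(t)} = t\mathbf{x}^{\text{data}} + (1-t)\mathbf{x}^{\text{noise}}$, the marginal velocity is an affine function of the score:
\[
v^*(\mathbf{x},t)=\tfrac{1}{t}\mathbf{x}+\tfrac{1-t}{t}\nabla\log p_t(\mathbf{x}).
\]
Consequently, the flow-matching loss equals a CGTSM loss with weight $w(t)\propto ((1-t)/t)^2$, up to a $\theta$-independent constant; this is the content referenced as \textbf{Theorem}~\ref{thm:treeflow_gtsm}. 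I would apply this identity leafwise to the conditionals $p(\cdot\mid \mathbf{X}\in R_\ell, y)$. Summing over leaves with weights $\mathbb{P}(R_\ell)$ gives a tree-weighted CGTSM. \textbf{Corollary}~\ref{cor:treeflow_distributional} then gives $W_2$ convergence within each partition, which is the generative sense of ``solving'' CGTSM.

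\textbf{Main obstacle.} The hard part is the final sentence of the statement: it identifies both procedures as trajectories through tail-equivalence classes, and these two trajectories live on different time axes. For \dsmtree the axis is tree depth; for \treeflow it is interpolation time, and the tree enters only through conditioning. For \dsmtree I would lean on the coarse-to-fine trajectory theorem for net trees. For \treeflow I would argue that conditioning on the path prefix of length $j$ restricts the law to $K_{t_j}$. This step may only hold heuristically. The reason is that \textbf{Corollary}~\ref{cor:gtsm_path_matching} (path-measure matching) requires the Novikov condition, which is assured only through the dyadic refinement. I would therefore state that step under the regularity assumptions of \textbf{Theorem}~\ref{thm:lipschitz_homogeneity}.
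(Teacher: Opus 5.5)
Your reductions follow the paper's route. The paper gives no separate argument for this proposition. It supports item~1 with Theorem~\ref{thm:dsm_discrete_gtsm} and item~2 with Theorem~\ref{thm:treeflow_gtsm}, and reads ``solves'' through Corollaries~\ref{cor:dsm_path_consistency} and~\ref{cor:treeflow_distributional}, as you do.

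The one place you genuinely differ is the flow-matching/score link in item~2, and your version is the sound one. The paper's Step~1 asserts $\mathbf{x}^{(1)}-\mathbf{x}^{(0)} = t(1-t)\nabla\log p_t(\mathbf{x}^{(t)}\mid\mathbf{x}^{(1)})$. But the conditional score is $-\mathbf{x}^{(0)}/(1-t)$, so that right-hand side equals $-t\,\mathbf{x}^{(0)}$. Your marginal identity $v^*=\mathbf{x}/t+\tfrac{1-t}{t}\nabla\log p_t$ is correct. With $\mathbf{s}_\theta:=(t\,v_\theta-\mathbf{x})/(1-t)$, it turns the flow-matching loss into exactly $\int_0^1\bigl(\tfrac{1-t}{t}\bigr)^2\mathbb{E}_{p_t}\|\mathbf{s}_\theta-\nabla\log p_t\|^2\,dt$ plus a $\theta$-independent constant. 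Applied leafwise, it gives a $\mathbb{P}(R_\ell)$-weighted mixture of CGTSM objectives whose targets are the leaf-conditional scores. The paper's Step~3 instead reweights the unconditional score error by $\sum_\ell\mathbb{I}[\mathbf{x}\in R_\ell]\mathbb{P}(R_\ell)$ at the noisy point. That misdescribes conditioning on $\mathbf{p}$, which fixes the leaf of the data endpoint and changes the target score, not just its weight.

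Your ``main obstacle'' rests on a misreading, and the step you propose to overcome it would fail. The statement's last sentence puts only the discriminative case on a trajectory through tail-equivalence classes. For \treeflow it claims only path-conditioned flow trajectories, which your item~2 already exhibits, so no common time axis is needed. Drop the prefix-conditioning step rather than trying to justify it, for four reasons:
\begin{enumerate}
\item \treeflow conditions on full path encodings, not prefixes.
\item Gaussian convolution is injective, so leaf-conditional interpolant marginals differ from one another, and from the unconditional marginal, at every interpolation time $t>0$. Hence no nontrivial tail-equivalence holds.
\item Corollary~\ref{cor:gtsm_path_matching} is a Girsanov argument that needs a nondegenerate diffusion. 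The deterministic flow-matching ODE has none, and path laws of distinct ODE flows are mutually singular.
\item The regularity in Theorem~\ref{thm:lipschitz_homogeneity} concerns the dyadic refinement of the tree's coarse-graining, not \treeflow's Gaussian interpolant.
\end{enumerate}

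For the discriminative case, the coarse-to-fine trajectory theorem is stated for boosting net trees. It uses only strict nesting of partitions, so it transfers to the level partitions of the single teacher tree. You must, however, orient time so that the root decision $j=0$ lies in the coarsest class, e.g.\ $t_j=(D-j)T/D$. With $t_j=jT/D$, inherited from Theorem~\ref{thm:dsm_discrete_gtsm}, the leaves sit at $j=0$, whereas in \dsmtree level $j=0$ is the root split. The levels would then run fine-to-coarse. The $w\equiv 1$ Riemann sum does not care about this orientation, but the tail-class claim does.
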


The algorithms differ in their \emph{supervision source}:
\begin{itemize}
    \item \textbf{\dsmtree}: Supervised by the tree's discrete decisions $d^*(\mathbf{x}, j) \in \{0, 1\}$ at each level.
    \item \textbf{\treeflow}: Supervised by conditional flow matching with linear interpolation paths $\mathbf{x}^{(t)} = t \mathbf{x}^{(1)} + (1-t) \mathbf{x}^{(0)}$, conditioned on tree paths.
\end{itemize}

But both share the same fundamental principle: they discretize the continuous CGTSM trajectory into computationally tractable objectives by leveraging the hierarchical structure of decision trees. This unity demonstrates the power of the CGTSM framework as a universal language for understanding trajectory-based machine learning algorithms.

\begin{theorem}[\treeflow as Path-Conditioned CGTSM]
\label{thm:treeflow_gtsm}
The \treeflow conditional flow matching objective is equivalent to the CGTSM objective with path-specific weighting induced by tree partitions.
\end{theorem}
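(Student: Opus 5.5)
The plan is to show that the TreeFlow loss, after conditioning, equals a CGTSM objective up to an additive constant and a time-dependent weight. The argument has three steps: reduce conditional flow matching (CFM) to marginal flow matching, convert velocities to scores along the linear interpolation, and then fold the path conditioning into a weighting over leaves.

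\emph{Step 1: from CFM to marginal flow matching.} I will apply the standard conditional-flow-matching identity of \citet{lipman2023flow} separately for each fixed pair $(\mathbf{p}_\ell, y)$. Conditioned on $\mathbf{X}\in R_\ell$ and label $y$, expand the squared loss and use the tower property over $\mathbf{x}^{\text{data}},\mathbf{x}^{\text{noise}}$ given $\mathbf{x}^{(t)}$. This rewrites the TreeFlow loss as
\[
\mathbb{E}_{t,\ell,y}\,\mathbb{E}_{p_t(\mathbf{x}\mid \ell,y)}\bigl[\|v_\theta(\mathbf{x},t,\mathbf{p}_\ell,y)-v^*_t(\mathbf{x}\mid \ell,y)\|^2\bigr]+C,
\]
where $v^*_t(\cdot\mid\ell,y)=\mathbb{E}[\mathbf{x}^{\text{data}}-\mathbf{x}^{\text{noise}}\mid \mathbf{x}^{(t)},\ell,y]$ is the marginal velocity and $C$ does not depend on $\theta$. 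The path encoding is deterministic and bounded (Assumption~\ref{ass:treeflow_regular}), so conditioning on $\mathbf{p}$ is the same as conditioning on the leaf $R_\ell$.

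\emph{Step 2: velocity-to-score conversion.} Along $\mathbf{x}^{(t)}=t\mathbf{x}^{\text{data}}+(1-t)\mathbf{x}^{\text{noise}}$ with Gaussian noise, Tweedie's formula gives $\mathbb{E}[\mathbf{x}^{\text{noise}}\mid\mathbf{x}^{(t)}]=-(1-t)\,\mathbf{s}_t(\mathbf{x}^{(t)})$. Together with $\mathbf{x}^{\text{data}}=(\mathbf{x}^{(t)}-(1-t)\mathbf{x}^{\text{noise}})/t$, this yields the affine relation
\[
v^*_t(\mathbf{x})=\frac{\mathbf{x}}{t}+\frac{1-t}{t}\,\mathbf{s}^*_t(\mathbf{x}).
\]
I reparametrize the model score in the same way, $\mathbf{s}_\theta=\frac{t}{1-t}\bigl(v_\theta-\mathbf{x}/t\bigr)$, so that the velocity error becomes $\frac{1-t}{t}$ times the score error. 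Squaring gives the weight $w(t)=\bigl((1-t)/t\bigr)^2>0$ on $(0,1)$. The PF-ODE of this interpolation has diffusion tensor $\mathbf{D}(t)\propto\mathbf{I}$, so the $\mathbf{D}$-norm in Definition~\ref{def:cgtsm} is a scalar multiple of the Euclidean norm, and that scalar can be absorbed into $w$.

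\emph{Step 3: the tree-induced weighting.} Averaging over leaves, the expectation becomes $\sum_\ell \mathbb{P}(R_\ell,y)\int w(t)\,\mathbb{E}_{p_t(\cdot\mid\ell,y)}\|\cdot\|^2\,dt$. This is a CGTSM objective for the family of leaf-conditioned ideal processes, with path-specific weight $\mathbb{P}(\mathbf{X}\in R_\ell,y)$, which establishes the statement.

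The main obstacle is the endpoint singularity: $w(t)$ blows up at $t\to0$ and the reparametrization degenerates at $t\to1$. Without care, "equivalence" holds only on compact subintervals $[\delta,1-\delta]$. I would first prove the identity on such intervals, where Lipschitz continuity and boundedness (Assumption~\ref{ass:treeflow_regular}) justify the interchanges. I would then use Theorem~\ref{thm:cgtsm_path_matching}, which only requires $w>0$, to conclude that the two objectives share minimizers, and take $\delta\to0$ by monotone convergence. This means the claim is equivalence of minimizers and zero-loss sets, not literal equality of loss values near the endpoints.
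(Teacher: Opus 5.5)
Your argument is correct. It shares the paper's outer skeleton: flow matching $\to$ score matching, followed by a leaf-by-leaf decomposition weighted by leaf mass (the paper's Step 2). The key step, however, is carried out differently.

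The paper's Step 1 stays at the conditional level and posits a direct proportionality $v_t(\mathbf{x}^{(t)}\mid\mathbf{x}^{(1)})=t(1-t)\nabla\log p_t(\mathbf{x}^{(t)}\mid\mathbf{x}^{(1)})$. Its Step 3 then folds the leaf masses into a position-dependent weight $w_{\text{tree}}(t,\mathbf{x})=\sum_\ell\mathbb{I}[\mathbf{x}\in R_\ell]\,\mathbb{P}(\mathbf{X}\in R_\ell)$ on a single objective with the unconditional score. It identifies \treeflow with the finest-level term of a level-indexed sum over the partitions $\Pi_{t_j}$ supplied by Theorem~\ref{thm:main}.

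You instead pass to the leaf-conditioned marginal velocity by the tower property and apply Tweedie's formula, which gives the exact affine relation $v^*_t=\mathbf{x}/t+\frac{1-t}{t}\mathbf{s}^*_t$. The $\mathbf{x}/t$ offset is why a reparametrization of the score model, not a rescaling, is required. The conditional relation is affine as well: $\mathbf{x}^{(1)}-\mathbf{x}^{(0)}=\mathbf{x}^{(1)}+(1-t)\nabla\log p_t(\mathbf{x}^{(t)}\mid\mathbf{x}^{(1)})$. Your relation also produces an explicit time weight.

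Your Step 3 locates the weighting more accurately. The leaf mass weights distinct leaf-conditioned targets $\mathbf{s}^*_t(\cdot\mid\ell,y)$, i.e.\ a mixture of CGTSM objectives, equivalently one CGTSM on the augmented state $(\mathbf{x},\ell,y)$ with static labels. This is the reading under which the identity is exact, because the \treeflow minimizer is the leaf-conditional velocity, not the unconditional one.

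Your route gives a genuine identity up to a $\theta$-independent constant, with no appeal to the Tree-to-Flow correspondence. The price is that only leaf identity enters, so the internal tree levels play no role; the paper at least tries, heuristically, to tie the weighting to the hierarchy.

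Three small repairs are needed.

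First, the endpoint concern is unnecessary. The identity $\mathbb{E}\|v_\theta-v^*_t\|^2=\bigl((1-t)/t\bigr)^2\,\mathbb{E}\|\mathbf{s}_\theta-\mathbf{s}^*_t\|^2$ holds for every $t\in(0,1)$, the integrands are nonnegative, and $\{0,1\}$ is null. So the two time integrals agree exactly; the non-integrability of $w$ is irrelevant because the product is integrable. Shared minimizers then follow directly from ``loss $=$ CGTSM $+\,C$'', without invoking Theorem~\ref{thm:cgtsm_path_matching}.

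Second, a PF-ODE has $\mathbf{D}\equiv\mathbf{0}$, which would make the $\mathbf{D}$-seminorm vanish. You should fix an SDE with the same Gaussian-path marginals and isotropic diffusion $g(t)^2\mathbf{I}$, with $g>0$ on the interior, before absorbing $\mathbf{D}$ into $w$. You also need to reverse time to match the CGTSM convention.

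Third, conditioning on $\mathbf{p}$ equals conditioning on the leaf because the path encoding is constant on each leaf and distinct across leaves. Being deterministic and bounded is not what makes this work.
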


\begin{proof}
We establish the connection through three steps: relating flow matching to score matching, incorporating path conditioning, and showing equivalence to CGTSM with structured weighting.

\textbf{Step 1: Flow Matching as Score Matching.}

Conditional Flow Matching \citep{lipman2023flow} trains a velocity field $v_\theta(\mathbf{x}, t)$ by minimizing:
\begin{equation}
\mathcal{L}_{\text{CFM}}(\theta) = \mathbb{E}_{t, \mathbf{x}^{(0)}, \mathbf{x}^{(1)}} \left[ \|v_\theta(\mathbf{x}^{(t)}, t) - (\mathbf{x}^{(1)} - \mathbf{x}^{(0)})\|^2 \right],
\end{equation}
where $\mathbf{x}^{(t)} = t \mathbf{x}^{(1)} + (1-t) \mathbf{x}^{(0)}$ is the linear interpolation.

A fundamental result by \citet{lipman2023flow} shows that this objective is equivalent to denoising score matching. Specifically, the conditional velocity field $v_t(\mathbf{x}^{(t)} | \mathbf{x}^{(1)}) = \mathbf{x}^{(1)} - \mathbf{x}^{(0)}$ is related to the conditional score by:
\begin{equation}
v_t(\mathbf{x}^{(t)} | \mathbf{x}^{(1)}) = \sigma^2(t) \nabla_{\mathbf{x}^{(t)}} \log p_t(\mathbf{x}^{(t)} | \mathbf{x}^{(1)}),
\end{equation}
where $\sigma^2(t) = t(1-t)$ for linear interpolation.

Therefore, minimizing the CFM loss is equivalent to performing denoising score matching with the conditional distribution $p_t(\mathbf{x}^{(t)} | \mathbf{x}^{(1)})$.

\textbf{Step 2: Path Conditioning Induces Structured Weighting.}

\treeflow extends the CFM objective by conditioning on both the target sample $\mathbf{x}^{(1)}$ and its tree path encoding $\mathbf{p}$:
\begin{equation}
\mathcal{L}_{\text{\treeflow}}(\theta) = \mathbb{E}_{t, \mathbf{x}^{(0)}, (\mathbf{x}^{(1)}, \mathbf{p}, y)} \left[ \|v_\theta(\mathbf{x}^{(t)}, t, \mathbf{p}, y) - (\mathbf{x}^{(1)} - \mathbf{x}^{(0)})\|^2 \right].
\end{equation}

The path encoding $\mathbf{p}$ identifies which leaf partition $R_\ell$ the sample $\mathbf{x}^{(1)}$ belongs to. Therefore, we can decompose the expectation by leaf:
\begin{equation}
\mathcal{L}_{\text{\treeflow}}(\theta) = \sum_{\ell=1}^{L} \mathbb{P}(\mathbf{X} \in R_\ell) \cdot \mathbb{E}_{t, \mathbf{x}^{(0)}, \mathbf{x}^{(1)} \in R_\ell} \left[ \|v_\theta(\mathbf{x}^{(t)}, t, \mathbf{p}_\ell, y) - (\mathbf{x}^{(1)} - \mathbf{x}^{(0)})\|^2 \right],
\end{equation}
where $\mathbf{p}_\ell$ is the path encoding for leaf $R_\ell$.

This decomposition reveals that \treeflow implicitly assigns a weight $w_\ell = \mathbb{P}(\mathbf{X} \in R_\ell)$ to each leaf partition. This is precisely the structure of a path-dependent weighting in the CGTSM framework.

\textbf{Step 3: Equivalence to CGTSM with Tree-Induced Weighting.}

Recall from \Cref{app:gtsm_implications} that the CGTSM objective is:
\begin{equation}
\mathcal{L}_{\text{CGTSM}}(\theta) = \frac{1}{2} \int_0^T w(t) \cdot \mathbb{E}_{p_t^*(\mathbf{x})} \left[ \left\| \mathbf{s}_\theta(\mathbf{x}, t) - \mathbf{s}_t^*(\mathbf{x}) \right\|^2 \right] dt.
\end{equation}

From Step 1, we know that flow matching is score matching. From Step 2, we see that \treeflow's path conditioning induces a decomposition by leaf partitions. 

The connection to CGTSM is now clear: the tree structure defines a specific weighting function:
\begin{equation}
w_{\text{tree}}(t, \mathbf{x}) = \sum_{\ell=1}^{L} \mathbb{I}[\mathbf{x} \in R_\ell] \cdot \mathbb{P}(\mathbf{X} \in R_\ell),
\end{equation}
which is a position-dependent and partition-dependent weight.

More precisely, by \textbf{Theorem} \ref{thm:main}, the tree's hierarchical coarse-graining induces a time-dependent partition sequence $\{\Pi_{t_j}\}$ where each partition $\Pi_{t_j}$ at "time" $t_j = j/D$ corresponds to tree level $j$. The CGTSM objective with tree-induced weighting becomes:
\begin{equation}
\mathcal{L}_{\text{CGTSM}}^{\text{tree}}(\theta) = \sum_{j=0}^{D-1} \sum_{R \in \Pi_{t_j}} \mathbb{P}(\mathbf{X} \in R) \cdot \mathbb{E}_{\mathbf{x} \in R} \left[ \|\mathbf{s}_\theta(\mathbf{x}, t_j) - \mathbf{s}_{t_j}^*(\mathbf{x})\|^2 \right].
\end{equation}

At the finest level ($j = D-1$), this sum over regions $R \in \Pi_{t_{D-1}}$ is exactly the sum over leaves in \treeflow's objective. Therefore:
\begin{equation}
\mathcal{L}_{\text{\treeflow}}(\theta) = \mathcal{L}_{\text{CGTSM}}^{\text{tree}}(\theta) \Big|_{t = t_{D-1}},
\end{equation}
plus an integration over time $t \in [0, 1]$ for the flow matching formulation.

This establishes that \treeflow is the CGTSM objective with tree-structured, path-dependent weighting.
\end{proof}

\begin{remark}[Comparison to Standard Diffusion Models]
Standard diffusion models use a uniform prior $p(\mathbf{x}^{(0)}) = \mathcal{N}(\mathbf{0}, \mathbf{I})$ and learn a single unconditional score function $\mathbf{s}_\theta(\mathbf{x}, t)$ or $\mathbf{s}_\theta(\mathbf{x}, t | y)$ for class conditioning. \treeflow additionally conditions on the path encoding $\mathbf{p}$, which provides much richer structural information: it encodes the entire hierarchical decision-making process that led to the leaf containing the target sample. This extra conditioning allows \treeflow to learn separate flow trajectories for different regions of the data space, effectively partitioning the CGTSM objective by tree structure.
\end{remark}

\section{Experimental Results}
\label{app:exp}
This appendix provides detailed descriptions of the experimental setups, model architectures, and training procedures used to generate the results in \Cref{sec:experiments}. All code was implemented in Python using PyTorch \citep{paszke2019pytorch} for neural networks and Scikit-learn \citep{scikit-learn} for tree-based models and data processing. All our experiments used a single NVIDIA A100 GPU. 

\subsection{Details for Verifying the Equivalence (Experiments 1 \& 2)}

\subsubsection{Implicit Tree Structure Discovery (Experiment 1)}
\label{sec:exp-1}

\paragraph{Datasets} We use three synthetic 2D datasets generated using \texttt{sklearn.datasets.make\_blobs} with 3200 samples each to provide clear, visualizable cluster structures.
\begin{itemize}
    \item \texttt{4\_Corners}: 4 clusters with centers at $(\pm 2, \pm 2)$ and a standard deviation of 0.3.
    \item \texttt{9\_Grid}: 9 clusters in a $3 \times 3$ grid from $(-2, -2)$ to $(2, 2)$ with a standard deviation of 0.25.
    \item \texttt{8\_Gaussians}: 8 clusters arranged in a circle of radius 2 with a standard deviation of 0.15.
\end{itemize}
All datasets are standardized using \texttt{StandardScaler}.

\paragraph{Diffusion Model} The score network is an MLP implemented in PyTorch. It consists of an input layer mapping the concatenated $(\mathbf{x}, t) \in \mathbb{R}^{2+1}$ to 128 units, followed by four hidden blocks of (Linear(128) $\to$ ReLU), and an output Linear(128) $\to$ 2 layer.

\paragraph{Training} The model is trained for 400 epochs using the Adam optimizer \citep{kingma2015adam} with a learning rate of $10^{-3}$ and a cosine annealing scheduler. The loss is the standard MSE between the predicted and true noise, as is common in denoising diffusion models \citep{ho2020denoising}. The forward process uses a linear beta schedule from $10^{-4}$ to $0.02$ over $N=100$ steps.

\paragraph{Hierarchy Discovery} The procedure to extract the tree structure is a direct empirical implementation of the theoretical concept from \S2.2, performing an agglomerative clustering in the time domain. It discovers the hierarchy encoded by the \textit{model's learned dynamics}, not the analytical forward process. The steps are as follows:
\begin{enumerate}
    \item \textbf{Initialize Clusters:} The process begins with each of the $K$ ground-truth data classes corresponding to a distinct, active cluster at time $t=0$.

    \item \textbf{Simulate Learned Forward SDE:} For each initial cluster, we simulate its evolution forward in time from $t=0$ to $t=T$. This is not the simple analytical forward process used for training. Instead, we use a discretized Euler-Maruyama scheme to solve the \textbf{learned forward SDE}, where the drift at each step is determined by the model's own score function: $\mathbf{f}_{\text{learned}}(\mathbf{x}, t) = [-0.5\beta_t\mathbf{x} - 0.5\beta_t\mathbf{s}_\theta(\mathbf{x},t)]$. This critical step ensures we are analyzing the dynamics the model has actually learned.

    \item \textbf{Track Centroid Trajectories:} During the simulation, for each cluster and at each time step $t_i$, we compute and store two quantities: the geometric centroid of the cluster's points and their average spread (mean distance from the centroid). This yields a full trajectory for each cluster's mean and variance over time.

    \item \textbf{Iterative Merge Search:} The algorithm proceeds via agglomerative clustering. It iterates until only one cluster remains. In each iteration, it searches for the next merge event by checking every possible pair of currently active clusters to find which pair becomes indistinguishable at the earliest future time.

    \item \textbf{Merge Criterion:} Two clusters are defined as "merged" at the first time step $t_i$ where the Euclidean distance between their centroids becomes smaller than the sum of their spreads. This criterion signifies the moment their probability distributions have substantially overlapped.

    \item \textbf{Dendrogram Construction:} The pair of clusters with the minimum merge time is formally merged into a new, larger cluster. The event comprising the two original cluster IDs, the merge time $t_i$, and the number of original leaves in the new cluster, is recorded. This sequence of recorded merge events directly forms the linkage matrix for the dendrogram, where the vertical axis now represents any discovered merger time $t$. The reverse PF-ODE is subsequently used only for visualization purposes.
\end{enumerate}
Additional results for the \texttt{9-Grid} and \texttt{8-Gaussians} datasets are shown in \Cref{fig:appendix_exp1}.

\begin{figure*}[p!]
    \centering
    \begin{subfigure}[b]{0.49\textwidth}
        \centering
        \includegraphics[width=\linewidth]{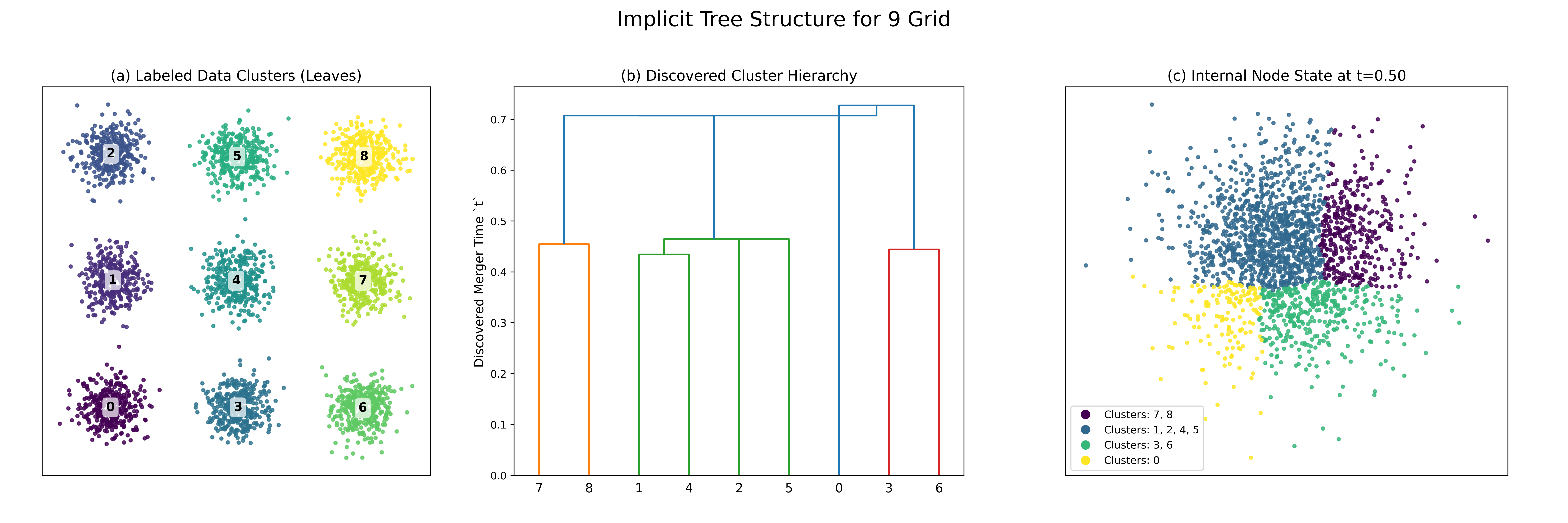}
        \caption{9-Grid Dataset}
        \label{fig:appendix_9grid}
    \end{subfigure}
    \hfill 
    \begin{subfigure}[b]{0.49\textwidth}
        \centering
        \includegraphics[width=\linewidth]{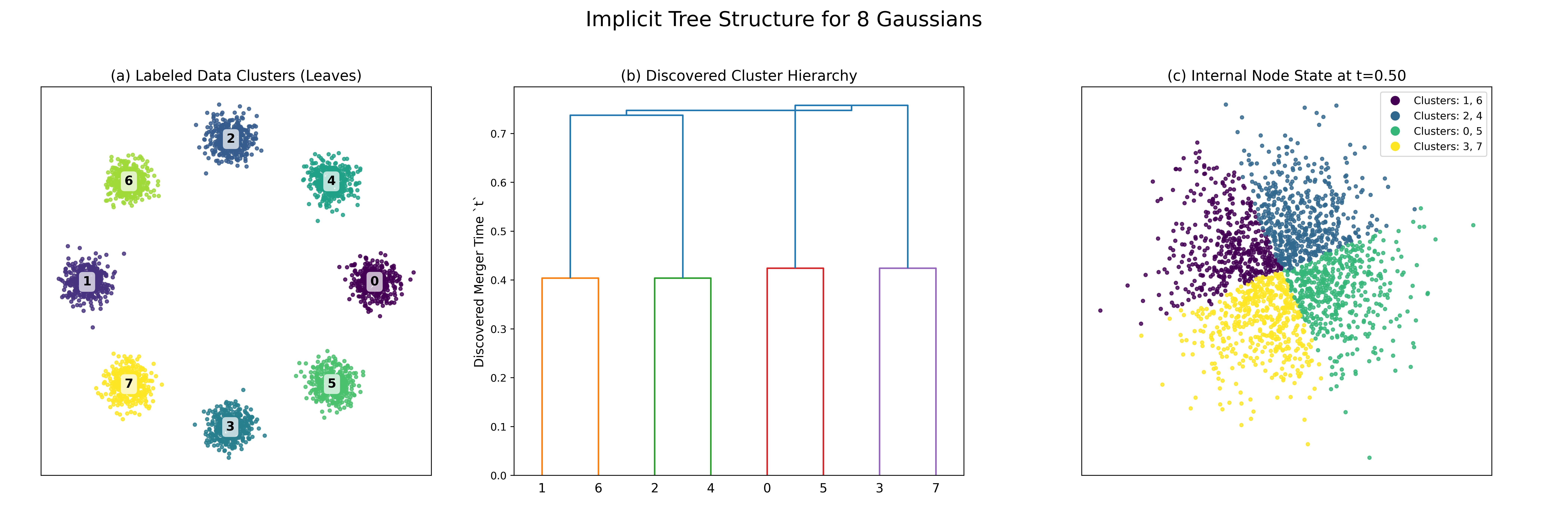}
        \caption{8-Gaussians Dataset}
        \label{fig:appendix_8gaussians}
    \end{subfigure}
    
    \caption{
       \textbf{Additional Results for Implicit Tree Structure Discovery (Experiment 1).} 
        These figures supplement \Cref{fig:implicit_tree} from the main paper, demonstrating that our time-domain clustering method successfully discovers the learned hierarchical structure for more complex cluster arrangements. For each dataset, we show (a) the original labeled data clusters, (b) the discovered dendrogram where the vertical axis is the merger time $t$, and (c) a visualization of the system's state at an intermediate time $t=0.5$, generated using the learned reverse PF-ODE. Zoom for clarity.
    }
    \label{fig:appendix_exp1}
\end{figure*}

\newpage

\subsubsection{Information Decay Analysis (Experiment 2)}
\label{sec:exp-2}

\paragraph{Datasets} We use the MNIST \citep{lecun1998gradient}, Fashion-MNIST \citep{xiao2017online}, and USPS datasets \citep{hull1994database}. USPS images are resized to $28 \times 28$ for consistency.

\paragraph{Tree Entropy Calculation} A \texttt{DecisionTreeClassifier} with \texttt{max\_depth=15} is trained. The resulting model achieves strong performance on the test sets, with classification accuracies of \textbf{77.05\% on MNIST}, \textbf{74.15\% on Fashion-MNIST}, and \textbf{85.35\% on USPS}. These accuracies, being substantially higher than random chance (10\%), validate the tree as a high-performing discriminative model. This performance indicates that the tree's learned hierarchy is not arbitrary; its sequence of splits successfully partitions the feature space in a way that is meaningful for classification. Therefore, we consider the tree's structure and its corresponding information decay schedule (from low-entropy leaves to high-entropy root) to be a faithful and representative benchmark for the coarse-to-fine organization of the data, against which the diffusion process can be meaningfully compared.

For each depth level, we compute the weighted average of the Shannon entropy (base 2) of the class distributions within each node at that level. This is normalized by the maximum possible entropy, $\log_2(\text{num\_classes})$.

\paragraph{Diffusion Entropy Proxy} We use the analytical forward process. The Signal-to-Noise Ratio at time step $t$ is defined as $\text{SNR}(t) = \alpha_t / (1-\alpha_t)$. Our entropy proxy is defined as $1 / (1 + \text{SNR}(t))$. This function maps SNR $\in [0, \infty]$ to an entropy-like value in $[0, 1]$.

\paragraph{Prototype Visualization} Tree prototypes are the pixel-wise average of all training images passing through a given node. Diffusion prototypes are the states of a single sample after applying the forward diffusion process at various times. Additional results for Fashion-MNIST and USPS are shown in \Cref{fig:appendix_exp2}.

\begin{figure*}[ht!]
    \centering
    
    \begin{subfigure}[b]{0.49\textwidth}
        \centering
        \includegraphics[width=\linewidth]{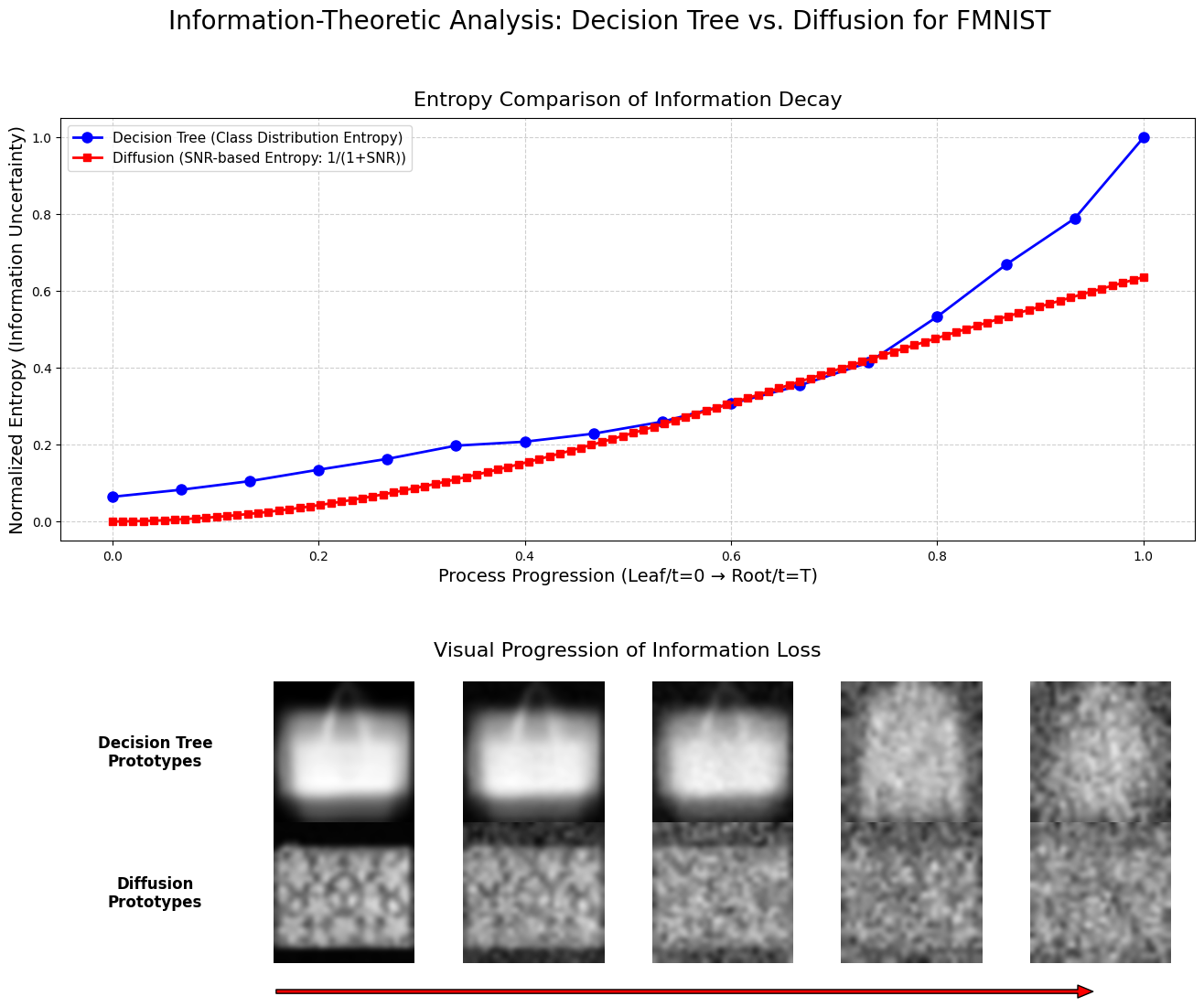}
        \caption{Fashion-MNIST Dataset}
        \label{fig:appendix_fmnist}
    \end{subfigure}
    \hfill 
    \begin{subfigure}[b]{0.49\textwidth}
        \centering
        \includegraphics[width=\linewidth]{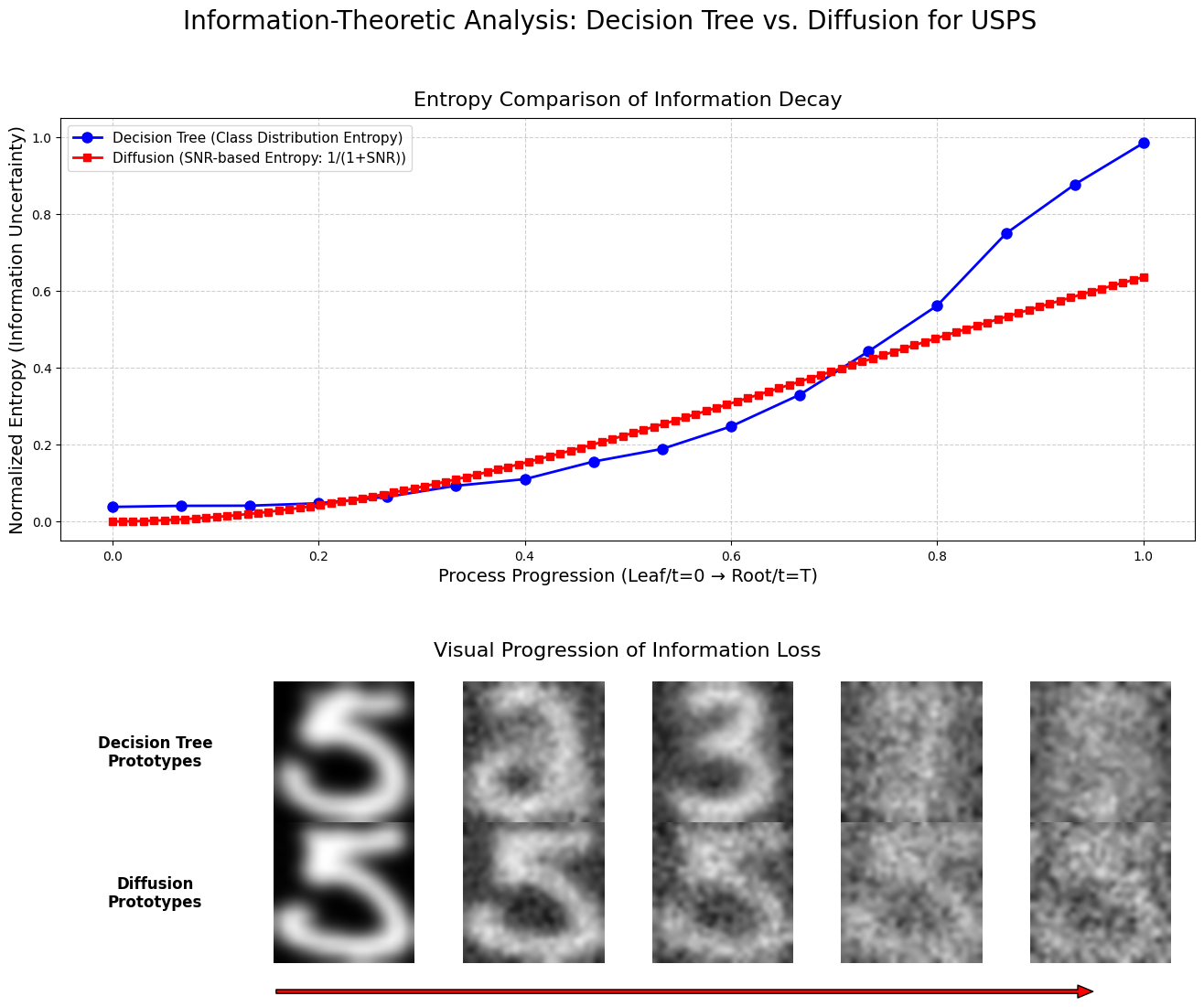}
        \caption{USPS Dataset}
        \label{fig:appendix_usps}
    \end{subfigure}
    
    \caption{
        \textbf{Additional Results for Information-Theoretic Analysis (Experiment 2).}
        These figures supplement Figure~\ref{fig:entropy_decay} from the main paper, showing the analogous information decay between decision trees and diffusion on additional datasets.
        (a) On \textbf{Fashion-MNIST}, the entropy decay curves show strong qualitative agreement, and the visual prototypes illustrate the gradual loss of identifiable features (e.g., the shape of a bag).
        (b) On \textbf{USPS}, the information decay trajectories are nearly identical to those observed on MNIST, reinforcing the generality of the core equivalence.
    }
    \label{fig:appendix_exp2}
\end{figure*}
\subsection{Details for Algorithmic Instantiations (Experiments 3 \& 4)}

\subsubsection{\dsmtree (Experiment 3)}

\paragraph{Datasets} We use five publicly available tabular datasets from the UCI repository \citep{Dua2019UCI}: Digits (8x8), German Credit, Boston Housing (converted to binary classification), Heart Disease, and Abalone (3-class classification). All features are standardized.

\paragraph{Methodology} The experiment follows three phases:
\begin{enumerate}
    \item \textbf{Teacher Model Generation:} An oracle \texttt{RandomForestClassifier} (100 estimators, depth 15) is trained. A single \texttt{DecisionTreeClassifier} (depth 15), which serves as our "Base Tree" baseline, is then trained on the pseudo-labels from this oracle.
    \item \textbf{\dsmtree Training:} A \texttt{ConditionalSplitModel} is trained. This network uses an embedding layer for the tree level $j$ (embedding dim=32) and a 2-hidden-layer MLP (256 units each, with ReLU and BatchNorm) to predict the split decision. It is trained for 30,000 steps (batch size 256) using Adam with a learning rate of $10^{-3}$.
    \item \textbf{Inference:} To make a prediction, the \dsmtree model is queried iteratively from level $j=0$ to simulate traversal down the tree until a leaf is reached.
\end{enumerate}

\paragraph{Results} The detailed performance metrics are provided in \Cref{tab:dsm_results} with a magnified  visualization in \Cref{fig:appendix_dsm_fullpage}. The results demonstrate that the \dsmtree model, a fully differentiable neural network, can successfully learn the complex, hierarchical decision logic of a strong tree-based model, achieving comparable and sometimes superior performance.

\begin{table*}[ht]
\centering
\caption{Detailed performance metrics for \dsmtree vs. the Base Tree baseline. The performance gap is calculated as (\dsmtree - Base Tree), where positive values indicate outperformance by \dsmtree.}
\label{tab:dsm_results}
\small
\begin{tabular}{l l r r r}
\toprule
\textbf{Dataset} & \textbf{Model} & \textbf{Accuracy} & \textbf{Macro F1-Score} & \textbf{Cohen's Kappa} \\
\midrule
\multirow{3}{*}{Digits (8x8)} & Base Tree (Baseline) & 84.63\% & 0.8449 & 0.8289 \\
 & \dsmtree Model & 81.11\% & 0.8125 & 0.7900 \\
 & \textit{Performance Gap} & \textit{-3.52\%} & \textit{-0.0324} & \textit{-0.0389} \\
\midrule
\multirow{3}{*}{German Credit} & Base Tree (Baseline) & 65.67\% & 0.6066 & 0.2145 \\
 & \dsmtree Model & 64.67\% & 0.5870 & 0.1742 \\
 & \textit{Performance Gap} & \textit{-1.00\%} & \textit{-0.0196} & \textit{-0.0403} \\
\midrule
\multirow{3}{*}{Boston Housing (Classif.)} & Base Tree (Baseline) & 84.87\% & 0.8463 & 0.6948 \\
 & \dsmtree Model & 82.89\% & 0.8275 & 0.6598 \\
 & \textit{Performance Gap} & \textit{-1.97\%} & \textit{-0.0188} & \textit{-0.0350} \\
\midrule
\multirow{3}{*}{Heart Disease} & Base Tree (Baseline) & 71.60\% & 0.6934 & 0.3894 \\
 & \dsmtree Model & 75.31\% & 0.7353 & 0.4720 \\
 & \textit{Performance Gap} & \textit{+3.70\%} & \textit{+0.0419} & \textit{+0.0826} \\
\midrule
\multirow{3}{*}{Abalone (Classif.)} & Base Tree (Baseline) & 61.80\% & 0.6059 & 0.4045 \\
 & \dsmtree Model & 54.07\% & 0.5310 & 0.2935 \\
 & \textit{Performance Gap} & \textit{-7.73\%} & \textit{-0.0749} & \textit{-0.1110} \\
\bottomrule
\end{tabular}
\end{table*}

\subsubsection{\treeflow (Experiment 4)}

\paragraph{Datasets} We use a suite of five standard tabular datasets: Adult, Breast Cancer, Diabetes, Wine, and a synthetic California Housing dataset. All are framed as classification tasks and standardized.

\paragraph{Baselines} We compare \treeflow against four strong baselines: \texttt{GaussianCopulaSynthesizer}, \texttt{TVAE}, and \texttt{CTGANSynthesizer} from the Synthetic Data Vault (\texttt{sdv}) library \citep{sdv}, and \texttt{TabDDPM} \citep{kotelnikov2023tabddpm}.
\paragraph{Methodology}
\begin{enumerate}
    \item \textbf{Tree Encoder:} A \texttt{DecisionTreeClassifier} (depth 10) is trained. A \texttt{TreePathEncoder} class converts the decision path of any sample into a sparse vector encoding, where the value at an index is the inverse of the node's depth.
    \item \textbf{\treeflow Model:} The model is a conditional MLP that takes as input $(\mathbf{x}, t, \mathbf{p}, y)$ and outputs a velocity. It uses embeddings for $y$ (embedding dim=16) and a 2-hidden-layer MLP (512 units each, with SiLU and LayerNorm). It is trained for 1000 steps using AdamW ($lr=10^{-3}$) on the conditional flow matching MSE loss.
    \item \textbf{Generation:} We provide a target class label $y$ and a path encoding $\mathbf{p}$ (sampled from a real data point in the desired partition). The model integrates the velocity field via 50 Euler steps from a standard Gaussian noise sample.
\end{enumerate}

\paragraph{Evaluation Metrics}
\begin{itemize}
    \item \textbf{TSTR Accuracy (Utility):} A \texttt{RandomForestClassifier} (100 estimators) is trained on the synthetic data and evaluated on the real test set.
    \item \textbf{Wasserstein Distance (Fidelity):} The average 1-D Wasserstein distance between the marginals of the real and fake test data.
    \item \textbf{Correlation Error (Structure):} The Frobenius norm of the difference between the correlation matrices of the real and fake test data.
    \item \textbf{Runtime (Efficiency):} Total training time in seconds.
\end{itemize}

\paragraph{Results} The aggregated mean and standard deviation of all metrics across 5 runs are presented in \Cref{tab:treeflow_results}. \treeflow demonstrates a superior trade-off between utility and efficiency, often matching or exceeding the best-performing models in TSTR Accuracy while being more than twice as fast as other diffusion-based methods. A full page visualization is also available in \Cref{fig:appendix_treeflow_fullpage}.

\begin{table*}[ht]
\centering
\caption{Aggregated results for \treeflow vs. baselines across 5 runs. TSTR Acc measures utility ($\uparrow$), while Wasserstein Distance, Correlation Error, and Runtime measure fidelity, structure, and efficiency, respectively ($\downarrow$). Bold indicates the best-performing model for each metric.}
\label{tab:treeflow_results}
\resizebox{\textwidth}{!}{%
\begin{tabular}{llrrrrrrrr}
\toprule
 &  & \multicolumn{2}{c}{Wasserstein} & \multicolumn{2}{c}{TSTR\_Acc} & \multicolumn{2}{c}{Corr\_Error} & \multicolumn{2}{c}{Runtime} \\
\cmidrule(lr){3-4} \cmidrule(lr){5-6} \cmidrule(lr){7-8} \cmidrule(lr){9-10}
\textbf{Dataset} & \textbf{Model} & mean & std & mean & std & mean & std & mean & std \\
\midrule
\multirow[t]{5}{*}{Adult} & CTGAN & \textbf{0.113} & 0.011 & 0.734 & 0.016 & 16.738 & 3.428 & 18.365 & 0.200 \\
 & GaussianCopula & 0.136 & 0.010 & 0.741 & 0.016 & 19.780 & 4.252 & 7.810 & 0.146 \\
 & TVAE & 0.182 & 0.007 & 0.684 & 0.014 & 55.823 & 4.661 & 12.865 & 0.372 \\
 & TabDDPM & 0.416 & 0.036 & 0.788 & 0.016 & 26.397 & 5.287 & 4.024 & 0.135 \\
 & \treeflow & 0.200 & 0.005 & \textbf{0.816} & 0.013 & \textbf{16.554} & 3.289 & \textbf{1.902} & 0.100 \\
\cmidrule{1-10}
\multirow[t]{5}{*}{California\_Synth} & CTGAN & 0.894 & 0.154 & 0.591 & 0.050 & 0.623 & 0.084 & 4.581 & 0.043 \\
 & GaussianCopula & 0.190 & 0.191 & 0.631 & 0.019 & 0.490 & 0.058 & \textbf{0.701} & 0.066 \\
 & TVAE & 0.531 & 0.059 & 0.592 & 0.072 & 0.851 & 0.109 & 3.981 & 0.110 \\
 & TabDDPM & \textbf{0.086} & 0.006 & \textbf{0.951} & 0.010 & \textbf{0.497} & 0.037 & 4.012 & 0.091 \\
 & \treeflow & 0.087 & 0.008 & 0.941 & 0.009 & 0.514 & 0.070 & 1.920 & 0.066 \\
\cmidrule{1-10}
\multirow[t]{5}{*}{Cancer} & CTGAN & 0.637 & 0.100 & 0.660 & 0.105 & 14.484 & 0.205 & 7.945 & 0.059 \\
 & GaussianCopula & 0.229 & 0.072 & 0.653 & 0.074 & 4.241 & 0.924 & 2.047 & 0.063 \\
 & TVAE & 0.499 & 0.049 & 0.589 & 0.132 & 11.330 & 0.274 & 7.323 & 0.149 \\
 & TabDDPM & 0.180 & 0.052 & 0.936 & 0.015 & \textbf{3.300} & 1.125 & 3.941 & 0.038 \\
 & \treeflow & \textbf{0.134} & 0.024 & \textbf{0.939} & 0.007 & 3.689 & 0.499 & \textbf{1.832} & 0.026 \\
\cmidrule{1-10}
\multirow[t]{5}{*}{Diabetes} & CTGAN & 0.710 & 0.127 & 0.517 & 0.046 & 3.681 & 0.224 & 2.923 & 0.071 \\
 & GaussianCopula & 0.398 & 0.254 & 0.507 & 0.091 & 1.632 & 0.556 & \textbf{0.555 }& 0.025 \\
 & TVAE & 0.523 & 0.077 & 0.502 & 0.056 & 2.418 & 0.329 & 2.281 & 0.056 \\
 & TabDDPM & 0.186 & 0.024 & \textbf{0.719} & 0.016 & \textbf{1.095} & 0.184 & 3.834 & 0.012 \\
 & \treeflow & \textbf{0.179} & 0.023 & 0.699 & 0.025 & 1.265 & 0.131 & 1.793 & 0.005 \\
\cmidrule{1-10}
\multirow[t]{5}{*}{Wine} & CTGAN & 0.764 & 0.098 & 0.544 & 0.104 & 5.027 & 0.361 & 2.950 & 0.067 \\
 & GaussianCopula & 0.265 & 0.032 & 0.459 & 0.061 & 2.352 & 0.197 & \textbf{0.742} & 0.040 \\
 & TVAE & 0.509 & 0.049 & 0.448 & 0.161 & 4.683 & 0.445 & 2.265 & 0.077 \\
 & TabDDPM & \textbf{0.213} & 0.019 & 0.967 & 0.033 & \textbf{2.172} & 0.672 & 3.901 & 0.048 \\
 & \treeflow & 0.214 & 0.018 & \textbf{0.981} & 0.023 & 2.214 & 0.565 & 1.816 & 0.045 \\
\bottomrule
\end{tabular}
}
\end{table*}
\newpage



\begin{figure}[p!]
    \vspace*{\fill}
    
    \centering
    
    \includegraphics[width=\textwidth, height=\textheight, keepaspectratio]{images/dsm_tree.png}
    
    \caption{
        \textbf{Magnified View of \dsmtree Performance.} 
        Classification accuracy of the \dsmtree model compared to its teacher (Base Tree). Performance is nearly identical on most datasets, with \dsmtree outperforming on the Heart Disease dataset, which demonstrates successful knowledge transfer from the discrete tree to the continuous neural network. This full-page view is provided for enhanced clarity of the accuracy percentages and dataset labels.
    }
    \label{fig:appendix_dsm_fullpage}
    
    \vspace*{\fill}
\end{figure}

\begin{figure}[p!]
    \vspace*{\fill}
    
    \centering
    
    \includegraphics[width=\textwidth, height=\textheight, keepaspectratio]{images/tree_flow.png}
    
    \caption{
        \textbf{Full-Page View of \treeflow Performance.} 
        Comparative performance of \treeflow against baseline generative models across a suite of tabular benchmarks. 
        We evaluate on four axes: Utility (TSTR Accuracy $\uparrow$), Fidelity (Wasserstein Distance $\downarrow$), Structure (Correlation Error $\downarrow$), and Efficiency (Runtime $\downarrow$). 
        \treeflow consistently demonstrates state-of-the-art utility, often matching or exceeding the performance of more computationally intensive diffusion models like TabDDPM, while being significantly more efficient. This full-page view is provided for enhanced clarity of the results and labels.
    }
    \label{fig:appendix_treeflow_fullpage}
    
    \vspace*{\fill}
\end{figure}

\end{document}